\newcommand{\expect}{{\rm I\!E}}
\newcommand{\real}{{\rm I\!R}}
\newcommand{\integer}{{\rm I\!N}}
\newcommand{\prob}{{\rm I\!P}}
\newcommand{\ball}{{\rm I\!B}}
\definecolor{Gray}{gray}{0.85}
\definecolor{LightCyan}{rgb}{0.88,1,1}
\newtheorem{theorem}{Theorem}[section]
\newtheorem{lemma}[theorem]{Lemma}
\newtheorem{corollary}{Corollary}[theorem]
\theoremstyle{definition}
\newtheorem{definition}[theorem]{Definition}
\newcommand{\df}{\stackrel{\text{def}}{=}}
\begin{document}
	
	\title[A Mean-Field Theory for Kernel Alignment with Random Features]{\hspace{-3mm}A Mean-Field Theory for Kernel Alignment with Random Features in Generative and Discriminative Models}

	% \title[short text for running head]{full title}
	
	%    Only \author and \address are required; other information is
	%    optional.  Remove any unused author tags.
	
	%    author one information
	% \author[short version for running head]{name for top of paper}

	\author[Masoud Badiei Khuzani, et al.]{Masoud Badiei Khuzani$^{\dagger,\ddagger}$, Liyue Shen$^{\ddagger}$, Shahin Shahrampour$^{\S}$, Lei Xing$^{\ddagger}$}

	\address{Stanford University, 450 Serra Mall, Stanford, CA 94305\\
	}
	\curraddr{}
	\email{mbadieik,liyues,lei@stanford.edu,shahin@tamu.edu.}

	{\let\thefootnote\relax\footnotetext{       
			\small{
				\vspace{-4mm}
				\\$^{\dagger}$Department of Management Science,	Stanford University, CA 94043, USA\\
				$^{\dagger}$Department of Electrical Engineering, Stanford University, CA 94043, USA\\
				$^{\ddagger}$Department of Radiation Oncology,	Stanford University, CA 94043, USA \\
				$^{\S}$Department of Industrial Engineering, Texas A\&M University, TX 77843, USA}}}

\begin{abstract}%   <- trailing '%' for backward compatibility of .sty file
We propose a novel supervised learning method to optimize the kernel in the maximum mean discrepancy generative adversarial networks (MMD GANs), and the kernel support vector machines (SVMs). Specifically, we characterize a distributionally robust optimization problem to compute a good distribution for the random feature model of Rahimi and Recht. Due to the fact that the distributional optimization is infinite dimensional, we consider a Monte-Carlo sample average approximation (SAA) to obtain a more tractable finite dimensional optimization problem. We subsequently leverage a particle stochastic gradient descent (SGD) method to solve the derived finite dimensional optimization problem. Based on a mean-field analysis, we then prove that the empirical distribution of the interactive particles system at each iteration of the SGD follows the path of the gradient descent flow on the Wasserstein manifold. We also establish the non-asymptotic consistency of the finite sample estimator. We evaluate our kernel learning method for the hypothesis testing problem by evaluating the kernel MMD statistics, and show that our learning method indeed attains better power of the test for larger threshold values compared to an untrained kernel. Moreover, our empirical evaluation on benchmark data-sets shows the advantage of our kernel learning approach compared to alternative kernel learning methods.
\end{abstract}
	\maketitle
\pagestyle{plain}

\section{Introduction}

\lettrine{\textbf{A}} fundamental step in devising efficient learning algorithms is the design of good feature maps. Such feature maps embed data from an input space to some potentially high-dimensional non-linear feature space, where it is often easier to discriminate data.

In the kernel methods of machine learning, such feature maps and thus the underlying kernel function, must be specified by the user. However, in many practical machine learning applications, it is unclear how to select an appropriate kernel function. In particular, choosing a good kernel often entails some domain knowledge about the underlying learning applications. 
In practice, it can be difficult for the practitioner to find prior justification for the use of one kernel instead of another to generate random feature maps. Moreover, a bad kernel model yields a poor classification accuracy.

Although such kernel model selection issues can be addressed via the cross-validation, jackknife, or their approximate surrogates \cite{beirami2017optimal,wang2018approximate,giordano2018return}, those statistical approaches often slow down the training process as they repeatedly re-fit the model.  Thus, to facilitate the kernel model selection problem in MMD GANs \cite{li2017mmd} and kernel SVMs \cite{friedman2001elements}, in this paper we put forth a novel framework to learn a good kernel function from the training data-set. The proposed kernel learning approach is based on a distributional optimization problem to learn a good distribution for  the random feature model of Rahimi and Recht \cite{rahimi2008random,rahimi2009weighted}. The main contributions of this paper are summarized below:
\begin{itemize}
	\item Based on the notion of \textit{the kernel-target alignment}, we characterize a distributional optimization problem to learn a good distribution for Rahimi and Recht's random feature model of a kernel function \cite{rahimi2008random,rahimi2009weighted}.
	
	\item To obtain a tractable optimization problem, we leverage the sample average approximation (SAA) method to transform the infinite dimensional distributional optimization problem to a finite dimensional optimization problem.
	
	\item We establish the consistency of the finite sample average approximation. In particular, we show that as the number of samples in SAA tends to infinity, the optimal value of the finite sample estimates tend to their population values. 
	
	\item We propose a particle stochastic gradient descent method to solve the finite dimensional optimization problem associated with the Monte-Carlo sampling method. Using a mean-field analysis, we then show that the interactive particle system with SGD dynamics follows the gradient decent path on the Wasserstein manifold to minimize the distributional optimization problem. In this sense, we establish the consistency of the proposed particle SGD method for solving the distributional optimization problem.
	
	\item We validate our kernel learning approach for generative modeling and discriminative analysis on benchmark data-sets. 
\end{itemize}

\subsection{Related works}

The mean-field description of SGD dynamics has been studied in several prior works for different information processing tasks. Wang \textit{et al.} \cite{wang2017scaling} consider the problem of online learning for the principal component analysis (PCA), and analyze the scaling limits of different online learning algorithms based on the notion of \textit{finite exchangeability}.  In their seminal papers, Montanari and co-authors  \cite{mei2018mean,javanmard2019analysis, mei2019mean} consider the scaling limits of SGD for training a two-layer neural network, and characterize the related Mckean-Vlasov PDE for the limiting distribution of the empirical measure associated with the weights of the input layer. They also establish the uniqueness and existence of the solution for the PDE using the connection  between Mckean-Vlasov type PDEs and the gradient flows on the Wasserstein manifolds established by Otto \cite{otto2001geometry}, and Jordan, Kinderlehrer, and Otto \cite{jordan1998variational}. Similar mean-field type results for two-layer neural networks are also studied recently in \cite{rotskoff2018neural,sirignano2018mean}. 

The present application of the mean-field theory to the kernel optimization is partly motivated by the work of Sinha and Duchi \cite{sinha2016learning}, which studied a distributional optimization for kernel learning with random features in the context of classification problems. Therein, the authors have proposed a robust optimization framework for the importance sampling of the random features. In contrast to the work of \cite{sinha2016learning} that assign a weight (importance) to each sample and optimizes the weights, in this paper we directly optimize the samples.

For generative modeling using MMD GANs, kernel learning has been studied in several prior works \cite{li2019implicit,wang2018improving,gretton2012optimal}. Nevertheless, those methods are either based on heuristics, or are difficult to characterize theoretically. Our work is also related to the unpublished work of Wang, \textit{et al.} \cite{wangsolvable}, which proposes a solvable model of GAN and analyzes the scaling limits. However, our GAN model is significantly different from \cite{wangsolvable} and is based on the notion of the kernel MMD.  Our work for kernel learning in generative modeling is also closely related to the recent work of Li, \textit{et al} \cite{li2019implicit} which proposes an implicit kernel learning method based on the following kernel definition
\begin{align}
K_{h}(\iota(\bm{x}),\iota(\bm{y}))=\expect_{\bm{\xi}\sim \mu_{0}}\left[e^{(ih(\bm{\xi})(\iota(\bm{x})-\iota(\bm{y})))}\right],
\end{align}
where $\mu_{0}$ is a user defined base distribution, and $h\in \mathcal{H}$ is a functions that transforms the base distribution $\mu_{0}$ into a distribution $\mu$ that provides a better kernel. Therefore, the work of Li, \textit{et al} \cite{li2019implicit} \textit{implicitly} optimizes the distribution of random features through a function. In contrast, the proposed distributional optimization framework in this paper optimizes the distribution of random feature \textit{explicitly}, via optimizing their empirical measures. Perhaps more importantly from a practical perspective is the fact that our kernel learning approach does not require the user-defined function class $\mathcal{H}$. Moreover, our particle SGD method in \eqref{Eq:SGD} obviates tuning hyper-parameters related to the implicit kernel learning method such as the gradient penalty factor and the variance constraint factor (denoted  by $\lambda_{GP}$ and $\lambda_{h}$, respectively, in Algorithm 1 of \cite{li2019implicit}).

For classification problems using kernel SVMs, Cortes, \textit{et al}. studied a kernel learning procedure from the class of mixture of base kernels.  They have also studied the generalization bounds of the proposed methods. The same authors have also studied a two-stage kernel learning in \cite{cortes2010two} based on a notion of alignment. The first stage of this technique consists of learning a kernel that is a convex combination of a set of base kernels. The second stage consists of using the learned kernel with a standard kernel-based learning algorithm such as SVMs to select a prediction hypothesis.  In \cite{lanckriet2004learning}, the authors have proposed a semi-definite programming for the kernel-target alignment problem. An alternative approach to kernel learning is to sample random features from an arbitrary distribution (without tuning its hyper-parameters) and then apply a supervised feature screening method to distill random features with high discriminative power from redundant random features. This approach has been adopted by Shahrampour \textit{et al}. \cite{shahrampour2017data}, where an energy-based exploration of random features is proposed. The feature selection algorithm in \cite{shahrampour2017data} employs a score function based on the kernel polarization techniques of \cite{baram2005learning} to explore the domain of random features and retains samples with the highest score.

The rest of this paper is organized as follows. In Section \ref{Section:Generative Moment Matching Network}, we propose the kernel learning approach for the generative modeling. In Section \ref{Section:Kernel learning in discriminative models}, we study the kernel learning approach for the discriminative modeling. In Section \ref{Section:Theoretical Results}, we present the main theoretical results, while deferring their proofs to Appendix. In Section \ref{Section:Experiments_on_Synthetic_Data_Set}, we evaluate our kernel learning approach for generative modeling using synthetic data-set and CIFAR-10 and MNIST benchmark data-sets. In Section \ref{Section:Empirical Results for Classification Tasks}, we provide empirical evaluations of our proposed kernel learning approach for classification via the kernel SVMs, where we evaluate our method on some benchmark data-sets. Lastly, in Section \ref{Section:Conclusion}, we conclude this paper.

\section{Kernel learning in generative models}
\label{Section:Generative Moment Matching Network}

Assume we are given data $\{\bm{v}_{i}\}_{i=1}^{n}$ that are sampled from an unknown distribution $P_{\bm{V}}$ with the support $\mathcal{V}$. In many unsupervised tasks, we wish to attain new samples from the distribution $P_{\bm{V}}$ without directly estimating it. Generative Adversarial Network (GAN) \cite{goodfellow2014generative} provides such a framework. In vanilla GAN, a deep network $\mathcal{G}(\cdot;\bm{W})$ parameterized by $\bm{W}\in \mathcal{W}$ is trained as a generator to transform the samples $\bm{Z}\sim P_{\bm{Z}},\bm{Z}\in \mathcal{Z}$ from a user-defined distribution $P_{\bm{Z}}$ (\textit{e.g.} Gaussian distribution) into a new sample $\mathcal{G}(\bm{Z};\bm{W}) \sim P_{\bm{W}}$, such that the distributions $P_{\bm{W}}$ and $P_{\bm{V}}$ are close under some specified metric. In addition, a discriminator network $\mathcal{D}(\cdot;\bm{\delta})$ parameterized by $\bm{\delta}\in \Delta$ is also trained to reject or accept the generated samples as a realization of the data distribution. The training of the generator and discriminator networks is then accomplished via solving a minimax optimization problem as below
\begin{align}
\label{Eq:MinMax_Problem}
\min_{\bm{W}\in \mathcal{W}}\max_{\bm{\delta}\in \Delta}\expect_{ P_{\bm{V}}}[\mathcal{D}(\bm{X};\bm{\delta})]+\expect_{P_{\bm{Z}}}[\log(1-\mathcal{D}(\mathcal{G}(\bm{Z};\bm{W});\bm{\delta}))].
\end{align}
In the high dimensional regimes, the generator trained via the minimax program of Eq. \eqref{Eq:MinMax_Problem} can potentially collapse to a single mode of distribution where it always emits the same point \cite{che2016mode}. To overcome this shortcoming, other adverserial generative models are developed in the literature, which propose to modify or replace the discriminator network by a statistical two-sample test based on the notion of the maximum mean discrepancy (MMD). Below, we formalize the notion of MMD:
\begin{definition}\textsc{(Maximum Mean Discrepancy, \cite{gretton2007kernel})}
	\label{Def:MMD}
	Let $(\mathcal{X},d)$ be a metric space, $\mathcal{F}$ be a class of functions $f:\mathcal{X}\rightarrow \real$, and $P,Q\in \mathcal{B}(\mathcal{X})$ be two probability measures from the set of all Borel probability measures $\mathcal{B}(\mathcal{X})$ on $\mathcal{X}$. The maximum mean discrepancy (MMD) between the distributions $P$ and $Q$ with respect to the function class $\mathcal{F}$ is defined below
	\begin{align}
	\label{Eq:different_choices}
	\mathrm{MMD}_{\mathcal{F}}[P,Q]\df \sup_{f\in \mathcal{F}}\int_{\mathcal{X}} f(\bm{x})(P-Q)(\mathrm{d}\bm{x}).
	\end{align}
\end{definition}
Adapting different function classes $\mathcal{F}$ in Eq. \eqref{Eq:different_choices} of Definition \eqref{Def:MMD} yield different adversarial models such as Wasserstein GANs (WGAN) \cite{arjovsky2017wasserstein}, $f$-GANs \cite{nowozin2016f}, GMMN \cite{li2017mmd}, and MMD GAN \cite{li2015generative}. In the latter two cases, the function class $\mathcal{F}$ corresponds to a  reproducing kernel Hilbert space (RKHS) of functions with a kernel $K:\mathcal{X}\times \mathcal{X}\rightarrow \real$, denoted by $(\mathcal{H}_{\mathcal{X}},K)$. For RKHS function class, the \textit{squared} MMD in Eq. \eqref{Eq:different_choices} between the distributions $P=P_{\bm{V}}$ and $Q=P_{\bm{W}}$ has the following expression\footnote{We note that the notation $\mathrm{MMD}_{K}[P_{\bm{V}},P_{\bm{W}}]$ is conventionally used to denote the square root of the expression on the right hand side of Eq. \eqref{Eq:kernel_MMD}. However, to simplify the arguments in subsequent results and their proofs, we adapt the notation of Eq. \eqref{Eq:kernel_MMD}.}
\begin{align}
\nonumber
\mathrm{MMD}_{K}[P_{\bm{V}},P_{\bm{W}}]&\df \sup_{f\in \mathcal{H}_{\mathcal{X}}}\left\|\int_{\mathcal{X}} K(\cdot,\bm{x})(\mathrm{d}P_{\bm{V}}(\bm{x})-\mathrm{d}P_{\bm{W}}(\bm{x}))\right\|^{2}_{\mathcal{H}_{\mathcal{X}}}\\ \label{Eq:kernel_MMD}
&=\expect_{P^{\otimes 2}_{\bm{V}}}[K(\bm{V};\bm{V}')]+\expect_{ P^{\otimes 2}_{\bm{W}}}[K(\bm{W},\bm{W}')]-2\expect_{P_{\bm{V}}, P_{\bm{W}}}[K(\bm{V};\bm{W})], 
\end{align}
where $\mathcal{X}=\mathcal{V}\cup \mathcal{W}$. 

Instead of training the generator via solving the Min-Max optimization in Eq. \eqref{Eq:MinMax_Problem}, the MMD GAN model of \cite{li2015generative} proposes to optimize the discrepancy between two distributions via optimization of an embedding function $\iota:\real^{d}\mapsto \real^{d_{0}},d_{0}\ll d$, \textit{i.e.},
\begin{align}
\label{Eq:Optimization_Framework}
\min_{\bm{W}\in \mathcal{W}}\max_{\iota\in \mathcal{Q}} {\mathrm{MMD}}_{k\circ \iota}[P_{\bm{V}},P_{\bm{W}}],
\end{align}
where  $k:\real^{d_{0}}\times \real^{d_{0}}\rightarrow \real$ is a user-defined fixed kernel. In \cite{li2015generative}, the proposal for the kernel $k:\real^{d_{0}}\times \real^{d_{0}}\rightarrow \real$ is a mixture of the Gaussians,
\begin{align}
\label{Eq:Gaussian_Mixture}
k\circ \iota (\bm{x},\bm{y})=k(\iota(\bm{x}),\iota(\bm{y}))=\sum_{i=1}^{m}\exp\left(\dfrac{\|\iota(\bm{x})-\iota(\bm{y})\|_{2}^{2}}{\sigma^{2}_{i}}\right),
\end{align}
where the bandwidth parameters $\sigma_{1},\cdots,\sigma_{m}>0$. However, there are two challenges associated with the optimization of the embedding map with a per-defined Gaussian mixture  kernel:
\begin{itemize}
	\item[(\textit{i})] The Gaussian mixture kernel is not expected to be optimal. Even if the Gaussian mixture kernel  yields a good performance on a given data-set, there are no theoretical guarantees to suggest that it performs well across all data-sets that are encountered in practice.
	
	\item[(\textit{ii})] When the Gaussian mixture kernel model is admissible,  the choice of the Gaussian bandwidths $\sigma_{1},\cdots,\sigma_{m}$ poses a statistical model selection problem. Although such model selection issues can be addressed using the cross-validation or jackknife, in practice such methods slow down the training process due to repeatedly re-fitting the model.
\end{itemize}
To address the aforementioned issues, in this paper we propose to optimize the MMD loss with respect to the underlying kernel 
\begin{align}
\label{Eq:Minimax_MMD}
\min_{\bm{W}\in \mathcal{W}}\max_{K\in \mathcal{K}} {\mathrm{MMD}}_{K}[P_{\bm{V}},P_{\bm{W}}],
\end{align}
over a suitable kernel class $\mathcal{K}$. Notice that the optimization of the embedding function in Eq. \eqref{Eq:Optimization_Framework} is a special case of the general kernel learning problem formulated in Eq. \eqref{Eq:Minimax_MMD} when the kernel class is selected as $\mathcal{K}\df \{K: K(\bm{x},\bm{y})=k(\iota(\bm{x}),\iota(\bm{y})),\iota\in \mathcal{Q}\}$.

\subsection{Robust distributional optimization for kernel learning}
\label{subection:Kernel-Target Alignment as an Unbiased Estimator of MMD Loss}

To address the kernel model selection issue in MMD GAN \cite{li2017mmd}, we consider a kernel optimization scheme with random features \cite{rahimi2008random,rahimi2009weighted}. Let $\varphi: \real^{d}\times \real^{D} \rightarrow [-1, 1]$ denotes the explicit feature maps and $\mu\in \mathcal{M}(\real^{D})$ denotes a probability measure from the space of probability measures $\mathcal{M}(\real^{D})$ on $\real^{D}$. The kernel function is characterized via the explicit feature maps using the following integral equation
\begin{align}
\label{Eq:Generative_Distribution}
K_{\mu}(\bm{x},\bm{y})=\expect_{\mu}[\varphi(\bm{x};\bm{\xi})\varphi(\bm{y};\bm{\xi})]=\int_{\Xi} \varphi(\bm{x};\bm{\xi})\varphi(\bm{y};\bm{\xi})\mu(\mathrm{d}\bm{\xi}).
\end{align}
Let $\mathrm{MMD}_{\mu}[P_{\bm{V}},P_{\bm{W}}]\df \mathrm{MMD}_{K_{\mu}}[P_{\bm{V}},P_{\bm{W}}]$. Then, the kernel optimization problem in can be formulated as a distribution optimization for random features, \textit{i.e},
\begin{align}
\label{Eq:Population_MMD_Loss}
\min_{\bm{W}\in \mathcal{W}}\sup_{\mu\in \mathcal{P}} \mathrm{MMD}_{\mu}[P_{\bm{V}},P_{\bm{W}}].
\end{align}
Here,  $\mathcal{P}$ is the set of probability distributions corresponding to a kernel class $\mathcal{K}$. In the sequel, we consider $\mathcal{P}$ to be the distribution ball of radius $R$ as below
\begin{align}
\label{Eq:distributional_ball}
\mathcal{P}\df \ball^{p}_{R}(\mu_{0})\df \{\mu\in \mathcal{M}(\real^{D}):W_{p}({\mu,\mu_{0}})\leq R\},
\end{align}
where $\mu_{0}$ is a user-defined base distribution, and 
To establish the proof, we consider the 
$W_{p}(\cdot,\cdot)$ is the $p$-Wasserstein distance defined as below
\begin{align}
\label{Eq:Wasserstein}
W_{p}(\mu_{1},\mu_{2})\df  \left(\inf_{\pi \in \Pi(\mu_{1},\mu_{2})} \int_{\real^{D}\times \real^{D}}\|\bm{\xi}_{1}-\bm{\xi}_{2}\|^{p}_{2} \mathrm{d}\pi(\bm{\xi}_{1},\bm{\xi}_{2})\right)^{1\over p},
\end{align}
where the  infimum is taken with respect to all couplings $\pi$ of the measures $\mu,\mu_{0}\in \mathcal{M}(\real^{D})$, and $\Pi(\mu,\mu_{0})$ is the set of all such couplings with the marginals $\mu$ and $\mu_{0}$.

The kernel MMD loss function in Eq. \eqref{Eq:Population_MMD_Loss} is defined with respect to the unknown distributions of the data-set $P_{\bm{V}}$ and the model $P_{\bm{W}}$. Therefore, we construct an unbiased estimator for the MMD loss function in Eq. \eqref{Eq:Population_MMD_Loss} based on  the training samples. To describe the estimator, sample the labels from a uniform distribution $y_{1},\cdots,y_{n}\sim_{\text{i.i.d.}}\mathrm{Uniform}\{-1,+1\}$, where we assume that the number of positive and negative labels are balanced. In particular, consider the set of positive labels $\mathcal{I}=\{i\in \{1,2,\cdots,n\}: y_{i}=+1\}$, and negative labels $\mathcal{J}=\{1,2,\cdots,n\}/\mathcal{I}$, where their cardinality is $|\mathcal{I}|=|\mathcal{J}|={n\over 2}$. We consider the following assignment of labels:
\begin{itemize}[leftmargin=*]
	\item \textit{Positive class labels}: If $y_{i}=+1$, sample the corresponding feature map from data-distribution $\bm{x}_{i}=\bm{v}_{i}\sim  P_{\bm{V}}$. 
	\item \textit{Negative class labels}: If $y_{i}=-1$, sample from the corresponding feature map from the generated distribution $\bm{x}_{i}=\mathcal{G}(\bm{Z}_{i},\bm{W})\sim P_{\bm{W}},\bm{Z}_{i}\sim P_{\bm{Z}}$. 
\end{itemize}
By this construction, the joint distribution of features and labels $P_{Y,\bm{X}}$ has the marginals $P_{\bm{X}|Y=+1}=P_{\bm{V}}$, and $P_{\bm{X}|Y=-1}=P_{\bm{W}}$. Moreover, the following statistic, known as the \textit{kernel alignment}  in the literature (see, \textit{e.g.}, \cite{sinha2016learning,cortes2012algorithms}), is an unbiased estimator of the MMD loss in  Eq. \eqref{Eq:Population_MMD_Loss},
\begin{align}
\label{Eq:standard_kernel_target}
\min_{\bm{W}\in \mathcal{W}}\sup_{\mu\in \mathcal{P}}\widehat{{\mathrm{MMD}}}_{\mu}\Big[P_{\bm{V}},P_{\bm{W}}\Big]\df {8\over n(n-1)}\sum_{1\leq i<j\leq n} y_{i}y_{j}K_{\mu}(\bm{x}_{i},\bm{x}_{j}).
\end{align}
See Appendix \ref{App:Proof of Theorem_1} for the related proof. The kernel alignment in  \eqref{Eq:standard_kernel_target} can also be viewed through the lens of the risk minimization 
\begin{subequations}
	\label{Eq:Population_Objective_Function_0}
	\begin{align}
	\min_{\bm{W}\in \mathcal{W}}\inf_{\mu\in \mathcal{P}}  \widehat{\mathrm{MMD}}^{\alpha}_{\mu}\Big[P_{\bm{V}},P_{\bm{W}}\Big]&\df {8\over n(n-1)\alpha}\sum_{1\leq i<j\leq n}\left(\alpha y_{i}y_{j}-K_{\mu}(\bm{x}_{i},\bm{x}_{j})\right)^{2}\\
	&={8\over n(n-1)\alpha}\sum_{1\leq i<j\leq n}\left(\alpha y_{i}y_{j}-\expect_{\mu}[\varphi(\bm{x}_{i};\bm{\xi})\varphi(\bm{x}_{j};\bm{\xi})]\right)^{2}.
	\end{align}
\end{subequations}
Here, $\alpha>0$ is a scaling factor that determines the separation between feature vectors, and  $\bm{K}_{\ast}\df \alpha \bm{y}\bm{y}^{T}$ is the ideal kernel that provides the maximal separation between the feature vectors over the training data-set, \textit{i.e.}, $K_{\ast}(\bm{x}_{i},\bm{x}_{j})=\alpha$ when features have identical labels $y_{i}=y_{j}$, and $K_{\ast}(\bm{x}_{i},\bm{x}_{j})=-\alpha$ otherwise. Upon expansion of the risk function in \eqref{Eq:Population_Objective_Function_0}, it can be easily shown that it reduces to the kernel alignment in \eqref{Eq:standard_kernel_target} when $\alpha\rightarrow +\infty$. Intuitively, the risk minimization in \eqref{Eq:Population_Objective_Function_0} gives a feature space in which pairwise distances are similar to those in the output space $\mathcal{Y}=\{-1,+1\}$.

\subsection{SAA for distributional optimization}

The distributional optimization problem in Eq. \eqref{Eq:Population_MMD_Loss} is infinite dimensional, and thus cannot be solved directly. To obtain a tractable optimization problem, instead of optimizing with respect to the distribution $\mu$ of random features, we optimize the i.i.d. samples (particles) $\bm{\xi}^{1},\cdots,\bm{\xi}^{N}\sim_{\text{i.i.d.}}\mu$ generated from the distribution. The empirical distribution of these particles is accordingly defined as follows
\begin{align}
\widehat{\mu}^{N}(\bm{\xi})\df {1\over N}\sum_{k=1}^{N} \delta(\bm{\xi}-\bm{\xi}^{k}),
\end{align}
where $\delta(\cdot)$ is the Dirac's delta function concentrated at zero. In practice, the optimization problem in Eq. \eqref{Eq:Population_Objective_Function_0} is solved via the Monte-Carlo sample average approximation of the objective function,
\small{
	\begin{subequations}
		\label{Eq:Empirical_Objective_Function}
		\begin{align}
		\nonumber
		\min_{\bm{W}\in \mathcal{W}} \min_{\widehat{\mu}^{N}\in \mathcal{P}_{N}}  \widehat{\mathrm{MMD}}^{\alpha}_{\widehat{\mu}^{N}}\Big[P_{\bm{V}},P_{\bm{W}}\Big]
		&={8\over n(n-1)\alpha}\sum_{1\leq i<j\leq n}\Big(\alpha y_{i}y_{j} -\expect_{\widehat{\mu}^{N}}[\varphi(\bm{x}_{i};\bm{\xi}^{k})\varphi(\bm{x}_{j};\bm{\xi}^{k})]\Big)^{2}\\
		&={8\over n(n-1)\alpha}\sum_{1\leq i<j\leq n}\Big(\alpha y_{i}y_{j} -{1\over N}\sum_{k=1}^{N}\varphi(\bm{x}_{i};\bm{\xi}^{k})\varphi(\bm{x}_{j};\bm{\xi}^{k})\Big)^{2},
		\end{align}
	\end{subequations}
}\normalsize
where $\mathcal{P}_{N}\df \ball^{N}_{R}(\widehat{\mu}_{0}^{N}) =\Big\{\widehat{\mu}^{N}\in \mathcal{M}(\real^{D}):W_{p}(\widehat{\mu}^{N},\widehat{\mu}_{0}^{N})\leq R\Big\}$, and $\widehat{\mu}_{0}^{N}$ is the empirical measure associated with the samples $\bm{\xi}_{0}^{1},\cdots,\bm{\xi}_{0}^{N}\sim_{\text{i.i.d.}}\mu_{0}$. The empirical objective function in Eq. \eqref{Eq:Empirical_Objective_Function} can be optimized with respect to the samples $\bm{\xi}^{1},\cdots,\bm{\xi}^{N}$ using the particle stochastic gradient descent. For the optimization problem in Eq. \eqref{Eq:Empirical_Objective_Function}, the (projected) stochastic gradient descent (SGD) takes the following recursive form,\footnote{To avoid clutter in our subsequent analysis, the normalization factor ${16\over n(n-1)}$ of the gradient is omitted by modifying the step-size $\eta$.}

\begin{subequations}
	\label{Eq:SGD}
	\small{\begin{align}
		\bm{\xi}^{k}_{m+1}&=\bm{\xi}^{k}_{m}-{\eta\over N}\left(y_{m}\widetilde{y}_{m}- {1\over \alpha N}\sum_{k=1}^{N}\varphi(\bm{x}_{m};\bm{\xi}_{m}^{k})\varphi(\widetilde{\bm{x}}_{m};\bm{\xi}_{m}^{k})\right)\nabla_{\bm{\xi}}\Big(\varphi(\bm{x}_{m};\bm{\xi}_{m}^{k})\varphi(\widetilde{\bm{x}}_{m};\bm{\xi}_{m}^{k})\Big),
		\end{align}}\normalsize
\end{subequations}
for $k=1,2,\cdots,N$, where $(y_{m},\bm{x}_{m}),(\widetilde{y}_{m},\widetilde{\bm{x}}_{m})\sim_{\text{i.i.d}} P_{\bm{x},y}$ and $\eta \in \real_{> 0}$ denotes the learning rate of the algorithm, and the initial particles are $\bm{\xi}_{0}^{1},\cdots,\bm{\xi}_{0}^{N}\sim_{\text{i.i.d.}}\mu_{0}$.  At each iteration of the SGD dynamic in Eq. \eqref{Eq:SGD}, a feasible solution for the inner optimization of the empirical risk function in Eq. \eqref{Eq:Empirical_Objective_Function} is generated via the empirical measure
\begin{align}
\label{Eq:Empirical_Measure}
\widehat{\mu}_{m}^{N}(\bm{\xi})=\dfrac{1}{N}\sum_{k=1}^{N}\delta(\bm{\xi}-\bm{\xi}^{k}_{m}).
\end{align}
Indeed, we prove in Section \ref{Section:Theoretical Results} that for an appropriate choice of the learning rate $\eta>0$, the empirical measure in Eq. \eqref{Eq:Empirical_Measure} remains inside the distribution ball $\widehat{\mu}_{m}^{N}\in \mathcal{P}_{N}$ for all $m\in [0,NT]\cap \integer$, and is thus a feasible solution for the empirical risk minimization \eqref{Eq:Empirical_Objective_Function} (see Corollary \ref{Corollary:1} in Section \ref{Section:Theoretical Results}).

\subsection{Proposed MMD GAN with kernel learning} 

In Algorithm \ref{Algorithm:1}, we describe the proposed method MMD GAN model with the kernel learning approach described earlier. Algorithm \ref{Algorithm:1} has an inner loop for the kernel training and an outer loop for training the generator, where we employ RMSprop \cite{tieleman2012lecture}. Our proposed MMD GAN model is distinguished from MMD GAN of \cite{li2017mmd} in that we learn a good kernel function in Eq. \eqref{Eq:distinguished} of the inner loop instead of optimizing the embedding function that is implemented by an auto-encoder. However, we mention that our kernel learning approach is compatible with the auto-encoder implementation of \cite{li2017mmd} for dimensionality reduction of features (and particles).  In the case of including an auto-encoder, the inner loop in Algorithm \ref{Algorithm:1} must be modified to add an additional step for training the auto-encoder. However, to convey the main ideas more clearly, the training step of the auto-encoder is omitted from Algorithm \ref{Algorithm:1}.

\begin{algorithm}[t!]\scriptsize{
		\caption{\small{MMD GAN with a supervised kernel learning (Monte-Carlo Approach)}}
		\label{Algorithm:1}
		\begin{algorithmic}
			\State {\bfseries Inputs:} {The learning rates $\tilde{\eta},\eta>0$ , the number of iterations of discriminator per generator update $T\in \mathbb{N}$, the batch-size $n$, the number of random features $N\in \mathbb{N}$. Regularization parameter $\alpha>0$.}
			\While{$\bm{\omega}$ has not converged}
			\For{$t=1,2,\cdots,T$}
			\State{Sample the labels $y,\widetilde{y}\sim_{\text{i.i.d}} \mathrm{Uniform}\{-1,1\}$. }
			
			\State{Sample the features $\bm{x}|y=+1\sim P_{\bm{V}}$, and $\bm{x}|y=-1\sim P_{\bm{W}}$. Similarly,  $\widetilde{\bm{x}}|\widetilde{y}=+1\sim P_{\bm{V}}$, and $\widetilde{\bm{x}}|\widetilde{y}=-1\sim P_{\bm{W}}$.} 
			
			\State{For all $k=1,2,\cdots,N$, update the particles,}
			\begin{align}
			\label{Eq:distinguished}
			\bm{\xi}^{k}\leftarrow\bm{\xi}^{k}-{\eta\over N}\left(\alpha y\widetilde{y}- {1\over N}\sum_{k=1}^{N}\varphi(\bm{x};\bm{\xi}^{k})\varphi(\widetilde{\bm{x}};\bm{\xi}^{k})\right)\nabla_{\bm{\xi}}\Big(\varphi(\bm{x};\bm{\xi}^{k})\varphi(\widetilde{\bm{x}};\bm{\xi}^{k})\Big),
			\end{align}
			\EndFor   
			
			\State{Sample a balanced minibatch of labels $\{y_{i}\}_{i=1}^{n}\sim_{\text{i.i.d.}} \mathrm{Uniform}\{-1,+1\}$.}
			
			\State{Sample the minibatch $\{\bm{x}\}_{i=1}^{n}$ such that $\bm{x}_{i}|y_{i}=+1\sim P_{\bm{V}}$, and $\bm{x}_{i}|y_{i}=-1\sim P_{\bm{W}}$ for all $i=1,2,\cdots,n$.}
			
			\State{Update the generator}
			\begin{subequations}
				\begin{align}
				\bm{g}_{\bm{\omega}}&\leftarrow \nabla_{\bm{\omega}}\widehat{D}^{\alpha}_{\widehat{\mu}^{N}}\Big[P_{\bm{V}},P_{\bm{W}}\Big], \quad 		\widehat{\mu}^{N}={1\over N}\sum_{k=1}^{N}\delta(\bm{\xi}-\bm{\xi}^{k}).\\
				\bm{w}&\leftarrow \bm{w}-\tilde{\eta}\text{RMSprop}(\bm{g}_{\bm{\omega}},\bm{\omega}).
				\end{align}
			\end{subequations}
			\EndWhile
		\end{algorithmic}
	}
\end{algorithm}\normalsize

\section{Kernel learning in discriminative models}
\label{Section:Kernel learning in discriminative models}

In this section, we now consider the kernel learning problem in discriminative models. We focus on the classical setting of supervised learning, whereby we are given are given $n$ feature vectors and their corresponding uni variate class labels  $(\bm{x}_{1},y_{1}),\cdots,(\bm{x}_{n},y_{n})\sim_{\text{i.i.d.}} P_{\bm{X},Y}$,  $(\bm{x}_{i},y_{i})\in \mathcal{X}\times \mathcal{Y}\subset \real^{d}\times \real$. In this paper, we are concerned with the binary classification and so the target spaces is given by $\mathcal{Y}=\{-1,1\}$.

Given a loss function $\ell:\mathcal{Y}\times \real \mapsto \real_{+}$, where $\ell(\cdot,z)$ is convex (\textit{e.g.} hinge function for SVMs, or quadratic for linear regression), and a reproducing Hilbert kernel space $(\mathcal{H}_{\mathcal{X}},K)$, a classifier $f\in \mathcal{H}_{\mathcal{X}}\oplus \{1\}$ is learned by minimizing the empirical risk minimization with a quadratic regularization,
\begin{align}
\label{Eq:Empirical_Loss_Minimization}
\inf_{f\in \mathcal{H}_{\mathcal{X}}}\widehat{R}[f]\df \dfrac{1}{n}\sum_{i=1}^{n} \ell(y_{i},f(\bm{x}_{i}))+\dfrac{\lambda}{2}\|f\|_{\mathcal{H}_{\mathcal{X}}}^{2},
\end{align}
where $\|\cdot\|_{\mathcal{H}_{\mathcal{X}}}$ is the Hilbert norm, $\lambda>0$ is the parameter of the regularization. Using the expansion $f(\bm{x})=\omega_{0}+\sum_{i=1}^{n-1}\omega_{i}K(\bm{x},\bm{x}_{i})$, and optimizing over the kernel class $\mathcal{K}$ yields
\begin{subequations}
\begin{align}
&\text{Primal}:\min_{\bm{\omega}\in \real^{n}}\min_{K\in \mathcal{K}} \dfrac{1}{n}\sum_{i=1}^{n} \ell\left(y_{i},\omega_{0}+\sum_{i=1}^{n-1}\omega_{i}K(\bm{x},\bm{x}_{i}) \right)+\dfrac{\lambda}{2}\|\bm{\omega}\|_{2}^{2}\\
&\text{Dual}:\max_{\bm{\alpha}\in \real^{n}}\min_{K\in \mathcal{K}}-\sum_{i=1}^{n}\ell^{\ast}(\beta_{i},y_{i})-\dfrac{1}{2\lambda}\bm{\beta}^{T}\bm{K}\bm{\beta},
\end{align}
\end{subequations}
where $\ell^{\ast}(\beta,y)=\sup_{z\in \real}\{\beta z-\ell(\beta,y)\}$ is the Fenchel's conjugate, and $\bm{K}\df [K(\bm{x}_{i},\bm{x}_{j})]_{(i,j)\in [n]\times [n]}$ is the kernel Gram matrix.

In the particular case of the soft margin SVMs classifier $\ell(y,z)=\max\{0,1-yz\}$, the primal and dual optimizations take the following forms
\begin{subequations}
\begin{align}
&\text{Primal}: \min_{\bm{\omega}\in \real^{n}}\min_{K\in \mathcal{K}}\dfrac{1}{n}\sum_{i=1}^{n}\max\left\{0,1-y_{i}\left(\omega_{0}+\sum_{i=1}^{n-1}\omega_{i}K(\bm{x},\bm{x}_{i}) \right) \right\}+\dfrac{\lambda}{2}\|\bm{\omega}\|_{2}^{2}     \\ \label{Eq:Dual_optimization}
&\text{Dual}: \max_{\bm{\beta}\in \real^{n}:\langle \bm{\beta},\bm{y} \rangle=0,\bm{0}\preceq \bm{\beta} \preceq C\bm{1}}\min_{K\in \mathcal{K}}  \langle \bm{\beta},\bm{1}\rangle-\dfrac{1}{2}\bm{\beta} (\bm{K}\odot \bm{y}^T\bm{y})  \bm{\beta}^T,
\end{align}
\end{subequations}
where $\odot$ is the Hadamard (element-wise) product. The last term in the dual objective function in Eq. \eqref{Eq:Dual_optimization}, suggest that for a fixed dual vector $\bm{\beta}\in \real^{n}$, the optimal choice of the kernel can be obtained by maximizing the kernel target alignment, \textit{i.e.},
\begin{align}
\label{Eq:Conditional_distributions}
\max_{\mu \in \mathcal{P}} \widehat{\mathrm{MMD}}_{\mu}[P_{+},P_{-}]= \dfrac{8}{n(n-1)} \sum_{1\leq i<j\leq n}y_{i}y_{j}K_{\mu}(\bm{x}_{i},\bm{x}_{j}),
\end{align}
where in the above optimization we used the random feature model of Eq. \eqref{Eq:Generative_Distribution} to transform the optimization of the kernel over the kernel class $\mathcal{K}$ to a distributional optimization over the distributional ball $\mathcal{P}$. In Equation \eqref{Eq:Conditional_distributions}s, $P_{+}\df P_{\bm{X}|Y=+1}$ and $P_{-}\df P_{\bm{X}|Y=-1}$ are the conditional distributions of features given the class labels. 

Using a Monte-Carlo sample average approximation similar to Eq. \eqref{Eq:Population_Objective_Function_0}, yields the following optimization problem
\begin{align}
\label{Eq:Discriminative_Model}
\max_{\widehat{\mu}_{N}\in \mathcal{P}_{N}} \dfrac{2}{n(n-1)\alpha}\left(\alpha y_{i}y_{j}-\dfrac{1}{N}\sum_{k=1}^{N}\varphi(\bm{x}_{i};\bm{\xi}^{k})\varphi(\bm{x}_{j};\bm{\xi}^{k})\right)^{2},
\end{align}
where we recall from Eq. \eqref{Eq:Population_Objective_Function_0} that $\alpha>0$ is a regularization parameter. Due to similarity of the kernel-target alignment in the generative model of Eq. \eqref{Eq:Population_Objective_Function_0} and the underlying optimization of \eqref{Eq:Discriminative_Model}, we apply the SGD method of Eq. \eqref{Eq:SGD} to solve Eq. \eqref{Eq:Discriminative_Model}.

\subsection{Proposed SVMs classifier with SGD kernel learning}

In Algorithm \ref{Algorithm:2}, we describe the proposed SVMs classifer with the kernel learning approach. Instead of a joint optimization over the dual vector $\bm{\beta}\in \real^{n}$ and the kernel $K\in \mathcal{K}$, we consider a two phase optimization problem. In the first phase (the \textbf{for} loop), we optimize the kernel using the SGD optimization of Eq. \eqref{Eq:SGD}. In the second phase, we construct the Gram matrix $\bm{\Phi}\bm{\Phi}^{T}$ from random features, where $\bm{\Phi}\df [\varphi(\bm{x}_{i};\bm{\xi}^{k})]_{i\in [n],k\in [N]}$. 

Let $\bm{K}=(K_{\mu}(\bm{x}_{i},\bm{x}_{j}))_{i,j\in [n]}$, and suppose $\bm{\xi}^{1},\cdots,\bm{\xi}^{N}\sim_{\text{i.i.d.}}\mu$. Then, the following concentration results are established by some of the authors of this paper (cf. \cite[Lemma 5.3]{khuzani2019multiple}),
\begin{subequations}
\begin{align}
\label{Eq:Justifies}
\prob(| \|\bm{\Phi}\|^{2}_{F}-N \mathrm{Tr}(\bm{K})|)\geq \delta)&\leq 2^{8}\left(\dfrac{\sigma_{p}nN \mathrm{Diam}(\mathcal{X})}{\delta}\right)\exp\left(-\dfrac{\delta^{2}}{4Nn^{2}(d+2)}\right),\\
\prob(| \|\bm{\Phi}\|^{2}_{2}-N\|\bm{K}\|_{2}|)\geq \delta)&\leq 2^{8}e^{3n\log 3}\left(\dfrac{\sigma_{p}nN \mathrm{Diam}(\mathcal{X})}{\delta}\right)\exp\left(-\dfrac{\delta^{2}}{4Nn^{2}(d+2)}\right),
\end{align}
\end{subequations}
where $\|\cdot\|_{F}$ and $\|\cdot\|_{2}$ are Frobenius and spectral norms, respectively,  $\mathrm{Diam}(\mathcal{X})$ is the diameter of the feature space, $d$ is the dimension of features $\bm{x}_{1},\cdots,\bm{x}_{n}\in \mathcal{X}\subset \real^{d}$, and $\sigma_{p}\df \expect_{\mu}[\|\bm{\xi}\|_{2}^{2}]$ is the second moment of the random vector $\bm{\xi}\sim \mu$. From Eq. \eqref{Eq:Justifies}, we observe that as the number of particles $N$ tends to infinity, the Gram matrix $\bm{\Phi}\bm{\Phi}^{T}$ concentrates in various norms around the (scaled) kernel matrix $\bm{K}$. Due to the concentration phenomenon of Eq. \eqref{Eq:Justifies}, the substitution of the kernel matrix $\bm{K}$ in the dual optimization of Eq. \eqref{Eq:Dual_optimization} with the scaled Gram matrix $\bm{\Phi}\bm{\Phi}^{T}/N$ yields a good approximation. Equation \eqref{Eq:Subsutition_we_made} of Algoirthm \ref{Algorithm:2} is the result of such an approximation.

\begin{algorithm}[t!]\scriptsize{
		\caption{\small{SVMs with a supervised kernel learning (Monte-Carlo Approach)}}
		\label{Algorithm:2}
		\begin{algorithmic}
			\State {\bfseries Inputs:} {The learning rates $\tilde{\eta},\eta>0$ , the number of iterations of discriminator per generator update $T\in \mathbb{N}$, the batch-size $n$, the number of random features $N\in \mathbb{N}$. Regularization parameter $\alpha>0$.}
			\For{$t=1,2,\cdots,T$}
			\State{Sample the labels $y,\widetilde{y}\sim_{\text{i.i.d}} \mathrm{Uniform}\{-1,1\}$. }
			
			\State{Sample the features $\bm{x}|y=+1\sim P_{+}$, and $\bm{x}|y=-1\sim P_{-}$ from the training data-set $\{(y_{1},\bm{x}_{1}),\cdots,(y_{n},\bm{x}_{n})\}$. Similarly,  $\widetilde{\bm{x}}|\widetilde{y}=+1\sim P_{+}$, and $\widetilde{\bm{x}}|\widetilde{y}=-1\sim P_{-}$.} 
			
			\State{For all $k=1,2,\cdots,N$, update the particles,}
			\begin{align}
			\label{Eq:distinguished}
			\bm{\xi}^{k}\leftarrow\bm{\xi}^{k}-{\eta\over N}\left(\alpha y\widetilde{y}- {1\over N}\sum_{k=1}^{N}\varphi(\bm{x};\bm{\xi}^{k})\varphi(\widetilde{\bm{x}};\bm{\xi}^{k})\right)\nabla_{\bm{\xi}}\Big(\varphi(\bm{x};\bm{\xi}^{k})\varphi(\widetilde{\bm{x}};\bm{\xi}^{k})\Big),
			\end{align}
			\EndFor  			
		    \State{Solve the following dual optimization problem}
		    \begin{align}
		    \label{Eq:Subsutition_we_made}
		     \max_{\bm{\beta}\in \real^{n}:\langle \bm{\beta},\bm{y} \rangle=0,\bm{0}\preceq \bm{\beta} \preceq C\bm{1}}\langle \bm{\beta},\bm{1}\rangle-\dfrac{1}{2}\bm{\beta} \left(\dfrac{1}{N}\bm{\Phi}\bm{\Phi}^{T}\odot \bm{y}^T\bm{y}\right)  \bm{\beta}^T,
		    \end{align}
		    where $\bm{\Phi}\df [\varphi(\bm{x}_{i};\bm{\xi}^{k})]_{i\in [n],k\in [N]}\in \real^{n\times N}$.	    		
		\end{algorithmic}
	}
\end{algorithm}\normalsize

\section{Main results: consistency and mean-field analysis}
\label{Section:Theoretical Results}
In this section, we provide theoretical guarantees for the consistency of various approximations we made to optimize the population MMD loss function in \eqref{Eq:Population_MMD_Loss}. We defer the proofs of the following theoretical results to Section \ref{Section:Proofs_of_Main_Theoretical_Results} of Appendix. Due to similarity of the kernel learning approach for generative and classification tasks, we state the underlying assumptions and theoretical results \textit{only} for the generative model. Nevertheless, it is easy to see that our theoretical results generalizes to the classification problem.

\subsection{Assumptions} 

Before we delve into technical results, we state the main assumptions underlying them:
\begin{itemize}
	\item[$\mathbf{(A.1)}$] The feature space $\mathcal{X}=\mathcal{V}\cup \mathcal{W} \subset \real^{d}$ is compact with a finite diameter
	$\mathrm{diam}(\mathcal{X})<\infty$, where $\mathcal{V}=\mathrm{support}(P_{\bm{V}})$ and $\mathcal{W}=\mathrm{support}(P_{\bm{W}})$ are the supports of the distributions $P_{\bm{V}}$ and $P_{\bm{W}}$ respectively.
	
	\item[$\mathbf{(A.2)}$] The feature maps are bounded and Lipchitz almost everywhere (a.e.) $\bm{\xi}\in \real^{D}$. In particular, we assume $\sup_{\bm{x}\in \mathcal{X}}|\varphi(\bm{x};\bm{\xi})|\leq L_{0}$,  $\sup_{\bm{x}\in \mathcal{X}}\|\nabla_{\bm{\xi}}\varphi(\bm{x};\bm{\xi})\|_{2}\leq L_{1}$, and $\sup_{\bm{\xi}\in \real^{D}}\|\nabla_{\bm{x}}\varphi(\bm{x};\bm{\xi})\|\leq L_{2}$. Let $L\df \max\{L_{0},L_{1},L_{2}\}<+\infty$.
	
	\item[$\mathbf{(A.3)}$] Let $\widehat{\mu}^{N}_{0}(\bm{\xi})\df {1\over N}\sum_{k=1}^{N}\delta(\bm{\xi}-\bm{\xi}_{0}^{k})$ denotes the empirical measure for the initial particles $\bm{\xi}_{0}^{1},\cdots,\bm{\xi}_{0}^{N}$. We assume that $\widehat{\mu}^{N}_{0}(\bm{\xi})$ converges (weakly) to a deterministic measure $\mu_{0}\in \mathcal{M}(\real^{D})$. Furthermore, we assume the limiting measure $\mu_{0}$  is absolutely continuous \textit{w.r.t.} Lebesgue measure and has a compact support $\mathrm{support}(\mu_{0})=\Xi\subset \real^{D}$.
\end{itemize}

\subsection{Consistency of finite-sample estimator} 

n this part, we prove that the solution to finite sample optimization problem in \eqref{Eq:Empirical_Objective_Function}  approaches its population optimum in  \eqref{Eq:Population_MMD_Loss} as the number of data points as well as the number of random feature samples tends to infinity. 

In this part, we prove that the solution to finite sample optimization problem in \eqref{Eq:Empirical_Objective_Function}  approaches its population optimum in  \eqref{Eq:Population_MMD_Loss} as the number of data points as well as the number of random feature samples tends to infinity. 

\begin{theorem}\textsc{{(Non-asymptotic Consistency of Finite-Sample Estimator)}}
	\label{Thm:Consistency of Monte Carlo Estimation}
	Suppose conditions $\mathbf{(A.1)}$-$\mathbf{(A.3)}$ of Appendix \ref{Section:Proofs_of_Main_Theoretical_Results} are satisfied. Consider the distribution balls $\mathcal{P}$ and $\mathcal{P}_{N}$ that are defined with respect to the $2$-Wasserstein distance $(p=2)$. Furthermore, consider the optimal MMD values of the population optimization and its finite sample estimate
	\begin{subequations}
		\begin{align}
		(\bm{W}_{\ast},\mu_{\ast})&\df \arg \min_{\bm{W}\in \mathcal{W}}\arg\sup_{\mu\in \mathcal{P}} \mathrm{MMD}_{\mu}[P_{\bm{V}},P_{\bm{W}}].\\
		(\widehat{\bm{W}}^{N}_{\ast},\widehat{\mu}^{N}_{\ast} )&\df \arg\min_{\bm{W}\in \mathcal{W}}\arg\inf_{\widehat{\mu}^{N}\in \mathcal{P}_{N} }\widehat{\mathrm{MMD}}_{\widehat{\mu}^{N}}^{\alpha}[P_{\bm{V}},P_{\bm{W}}],
		\end{align}
	\end{subequations}
	respectively. Then, with the probability of (at least) $1-3\varrho$ over the training data samples $\{(\bm{x}_{i},y_{i})\}_{i=1}^{n}$ and the random feature samples $\{\bm{\xi}_{0}^{k}\}_{k=1}^{N}$, the following non-asymptotic bound holds 
	\begin{align}
	\label{Eq:Upper_Bound_of}
	&\Big|\mathrm{MMD}_{\mu_{\ast}}[P_{\bm{V}},P_{\bm{W}_{\ast}}]-\mathrm{MMD}_{\widehat{\mu}^{N}_{\ast}}[P_{\bm{V}},P_{\widehat{\bm{W}}^{N}_{\ast}}] \Big|\\ \nonumber
	&\leq    \sqrt{\dfrac{L^{2}(d+2)}{N}}\ln^{1\over 2}\left(\dfrac{2^{8}N\mathrm{diam}^{2}(\mathcal{X})}{\varrho}\right)+2\max\left\{\dfrac{c_{1}L^{2}}{n} \ln^{1\over 2}\left(\dfrac{4}{\varrho}\right),\dfrac{c_{2}RL^{4} }{n^{2}}\ln\left(\dfrac{4e^{L^{4}\over 9}}{\varrho}\right) \right\}+\dfrac{8L^{2}}{\alpha},
	\end{align}
	where $c_{1}=3^{1\over 4}\times 2^{4}$, and $c_{2}=9\times 2^{11}$.
\end{theorem}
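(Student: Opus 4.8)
\emph{Proof strategy.} The plan is to split
$\Delta:=\big|\mathrm{MMD}_{\mu_\ast}[P_{\bm{V}},P_{\bm{W}_\ast}]-\mathrm{MMD}_{\widehat{\mu}^N_\ast}[P_{\bm{V}},P_{\widehat{\bm{W}}^N_\ast}]\big|$
into three essentially decoupled approximation errors and to control each one by a uniform deviation estimate before recombining. Write $F(\bm{W},\mu):=\mathrm{MMD}_\mu[P_{\bm{V}},P_{\bm{W}}]$ for the population objective, $\widehat{F}(\bm{W},\mu):=\widehat{\mathrm{MMD}}_\mu[P_{\bm{V}},P_{\bm{W}}]$ for the linear kernel-target alignment, and $\widehat{F}^\alpha$ for the quadratic $\alpha$-risk evaluated on the empirical feature measure $\widehat{\mu}^N$. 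The three gaps are: (i) the Monte-Carlo feature discretization $|\widehat{F}(\bm{W},\mu)-\widehat{F}(\bm{W},\widehat{\mu}^N)|$, controlled by the particle count $N$; (ii) the data-sampling error $|F(\bm{W},\mu)-\widehat{F}(\bm{W},\mu)|$, controlled by $n$, for which the essential input is that $\widehat{F}$ is an \emph{unbiased} $U$-statistic estimator of $F$ (the fact proved in the appendix and already invoked in \eqref{Eq:standard_kernel_target}); and (iii) the surrogate bias relating $\widehat{F}$ to $\widehat{F}^\alpha$. Granting a uniform bound on each gap over $\bm{W}\in\mathcal{W}$ and $\mu$ in the Wasserstein ball, the triangle inequality together with the optimality of $(\bm{W}_\ast,\mu_\ast)$ for $F$, the optimality of $(\widehat{\bm{W}}^N_\ast,\widehat{\mu}^N_\ast)$ for $\widehat{F}^\alpha$, and the $\min_{\bm{W}}\sup_\mu$ saddle structure, then delivers \eqref{Eq:Upper_Bound_of}.

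The surrogate bias is purely algebraic. Expanding the square and using $y_i^2 y_j^2=1$ gives the exact identity $\widehat{\mathrm{MMD}}^\alpha_\mu[P_{\bm{V}},P_{\bm{W}}]=4\alpha-2\,\widehat{\mathrm{MMD}}_\mu[P_{\bm{V}},P_{\bm{W}}]+\tfrac{8}{n(n-1)\alpha}\sum_{1\le i<j\le n}K_\mu(\bm{x}_i,\bm{x}_j)^2$, whose last term lies in $[0,4L^2/\alpha]$ since $|K_\mu|\le L_0^2\le L^2$ by (A.2). Hence a minimizer of the $\alpha$-risk over $\mathcal{P}_N$ is a $(4L^2/\alpha)$-approximate maximizer of the alignment over $\mathcal{P}_N$, and carrying the factor $2$ of the identity through the chain of inequalities yields the additive term $8L^2/\alpha$. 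For the $N$-term I would invoke the concentration estimate quoted before Algorithm~\ref{Algorithm:2} (Lemma~5.3 of \cite{khuzani2019multiple}), or re-derive it: an $\varepsilon$-net of $\mathcal{X}$ of cardinality $(3\,\mathrm{diam}(\mathcal{X})/\varepsilon)^d$ together with the $L_2$-Lipschitzness of $\varphi(\cdot\,;\bm{\xi})$ in $\bm{x}$ promotes a pointwise Hoeffding bound for $\big|\tfrac1N\sum_{k}\varphi(\bm{x};\bm{\xi}^k)\varphi(\bm{y};\bm{\xi}^k)-K_\mu(\bm{x},\bm{y})\big|$ to one uniform over $\mathcal{X}\times\mathcal{X}$, accounting for the $\ln\big(2^8 N\,\mathrm{diam}^2(\mathcal{X})/\varrho\big)$ factor and the dimensional factor $(d+2)$; since $\mathrm{MMD}$ is $K$ averaged against $P_{\bm{V}}$ and $P_{\bm{W}}$, the same estimate transfers to $|\widehat{F}(\bm{W},\mu)-\widehat{F}(\bm{W},\widehat{\mu}^N)|$ and produces $\sqrt{L^2(d+2)/N}\,\ln^{1/2}(\cdots)$.

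The data-sampling term is the substantive probabilistic step. For fixed $\mu$ and $\bm{W}$, $\widehat{\mathrm{MMD}}_\mu[P_{\bm{V}},P_{\bm{W}}]$ is a degree-$2$ $U$-statistic in the i.i.d.\ pairs $(\bm{x}_i,y_i)$ with bounded, mean-$\mathrm{MMD}_\mu$ kernel $y_i y_j K_\mu(\bm{x}_i,\bm{x}_j)$, $|K_\mu|\le L_0^2\le L^2$; a Bernstein/Bennett inequality for $U$-statistics produces a two-regime tail, the sub-Gaussian branch scaling like $\tfrac{L^2}{n}\ln^{1/2}(1/\varrho)$ (governed by the $U$-statistic variance) and the bounded branch like $\tfrac{R L^4}{n^2}\ln(1/\varrho)$, where the Wasserstein radius $R$ enters once the bound is made \emph{uniform} over $\mu\in\ball^2_R(\mu_0)$ through the Lipschitz estimate $|K_\mu(\bm{x},\bm{y})-K_{\mu'}(\bm{x},\bm{y})|\le 2L_0 L_1\,W_1(\mu,\mu')\le 2L^2\,W_2(\mu,\mu')$ and a metric-entropy bound for the ball. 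The larger of the two branches is the middle summand of \eqref{Eq:Upper_Bound_of}; the discrepancy $W_2(\widehat{\mu}^N_0,\mu_0)$ between the population ball $\mathcal{P}$ and the empirical ball $\mathcal{P}_N$, which vanishes under (A.3), is absorbed into the $N$-term.

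I expect the main obstacle to be exactly this last uniformity over the Wasserstein ball: balls in $\mathcal{M}(\real^D)$ are not totally bounded, so the naive covering behind the $n$-term is not available verbatim. The way around it is to use that $\mu$ enters the objective only through the scalar functionals $\mu\mapsto\expect_\mu[\varphi(\bm{x};\bm{\xi})\varphi(\bm{y};\bm{\xi})]$ and that, by (A.1) and (A.3), $\mathcal{X}$ and $\Xi=\supp(\mu_0)$ are compact, so the robust sup may be taken over a compact family of feature measures (alternatively, pass through Kantorovich--Rubinstein duality so that it is tested against Lipschitz functions with a controlled constant). A secondary bookkeeping point is the $\bm{W}$-dependence, which enters only through $P_{\bm{W}}$ and is handled uniformly over $\mathcal{W}$ by the same $U$-statistic concentration under (A.1). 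Assembling the three bounds with the constants $c_1=3^{1/4}\cdot 2^4$ and $c_2=9\cdot 2^{11}$ then reproduces \eqref{Eq:Upper_Bound_of}.
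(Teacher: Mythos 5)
Your three-way decomposition — feature-discretization error in $N$, data-sampling error in $n$, and the $\alpha$-surrogate bias — matches the paper's four-term split $\mathsf{A}_1+\cdots+\mathsf{A}_4$ (the data term appears twice, accounting for the factor $2$), and your algebraic identity for the $\alpha$-term is sound: expanding the square gives $\widehat{\mathrm{MMD}}^\alpha_\mu=4\alpha-2\widehat{\mathrm{MMD}}_\mu+r(\mu)$ with $0\le r(\mu)\le 4L^4/\alpha$ (your $4L^2/\alpha$ is a typo, since $K_\mu^2\le L^4$), and chained optimality carries this into the $O(L^4/\alpha)$ summand. The Rahimi--Recht $\epsilon$-net on $\mathcal{X}$ is also the right covering device for the $N$-term; the concentration estimate you cite from before Algorithm~\ref{Algorithm:2}, however, controls $\|\bm{\Phi}\|_F^2-N\mathrm{Tr}(\bm K)$, which is not what is needed here.

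The genuine gap is the mechanism by which the bounds are made uniform over the Wasserstein ball $\mathcal{P}=\ball^2_R(\mu_0)$. You correctly flag this as the obstacle, but neither workaround you sketch repairs it. Restricting to a compact family of measures does not rescue the metric-entropy argument: even with $\Xi=\mathrm{support}(\mu_0)$ compact (so that the $W_2$-ball is compact by Prokhorov), its $\epsilon$-covering number scales like $\exp(C\epsilon^{-D})$, so a union bound over a net of $\mathcal{P}$ would replace the $L^2/n$ rate with a dimension-dependent one. Kantorovich--Rubinstein duality is a $W_1$ statement, not $W_2$, and in any case still leaves an optimization over a function class rather than a scalar. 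The paper sidesteps all of this by invoking the Gao--Kleywegt strong duality for Wasserstein DRO \cite{gao2016distributionally}:
\begin{align*}
\sup_{\mu:\,W_p(\mu,\nu)\le R}\expect_\mu[\Psi(\bm{\xi})]=\min_{\lambda\ge 0}\Big\{\lambda R^p-\int_{\Xi}\inf_{\bm{\zeta}}\big[\lambda\,d^p(\bm{\xi},\bm{\zeta})-\Psi(\bm{\zeta})\big]\,\nu(\mathrm{d}\bm{\xi})\Big\},
\end{align*}
which collapses the supremum over the infinite-dimensional set $\mathcal{P}$ into a one-dimensional minimization over $\lambda$. Applied with $\Psi=E_n$ (the data-estimation error), the inner infimum is a Moreau envelope, which Lemma~\ref{Lemma:M_envelope} sandwiches between $E_n(\bm{\xi})$ and $E_n(\bm{\xi})-\tfrac{\beta}{2}\sup\|\nabla E_n\|_2^2$, reducing the task to sub-Gaussian and sub-exponential tails for $\expect_{\mu_0}[E_n]$ and $\expect_{\mu_0}[\|\nabla E_n\|_2^2]$ (Lemma~\ref{Lemma:Tail Bounds for the Finite Sample Estimation Error}) — with no covering of $\mathcal{P}$ at all. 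The same duality is also essential to $\mathsf{A}_2$: your plan to compare $\sup_{\mu\in\mathcal{P}}$ to $\sup_{\widehat{\mu}^N\in\mathcal{P}_N}$ via a uniform bound on $|\tfrac1N\sum_k\varphi\varphi-K_\mu|$ does not apply directly because $\mathcal{P}$ is centered at $\mu_0$ while $\mathcal{P}_N$ is centered at $\widehat{\mu}^N_0$; passing through duality turns the gap into an empirical-versus-population average of Moreau envelopes of $-\varphi(\bm{x};\cdot)\varphi(\widehat{\bm{x}};\cdot)$, which is what the McDiarmid plus $\epsilon$-net argument on $\mathcal{X}\times\mathcal{X}$ then controls. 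Without this duality step, the argument does not close at the stated rate.
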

The proof of Theorem \ref{Thm:Consistency of Monte Carlo Estimation} is presented in Appendix \ref{App:Proof of Theorem_1}.

Notice that there are three key parameters involved in the upper bound of Theorem \ref{Thm:Consistency of Monte Carlo Estimation}. Namely, the number of training samples $n$, the number of random feature samples $N$, and the regularization parameter $\alpha$. The upper bound in  \eqref{Eq:Upper_Bound_of} thus shows that when $n,N, \alpha\rightarrow +\infty$, the solution obtained from solving the empirical risk minimization  in \eqref{Eq:Population_Objective_Function_0} yields a MMD population value tending to the optimal value of the distributional optimization in \eqref{Eq:Population_MMD_Loss}.

\subsection{Consistency of particle SGD algorithm} The consistency result of Theorem \ref{Thm:Consistency of Monte Carlo Estimation} is concerned with the MMD value of the optimal empirical measure $\widehat{\mu}^{N}_{\ast}(\bm{\xi})={1\over N}\sum_{k=1}^{N}\delta(\bm{\xi}-\bm{\xi}^{k}_{\ast})$ of the empirical risk minimization \eqref{Eq:Empirical_Objective_Function}. In practice, the particle SGD is executed for a few iterations and its values are returned as an estimate for $(\bm{\xi}_{\ast}^{1},\cdots,\bm{\xi}_{\ast}^{N})$. Consequently, it is desirable to establish a consistency type result for the particle SGD estimates $(\bm{\xi}_{m}^{1},\cdots,\bm{\xi}_{m}^{N})$ at the $m$-th iteration, where the notion of consistency will be made precise shortly. To prove such a consistency result, we define the scaled empirical measure as follows
\begin{align}
\label{Eq:Scaled_Empirical_Measures}
\mu_{t}^{N}=\widehat{\mu}^{N}_{\lfloor Nt\rfloor}=\dfrac{1}{N}\sum_{k=1}^{N}\delta(\bm{\xi}-\bm{\xi}_{\lfloor Nt\rfloor}), \quad 0\leq t\leq T.
\end{align}
At any time $t$, the scaled empirical measure $\mu_{t}^{N}$ is a random element, and thus $(\mu_{t}^{N})_{0\leq t\leq T}$ is a measured-valued stochastic process. Therefore, we characterize the evolution of its Lebesgue density $p_{t}^{N}(\bm{\xi})\df \mu_{t}^{N}(\mathrm{d}\bm{\xi})/\mathrm{d}\bm{\xi}$ in the following theorem:

\begin{theorem} \textsc{(McKean-Vlasov Mean-Field PDE)} 
	\label{Theorem:Density Evolution}	
	Suppose conditions $\mathbf{(A.1)}$-$\mathbf{(A.3)}$ in Section \ref{Section:Proofs_of_Main_Theoretical_Results} are satisfied. Further, suppose that the Radon-Nikodyme derivative  $q_{0}(\bm{\xi})=\mu_{0}(\mathrm{d}\bm{\xi})/\mathrm{d}\bm{\xi}$ exists.  Then, there exists a unique solution $(p^{\ast}_{t}(\bm{\xi}))_{0\leq t\leq T}$ to the following non-linear partial differential equation
	\small{\begin{align}
		\label{Eq:McKean-Vlasov}
		\begin{cases}
		\dfrac{\partial p_{t}(\bm{\xi})}{\partial t}&= -{\eta\over \alpha} \iint_{\mathcal{X}\times \mathcal{Y}} \left( \int_{\real^{p}} \varphi(\bm{x},\widetilde{\bm{\xi}})\varphi(\widetilde{\bm{x}},\widetilde{\bm{\xi}})p_{t}(\widetilde{\bm{\xi}})\mathrm{d}\widetilde{\bm{\xi}}-\alpha y \widetilde{y} \right)\nabla_{\bm{\xi}}(p_{t}(\bm{\xi})\nabla_{\bm{\xi}}(\varphi(\bm{x};\bm{\xi})\varphi(\widetilde{\bm{x}};\bm{\xi}))\mathrm{d}P_{\bm{x},y}^{\otimes 2},
		\\ 
		p_{0}(\bm{\xi})&=q_{0}(\bm{\xi}).
		\end{cases}
		\end{align}}\normalsize
	Moreover, the measure-valued process $\{(\mu_{t}^{N})_{0\leq t\leq T}\}_{N\in \integer}$ defined in Eq. \eqref{Eq:Scaled_Empirical_Measures} converges (weakly) to the unique solution $\mu_{t}^{\ast}(\bm{\xi})=p^{\ast}_{t}(\bm{\xi})\mathrm{d}\bm{\xi}$ as the number of particles tends to infinity $N\rightarrow \infty$. \footnote{The notion of the weak convergence of a sequence of empirical measures is formally defined in Appendix.}.
\end{theorem}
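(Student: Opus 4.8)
\textbf{Proof proposal for Theorem \ref{Theorem:Density Evolution}.}

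The plan is to follow the now-standard propagation-of-chaos route for mean-field limits of interacting particle systems driven by SGD, as in \cite{mei2018mean,sirignano2018mean,rotskoff2018neural}, adapted to our kernel-alignment dynamics. The argument splits into three parts: (i) a tightness/relative-compactness argument for the laws of the measure-valued processes $(\mu_t^N)_{0\le t\le T}$; (ii) identification of every subsequential limit as a solution of the weak (distributional) form of \eqref{Eq:McKean-Vlasov}; and (iii) a well-posedness (existence and uniqueness) statement for that PDE, which then forces the whole sequence to converge to the single limit point.

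First I would write the discrete SGD recursion \eqref{Eq:SGD} in the form $\bm{\xi}^k_{m+1}=\bm{\xi}^k_m - \tfrac{\eta}{N}\, b(\bm{\xi}^k_m,\widehat\mu^N_m,\zeta_m)$, where $\zeta_m=((y_m,\bm{x}_m),(\widetilde y_m,\widetilde{\bm{x}}_m))$ is the fresh i.i.d.\ sample and $b$ collects the scalar prefactor and the gradient $\nabla_{\bm\xi}(\varphi(\bm x;\bm\xi)\varphi(\widetilde{\bm x};\bm\xi))$. Under $\mathbf{(A.1)}$--$\mathbf{(A.2)}$ the field $b$ is uniformly bounded (by a constant of order $L^2\cdot(\alpha L^2 + \alpha)$, using $|\varphi|\le L_0$, $\|\nabla_\xi(\varphi\varphi)\|\le 2L_0L_1$) and Lipschitz in $\bm\xi$ and in the measure argument (w.r.t.\ bounded-Lipschitz / $W_1$ distance), because $\varphi$ and $\nabla_\xi\varphi$ are Lipschitz. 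Testing the empirical measure against a smooth compactly supported $f\in C_c^\infty(\real^D)$ and Taylor-expanding $f(\bm\xi^k_{m+1})$ to second order gives, after summing over $k$ and passing to the scaled time $t=m/N$, a semimartingale-type decomposition $\langle f,\mu^N_t\rangle = \langle f,\mu^N_0\rangle + \int_0^t \langle \mathcal L_s[\mu^N] f,\mu^N_s\rangle\,ds + M^N_t(f) + R^N_t(f)$, where $\mathcal L_s$ is the first-order (transport) generator appearing on the RHS of \eqref{Eq:McKean-Vlasov} after integrating $\nabla_\xi$ by parts against $f$, $M^N_t(f)$ is a martingale coming from replacing the single sample $\zeta_m$ by its expectation $\int \cdots \,dP^{\otimes2}_{\bm x,y}$, and $R^N_t(f)$ is the $O(1/N)$ second-order Taylor remainder. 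The boundedness of $b$ gives an a priori bound on the support of the particles (they move by at most $O(\eta T)$ over $[0,T]$), hence tightness of $(\mu^N_\cdot)$ in $D([0,T];\mathcal M(\real^D))$ via an Aldous-type criterion; the increments of $\langle f,\mu^N_\cdot\rangle$ are uniformly $1/N$-Lipschitz-in-time plus a martingale whose quadratic variation is $O(1/N)$, so $M^N$ and $R^N$ both vanish in $L^2$ as $N\to\infty$. Identifying the limit: any weak limit point $\mu_\cdot$ satisfies $\langle f,\mu_t\rangle=\langle f,\mu_0\rangle+\int_0^t\langle \mathcal L_s[\mu_s]f,\mu_s\rangle\,ds$ for all test $f$, which is exactly the weak form of \eqref{Eq:McKean-Vlasov}; here I use $\mathbf{(A.3)}$ to get $\mu^N_0\Rightarrow\mu_0$ and that $\mu_0$ has the density $q_0$.

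For the PDE well-posedness I would argue at the level of the McKean--Vlasov characteristics rather than the PDE directly: consider the nonlinear ODE flow $\dot{\bm X}_t = -\tfrac{\eta}{\alpha}\,\overline b(\bm X_t,\mathrm{Law}(\bm X_t))$ with $\bm X_0\sim\mu_0$, where $\overline b(\bm\xi,\nu)=\iint(\int\varphi(\bm x,\widetilde{\bm\xi})\varphi(\widetilde{\bm x},\widetilde{\bm\xi})\nu(d\widetilde{\bm\xi})-\alpha y\widetilde y)\nabla_\xi(\varphi(\bm x;\bm\xi)\varphi(\widetilde{\bm x};\bm\xi))\,dP^{\otimes2}_{\bm x,y}$. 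Since $\overline b$ is bounded and globally Lipschitz in both arguments (Lipschitz-in-measure w.r.t.\ $W_1$, using $\mathbf{(A.2)}$), a Banach fixed-point argument on $C([0,T];\mathcal P_1(\real^D))$ equipped with a sup-$W_1$ metric — or equivalently a Dobrushin-style Grönwall estimate on two solutions — yields a unique flow; because $\mu_0$ is absolutely continuous and the flow map $\bm\xi\mapsto\bm X_t(\bm\xi)$ is a bi-Lipschitz diffeomorphism (its Jacobian is bounded above and below by Grönwall applied to the variational equation, using the bound on $\nabla_\xi\overline b$), the pushforward $\mu_t=(\bm X_t)_\#\mu_0$ stays absolutely continuous with density $p^\ast_t$, and a change of variables / continuity-equation computation shows $p^\ast_t$ solves \eqref{Eq:McKean-Vlasov} in the stated sense with $p_0=q_0$. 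Uniqueness at the PDE level follows from uniqueness of the characteristic flow. Finally, since every subsequential limit of $(\mu^N_\cdot)$ is a solution and the solution is unique, $\mu^N_\cdot \Rightarrow \mu^\ast_\cdot$ along the full sequence.

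The main obstacle I expect is the measure-dependence of the drift: unlike the plainest two-layer-network mean-field setups, here $\overline b$ depends on $\mu$ through the inner integral $\int\varphi(\bm x,\widetilde{\bm\xi})\varphi(\widetilde{\bm x},\widetilde{\bm\xi})\mu(d\widetilde{\bm\xi})=K_\mu(\bm x,\widetilde{\bm x})$, so the closure of the martingale/remainder estimates and the Lipschitz-in-measure bound must be done carefully and uniformly over $(\bm x,\widetilde{\bm x})\in\mathcal X\times\mathcal X$ — this is where compactness of $\mathcal X$ ($\mathbf{(A.1)}$) and the uniform bounds $L_0,L_1$ ($\mathbf{(A.2)}$) are essential, and where a naive estimate would lose uniformity. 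A secondary technical point is that the particles are a priori confined to a compact set for $t\le T$ (which I get for free from boundedness of $b$), so although the state space is $\real^D$ all the tightness and fixed-point arguments may be run on a fixed compact ball, sidestepping issues with unbounded test functions; I would make this confinement precise early on and use it throughout. The $1/N$ Taylor remainder and the exchangeability/conditional-independence structure needed to control the martingale term are routine given boundedness, so I do not anticipate difficulty there beyond bookkeeping.
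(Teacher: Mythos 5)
Your overall architecture — a drift/martingale/remainder decomposition of $\langle f,\widehat{\mu}^N_m\rangle$, tightness of $(\mu^N_\cdot)$ in the Skorokhod space, identification of every limit point as a weak solution, and then uniqueness forcing full-sequence convergence — matches the paper's Steps 1--3 quite closely. The differences are in the tools: the paper invokes Jakubowski's criterion (compact containment plus a separating family of maps $\mu\mapsto\langle f,\mu\rangle$) rather than an Aldous-type criterion, but that is a naming difference, since the modulus-of-continuity condition the paper verifies in $\mathbf{(T.2)}$ plays the same role. Where you genuinely depart from the paper is in Step 3: you pass to the McKean--Vlasov characteristics, set up a fixed point on $C([0,T];\mathcal{P}_1(\real^D))$ in a sup-$W_1$ metric, and deduce absolute continuity of $\mu_t$ by pushing $\mu_0$ forward along a bi-Lipschitz flow. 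The paper instead works directly at the level of the weak formulation, equips $\mathcal{M}(\real^D)$ with the dual norm $\|\mu\|=\sup_{\|f\|_\infty\neq 0}|\langle f,\mu\rangle|/\|f\|_\infty$, and closes the estimate with an \emph{extended} Gr\"onwall inequality because the interaction produces a term quadratic in $\|\mu^{\ast,1}_s-\mu^{\ast,2}_s\|$. Your approach is the cleaner one conceptually — it gives existence, uniqueness, and regularity of $p^\ast_t$ in one sweep — while the paper's dual-norm argument stays strictly at the distributional level and needs only first-order information about $\varphi$.

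That last point, however, exposes a real gap in your proposal. To assert that each flow map $\bm\xi\mapsto\bm X_t(\bm\xi)$ is a bi-Lipschitz diffeomorphism (so that $(\bm X_t)_\#\mu_0$ is absolutely continuous with the density evolving by the continuity equation), you need the vector field $\overline b(\cdot,\nu)$ to be Lipschitz in $\bm\xi$ uniformly in $\nu$. Since $\overline b$ involves $\nabla_{\bm\xi}\big(\varphi(\bm x;\bm\xi)\varphi(\widetilde{\bm x};\bm\xi)\big)$, Lipschitz continuity of $\overline b$ in $\bm\xi$ requires a uniform bound on $\nabla^2_{\bm\xi}\varphi$ — i.e., Lipschitz continuity of $\nabla_{\bm\xi}\varphi$. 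Assumption $\mathbf{(A.2)}$ only grants $\sup_{\bm x}|\varphi|\le L_0$, $\sup_{\bm x}\|\nabla_{\bm\xi}\varphi\|_2\le L_1$, and $\sup_{\bm\xi}\|\nabla_{\bm x}\varphi\|\le L_2$; there is no Hessian bound. The paper's dual-norm Gr\"onwall argument only ever evaluates $\|\varphi\varphi\|_\infty$ and $\|\nabla f\cdot\nabla_{\bm\xi}(\varphi\varphi)\|_\infty$ (with $\nabla f$ bounded because $f\in C^3_b$ on a compact set), and so stays entirely within $\mathbf{(A.1)}$--$\mathbf{(A.3)}$. To rescue your version you would have to either strengthen $\mathbf{(A.2)}$ by adding a bound on $\nabla^2_{\bm\xi}\varphi$, or run the fixed-point argument without claiming differentiability of the flow (e.g.\ accept merely Lipschitz flow maps and show absolute continuity of the pushforward via a less regular change of variables), or revert to the paper's weak-form Gr\"onwall. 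As written, the characteristic-flow step does not follow from the stated hypotheses.
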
 

The proof of Theorem \ref{Theorem:Density Evolution} is presented in Appendix \ref{App:Proof of Theorem_1}.

Due to the mean-field analysis of Theorem \ref{Theorem:Density Evolution}, we can prove that the empirical measure $\widehat{\mu}^{N}_{m}$ of the particles in SGD dynamic \eqref{Eq:SGD} remains inside the feasible distribution ball $\mathcal{P}_{N}$:
\begin{corollary}
	\label{Corollary:1}
	Consider the learning rate $\eta=\mathcal{O}\left(R^{p}\ \over T\sqrt{NT}\log(2/\delta)\right)$ for the SGD in \eqref{Eq:SGD}. Then, the empirical measure $\widehat{\mu}^{N}_{m}$ of the particles remains inside the distributional ball $\widehat{\mu}^{N}_{m}\in \mathcal{P}_{N}=\{\widehat{\mu}^{N}\in \mathcal{M}(\real^{D}):W_{p}(\widehat{\mu}^{N},\widehat{\mu}_{0}^{N})\leq R\}$ for all $m\in [0,NT]\cap \integer$, with the probability of (at least) $1-\delta$.
\end{corollary}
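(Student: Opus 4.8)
The plan is to bound the $p$-Wasserstein displacement of the empirical measure directly along the SGD trajectory, using the synchronous (atom-to-atom) coupling between $\widehat{\mu}^{N}_{m}$ and $\widehat{\mu}^{N}_{0}$. Since both are uniform distributions over $N$ atoms, the coupling that transports the mass at $\bm{\xi}^{k}_{m}$ to $\bm{\xi}^{k}_{0}$ is admissible, so that
\[
W_{p}^{p}\big(\widehat{\mu}^{N}_{m},\widehat{\mu}^{N}_{0}\big)\ \leq\ \frac{1}{N}\sum_{k=1}^{N}\big\|\bm{\xi}^{k}_{m}-\bm{\xi}^{k}_{0}\big\|_{2}^{p},
\]
and it suffices to show the right-hand side is at most $R^{p}$ for every $m\in[0,NT]\cap\integer$, with probability at least $1-\delta$ over the mini-batch draws of the SGD. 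Unfolding the recursion \eqref{Eq:SGD} gives the telescoping representation $\bm{\xi}^{k}_{m}-\bm{\xi}^{k}_{0}=-\tfrac{\eta}{N}\sum_{j=0}^{m-1}\bm{G}^{k}_{j}$, where $\bm{G}^{k}_{j}$ is the per-particle stochastic gradient appearing in \eqref{Eq:SGD}; by assumption $\mathbf{(A.2)}$ (boundedness $|\varphi|\le L_{0}\le L$, $\|\nabla_{\bm{\xi}}\varphi\|_{2}\le L_{1}\le L$) and $|y_{j}\widetilde{y}_{j}|=1$ one has the uniform bound $\|\bm{G}^{k}_{j}\|_{2}\le 2L^{2}(1+L^{2}/\alpha)\df C$.

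Next I would decompose each increment into its conditional mean and a martingale difference, $\bm{G}^{k}_{j}=\bar{\bm{G}}^{k}_{j}+\bm{\Delta}^{k}_{j}$ with $\bar{\bm{G}}^{k}_{j}=\expect[\bm{G}^{k}_{j}\mid\mathcal{F}_{j}]$ and $\mathcal{F}_{j}$ the $\sigma$-algebra generated by all particles and all mini-batches up to step $j$. The drift term $-\tfrac{\eta}{N}\sum_{j}\bar{\bm{G}}^{k}_{j}$ is exactly the size-$1/N$ Euler step (in rescaled time $t=m/N$) of the continuity equation underlying the McKean--Vlasov PDE \eqref{Eq:McKean-Vlasov} of Theorem \ref{Theorem:Density Evolution}, whose velocity field has norm $\mathcal{O}(\eta)$; hence the accumulated drift over $m\le NT$ steps contributes at most $\mathcal{O}(\eta T)$ to $\|\bm{\xi}^{k}_{m}-\bm{\xi}^{k}_{0}\|_{2}$. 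For the fluctuation $-\tfrac{\eta}{N}\sum_{j<m}\bm{\Delta}^{k}_{j}$ --- a vector martingale with increments of norm $\le 2C\eta/N$ --- I would invoke a vector Azuma--Hoeffding (or Freedman) inequality to obtain a fluctuation of order $\tfrac{\eta}{N}\sqrt{NT}\,\ln^{1/2}(\cdot)$ with high probability, and then union-bound over the $N$ particle indices and the at most $NT+1$ values of $m$, which introduces the $\ln(2/\delta)$ dependence. Putting the two pieces together yields, on an event of probability $\ge 1-\delta$, a bound of the form $\tfrac1N\sum_{k}\|\bm{\xi}^{k}_{m}-\bm{\xi}^{k}_{0}\|_{2}^{p}\le \big(C_{1}\eta T + C_{2}\eta\sqrt{T\ln(2/\delta)/N}\big)^{p}$ for all $m$ simultaneously; imposing that this be $\le R^{p}$ gives a sufficient step-size condition that is implied by the prescribed $\eta=\mathcal{O}\!\big(R^{p}/(T\sqrt{NT}\log(2/\delta))\big)$, and hence $\widehat{\mu}^{N}_{m}\in\mathcal{P}_{N}$ for all $m\in[0,NT]\cap\integer$ as claimed after \eqref{Eq:Empirical_Measure}.

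The main obstacle is the mean-field interaction term $\tfrac{1}{\alpha N}\sum_{l=1}^{N}\varphi(\bm{x}_{j};\bm{\xi}^{l}_{j})\varphi(\widetilde{\bm{x}}_{j};\bm{\xi}^{l}_{j})$ inside every $\bm{G}^{k}_{j}$: it couples all particle trajectories, so the $\bm{\xi}^{k}_{m}$ are not independent across $k$, and one must be careful that this average is $\mathcal{F}_{j}$-measurable so that the drift/martingale split is legitimate and the martingale increments stay uniformly bounded. Relatedly, identifying the drift with the Euler discretization of \eqref{Eq:McKean-Vlasov} --- and thus using Theorem \ref{Theorem:Density Evolution} to argue the accumulated drift stays inside a ball of radius $o(R)$ --- is precisely what lets the crude $\mathcal{O}(\eta T)$ deterministic bound be complemented with the sharper high-probability estimate; a secondary point is keeping the constants $C_{1},C_{2}$ independent of $N$, for which one uses the compactness and Lipschitz hypotheses $\mathbf{(A.1)}$--$\mathbf{(A.2)}$ to absorb all $N$-free factors.
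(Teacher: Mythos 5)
Your route is genuinely different from the paper's and, I would argue, cleaner. The paper works through the total variation distance: it acts $\widehat{\mu}_{m}^{N}-\widehat{\mu}_{0}^{N}$ on bounded test functions via the telescopic decomposition \eqref{Eq:telescopic_sum} into drift, martingale, and remainder, bounds each piece separately (Eqs.\ \eqref{Eq:Ara1}--\eqref{Eq:Ara3}), takes the supremum over $\|f\|_{\infty}\le 1$ to obtain a TV bound, and then invokes Lemma \ref{Lemma:Asymptotic_Bound} to convert TV into $W_{p}$ at the price of a $\mathrm{Diam}(\mathcal{X}')$ factor. You instead bound $W_{p}$ directly through the synchronous atom-to-atom coupling, $W_{p}^{p}(\widehat{\mu}_{m}^{N},\widehat{\mu}_{0}^{N})\le \frac{1}{N}\sum_{k}\|\bm{\xi}_{m}^{k}-\bm{\xi}_{0}^{k}\|_{2}^{p}$, and control the particle displacements from \eqref{Eq:SGD}. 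This bypasses the TV-to-Wasserstein conversion entirely (and hence the dependence on the diameter of the random support), and it gives the same qualitative conclusion.

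However, the martingale/drift split and the appeal to the McKean--Vlasov PDE are unnecessary for this statement, and the latter is slightly circular as you use it. Under $\mathbf{(A.2)}$, every single stochastic gradient $\bm{G}_{j}^{k}$ is bounded \emph{deterministically}: $|y_j\widetilde{y}_j - \tfrac{1}{\alpha N}\sum_{l}\varphi\varphi|\le 1 + L^{2}/\alpha$ and $\|\nabla_{\bm{\xi}}(\varphi\varphi)\|_{2}\le 2L^{2}$, so $\|\bm{G}_{j}^{k}\|_{2}\le 2L^{2}(1+L^{2}/\alpha)=:C$ for all $j,k$ surely. Hence
\begin{align}
\nonumber
\|\bm{\xi}_{m}^{k}-\bm{\xi}_{0}^{k}\|_{2}\ \le\ \frac{\eta m}{N}\,C\ \le\ \eta T\,C, \quad m\in[0,NT]\cap\integer,
\end{align}
and therefore $W_{p}(\widehat{\mu}_{m}^{N},\widehat{\mu}_{0}^{N})\le \eta T C$ \emph{with probability one}. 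The prescribed $\eta=\mathcal{O}(R^{p}/(T\sqrt{NT}\log(2/\delta)))$ comfortably implies $\eta TC\le R$, so the result follows already from this worst-case estimate without any concentration inequality. In your proposed argument the drift term contributes $\mathcal{O}(\eta T)$ and the Azuma fluctuation only $\mathcal{O}(\eta\sqrt{T/N})$, so the drift dominates in any case; and because it is already deterministically bounded, no appeal to Theorem \ref{Theorem:Density Evolution} is needed to control it (the PDE's velocity field being bounded follows from the very same $\mathbf{(A.2)}$ bound). Finally, if one did want to use the martingale bound uniformly over $k$ and $m$, a union bound over $N$ particles and $NT{+}1$ time indices yields $\ln(N(NT+1)/\delta)$, not $\ln(2/\delta)$; one can sidestep the time union via Doob's maximal inequality, as the paper does in Lemma \ref{Lemma:Martingale}. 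None of this affects the validity of your claim---it simply means the probabilistic machinery is not doing the heavy lifting here.
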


Let us make two remarks about the PDE in Eq. \eqref{Eq:McKean-Vlasov}. 

First, the seminal works of \cite{otto2001geometry}, and \cite{jordan1998variational} establishes an intriguing connection between the McKean-Vlasov  type PDEs specified in \eqref{Eq:McKean-Vlasov} and the gradient flow on the Wasserstein manifolds. More specifically, the PDE equation in Eq. \eqref{Eq:McKean-Vlasov} can be thought of as the minimization of the energy functional
\begin{subequations}
	\begin{align}
	\label{Eq:Gradient_Flow1}
	\inf_{\mu\in \mathcal{M}(\real^{D})} E_{\alpha}(p_{t}(\bm{\xi}))&\df {1\over \alpha}\int_{\real^{p}} R_{\alpha}(\bm{\xi},p_{t}(\bm{\xi}))p_{t}(\bm{\xi})\mathrm{d}\bm{\xi}\\
	\label{Eq:Gradient_Flow2}
	R_{\alpha}(\bm{\xi},p_{t}(\bm{\xi}))&\df -\alpha (\expect_{P_{\bm{x},y}}[y\varphi(\bm{x};\bm{\xi})])^{2}+\expect_{\widetilde{\bm{\xi}}\sim p_{t}}\Big[\Big(\expect_{P_{\bm{X}}}[\varphi(\bm{x};\bm{\xi})\varphi(\bm{x};\widetilde{\bm{\xi}})]\Big)^{2}\Big],
	\end{align}
\end{subequations}
using the following gradient flow dynamics
\begin{align}
\label{Eq:Gradient_Flow2}
\dfrac{\mathrm{d}p_{t}(\bm{\xi})}{\mathrm{d}t}=-\eta\cdot \mathrm{grad}_{p_{t}} E_{\alpha}(p_{t}(\bm{\xi})), \quad p_{0}(\bm{\xi})=q_{0}(\bm{\xi}),
\end{align}
where $\mathrm{grad}_{p_{t}}E(p_{t}(\bm{\xi}))=\nabla_{\bm{\xi}}\cdot(p_{t}(\bm{\xi})\nabla_{\bm{\xi}} R_{\alpha}(p_{t}(\bm{\xi})))$ is the Riemannian gradient of $R_{\alpha}(\mu_{t}(\bm{\xi}))$ with respect to the metric of the Wasserstein manifold . This shows that when the number of particles in particle SGD \eqref{Eq:SGD} tends to infinity $(N\rightarrow +\infty)$, their empirical distribution follows a gradient descent path for minimization of the population version (with respect to data samples) of the distributional risk optimization in Eq. \eqref{Eq:Population_Objective_Function_0}. In this sense, the particle SGD is a `consistent' approximation algorithm for solving the distributional optimization.

Second, notice that as the scaling parameter tends to infinity $\alpha\rightarrow \infty$, the energy functional tends to the limit $E_{\alpha}(p_{t}(\bm{\xi}))\rightarrow E_{\infty}(p_{t}(\bm{\xi}))\df (\expect_{P_{\bm{x},y}}[y\varphi(\bm{x};\bm{\xi})])^{2}$. Interestingly, this limiting energy functional is precisely the kernel polarization of Baram \cite{baram2005learning}, measuring the correlation between the class labels and random features

\subsection{Kernel Learning by Solving A PDE}

The kernel selection methods in the literature focuses on optimization methods based on kernel alignment optimization problem; see, \textit{e.g.}, \cite{cortes2012algorithms,lanckriet2004learning,cristianini2002kernel}.

The PDE in Eq. \eqref{Eq:McKean-Vlasov} puts forth an alternative method to finding good kernel functions. Namely, a good kernel function can be computed in two stages: first compute the stationary solution $p^{\ast}(\bm{\xi})$ corresponding to a solution of the PDE in Eq. \eqref{Eq:McKean-Vlasov} with $\partial p_{t}^{\ast}(\bm{\xi})/\partial t=0$, where the expectation with respect to the unknown data distribution is replaced by its finite-sample average. Then approximating the kernel via a Monte-Carlo sampling  $\bm{\xi}^{1},\cdots,\bm{\xi}^{N}\sim_{\text{i.i.d.}} p^{\ast}(\bm{\xi})$,
\begin{align}
K_{\mu^{\ast}}(\bm{x},\bm{y})=\int_{\real^{p}}\varphi(\bm{x};\bm{\xi})\varphi(\bm{y};\bm{\xi})p^{\ast}(\bm{\xi})\mathrm{d}\bm{\xi}\approx {1\over N}\sum_{i=1}^{N}\varphi(\bm{x};\bm{\xi}^{k})\varphi(\bm{y};\bm{\xi}^{k}).
\end{align}
While such a method can be successful when the random features $\bm{\xi}\in \real^{D}$ are low dimensional, in the high-dimensional settings $(D\gg 1)$, solving the PDE in Eq. \eqref{Eq:McKean-Vlasov} numerically and sampling from a high dimensional density function $p^{\ast}(\bm{\xi})$ appears to be computationally more expensive than solving the kernel-target alignment via SGD iterative optimization.

\subsection{Propagation of Chaos}
We now establish the so called `propagation of chaos' property of particle SGD. At a high level, the propagation of chaos means that when the number of samples $\{\bm{\xi}^{k}\}_{k=1}^{N}$ tends to infinity $(N\rightarrow +\infty)$, their dynamics are decoupled.

\begin{definition}\textsc{(Exchangablity)}
	Let $\nu$ be a probability measure on a Polish space $\mathcal{S}$ and. For $N\in \integer$, we say that $\nu^{\otimes N}$ is an exchangeable probability measure on the product space $\mathcal{S}^{n}$ if it is invariant under the permutation $\bm{\pi}\df (\pi(1),\cdots,\pi(N))$ of indices. In particular,
	\begin{align}
	\label{Eq:Exchangable}
	\nu^{\otimes N}(\bm{\pi}\cdot B)=\nu^{\otimes N}(B),
	\end{align}
	for all Borel subsets $B\in \mathcal{B}(\mathcal{S}^{n})$.	
\end{definition}

An interpretation of the exchangablity condition \eqref{Eq:Exchangable} can be provided via De Finetti's representation theorem which states that the joint distribution of an infinitely exchangeable sequence of random variables is as if a random parameter were drawn from some distribution and then the random variables in question were independent and identically distributed, conditioned on that parameter.

Next, we review the mathematical definition of chaoticity, as well as the propagation of chaos in the product measure spaces:
\begin{definition}\textsc{(Chaoticity)} 
	\label{Def:Chaoticity}
	Suppose $\nu^{\otimes N}$ is exchangeable. Then, the sequence $\{\nu^{\otimes N}\}_{N\in \integer}$ is $\nu$-chaotic if, for any natural number $\ell \in \integer$ and any test function $f_{1},f_{2},\cdots,f_{k}\in C_{b}^{2}(\mathcal{S})$, we have
	\begin{align}
	\label{Eq:Chaoticity}
	\lim_{N\rightarrow \infty}\left\langle\prod_{k=1}^{\ell} f_{k}(s^{k}),\nu^{\otimes N}(\mathrm{d}s^{1},\cdots,\mathrm{d}s^{N})\right\rangle= \prod_{k=1}^{\ell}\langle f_{k},\nu\rangle
	\end{align}
\end{definition}

According to Eq. \eqref{Eq:Chaoticity} of Definition \ref{Def:Chaoticity}, a sequence of probability measures on the product spaces $\mathcal{S}$ is $\nu$-chaotic if, for fixed $k$ the joint probability measures for the first $k$ coordinates tend to the product measure $\nu(\mathrm{d}s_{1})\nu(\mathrm{d}s_{2})\cdots \nu(\mathrm{d}s_{k})=\nu^{\otimes k}$ on $\mathcal{S}^{k}$. If the measures $\nu^{\otimes N}$ are thought of as giving the joint distribution of $N$ particles residing in the space $\mathcal{S}$, then $\{\nu^{\otimes N}\}$ is $\nu$-chaotic if $k$ particles out of $N$ become more and more independent as $N$ tends to infinity, and each particle’s distribution tends to $\nu$. A sequence of symmetric probability measures on $\mathcal{S}^{N}$ is chaotic if it is $\nu$-chaotic for some probability measure $\nu$ on $\mathcal{S}$.

If a Markov process on $\mathcal{S}^{N}$ begins in a random state with the distribution $\nu^{\otimes N}$, the distribution of the state after $t$ seconds of Markovian random motion can be expressed in terms of the transition function $\mathcal{K}^{N}$ for the Markov process. The distribution at time $t>0$ is the probability measure $U_{t}^{N}\nu^{\otimes N}$ is defined by the kernel
\begin{align}
\label{Eq:Definition_of_Ut}
U_{t}^{N}\nu^{\otimes N}(B)\df \int_{\mathcal{S}^{N}}\mathcal{K}^{N}(s,B,t)\nu^{\otimes N}(\mathrm{d}s).
\end{align}

\begin{definition}\textsc{(Propogation of Chaos)} A sequence functions 
	\begin{align}
	\Big\{\mathcal{K}^{N}(s,B,t)\Big\}_{N\in \integer}
	\end{align}	
	whose $N$-th term is a Markov transition function on $\mathcal{S}^{N}$ that satisfies the  permutation condition
	\begin{align}
	\mathcal{K}^{N}(s,B,t)=\mathcal{K}^{N}(\bm{\pi}\cdot s,\bm{\pi}\cdot B,t),
	\end{align}
	propagates chaos if whenever $\{\nu^{\otimes N} \}_{N\in \integer}$ is chaotic, so is $\{U_{t}^{N} \}$ for any $t\geq 0$, where $U_{t}^{N}$ is defined in Eq. \eqref{Eq:Definition_of_Ut}.
\end{definition}

We note that for finite systems size $N$,the states of the particles  are not independent of each other. However, as we prove in the following result, in the limiting system $N\rightarrow +\infty$, the particles are mutually independent. This phenomena is known as the propagation of chaos (\textit{a.k.a.} asymptotic independence):

\begin{theorem}\textsc{(Chaoticity in Particle SGD)}
	\label{Thm:Chaoticity in Particle SGD}
	Consider Assumptions $\mathbf{(A.1)}-\mathbf{(A.3)}$. Furthermore, suppose that $\{\bm{\xi}^{k}_{0}\}_{1\leq k\leq N}\sim_{\text{i.i.d.}} \mu_{0}$ is exchangable in the sense that the joint law is invariant under the permutation of indices. Then, at each time instant $t\in (0,T]$, the scaled empirical measure $\mu_{t}^{N}\in \mathcal{M}(\real^{D})$ defined via scaling
	\begin{align}
	\mu_{t}^{N}(\mathrm{d}\bm{\xi}^{1},\cdots,\mathrm{d}\bm{\xi}^{N})\df \widehat{\mu}^{N}_{\lfloor Nt \rfloor}(\mathrm{d}\bm{\xi}^{1},\cdots,\mathrm{d}\bm{\xi}^{N})= \prob\{ \bm{\xi}^{1}_{\lfloor Nt\rfloor}\in \mathrm{d}\bm{\xi}^{1},\cdots,\bm{\xi}^{N}_{\lfloor Nt\rfloor}\in \mathrm{d}\bm{\xi}^{N}\},
	\end{align}
	is $\mu^{\ast}_{t}$-chaotic, where $\mu^{\ast}_{t}$ is mean-field solution of Eq.  \eqref{Eq:Mean_Field_Equation}.
\end{theorem}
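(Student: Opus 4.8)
The plan is to deduce chaoticity from the weak convergence of the empirical measures established in Theorem~\ref{Theorem:Density Evolution}, combined with the exchangeability of the initial particle configuration. The key conceptual point is the classical equivalence (due to Sznitman, and used in the mean-field neural network literature of \cite{mei2018mean,rotskoff2018neural,sirignano2018mean}): for an exchangeable family of random variables, $\mu^\ast_t$-chaoticity of the joint laws is \emph{equivalent} to the convergence in distribution of the (random) empirical measure $\mu_t^N$ to the deterministic limit $\mu_t^\ast$. So the bulk of the work is already contained in Theorem~\ref{Theorem:Density Evolution}; what remains is to package it correctly and to verify the hypotheses of the equivalence.

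First I would record that exchangeability of $\{\bm\xi^k_0\}_{1\le k\le N}$ propagates to all later iterates: since the SGD update in \eqref{Eq:SGD} applies the same (random) map $\bm\xi\mapsto \bm\xi-\tfrac{\eta}{N}(\cdots)\nabla_{\bm\xi}(\cdots)$ to every particle, driven by the shared data sample $(y_m,\bm x_m),(\widetilde y_m,\widetilde{\bm x}_m)$ and by the permutation-symmetric interaction term $\tfrac{1}{\alpha N}\sum_k \varphi(\bm x_m;\bm\xi^k_m)\varphi(\widetilde{\bm x}_m;\bm\xi^k_m)$, the joint law of $(\bm\xi^1_m,\dots,\bm\xi^N_m)$ is invariant under index permutations for every $m$, hence so is the law of $(\bm\xi^1_{\lfloor Nt\rfloor},\dots,\bm\xi^N_{\lfloor Nt\rfloor})$. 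Thus the permutation condition defining propagation of chaos is satisfied by the SGD transition kernel, and the scaled joint law $\mu_t^N$ is exchangeable. Second, I would invoke the equivalence theorem: because $\mu_t^N$ is exchangeable for each $N$ and (by Theorem~\ref{Theorem:Density Evolution}) the sequence of empirical measures $\{\mu_t^N\}_N$ converges weakly to the \emph{deterministic} measure $\mu_t^\ast=p^\ast_t(\bm\xi)\,\mathrm d\bm\xi$, the sequence $\{\mu_t^N\}$ is $\mu_t^\ast$-chaotic. Concretely, for test functions $f_1,\dots,f_\ell\in C_b^2(\real^D)$ one writes $\big\langle \prod_{k=1}^\ell f_k(\bm\xi^k),\,\mu_t^N\big\rangle = \expect\big[\prod_{k=1}^\ell \langle f_k,\mu_t^N\rangle\big] + o(1)$, where the $o(1)$ comes from the $O(\ell^2/N)$ discrepancy between sampling $\ell$ coordinates with versus without replacement from the $N$ particles (using boundedness of the $f_k$); then weak convergence $\mu_t^N\Rightarrow\mu_t^\ast$ together with convergence to a constant upgrades to convergence of products, giving $\prod_{k=1}^\ell\langle f_k,\mu_t^\ast\rangle$, which is exactly \eqref{Eq:Chaoticity} with $\nu=\mu_t^\ast$.

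The main obstacle, and the place I would spend the most care, is making the "convergence of the empirical measure implies convergence of finite-dimensional marginals of the joint law" step fully rigorous in \emph{this} setting rather than merely citing it. Two technical items need attention: (i) one must confirm that the weak convergence asserted in Theorem~\ref{Theorem:Density Evolution} is convergence \emph{in distribution} of $\mu_t^N$ as a random element of $\mathcal M(\real^D)$ equipped with the topology of weak convergence (a Polish space under, e.g., the Lévy--Prokhorov or a bounded-Lipschitz metric), and that, since the limit is deterministic, this is equivalent to convergence in probability of $\mu_t^N$ to $\mu_t^\ast$ in that metric; and (ii) one must control the off-diagonal terms when expanding $\langle\prod_k f_k(\bm\xi^k),\mu_t^N\rangle$ — i.e., the difference between $\tfrac{1}{N(N-1)\cdots(N-\ell+1)}\sum_{\text{distinct }k_1,\dots,k_\ell}\prod_j f_j(\bm\xi^{k_j})$ and $\prod_j\big(\tfrac1N\sum_k f_j(\bm\xi^k)\big)$, which is $O(\ell^2\|f\|_\infty^\ell/N)\to 0$ by a standard counting argument. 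Assumptions $\mathbf{(A.1)}$--$\mathbf{(A.3)}$ (compact supports, bounded Lipschitz feature maps) guarantee tightness and the moment control needed for these estimates to be uniform in $N$, and I would cite Theorem~\ref{Theorem:Density Evolution} for the identification of the limit with the McKean--Vlasov solution $\mu^\ast_t$, so that the chaoticity limit is the claimed mean-field law rather than merely \emph{some} measure. Finally I would remark that chaoticity at a single time $t$ is what the statement asks for; propagating it jointly in $t$ (process-level chaos) would additionally require the path-space weak convergence, which is not needed here.
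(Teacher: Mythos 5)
Your proposal follows essentially the same route as the paper's proof (Appendix, attributed to \cite{vasantam2018occupancy}): exchangeability of the particle law combined with weak convergence of the scaled empirical measure to the deterministic mean-field solution, decomposed into exactly the two error terms you identify — the weak-convergence error controlled via the product inequality $|\prod a_i-\prod b_i|\le \sum|a_i-b_i|$, and the with-vs-without-replacement discrepancy which vanishes as $N\to\infty$. Your explicit remark that exchangeability propagates through the SGD iterates because the update map is permutation-equivariant is a small point the paper glosses over but implicitly needs for its symmetrization identity at time $\lfloor Nt\rfloor$, and is a welcome clarification.
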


\section{Empirical Results for Generative Tasks}
\label{Section:Experiments_on_Synthetic_Data_Set}
We now turn to empirical evaluations. We test the performance of Algorithm \ref{Algorithm:1} on synthetic data-set, as well as on benchmark data-sets.

\subsection{Experiments on Synthetic Data-Set}
The synthetic data-set we consider is as follows:

\begin{itemize}
	\item The distribution of training data is $P_{\bm{V}}=\mathsf{N}(\bm{0},(1+\lambda)\bm{I}_{d\times d})$,
	
	\item The  distribution of generated data is $P_{\bm{W}}=\mathsf{N}(\bm{0},(1-\lambda)\bm{I}_{d\times d})$.
\end{itemize}
To reduce the dimensionality of data, we consider the embedding $\iota:\real^{d}\mapsto \real^{d_{0}}, \bm{x}\mapsto \iota(\bm{x})=\bm{\Sigma}\bm{x}$, where $\bm{\Sigma}\in \real^{d_{0}\times d}$ and $d_{0}<d$. In this case, the distribution of the embedded features are $P_{\bm{X}|Y=+1}=\mathsf{N}(\bm{0},(1+\lambda)\bm{\Sigma}\bm{\Sigma}^{T})$, and $P_{\bm{X}|Y=-1}=\mathsf{N}(\bm{0},(1-\lambda)\bm{\Sigma}\bm{\Sigma}^{T})$. 

Note that $\lambda\in [0,1]$ is a parameter that determines the separation of distributions. In particular, the Kullback-Leibler divergece of the two multi-variate Gaussian distributions is controlled by $\lambda\in [0,1]$,
\begin{align}
D_{\mathrm{KL}}(P_{\bm{X}|Y=-1},P_{\bm{X}|Y=+1})=\dfrac{1}{2}\left[ \log \left(\dfrac{1-\lambda}{1+\lambda}\right)-d_{0}+d_{0}(1-\lambda^{2})\right].
\end{align}

In Figure \ref{Fig:4}, we show the distributions of \textit{i.i.d.} samples from the distributions $P_{\bm{V}}$ and $P_{\bm{W}}$ for different choices of variance parameter of $\lambda=0.1$, $\lambda=0.5$, and $\lambda=0.9$.  Notice that for larger $\lambda$ the divergence is reduced and thus performing the two-sample test is more difficult. From Figure \ref{Fig:4}, we clearly observe that for large values of $\lambda$, the data-points from the two distributions $P_{\bm{V}}$ and $P_{\bm{W}}$ have a large overlap and conducting a statistical test to distinguish between these two distributions is more challenging.

\begin{figure}[!t]
	\begin{center}
		\hspace*{-5mm}		\subfigure{
			\includegraphics[trim={.2cm .2cm .2cm  .6cm},width=.35\linewidth]{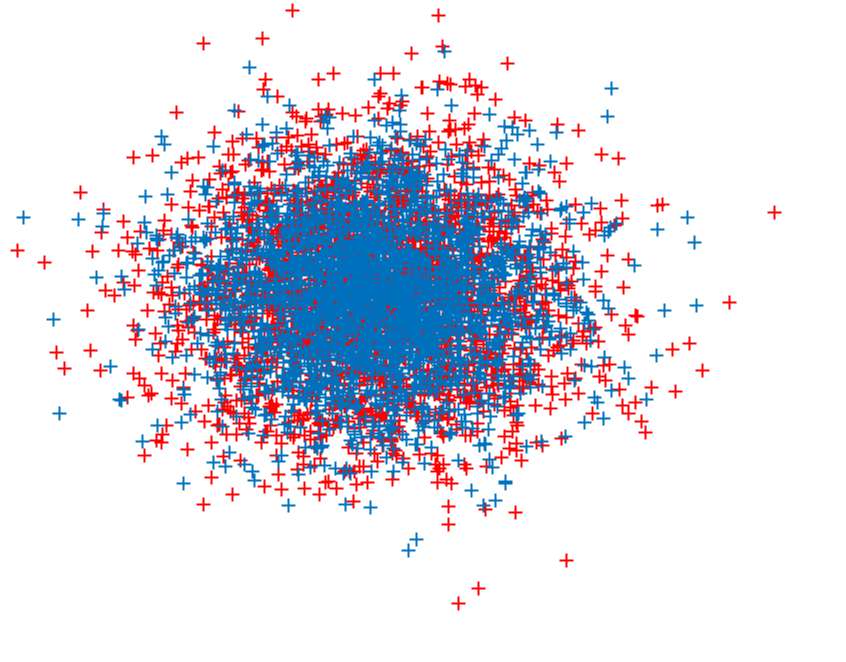} 
			\includegraphics[trim={.2cm .2cm .2cm  .6cm},width=.35\linewidth]{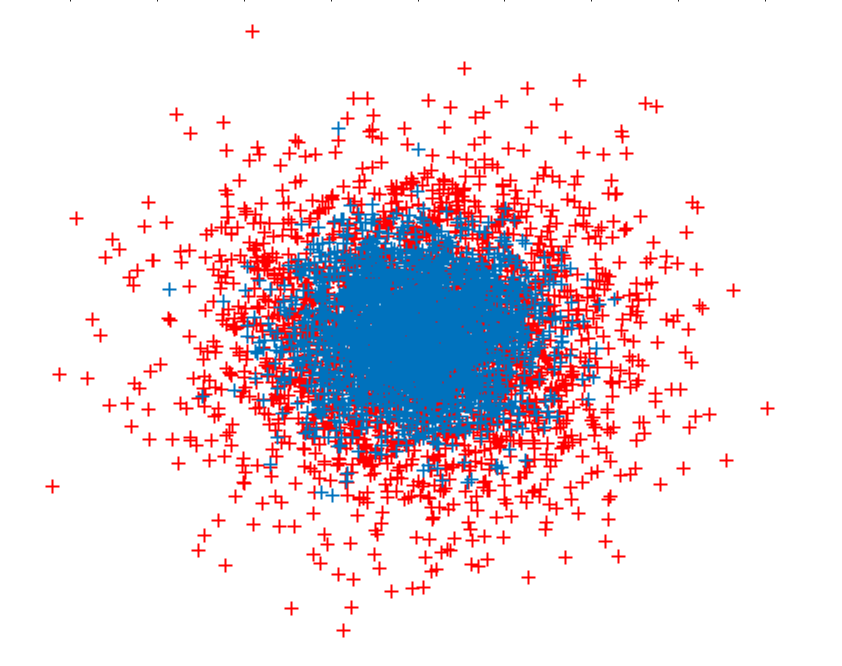} 
			\includegraphics[trim={.2cm .2cm .2cm  .2cm},width=.35\linewidth]{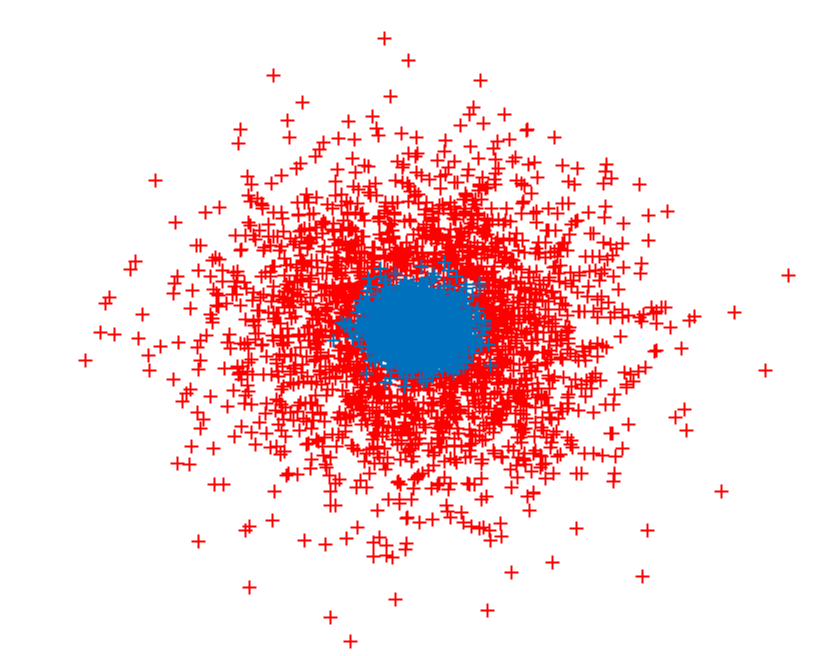}}
		\\ \vspace{-2mm}
		\subfigure{\footnotesize{\hspace{5mm} (a) \hspace{45mm} (b) \hspace{45mm} (c)} }
		\vspace{-2mm}
		\caption{\footnotesize{Visualization of data-points from the synthetic data-set $P_{\bm{V}}=\mathsf{N}(\bm{0},(1+\lambda)\bm{I}_{d\times d})$ and $P_{\bm{W}}=\mathsf{N}(\bm{0},(1-\lambda)\bm{I}_{d\times d})$ for $d=2$. Panel (a): $\lambda=0.1$, Panel (b): $\lambda=0.5$, and Panel (c): $\lambda=0.9$.}}
		\label{Fig:4} 
	\end{center}
\end{figure}

\begin{figure}[!ht]
	\begin{center}
		\subfigure{
			\includegraphics[trim={.2cm .2cm .2cm  .6cm},width=.4\linewidth]{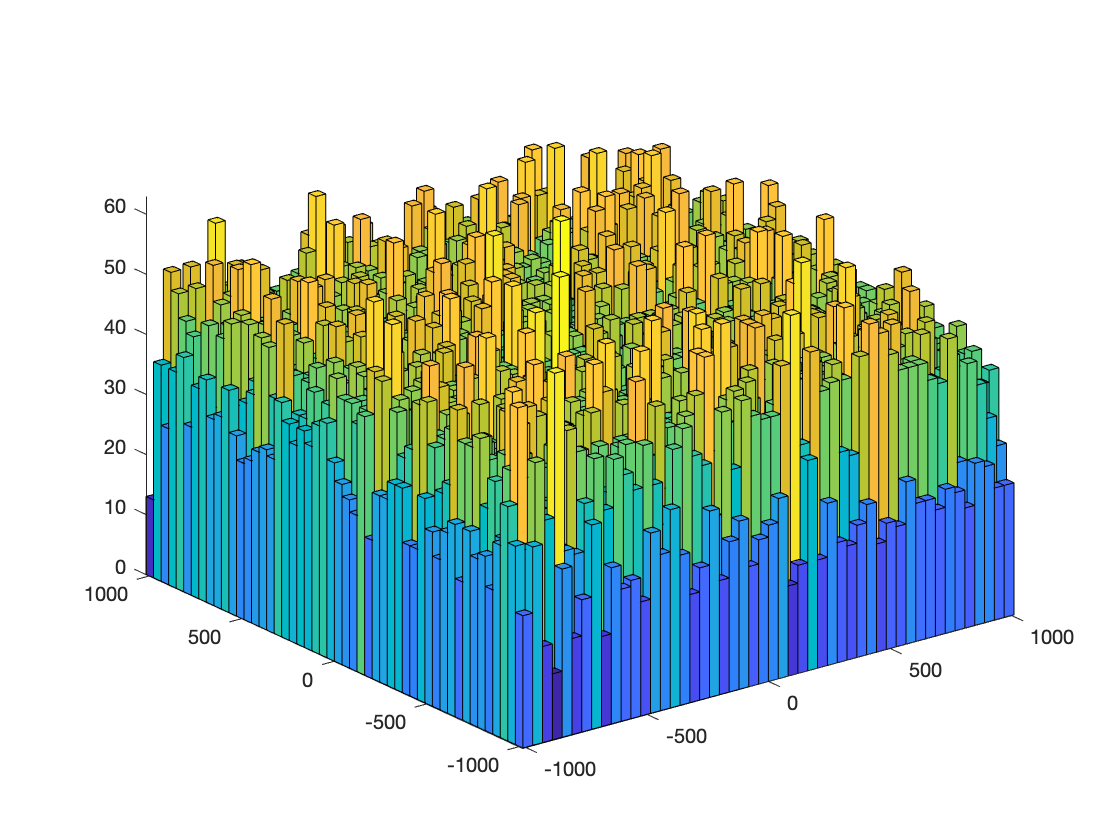} 
			\includegraphics[trim={.2cm .2cm .2cm  .6cm},width=.4\linewidth]{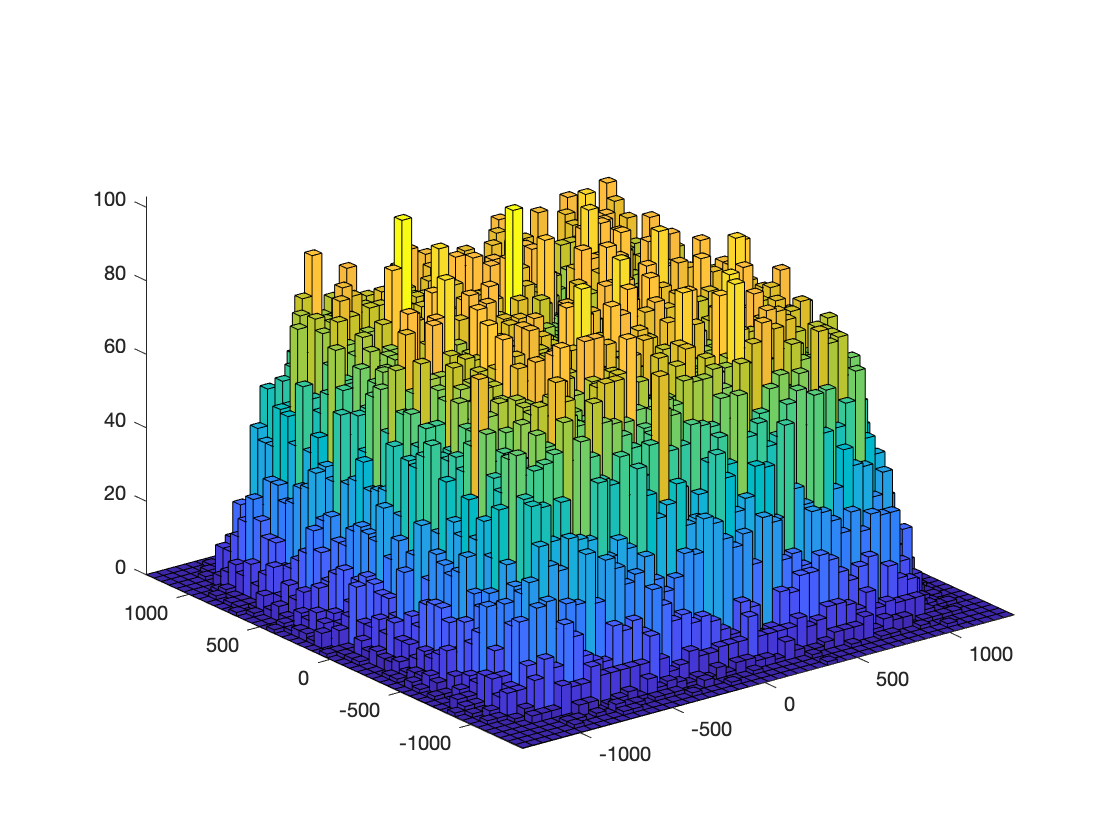}} \\
		\subfigure{\footnotesize{\hspace{5mm} (a) \hspace{45mm} (b) } }
		\subfigure{
			\includegraphics[trim={.2cm .2cm .2cm  .6cm},width=.4\linewidth]{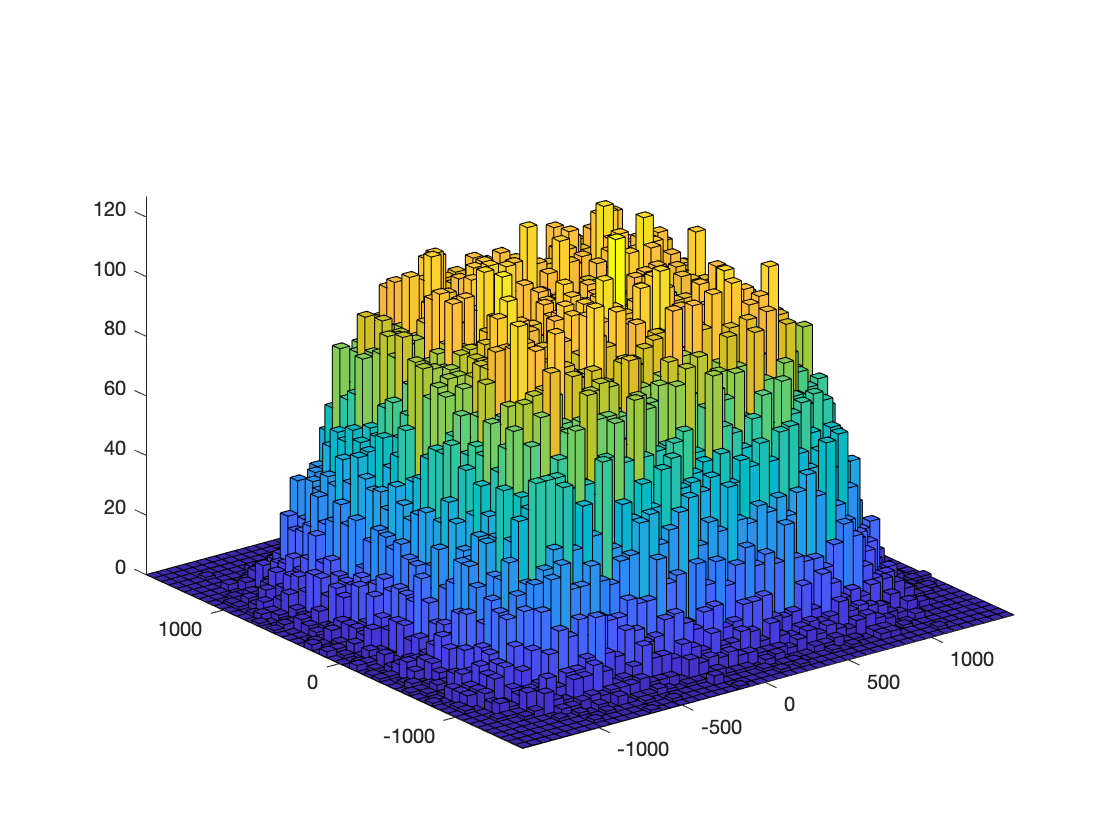} 
			\includegraphics[trim={.2cm .2cm .2cm  .2cm},width=.4\linewidth]{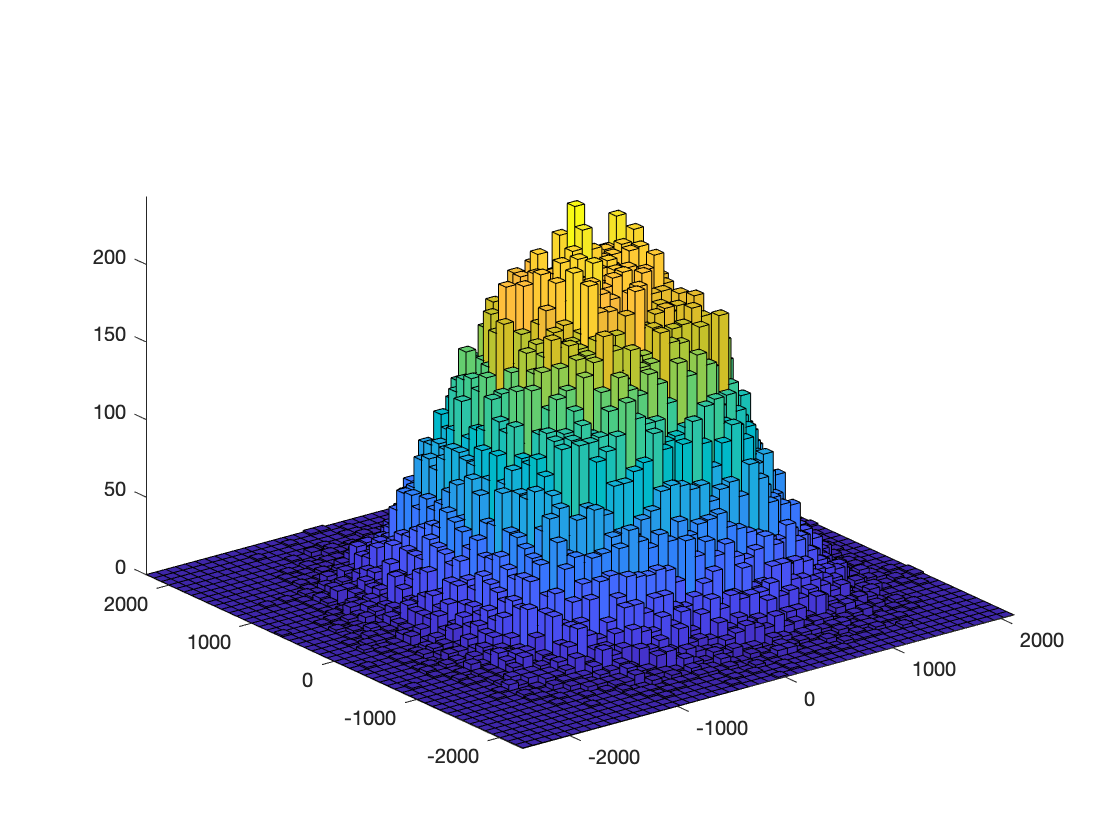}}\\
		\subfigure{\footnotesize{\hspace{5mm} (c) \hspace{45mm} (d) } }

		\caption{\footnotesize{The evolution of the empirical measure $\mu^{N}_{m}(\bm{\xi})={1\over N}\sum_{k=1}^{N}\delta(\bm{\xi}-\bm{\xi}_{m}^{k})$ of the SGD particles $\bm{\xi}_{m}^{1},\cdots,\bm{\xi}_{m}^{N}\in \real^{2}$ at different iterations $m$. The empirical measure of random feature maps seemingly converges to a Gaussian stationary measure corresponding to a Gaussian RBF kernel. Panel (a): $m=0$, Panel (b): $m=300$, Panel (c):  $m=1000$, and Panel (d): $m=2500$. }}
		\label{Fig:7} 
	\end{center}	
\end{figure}

\subsubsection{Kernel Learning Approach} 
\label{subsubsection:Kernel Learning Approach}
Figure \ref{Fig:5} depicts our two-phase kernel learning procedure which we also employed in our implementations of Algorithm \ref{Algorithm:1} on benchmark data-sets of Section \ref{Subsection:Performance on benchmark datasets} in the main text. The kernel learning approach consists of training the auto-encoder and the kernel optimization sequentially, \textit{i.e.}, 
\begin{align}
\label{Eq:solution_of_the_optimization}
\sup_{\widehat{\mu}^{N}\in\mathcal{P}_{N}}\sup_{\iota \in \mathcal{Q}}\widehat{\mathrm{MMD}}_{K_{\widehat{\mu}^{N}}\circ \iota}^{\alpha}[P_{\bm{V}},P_{\bm{W}}].
\end{align}
where the function class is defined $\mathcal{Q}\df \{\iota(\bm{z})=\sigma(\bm{\Sigma}\bm{z}+\bm{b}),\bm{\Sigma}\in \real^{d_{0}\times d},\bm{b}\in \real^{d_{0}}\}$, and $(K_{\widehat{\mu}^{N}}\circ \iota)(\bm{x}_{1},\bm{x}_{2})=K_{\widehat{\mu}^{N}}(\iota(\bm{x}_{1}),\iota(\bm{x}_{2}))$. Here, $\bm{\sigma}(\cdot)$ is the sigmoid non-linearity. Now, we consider a two-phase optimization procedure: 
\begin{itemize}
	\item \textbf{Phase (I)}: we fix the kernel function, and optimize the auto-encoder to compute a co-variance matrix $\bm{\Sigma}$ and the bias term $\bm{b}$ for the dimensionality reduction.
	
	\item \textbf{Phase (II)}: we optimize the kernel based from the learned embedded features $\iota(\bm{x})$. 
\end{itemize}
This two-phase procedure significantly improves the computational complexity of SGD as it reduces the dimensionality of random feature samples $\bm{\xi}\in \real^{D}$, $D=d_{0}\ll d$. When the kernel function $K$ is fixed, optimizing the auto-encoder is equivalent to the kernel learning step of \cite{li2017mmd}.

\begin{figure}[!t]	
	
	\begin{center}
		\hspace*{-12mm}		\subfigure{
			\includegraphics[trim={.2cm .2cm .2cm  .6cm},width=.40\linewidth]{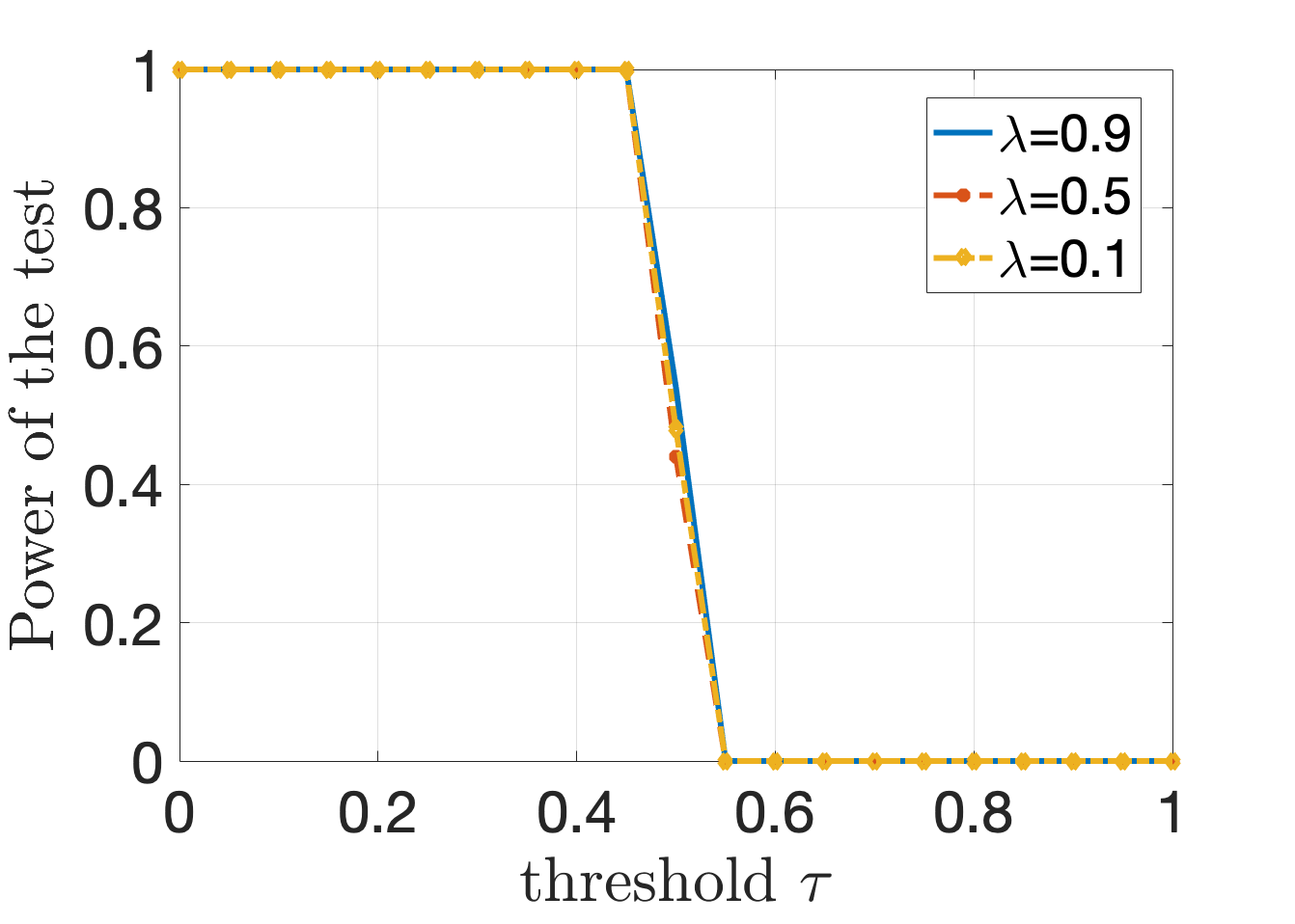} \hspace{-2mm}
			\includegraphics[trim={.2cm .2cm .2cm  .6cm},width=.39\linewidth]{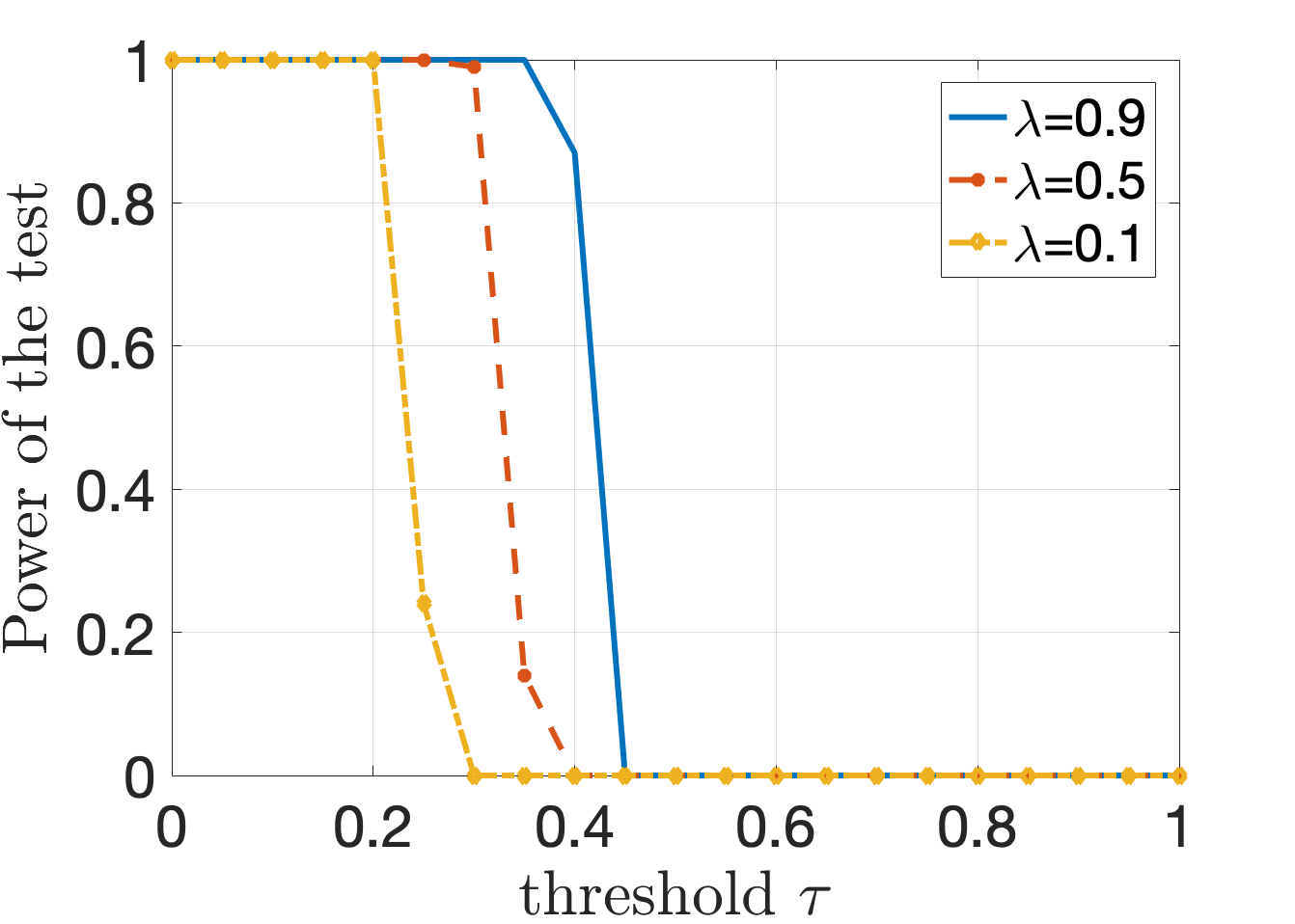}\hspace{-2mm}
			\includegraphics[trim={.2cm .2cm .2cm  .6cm},width=.40\linewidth]{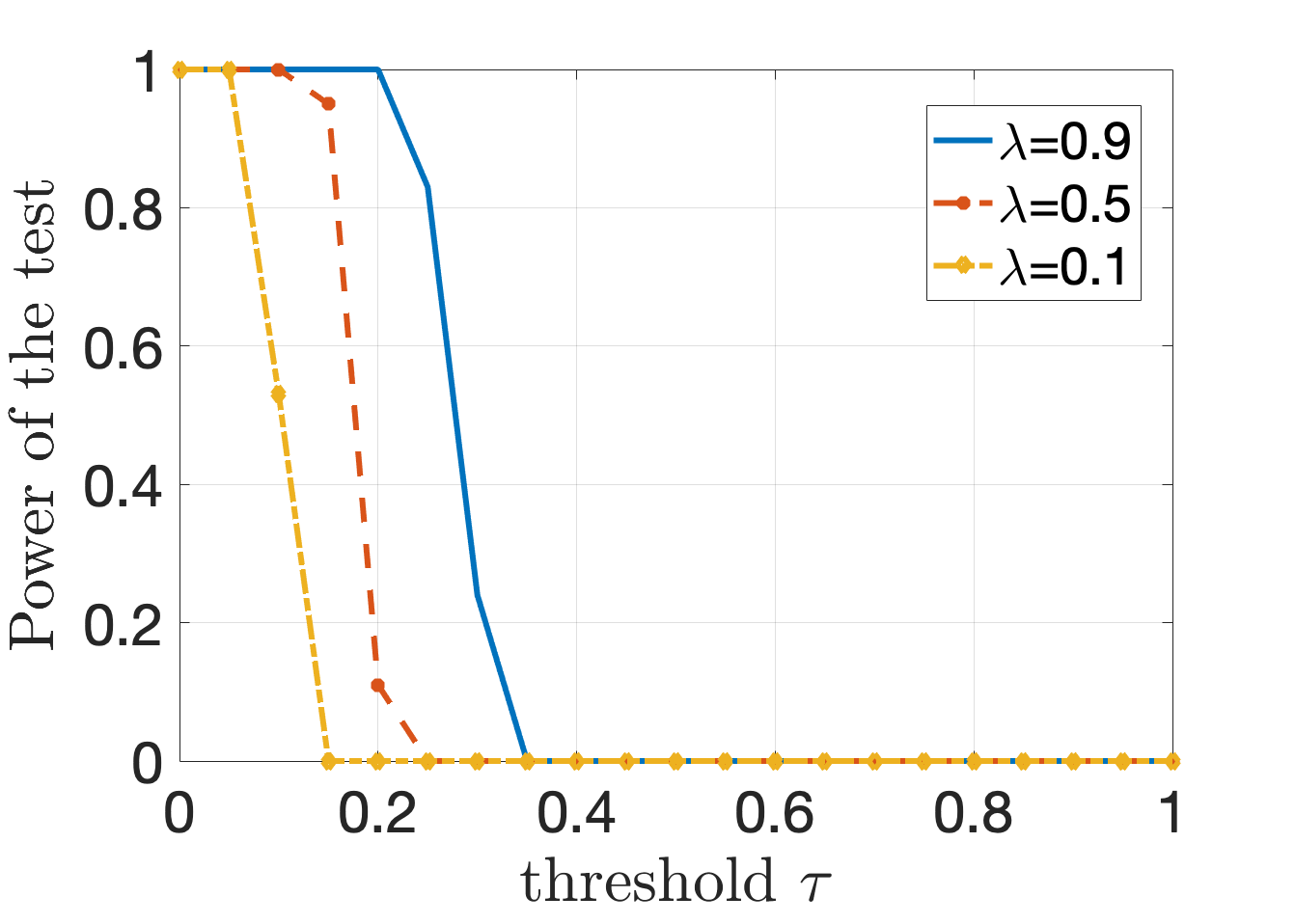}	
		}
		\\ \vspace{-2mm}
		\subfigure{\footnotesize{\hspace{5mm} (a) \hspace{48mm} (b) \hspace{48mm} (c)  } }
		\vspace{-2mm}
		\caption{\footnotesize{The statistical power versus the threshold $\tau$ for the binary hypothesis testing via the unbiased estimator of the kernel MMD. The parameters for this simulations are $\lambda\in \{0.1,0.5,0.9\}$, $d=100$, $n+m=100$, $d_{0}=50$.  Panel (a): Trained kernel using the two-phase procedure with the particle SGD in \eqref{Eq:SGD} and an auto-encoder, Panel (b): Trained kernel with an auto-encoder and a fixed Gaussian kernel with the bandwidth $\sigma=1$, Panel (c): Untrained kernel without an auto-encoder.}}
		\label{Fig:5} 
	\end{center}
	
	\begin{center}
		\hspace*{-12mm}		\subfigure{
			\includegraphics[trim={.2cm .2cm .2cm  .6cm},width=1.1\linewidth]{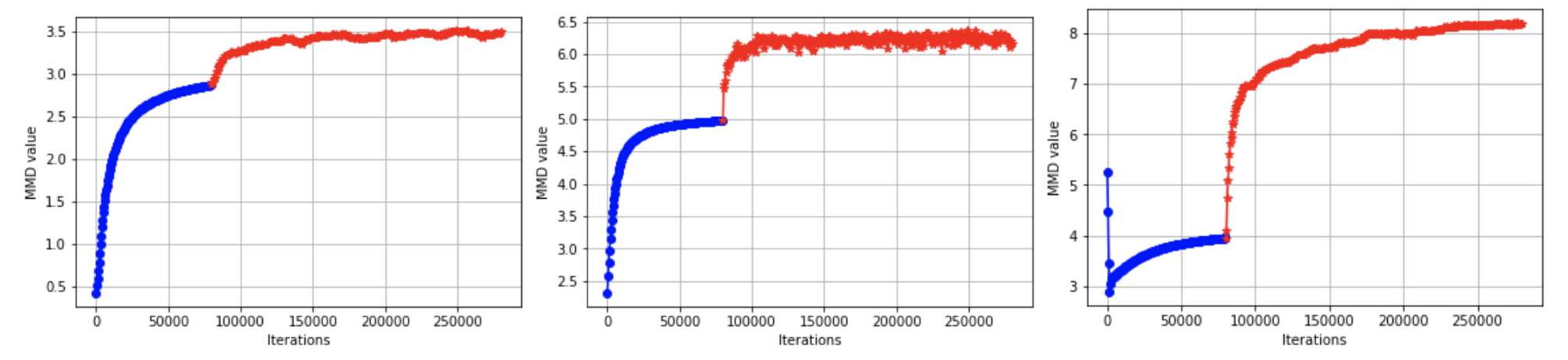}
		}
		\\ \vspace{-2mm}
		\subfigure{\footnotesize{\hspace{5mm} (a) \hspace{48mm} (b) \hspace{48mm} (c)  } }
		\vspace{-2mm}
		\caption{\footnotesize{The MMD value during the two phase procedure for the kernel training. In the first phase, an auto-encoder is trained (blue curve). In the second phase, the kernel is trained using the embedded features (red curve). Panel (a): $\lambda = 0.9$, Panel (b): $\lambda = 0.5$, Panel (c):$\lambda$ = 0.1 }}
		\label{Fig:6} 
	\end{center}
\end{figure}

\subsubsection{Statistical Hypothesis Testing with the Kernel MMD}

Let $\bm{V}_{1},\cdots,\bm{V}_{m}\sim_{\text{i.i.d.}} P_{\bm{V}}=\mathsf{N}(\bm{0},(1+\lambda)\bm{I}_{d\times d})$, and $\bm{W}_{1},\cdots,\bm{W}_{n}\sim_{\text{i.i.d.}} P_{\bm{W}}=\mathsf{N}(\bm{0},(1-\lambda)\bm{I}_{d\times d})$. Given these i.i.d. samples, the statistical test $\mathcal{T}(\{\bm{V}_{i}\}_{i=1}^{m},\{\bm{W}_{i}\}_{j=1}^{n}):\mathcal{V}^{m}\times \mathcal{W}^{n}\rightarrow \{0,1\}$ is used to distinguish between these hypotheses:
\begin{itemize}
	\item \textbf{Null hypothesis} $\mathsf{H}_{0}:P_{\bm{V}}=P_{\bm{W}}$ (thus $\lambda=0$),
	\item \textbf{Alternative hypothesis} $\mathsf{H}_{1}:P_{\bm{V}}\not= P_{\bm{W}}$ (thus $\lambda>0$).
\end{itemize}
To perform hypothesis testing via the kernel MMD, we require that $\mathcal{H}_{\mathcal{X}}$ is a universal RKHS, defined on a compact metric space $\mathcal{X}$. Universality requires that the kernel $K(\cdot,\cdot)$ be continuous and, $\mathcal{H}_{\mathcal{X}}$ be dense in $C(\mathcal{X})$.
Under these conditions, the following theorem establishes that the kernel MMD is indeed a metric:

\begin{theorem}\textsc{(Metrizablity of the RKHS)}
	Let $\mathcal{F}$ denotes a unit ball in a universal RKHS $\mathcal{H}_{\mathcal{X}}$ defined on a compact metric space $\mathcal{X}$ with the associated continuous kernel $K(\cdot,\cdot)$. Then,  the kernel MMD is a metric in the sense that $\mathrm{MMD}_{K}[P_{\bm{V}},P_{\bm{W}}]=0$ if and only if $P_{\bm{V}}=P_{\bm{W}}$.
\end{theorem}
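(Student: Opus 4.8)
The plan is to prove the two implications separately via the \emph{kernel mean embedding} already implicit in Eq.~\eqref{Eq:kernel_MMD}. For a Borel probability measure $P$ on $\mathcal{X}$, set $\mu_{P}\df \int_{\mathcal{X}}K(\cdot,\bm{x})\,\mathrm{d}P(\bm{x})\in\mathcal{H}_{\mathcal{X}}$; this Bochner integral is well defined because $\mathcal{X}$ is compact and $K$ is continuous, so $\sup_{\bm{x}\in\mathcal{X}}\|K(\cdot,\bm{x})\|_{\mathcal{H}_{\mathcal{X}}}=\sup_{\bm{x}\in\mathcal{X}}\sqrt{K(\bm{x},\bm{x})}<\infty$. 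Expanding the Hilbert norm in the second line of Eq.~\eqref{Eq:kernel_MMD} then identifies $\mathrm{MMD}_{K}[P_{\bm{V}},P_{\bm{W}}]=\|\mu_{P_{\bm{V}}}-\mu_{P_{\bm{W}}}\|_{\mathcal{H}_{\mathcal{X}}}^{2}$. The easy direction is immediate: if $P_{\bm{V}}=P_{\bm{W}}$ then $\mu_{P_{\bm{V}}}=\mu_{P_{\bm{W}}}$ and $\mathrm{MMD}_{K}=0$.

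For the converse I would assume $\mathrm{MMD}_{K}[P_{\bm{V}},P_{\bm{W}}]=0$, hence $\mu_{P_{\bm{V}}}=\mu_{P_{\bm{W}}}$ in $\mathcal{H}_{\mathcal{X}}$, and first translate this into a statement about integrals of RKHS functions. By the reproducing property, and the fact that the bounded linear functional $\langle f,\cdot\rangle_{\mathcal{H}_{\mathcal{X}}}$ passes through the Bochner integral,
\begin{align*}
\int_{\mathcal{X}} f\,\mathrm{d}P=\int_{\mathcal{X}}\langle f,K(\cdot,\bm{x})\rangle_{\mathcal{H}_{\mathcal{X}}}\,\mathrm{d}P(\bm{x})=\langle f,\mu_{P}\rangle_{\mathcal{H}_{\mathcal{X}}}\qquad\text{for all }f\in\mathcal{H}_{\mathcal{X}},
\end{align*}
so that $\int_{\mathcal{X}} f\,\mathrm{d}P_{\bm{V}}=\int_{\mathcal{X}} f\,\mathrm{d}P_{\bm{W}}$ for every $f\in\mathcal{H}_{\mathcal{X}}$.

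Next I would upgrade this from $\mathcal{H}_{\mathcal{X}}$ to all of $C(\mathcal{X})$ using universality. Given $g\in C(\mathcal{X})$ and $\varepsilon>0$, density of $\mathcal{H}_{\mathcal{X}}$ in $C(\mathcal{X})$ supplies $f\in\mathcal{H}_{\mathcal{X}}$ with $\|f-g\|_{\infty}<\varepsilon$, and since $P_{\bm{V}},P_{\bm{W}}$ have unit mass,
\begin{align*}
\left|\int_{\mathcal{X}} g\,\mathrm{d}(P_{\bm{V}}-P_{\bm{W}})\right|\leq\left|\int_{\mathcal{X}}(g-f)\,\mathrm{d}P_{\bm{V}}\right|+\left|\int_{\mathcal{X}}(g-f)\,\mathrm{d}P_{\bm{W}}\right|+\left|\int_{\mathcal{X}} f\,\mathrm{d}(P_{\bm{V}}-P_{\bm{W}})\right|\leq 2\varepsilon .
\end{align*}
Letting $\varepsilon\downarrow0$ gives $\int_{\mathcal{X}} g\,\mathrm{d}P_{\bm{V}}=\int_{\mathcal{X}} g\,\mathrm{d}P_{\bm{W}}$ for all $g\in C(\mathcal{X})$. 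Finally, since $\mathcal{X}$ is a compact metric space, Borel probability measures on $\mathcal{X}$ are regular and are uniquely determined by their integrals against $C(\mathcal{X})$ (Riesz representation), which yields $P_{\bm{V}}=P_{\bm{W}}$ and completes the proof.

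The norm expansion and the $\varepsilon$-estimate are routine; the steps that need care are the well-definedness of $\mu_{P}$ as a Bochner integral and the interchange of that integral with the inner product — both resting on compactness of $\mathcal{X}$ and continuity (hence boundedness) of $K$ — together with the closing measure-theoretic fact that $C(\mathcal{X})$ separates probability measures on a compact metric space. I expect the Bochner-integral bookkeeping to be the main point to get right, although it is standard and short.
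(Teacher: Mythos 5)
The paper states this theorem without providing a proof; it is invoked as a known fact from the MMD literature (essentially the metrization result of Gretton \emph{et al.}~\cite{gretton2007kernel,gretton2012kernel}), so there is no in-paper argument to compare against. Your proof is correct and is the standard one: identify $\mathrm{MMD}_{K}[P_{\bm{V}},P_{\bm{W}}]$ with $\|\mu_{P_{\bm{V}}}-\mu_{P_{\bm{W}}}\|_{\mathcal{H}_{\mathcal{X}}}^{2}$ via the kernel mean embedding (well defined as a Bochner integral since $\sup_{\bm{x}\in\mathcal{X}}\sqrt{K(\bm{x},\bm{x})}<\infty$ on a compact $\mathcal{X}$ with continuous $K$), observe that vanishing MMD forces $\int f\,\mathrm{d}P_{\bm{V}}=\int f\,\mathrm{d}P_{\bm{W}}$ for all $f\in\mathcal{H}_{\mathcal{X}}$ by the reproducing property, upgrade this to all of $C(\mathcal{X})$ using universality (density of $\mathcal{H}_{\mathcal{X}}$ in $C(\mathcal{X})$) together with the fact that both measures have unit total mass, and close with the observation that $C(\mathcal{X})$ separates Borel probability measures on a compact metric space. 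The two directions, the Bochner-integral bookkeeping, and the $\varepsilon$-approximation step are all handled correctly.
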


To design a test, let $\widehat{\mu}^{N}_{m}(\bm{\xi})={1\over N}\sum_{k=1}^{N}\delta(\bm{\xi}-\bm{\xi}_{m}^{k})$ denotes the solution of SGD in \eqref{Eq:SGD} for solving the optimization problem. Consider the following MMD estimator consisting of two $U$-statistics and an empirical function
\begin{align}
\nonumber
\widehat{\mathrm{MMD}}_{K_{\widehat{\mu}_{m}^{N}}\circ \iota}\big[\{\bm{V}_{i}\}_{i=1}^{m},\{\bm{W}_{i}\}_{i=1}^{n}\big]&=\dfrac{1}{m(m-1)}\sum_{k=1}^{N}\sum_{i\not =j}\varphi(\iota(\bm{V}_{i}),\bm{\xi}_{m}^{k})\varphi(\iota(\bm{V}_{j}),\bm{\xi}_{m}^{k})\\ \nonumber
&\hspace{4mm}+\dfrac{1}{n(n-1)}\sum_{k=1}^{N}\sum_{i\not =j}\varphi(\iota(\bm{W}_{i}),\bm{\xi}_{m}^{k})\varphi(\iota(\bm{W}_{j}),\bm{\xi}_{m}^{k})\\ \label{Eq:Negative_Estimator}
&\hspace{4mm}-\dfrac{2}{nm}\sum_{k=1}^{N}\sum_{i=1}^{m}\sum_{j=1}^{n}\varphi(\iota(\bm{W}_{i}),\bm{\xi}_{m}^{k})\varphi(\iota(\bm{V}_{j}),\bm{\xi}^{k}_{m}).
\end{align}
Given the samples $\{\bm{V}_{i}\}_{i=1}^{m}$ and $\{\bm{W}_{i}\}_{i=1}^{n}$, we design a test statistic as below
\begin{align}
\label{Eq:Test_statistics}
\mathcal{T}(\{\bm{V}_{i}\}_{i=1}^{m},\{\bm{W}_{i}\}_{i=1}^{n})\df \begin{cases}
\mathsf{H}_{0}  & \text{if } \widehat{\mathrm{MMD}}_{K_{\widehat{\mu}_{m}^{N}}\circ \iota}\big[\{\bm{V}_{i}\}_{i=1}^{m},\{\bm{W}_{i}\}_{i=1}^{n}\big]\leq \tau \\
\mathsf{H}_{1}  & \text{if } \widehat{\mathrm{MMD}}_{K_{\widehat{\mu}_{m}^{N}}\circ \iota}\big[\{\bm{V}_{i}\}_{i=1}^{m},\{\bm{W}_{i}\}_{i=1}^{n}\big]> \tau,
\end{cases}.
\end{align}
where $\tau\in \real$ is a threshold. Notice that the unbiased MMD estimator of  \eqref{Eq:Negative_Estimator} can be negative despite the fact that the population MMD is non-negative. Consequently, negative values for the statistical threshold $\tau$  \eqref{Eq:Test_statistics} are admissible. In the following simulations, we only consider non-negative values for the threshold $\tau$. 

A Type I error is made when $\mathsf{H}_{0}$ is rejected based on the observed samples, despite the null hypothesis having generated the data. Conversely, a Type II error occurs when $\mathsf{H}_{0}$ is accepted despite the alternative hypothesis $\mathsf{H}_{1}$ being true. The \textit{significance level} $\alpha$ of a test is an upper bound on the probability of a Type I error: this is a design parameter of the test which must be set in advance, and is used to determine the threshold to which we compare the test statistic. The \textit{power of a test} is the probability of rejecting the null hypothesis $\mathsf{H}_{0}$ when it is indeed incorrect. In particular,
\begin{align}
\mathrm{Power}\df \prob(\text{reject}\ \mathsf{H}_{0}| \mathsf{H}_{1} \ \text{is true}).
\end{align}

In this sense, the statistical power controls the probability of making Type II errors.

\subsection{Empirical results on benchmark data-sets}

%To better visualize the performance of our kernel learning approach, we illustrate the $t$-SNE \cite{maaten2008visualizing} plot of the features $\bm{x}_{1},\cdots,\bm{x}_{n}\sim_{\text{i.i.d.}} {1\over 2} P_{\bm{V}}+{1\over 2}P_{\bm{W}}$ in Figure \ref{Fig:6}(a), where $\bm{x}_{i}\in \real^{d}$ with $d=10$ and $\lambda=0.5$. In Figure \ref{Fig:6}(b), we also show the $t$-SNE plot for the feature maps $\{(\varphi(\bm{x}_{i},\bm{\xi}_{0}^{1}),\cdots,\varphi(\bm{x}_{i},\bm{\xi}_{0}^{N})\}_{i=1}^{n}$ with $N=10^{4}$, where the samples $\bm{\xi}_{0}^{1},\cdots,\bm{\xi}_{0}^{N}\sim_{\text{i.i.d.}}$ are drawn from the standard Gaussian distribution. Lastly, in Figure \ref{Fig:6}(c), we show the $t$-SNE plot for the optimized random feature maps the feature maps $\{(\varphi(\bm{x}_{i},\bm{\xi}_{m}^{1}),\cdots,\varphi(\bm{x}_{i},\bm{\xi}_{m}^{N})\}_{i=1}^{n}$, where $(\bm{\xi}_{m}^{1},\cdots,\bm{\xi}_{m}^{N})$ are the particles at the $m$-th iteration of the particle SGD \eqref{Eq:SGD} with $m=16\times 10^{4}$. Clearly, from Figure \ref{Fig:6}(c) we observe that after optimizing the random feature maps via particle SGD, they cluster into two separate groups. Therefore, the discriminator can accept the samples as a realization of the data-distribution $P_{\bm{V}}$ based on the cluster in which they belong.

In Figure \ref{Fig:7}, we show the evolution of the empirical measure $\mu_{m}^{N}(\bm{\xi})$ of SGD particles by plotting the 2D histogram of the particles $\bm{\xi}_{m}^{1},\cdots,\bm{\xi}_{m}^{N}\in \real^{d}$ at different iterations of SGD. Clearly, starting with a uniform distribution in \ref{Fig:7}(a), the empirical measure seemingly evolves into a Gaussian measure in Figure \ref{Fig:7}(d). The evolution to a Gaussian distribution demonstrates that the RBF Gaussian kernel corresponding to a Gaussian distribution for the random features indeed provides a good kernel function for the underlying hypothesis test with Gaussian distributions.

In Figure \ref{Fig:5}, we evaluate the power of the test for $100$ trials of hypothesis test using the test statistics of \eqref{Eq:Test_statistics}. To obtain the result, we used an autoencoder to reduce the dimension from $d=100$ to $p=50$. Clearly, for the trained kernel in Panel (a) of Figure \ref{Fig:5}, the threshold $\tau$ for which $\mathrm{Power}=1$ increases after learning the kernel via the two phase procedure described earlier. In comparison, in Panel (b), we observe that training an auto-encoder only with a fixed standard Gaussian kernel $K(\bm{x},\bm{y})=\exp(-\|\bm{x}-\bm{y}\|_{2}^{2})$ attains lower thresholds compared to our two-phase procedure. In Panel (c), we demonstrate the case of a fixed Gaussian kernel without an auto-encoder. In this case, the threshold is significantly lower due to the large dimensionality of the data. 

From Figure \ref{Fig:5}, we also observe that interestingly, the phase transition in the statistical threshold $\tau$ is less sensitive to the parameter $\lambda$. This phenomenon is also observed in Figure \ref{Fig:6}, where the improvements due to SGD for more difficult two sample tests (\textit{i.e.}, smaller values of $\lambda$) is larger than the case of distinguishable distributions. 

\subsection{Performance on benchmark datasets}
\label{Subsection:Performance on benchmark datasets}

\begin{figure}[!t]
	\begin{center}
		\hspace*{-5mm}		\subfigure{
			\includegraphics[trim={.2cm .2cm .2cm  .6cm},width=.25\linewidth]{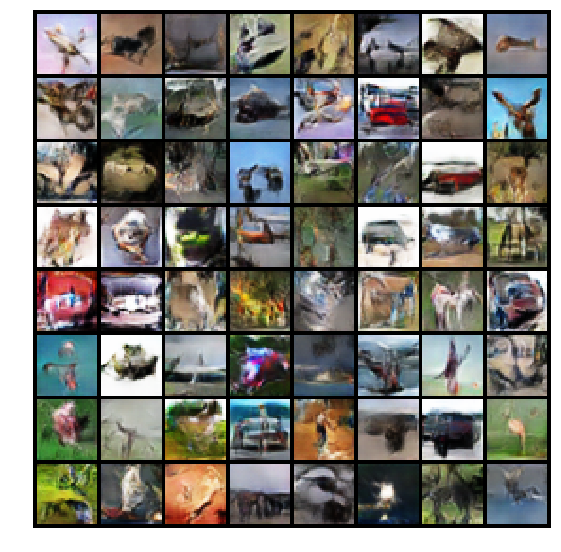} 
			\includegraphics[trim={.2cm .2cm .2cm  .6cm},width=.25\linewidth]{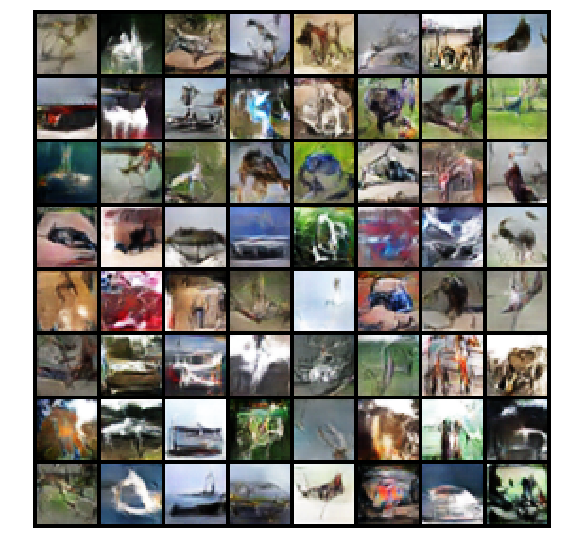} 
			\includegraphics[trim={.2cm .2cm .2cm  .6cm},width=.25\linewidth]{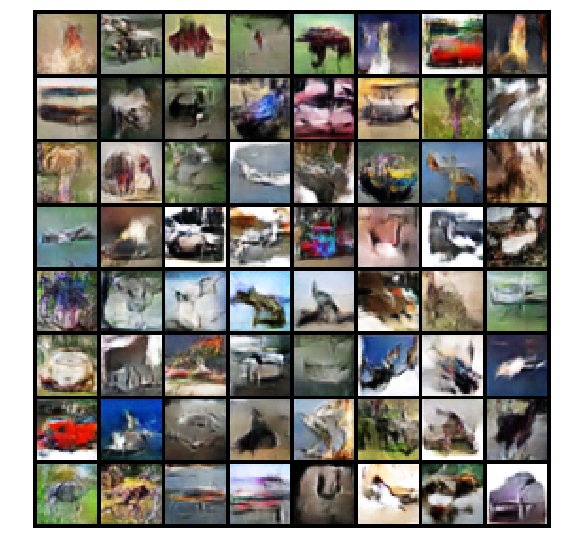}  
			\hspace{1mm}	\vspace*{-2mm}				\includegraphics[width=.24\linewidth]{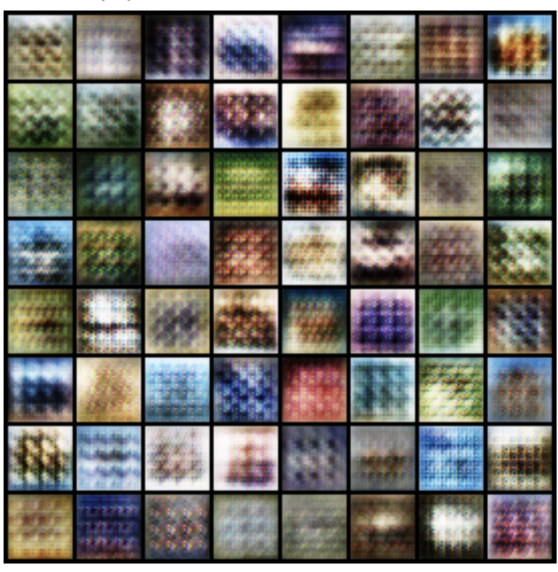}  }	
		\\ 
		\vspace{-1mm}
		\subfigure{\footnotesize{\hspace{5mm} (a) \hspace{30mm} (b) \hspace{30mm} (c) \hspace{30mm} (d)} }
		
		\hspace*{-7mm}		\subfigure{
			\includegraphics[trim={.2cm .2cm .2cm  .6cm},width=.25\linewidth]{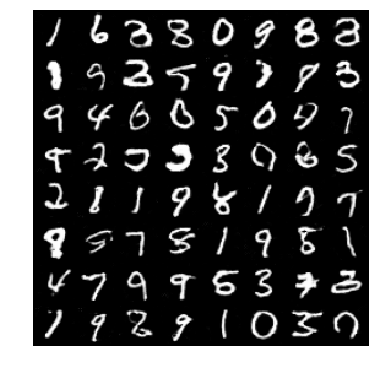} 
			\includegraphics[trim={.2cm .2cm .2cm  .6cm},width=.25\linewidth]{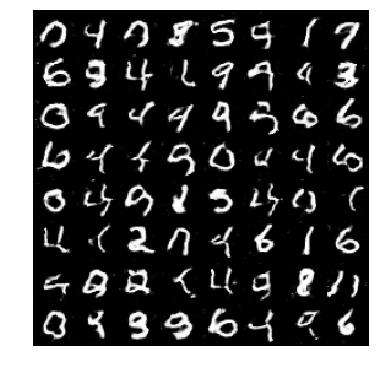} 
			\includegraphics[trim={.2cm .2cm .2cm  .2cm},width=.25\linewidth]{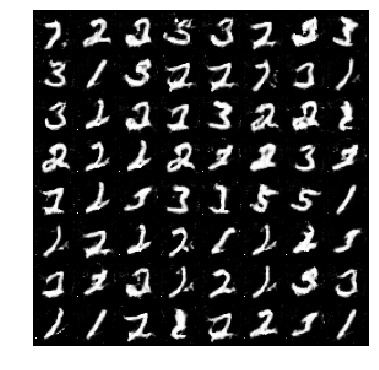} }  
		\includegraphics[trim={.2cm .15cm .15cm  .15cm},width=.24\linewidth]{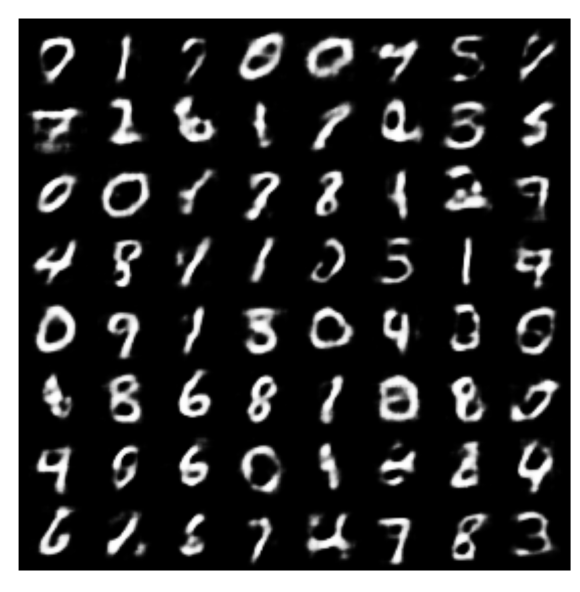}  
		\\ 
		\vspace{-1mm}
		\subfigure{\footnotesize{\hspace{5mm} (e) \hspace{30mm} (f) \hspace{30mm} (g) \hspace{30mm} (h)} }
		\vspace{-3mm}
		\caption{\footnotesize{Sample generated images using CIFAR-10 (top row), and MNIST (bottom row) data-sets. Panels (a)-(e): Proposed MMD GAN with an \textit{automatic} kernel selection via the particle SGD (Algorithm \ref{Algorithm:1}), Panels (b)-(f): MMD GAN \cite{li2017mmd} with an auto-encoder for dimensionality reduction in conjunction with a mixed RBF Gaussian kernel whose bandwidths are optimized via cross-validation, Panels (c)-(g): MMD GAN in \cite{li2017mmd} with a single RBF Gaussian kernel with an auto-encoder for dimensionality reduction in conjunction with a single RBF Gaussian kernel whose bandwidth is optimized via cross-validation, and Panel (d)-(g): GMMN without an auto-encoder \cite{li2015generative}.}}
		\label{Fig:1} 
	\end{center}
\end{figure}

We evaluate our kernel learning approach on large-scale benchmark data-sets. We train our MMD GAN model on two distinct types of data-sets, namely on MNIST \cite{lecun1998gradient} and CIFAR-10 \cite{lecun1998gradient}, where the size of training instances are $60\times 10^{3}$ and $50\times 10^{3}$, respectively. All the generated samples are from a fixed noise random vectors and are not singled out.

\subsubsection{Implementation and hyper-parameters.} We implement Algorithm \ref{Algorithm:1} as well as MMD GAN \cite{li2017mmd} in ${\tt{Pytorch}}$ using NVIDIA Titan V100 32GB graphics processing units (GPUs). The source code of Algorithm \ref{Algorithm:1} is built upon the code of \cite{li2017mmd}, and retains the auto-encoder implementation. In particular, we use a sequential training of the auto-encoder and kernel as explained in the synthetic data-set. For a fair comparison, our hyper-parameters are adjusted as in \cite{li2017mmd}, \textit{i.e.}, the learning rate of 0.00005 is considered for RMSProp \cite{tieleman2012lecture}.  Moreover, the batch-size for training the generator and auto-encoder is $n=64$. The learning rate of particle SGD is tuned to $\eta=10$.

\subsubsection{Random feature maps.} To approximate the kernel, we use the the random feature model of Rahimi and Recht \cite{rahimi2008random,rahimi2009weighted}, where  $\varphi(\bm{x};\bm{\xi})=\sqrt{2/N}\cos(\bm{x}^{T}\bm{\xi}+b)$. Here $b\sim \mathrm{Uniform}[-\pi,\pi]$ is a random bias term. 

\subsubsection{Practical considerations.} When data-samples $\{\bm{V}_{i}\}\in \real^{d}$ are high dimensional (\textit{e.g.} CIFAR-10), the particles $\bm{\xi}^{1},\cdots,\bm{\xi}^{N}\in \real^{p},p=d$ in SGD \eqref{Eq:SGD} are also high-dimensional. To reduce the dimensionality of the particles, we apply an auto-encoder architecture similar to \cite{li2017mmd}, and train our kernel on top of learned embedded features. More specifically, in our simulations, we train an auto-encoder where the dimensionality of the latent space is $h=10$ for MNIST, and $h=128$ (thus $p=d=128$) for CIFAR-10. Therefore, the particles $\bm{\xi}^{1},\cdots,\bm{\xi}^{N}$ in subsequent kernel training phase have the dimension of $D=10$, and $D=128$, respectively.

\subsubsection{Choice of the scaling parameter $\alpha$.} There is a trade-off in the choice of $\alpha$. While for large values of $\alpha$, the kernel is better able to separate data-samples from generated samples, in practice, a large value of $\alpha$ slows down the convergence of particle SGD. This is due to the fact that the coupling strength between the particles in Eq. \eqref{Eq:SGD} decrease as $\alpha$ increase. The scaling factor is set to be $\alpha=1$ in all the following experiments.

\subsubsection{Qualitative comparison.} We now show that \textit{without} the bandwidth tuning for Gaussian kernels and using the particle SGD to learn the kernel, we can attain better visual results on benchmark data-sets.  In Figure \ref{Fig:1}, we show the generated samples on  CIFAR-10 and MNIST data-sets, using our Algorithm \ref{Algorithm:1}, MMD GAN  \cite{li2017mmd}  with a mixed and homogeneous Gaussian RBF kernels, and GMMN \cite{li2015generative}. 

Figure \ref{Fig:1}(a) shows the samples from Algorithm \ref{Algorithm:1}, Figure \ref{Fig:1}(b) shows the samples from MMD GAN  \cite{li2017mmd} with the RBF Gaussian mixture kernel of Eq. , where $\sigma_{k}\in \{1,2,4,8,16 \}$ are the bandwidths of the Gaussian kernels that are fine tuned and optimized. We observe that our MMD GAN with \textit{automatic} kernel learning visually attains similar results to MMD GAN \cite{li2017mmd} which requires \textit{manual} tuning of the hyper-parameters. In Figure \ref{Fig:1}(c), we show the MMD GAN result with a single kernel RBF Gaussian kernel whose bandwidth are optimized via cross-validation and is $\sigma=16$. Lastly, in Figure \ref{Fig:1}(d), we show the samples from GMMN \cite{li2015generative} which does not exploit an auto-encoder or kernel training. Clearly, GMMN yield a poor results compared to other methods due to high dimensionality of features, as well as the lack of an efficient method to train the kernel.

On MNIST data-set in Figure \ref{Fig:1}(e)-(h), the difference between our method and MMD GAN \cite{li2017mmd} is visually more pronounced. We observe that without a manual tuning of the kernel bandwidth and by using the particle SGD \eqref{Eq:SGD} to optimize the kernel, we attain better generated images in Figure \ref{Fig:1}(e), compared to MMD GAN with mixed RBF Gaussian kernel and \textit{manual} bandwidth tuning in Figure \ref{Fig:1}(f). Moreover, using a single RBF Gaussian kernel yields a poor result regardless of the choice of its bandwidth. The generated images from GMMN is also shown in Figure \ref{Fig:1}(h).

\subsubsection{Quantitivative comparison.} To quantitatively measure the quality and diversity of generated samples, we compute the inception score (IS) \cite{salimans2016improved} as well as Fr\`{e}chet Inception Distance (FID) \cite{heusel2017gans} on CIFAR-10 images. 

Intuitively, the inception score is used for GANs to measure samples quality and diversity.  This score is based on the Inception-v3 Network \cite{szegedy2016rethinking} which is a deep convolutional architecture designed for classification tasks on ImageNet \cite{deng2009imagenet}, a dataset consisting of 1.2 million RGB images from 1000 classes. Given an image $\bm{x}$, the task of the network is to output a class label $y$ in the form of a vector of probabilities. The inception score uses an Inception-v3 Network pre-trained on ImageNet and calculates a statistic of the network’s outputs when applied to generated images. More precisely, the inception score of a generative model is 
\begin{align}
\mathrm{IS}\df \exp\left( \expect_{\bm{X}\sim P_{\bm{W}}} \mathrm{D}_{\mathrm{KL}}(P_{Y|\bm{X}}||P_{Y})\right),
\end{align}
where $\mathrm{D}_{\mathrm{KL}}(\cdot||\cdot)$ is the Kullback-Leibler divergence. The definition of the inception score  is motivated by the following two observations:
\begin{itemize}
	\item[(\textit{i})] The images generated should contain meaningful objects, \textit{i.e.},  $P_{Y|\bm{X}}$ should be low entropy. In other words, images with meaningful objects are supposed to have low label (output) entropy, that is, they belong to few object classes
	
	\item[(\textit{ii})] The generative algorithm should generate diverse images from all the different classes in ImageNet, \textit{i.e.}, the distribution of labels $P_{Y}$ should have a high entropy.
\end{itemize}

The FID improves on IS by actually comparing the statistics of generated samples to real samples, instead of evaluating generated samples independently.  In particular, Heusel, \textit{et al}. \cite{heusel2017gans}  propose to use  the Fr\'{e}chet distance between two multivariate Gaussians $\mathsf{N}(\bm{\mu}_{1},\bm{\Sigma}_{1})$ and $\mathsf{N}(\bm{\mu}_{2},\bm{\Sigma}_{2})$ as follows
\begin{align}
\mathrm{FID}\df \|\bm{\mu}_{1}-\bm{\mu}_{2}\|_{2}^{2}+\mathrm{Tr}\left(\bm{\Sigma}_{1}+\bm{\Sigma}_{2}-2(\bm{\Sigma}_{1}\bm{\Sigma}_{2})^{1\over 2}\right).
\end{align}

In Table \ref{Table:1}, we report the quantitative measures for different MMD GAN model using different scoring metric. Note that in Table \ref{Table:1} lower FID scores and higher IS scores indicate a better performance. We observe from Table \ref{Table:1} that our approach attain lower FID score, and higher IS score compared to MMD GAN with single Gaussian kernel (bandwidth $\sigma=16$), and a mixture Gaussian kernel (bandwidths $\{1,2,4,8,16\}$).

\vspace{-4mm} 
\begin{center}
	\centering
	\begin{table}[t]
	\begin{tabular}{||c| c c||}
			\hline
				\rowcolor{Gray}
			Method & FID $(\downarrow)$ & IS $(\uparrow)$  \\ [0.5ex] 
			\hline\hline
			MMD GAN (Gaussian) \cite{li2017mmd} & $67.244\pm  0.134$ & 5.608$\pm$0.051   \\ 
			\hline
			MMD GAN (Mixture Gaussian) \cite{li2017mmd} & $67.129\pm  0.148$ & 5.850$\pm$0.055    \\
			\hline
			SGD Alg. \ref{Algorithm:1} & $\bm{65.059\pm  0.153}$ &  $\bm{5.97\pm 0.046}$  \\
			\hline
			Real-data &0 & 11.237$\pm$0.116 \\
			\hline
		\end{tabular}
		\caption{Comparison of the quantitative performance measures of MMD GANs with different kernel learning approaches.}
		\label{Table:1}
	\end{table}
	\vspace{-2mm}
\end{center}
\vspace{-3mm}

\section{Empirical Results for Classification Tasks}
\label{Section:Empirical Results for Classification Tasks}

We now turn to empirical evaluations for classification on synthetic and benchmark data-sets. We compare our method with two alternative kernel learning techniques, namely, the importance sampling of Sinha and Duchi \cite{sinha2016learning} , and the Gaussian bandwidth optimization via $k$ nearest neighbor ($k$NN) \cite{chen2017simple}.  In the sequel, we provide a brief description of each method:
\begin{itemize}
\item \textbf{Importance Sampling}: the importance sampling of Sinha and Duchi \cite{sinha2016learning} which proposes to assign a weight $w^{m}$ to each sample $\bm{\xi}^{m}$ for $m=1,2,\cdots,N$. The weights are then optimized via the following standard optimization procedure
\begin{align}
\label{Eq:Importance}
\max_{w^{1},\cdots,w^{N}\in \mathcal{Q}_{N}}\dfrac{1}{2n(n-1)} \sum_{1\leq i<j\leq n}y_{i}y_{j}\sum_{k=1}^{N}w^{m}\varphi(\bm{x}_{i};\bm{\xi}^{k})\varphi(\bm{x}_{j};\bm{\xi}^{k}),
\end{align}    
where $\mathcal{Q}_{N}\df \{\bm{w}\in \real_{+}^{N}: \langle \bm{w},\bm{1}\rangle=1, \chi(\bm{w}|| \bm{1}/N)\leq R\}$ is the distributional ball, and $\chi(\cdot||\cdot)$ is the $\chi^{2}-$divergence. Notice that the complexity of solving the optimization problem in Eq. \eqref{Eq:Importance} is insensitive to the dimension of random features $\bm{x}_{i}\in \real^{d}$. 

\item \textbf{Gaussian Kernel with $k$NN Bandwidth Selection Rule}: In this method, we fix the Gaussian kernel $K(\bm{x},\bm{y})=\exp(\|\bm{x}-\bm{y}]\|_{2}^{2}/\sigma^{2})$. To select a good bandwidth for the kernel, we use the deterministic $k$ nearest neighbor ($k$NN) approach of \cite{chen2017simple}. In particular, we choose the bandwidth according to the following rule
\begin{align}
\sigma^{2}= \dfrac{1}{n}\sum_{i=1}^{n}\|\bm{x}_{i}-\bm{x}_{i}^{k\mathrm{NN}}\|_{2}^{2},
\end{align}
where $\bm{x}_{i}^{k\mathrm{NN}}$ is defined as $k$ nearest neighbor of $\bm{x}_{i}$. In our experiments, we let $k=3$. We then generate random Fourier features $\varphi(\bm{x};\bm{\xi}_{i})=\sqrt{{2\over N}}\cos(\langle \bm{w},\bm{\xi}^{k} \rangle+b^{k})$, with $(\bm{\xi}^{k})_{1\leq k\leq N}\sim_{\mathrm{i.i.d.}} {1\over \sqrt{2\pi\sigma^{2}}}\exp(-\sigma^{2}\|\bm{\xi}\|_{2}^{2})$ and $(b^k)_{1\leq k\leq N}\sim_{\mathrm{i.i.d.}} \mathrm{Uniform}[-\pi,\pi]$.

\end{itemize}

\subsection{Empirical Results on the Synthetic Data-Set}

For experiments with the synthetic data, we follow the model of Sinha and Duchi \cite{sinha2016learning} and compare our results with the importance sampling method proposed therein and the publicly available codes on Github.\footnote{\url{https://github.com/amansinha/learning-kernels}} In particular, we generate the features from the standard Gaussian distribution $\bm{x}\sim \mathsf{N}(0,\bm{I}_{d\times d})$ with the class labels $y=\mathrm{sign}(\|\bm{x}\|_{2}-d)$, where $\bm{x}\in \real^{d}$. The underpinning idea behind this data generation model is that the Gaussian kernel is ill-suited for this classification task, as the Euclidean distance used in this kernel does not capture the underlying structure of the classes. Nevertheless, we compare our results with the Gaussian kernel with $k$NN bandwidth selection procedure.

\begin{figure}[!t]
	\begin{center}
			\subfigure{
			\includegraphics[trim={.2cm .2cm .2cm  .6cm},width=.5\linewidth]{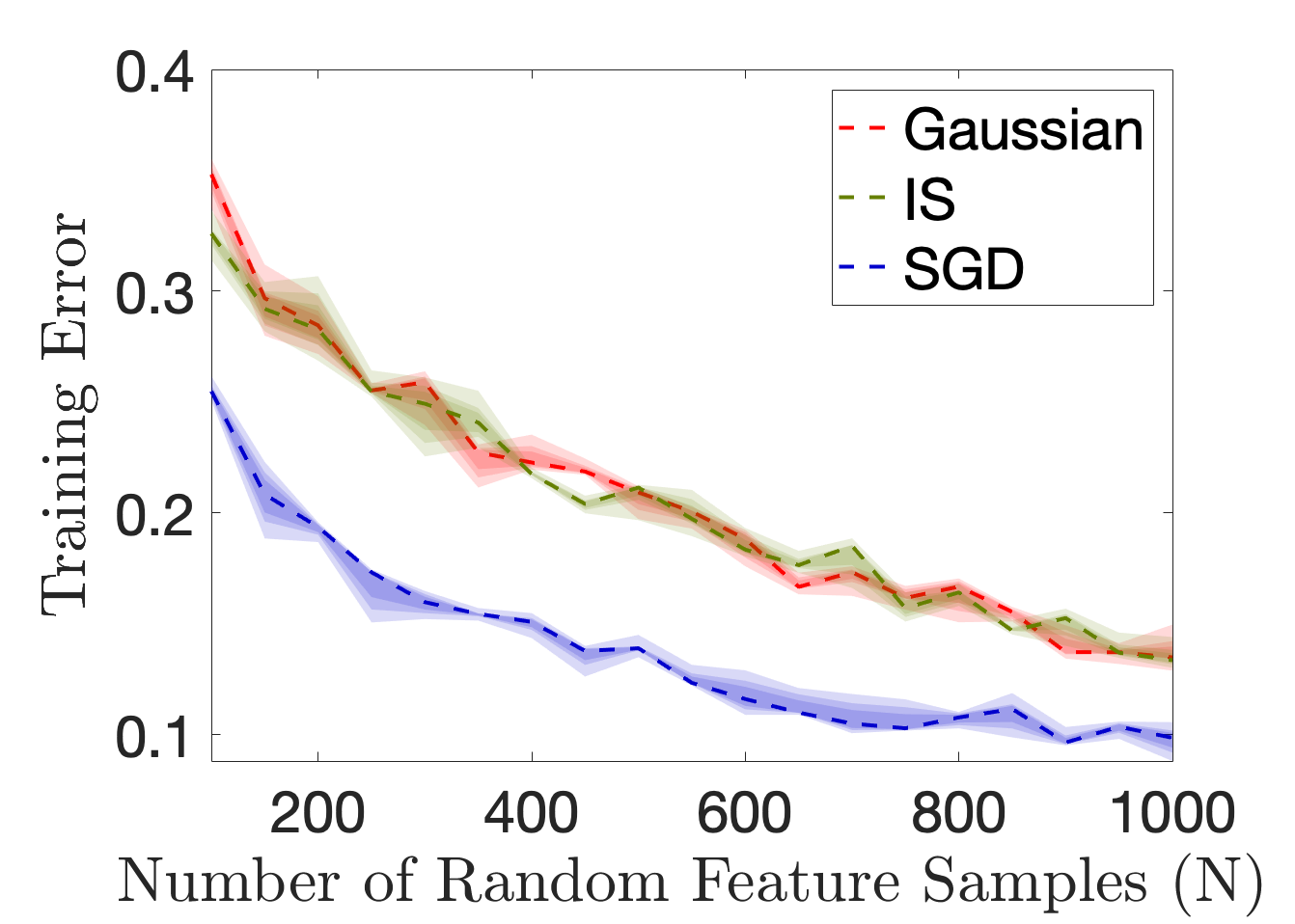} \hspace{-2mm}
			\includegraphics[trim={.2cm .2cm .2cm  .6cm},width=.5\linewidth]{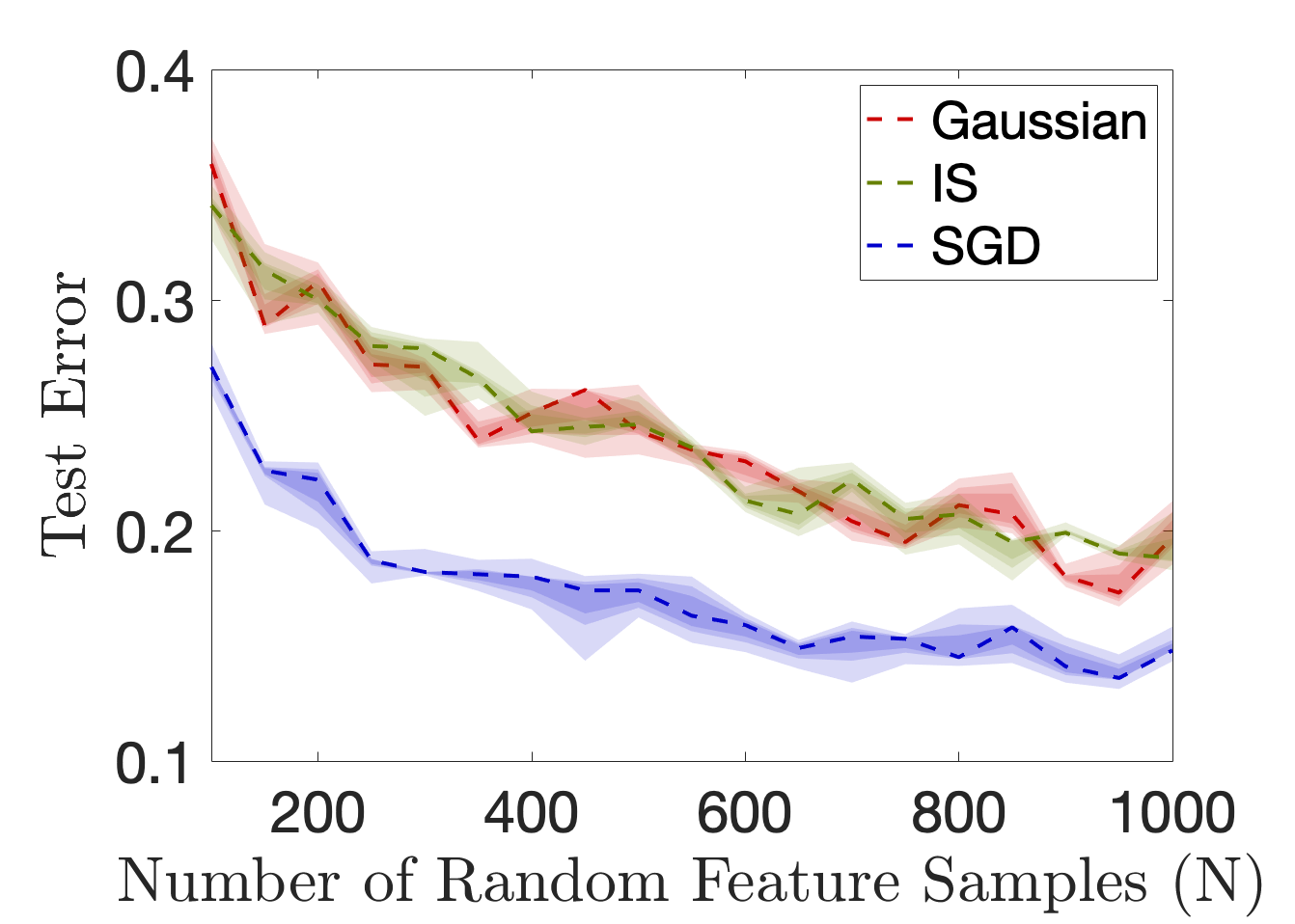} \hspace{-2mm}}\\
				\subfigure{\footnotesize{\hspace{5mm} (a) \hspace{60mm} (b)}}
		\subfigure{
			\includegraphics[trim={.2cm .2cm .2cm  .6cm},width=.5\linewidth]{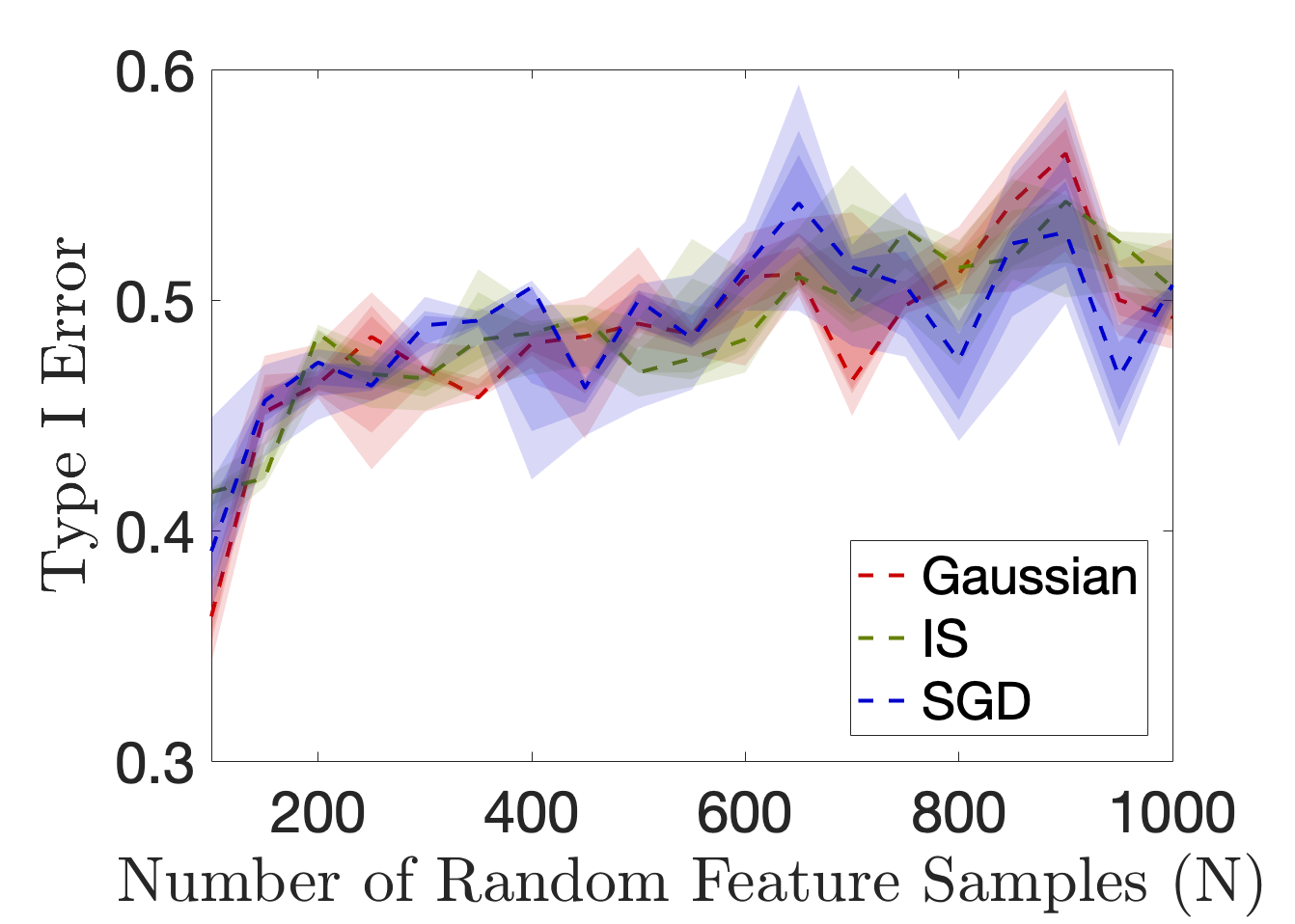}  \hspace{-2mm}
			\includegraphics[width=.5\linewidth]{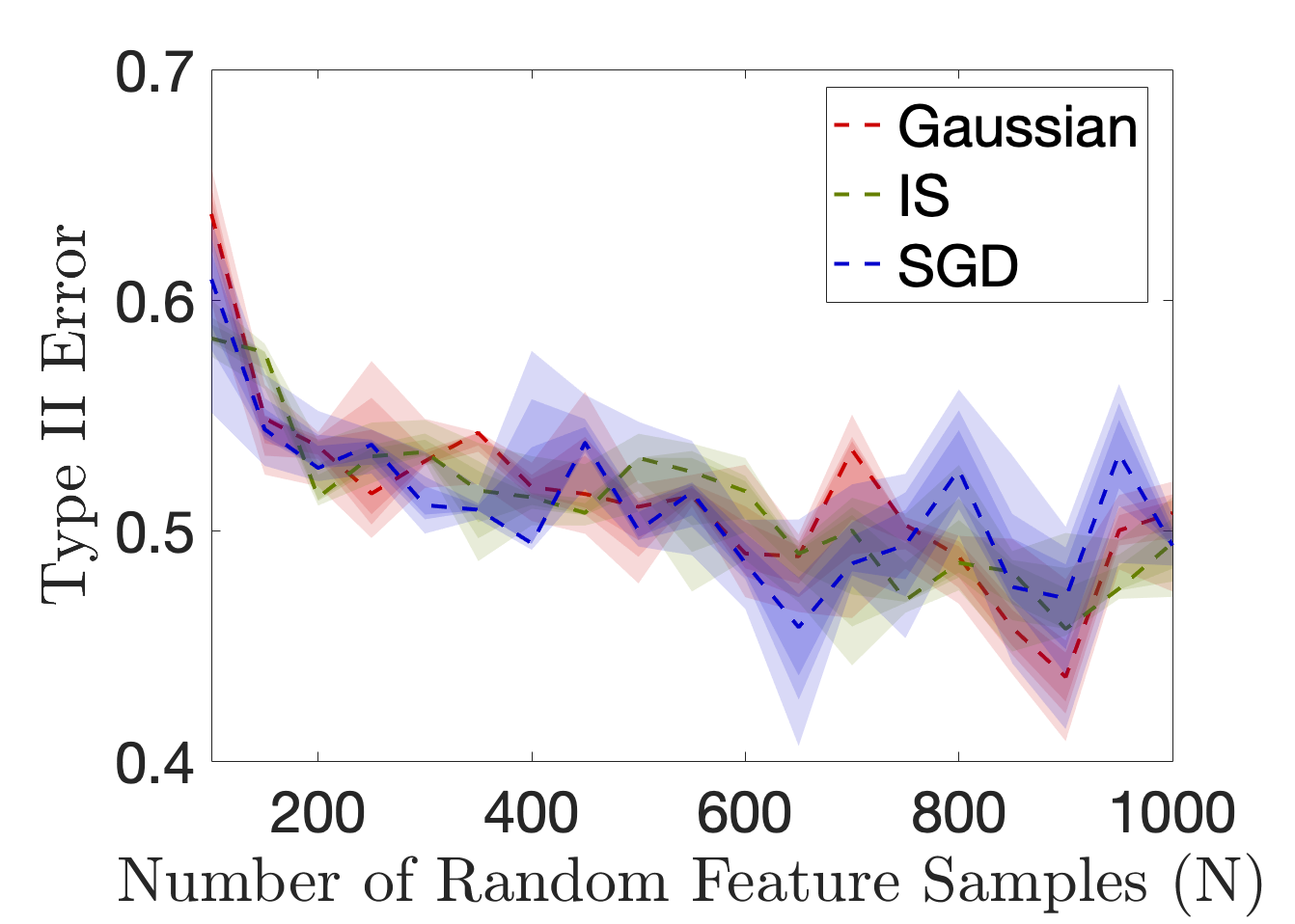}  }
		\\ 
				\subfigure{\footnotesize{\hspace{5mm} (c) \hspace{60mm} (d)}} 
		\vspace{-3mm}
		\caption{\footnotesize{ Performance comparison of SGD optimization algorithm, importance sampling, and a Gaussian kernel with the \textit{optimized} bandwidth for the data dimension $d=10$, and different number of random feature samples $N$. The shaded regions denote $\%20$, $\%30$ and $\%50$ percentile errors. Panel (a): Training error, Panel (b): Test error, Panel (c): Type I error, Panel (d): Type 2 error .}}
		\label{Fig:1} 
	\end{center}
\end{figure}

\begin{figure}[!t]
	\begin{center}
		\subfigure{
			\includegraphics[trim={.2cm .2cm .2cm  .6cm},width=.5\linewidth]{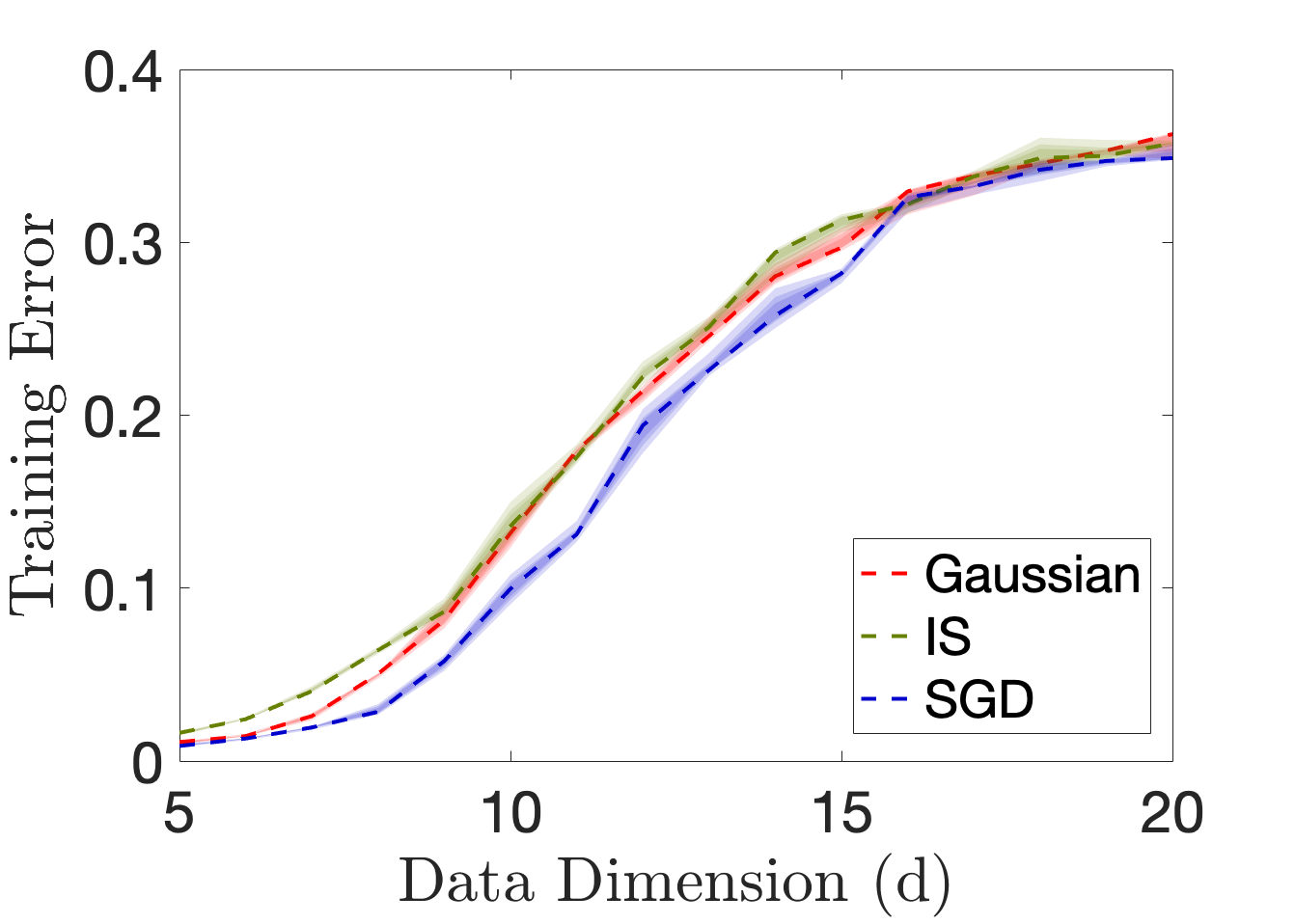} \hspace{-2mm}
			\includegraphics[trim={.2cm .2cm .2cm  .6cm},width=.5\linewidth]{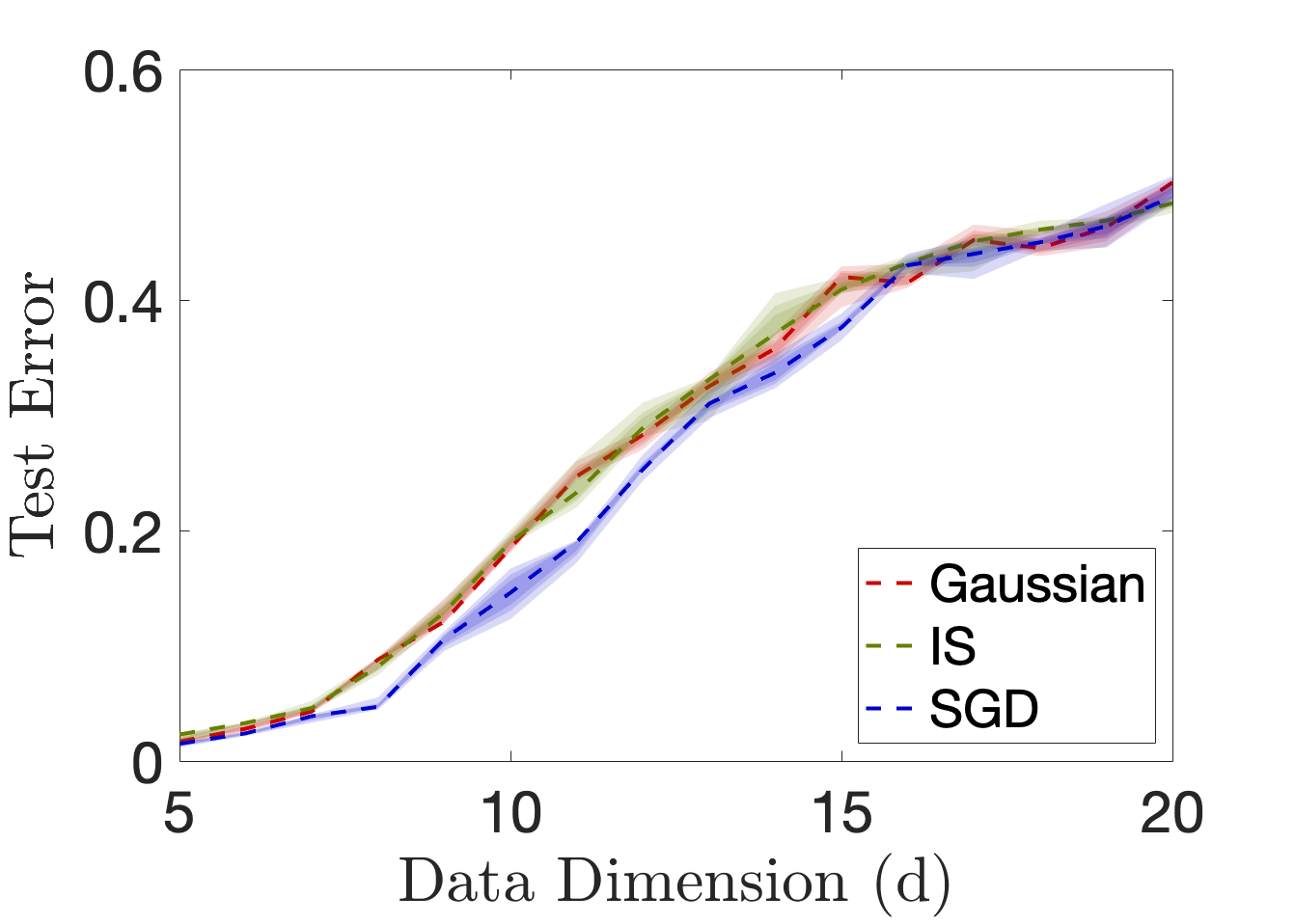} \hspace{-2mm}}\\
		\subfigure{\footnotesize{\hspace{5mm} (a) \hspace{65mm} (b)}}
		\subfigure{
			\includegraphics[trim={.2cm .2cm .2cm  .6cm},width=.5\linewidth]{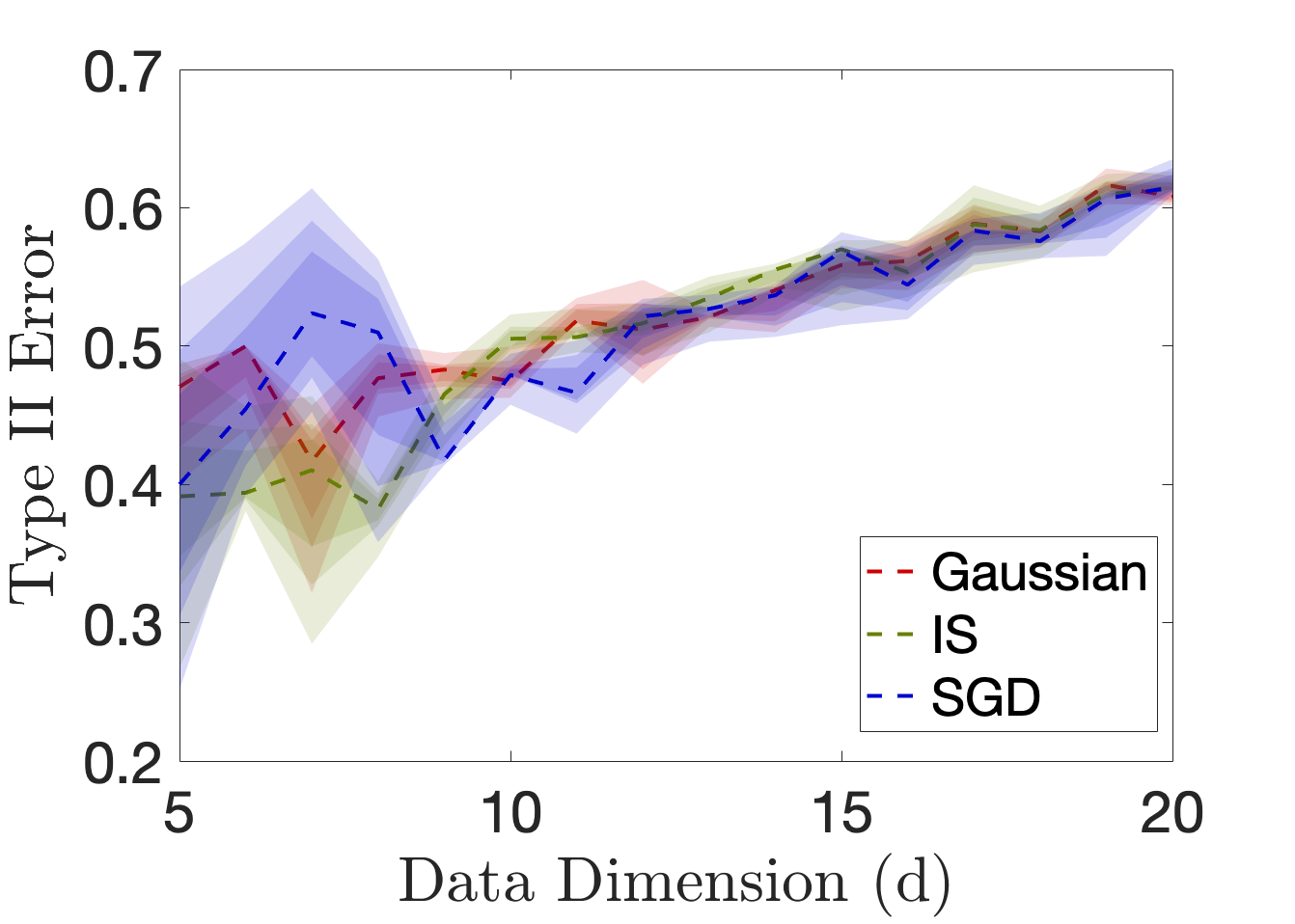}  \hspace{-2mm}
			\includegraphics[width=.5\linewidth]{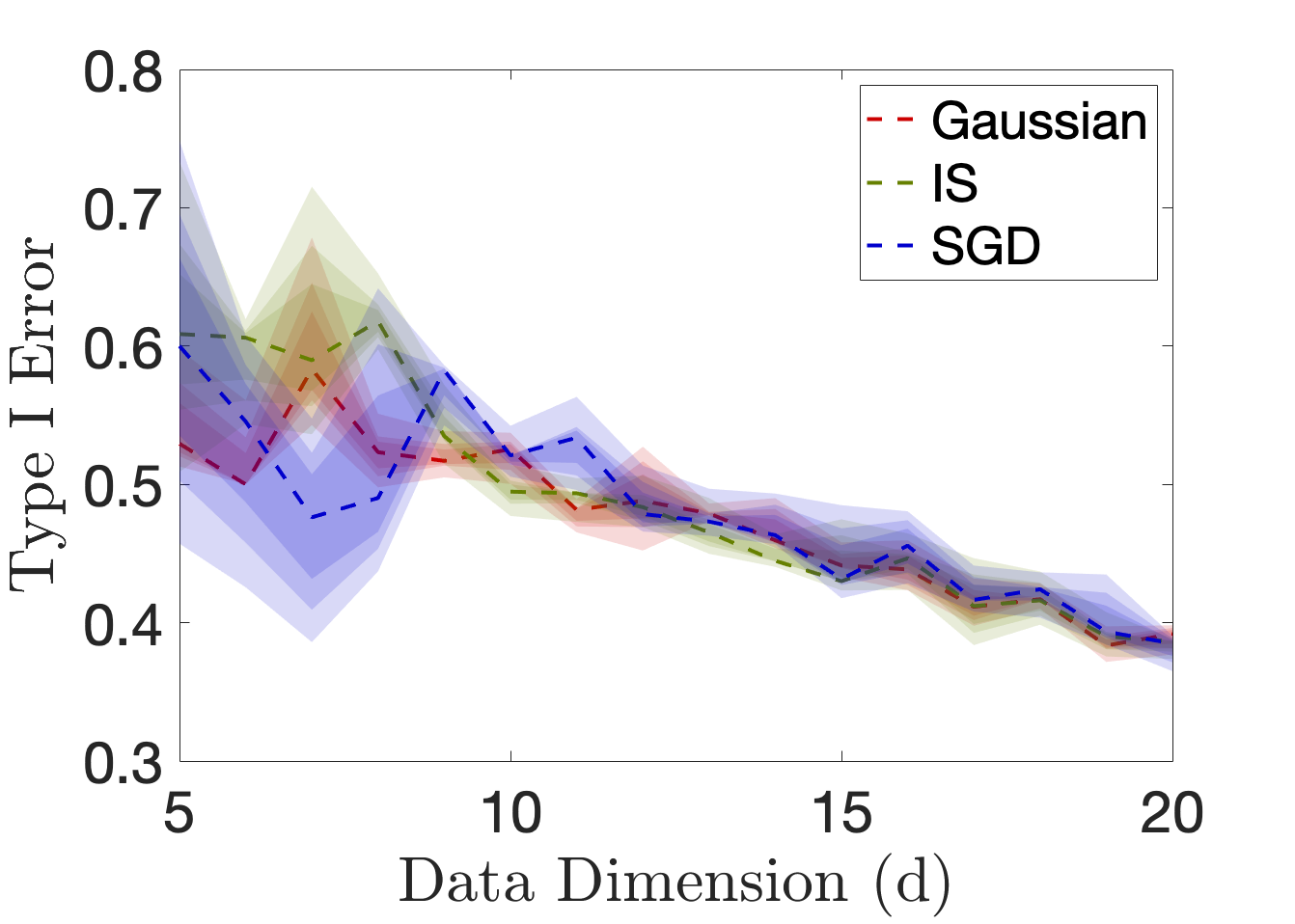}  }
		\\ 
		\subfigure{\footnotesize{\hspace{5mm} (c) \hspace{60mm} (d)}} 
		\vspace{-3mm}
		\caption{\footnotesize{Performance comparison of SGD optimization algorithm, importance sampling, and a Gaussian kernel with the \textit{optimized} bandwidth for $N=1000$ random feature samples, and different dimensions $d$.  The shaded regions denote $\%20$, $\%30$ and $\%50$ percentile errors. Panel (a): Training error, Panel (b): Test error, Panel (c): Type I error, Panel (d): Type 2 error  .}}
		\label{Fig:1} 
	\end{center}
\end{figure}

\begin{figure}[!t]
	\begin{center}
		\subfigure{
\hspace{-18mm}			\includegraphics[trim={.2cm .2cm .2cm  .6cm},width=.4\linewidth]{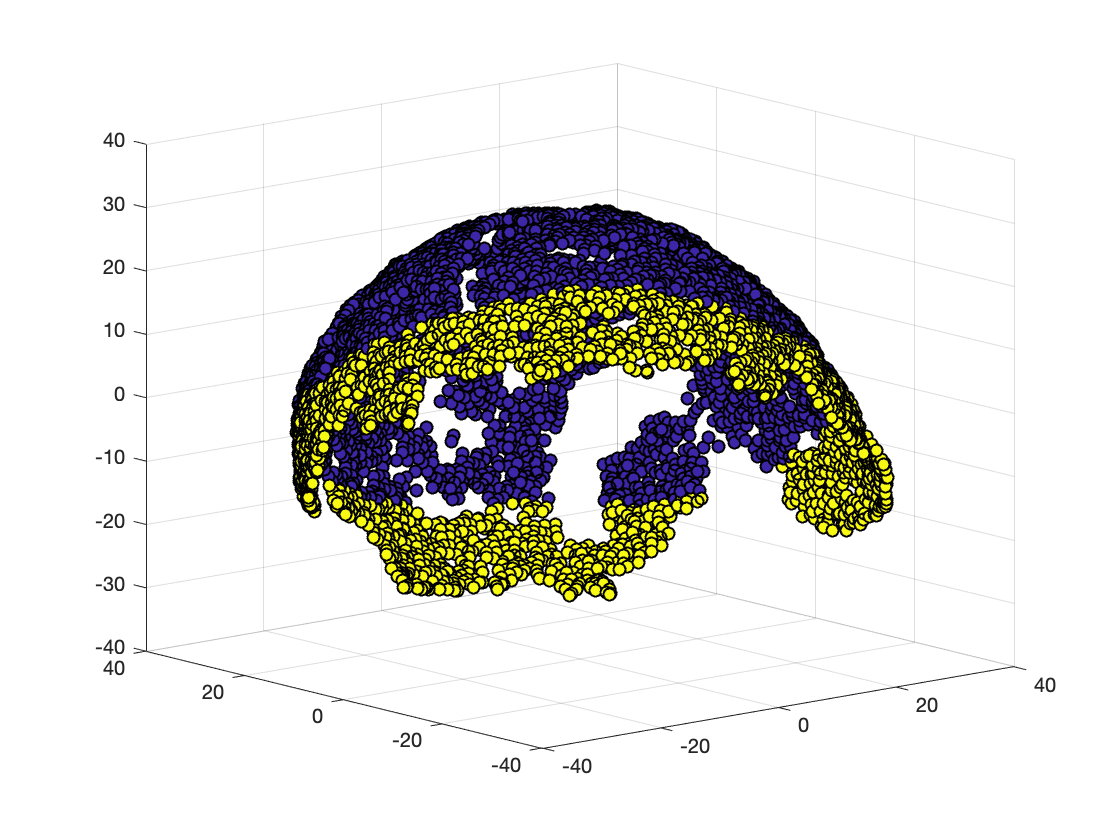} \hspace{-2mm}
			\includegraphics[trim={.2cm .2cm .2cm  .6cm},width=.4\linewidth]{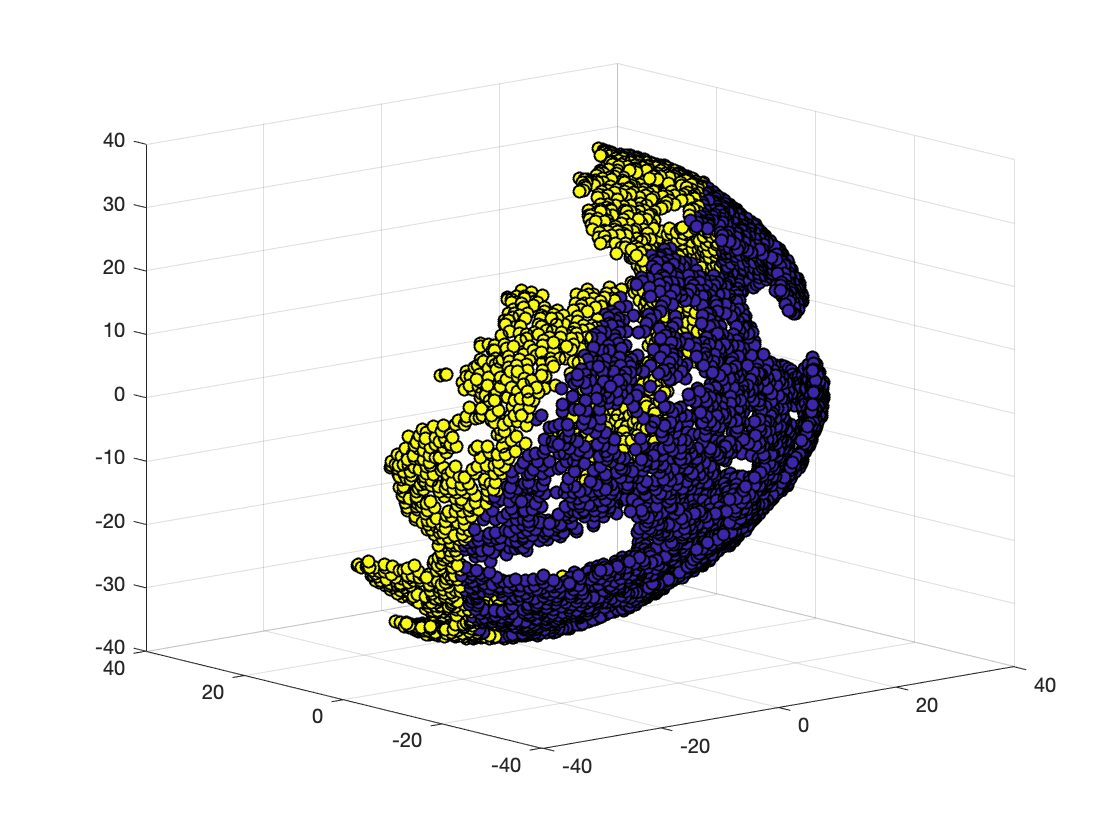} \hspace{-2mm}
			\includegraphics[trim={.2cm .2cm .2cm  .6cm},width=.4\linewidth]{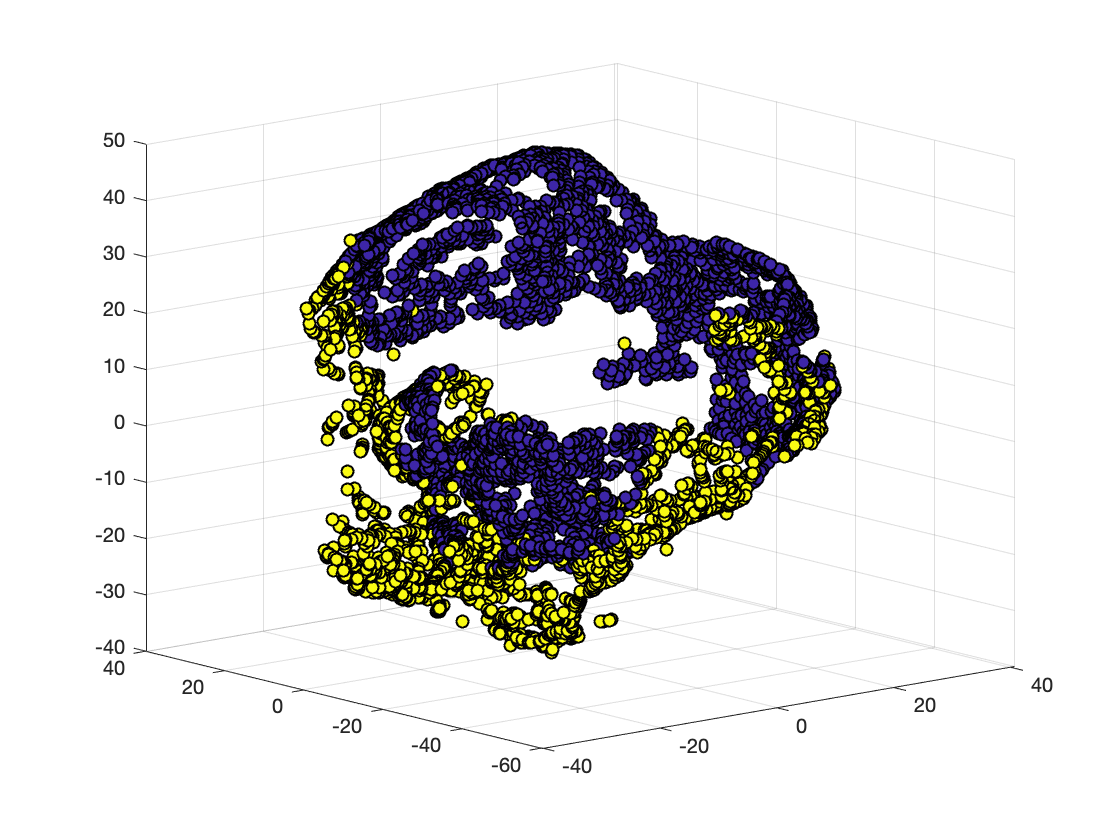} } \\ 
				\subfigure{
\hspace{-18mm}			\includegraphics[trim={.2cm .2cm .2cm  .6cm},width=.4\linewidth]{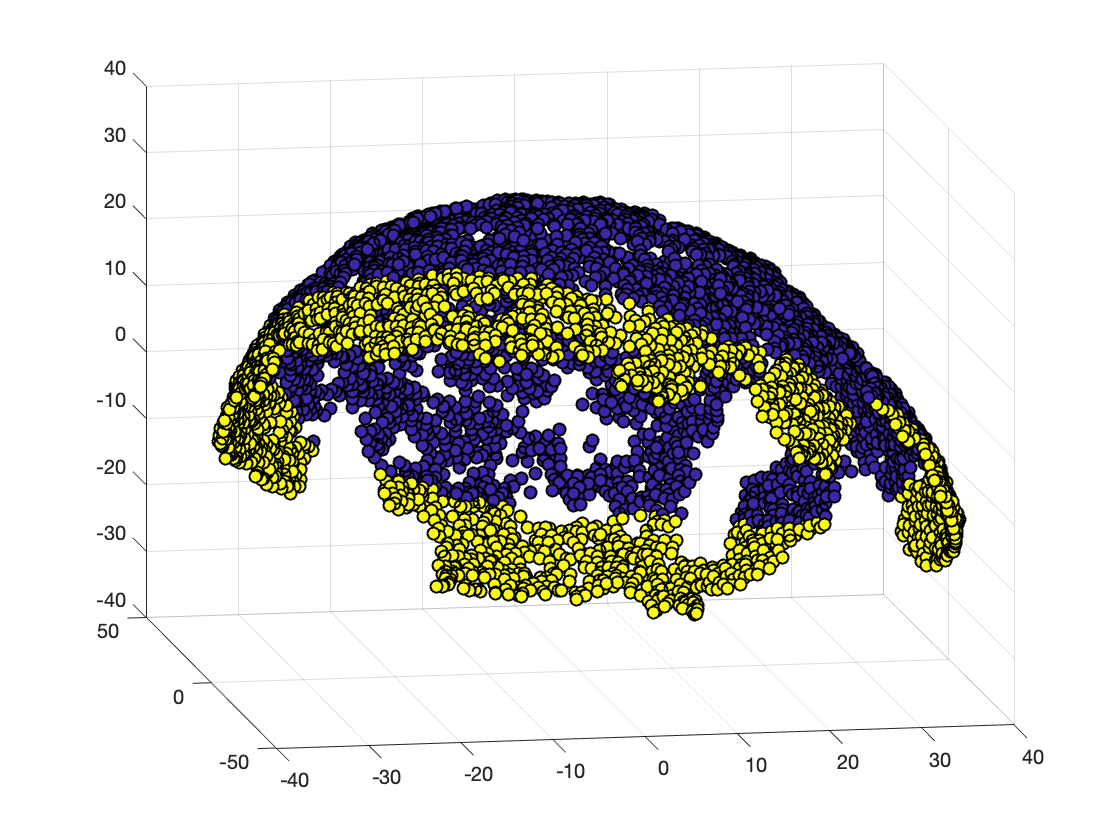} \hspace{-2mm}
			\includegraphics[trim={.2cm .2cm .2cm  .6cm},width=.4\linewidth]{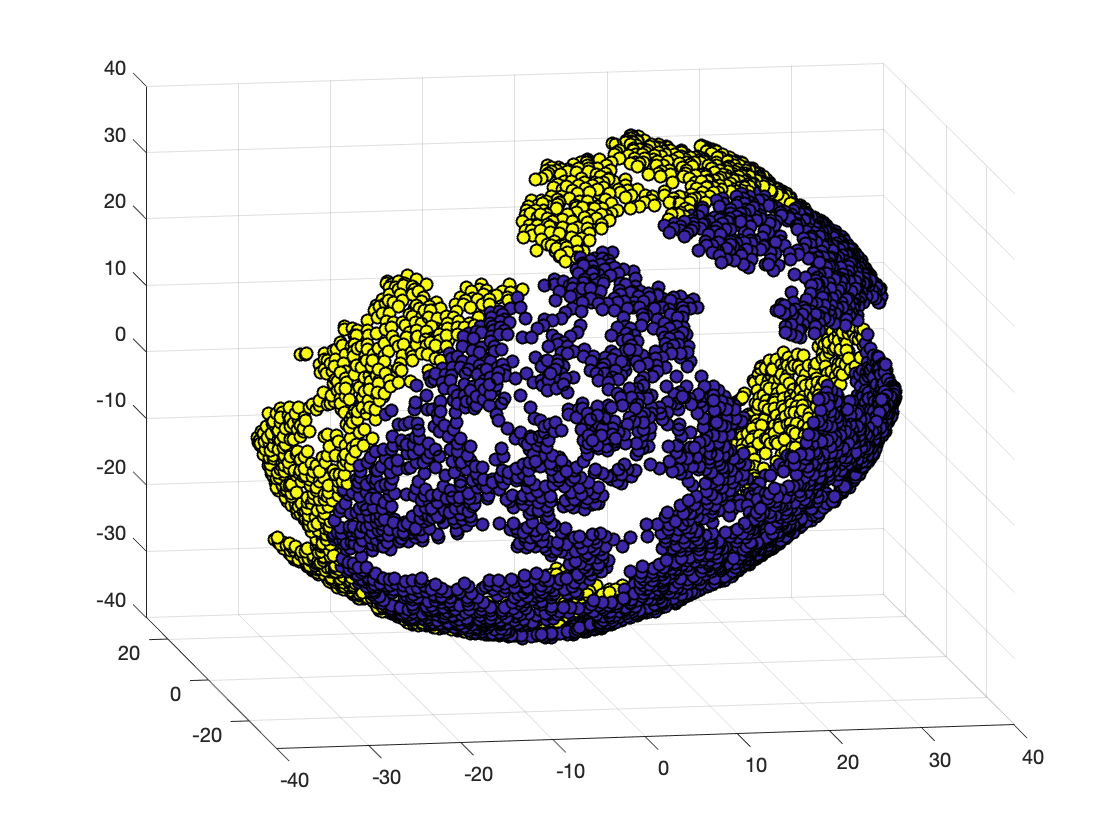} \hspace{-2mm}
			\includegraphics[trim={.2cm .2cm .2cm  .6cm},width=.4\linewidth]{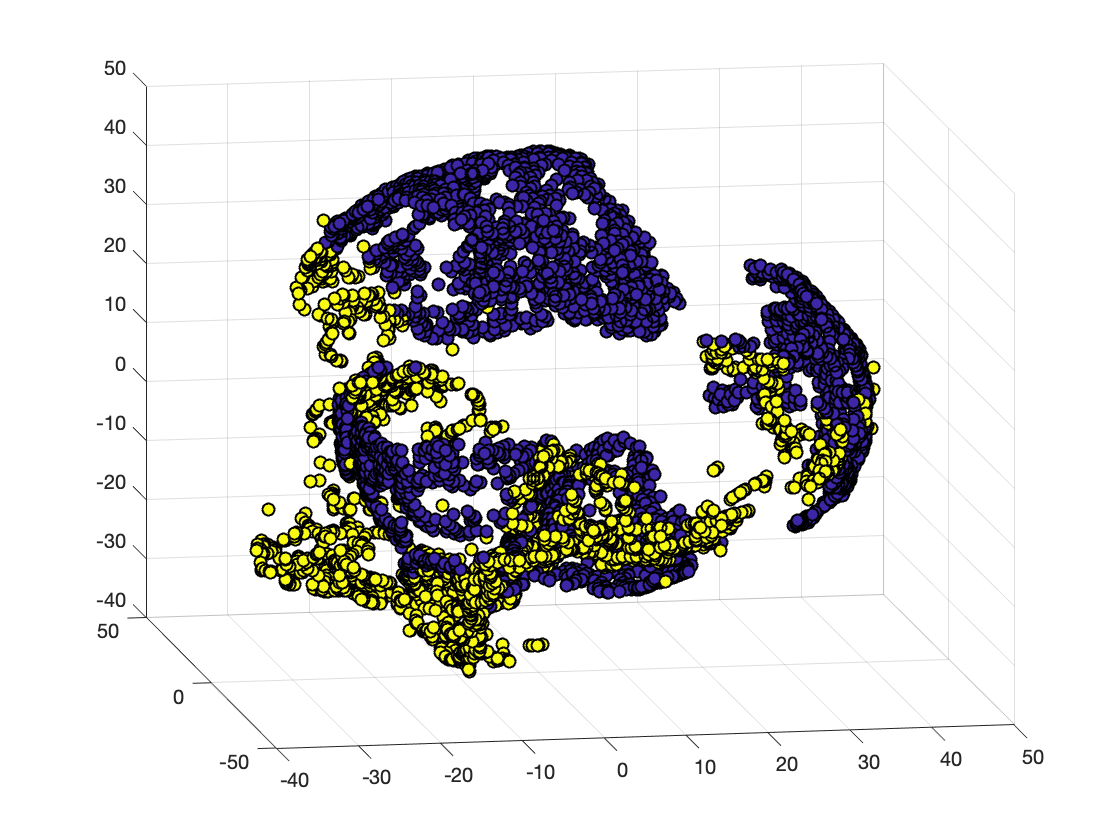} }
		\subfigure{\footnotesize{\hspace{5mm} (a) \footnotesize{\hspace{55mm} (b) \hspace{55mm} (c)}}} 
		\vspace{-3mm}
		\caption{\footnotesize{$t$-SNE plots of random features for synthetic data from different perspectives (each row corresponds to a different angle view). Panel (a): SGD algorithm, Panel (b): Importance Sampling, Panel (c): Gaussian kernel.}}
		\label{Fig:1} 
	\end{center}
\end{figure}

\begin{center}
	\centering
\begin{table}[t]
		\small{\hspace{-10mm}	\begin{tabular}{||c|| c| c| c| c||}
			\hline
			\rowcolor{Gray}
			SGD  &$d$=5 &$d$=10 & $d$=15  & $d$=20  \\ [0.5ex] 
			\hline\hline
			Training Error &0.00935$\pm$1.60e$-3$   &0.093$\pm$ 7.09e$-$3 &0.259$\pm$15.823e$-3$  & 0.346$\pm$7.561e$-3$     \\ 
			\hline
			Test Error &0.0154$\pm$2.59e$-3$  & 0.1411$\pm$17.7e$-3$ &0.347$\pm$23.187e$-3$   &0.463$\pm$10.264e$-3$ \\
			\hline
			Run Time (sec) &0.193$\pm$0.357e$-3$  &  $0.617\pm$150.33e$-3$  &3.228$\pm$0.547   &4.523$\pm$0.911   \\
			\hline
			\hline
			\rowcolor{Gray}
			Importance Sampling &$d=5$ & $d$=10 & $d$=15 &$d$=20  \\ [0.5ex] 
			\hline\hline
			Training Error &0.0163$\pm$1.55e$-3$ &0.135$\pm$9.54e$-3$ & 0.308$\pm$5.130e$-3$ & 0.363$\pm$9.667e$-3$  \\ 
			\hline
			Test Error &0.0186$\pm$5.18e$-3$ & 0.180$\pm$15.4e$-3$ & 0.407$\pm$10.031e$-3$ & 0.483$\pm$18.161e$-3$ \\
			\hline
			Run Time (sec)    &0.236$\pm$68.072e$-3$ &  0.154$\pm$28.7e$-3$  & 0.206$\pm$28.414e$-3$ & 0.302$\pm$111.43e$-3$ \\
			\hline
			\hline
			\rowcolor{Gray}
			 Gaussian Kernel &$d$=5 & $d$=10 & $d$=15 &$d$=20   \\ [0.5ex] 
			\hline\hline
			Training Error &0.0108$\pm$1.19e$-3$  &0.136$\pm$8.88e$-3$ &0.307$\pm$7.613e$-3$  &0.356$\pm$4.266e$-3$     \\ 
			\hline
			Test Error   &0.0175$\pm$5.01e$-3$ & 0.193$\pm$13.1e$-3$ & 0.403$\pm$14.032e$-3$ &0.485$\pm$17.47e$-3$  \\
			\hline
		\end{tabular}}\normalsize
		\caption{\footnotesize{Comparison between the SGD kernel learning method (Algorithm \ref{Algorithm:2}), the Importance Sampling \cite{sinha2016learning}, and a Gaussian kernel with the optimized bandwidth for data classification via the kernel SVMs for different data dimension $d$ and a fixed number of samples $N=1000$. The reported average numbers and standard deviations are evaluated over $10$ trials. The parameters of SGD for this simulations are (Iterations: $T=10000$ Regularization:$\alpha=1$, Step-size: $\eta=1$).}}
		\label{Table:3}
	\end{table}
\end{center}

\begin{center}
	\centering
	\begin{table}[t]
		\hspace{-11mm}	\small{\begin{tabular}{||c||  c| c| c|c||}
			\rowcolor{Gray}
				\hline
			SGD & $N$=100 & $N$=1000 & $N$=2000 &$N$=3000 \\ [0.5ex] 
			\hline\hline
			Training Error   &0.249$\pm$13.64e$-3$ &0.093$\pm$ 7.09e$-$3 &  0.0568$\pm$5.196e$-3$ &0.0373$\pm$4.196e$-3$  \\ 
			\hline
			Test Error   & 0.261$\pm$21.24e$-3$ & 0.1411$\pm$17.7e$-3$ & 0.111$\pm$12.59e$-3$ & 0.0997$\pm$10.719e$-3$ \\
			\hline
			Run Time (sec)   &  0.267$\pm$81.2e$-3$&  $0.617\pm$150.33e$-3$&3.943$\pm$0.839 &3.050$\pm$0.205\\
			\hline
			\hline
			\rowcolor{Gray}
			Importance Sampling & $N$=100 & $N$=1000 & $N$=2000 &$N$=3000\\ [0.5ex] 
			\hline\hline
			Training Error  & 0.332$\pm$20.31e$-3$ & 0.135$\pm$9.54e$-3$ & 0.1043$\pm$5.99e$-3$ &0.089$\pm$6.79e$-3$ \\ 
			\hline
			Test Error &  0.342$\pm$22.74e$-3$ &0.180$\pm$15.4e$-3$&  0.124$\pm$10.54e$-3$ & 0.1006$\pm$14.478e$-3$ \\
			\hline
			Run Time (sec)     &  0.0243$\pm$8.80e$-3$  &  0.154$\pm$28.7e$-3$  & 0.435$\pm$0.0422 &0.465$\pm$0.038\\
			\hline
			\hline
			\rowcolor{Gray}
			Gaussian Kernel & $N$=100 & $N$=1000 & $N$=2000 &$N$=3000\\ [0.5ex] 
			\hline\hline
			Training Error   &0.345$\pm$13.8e$-3$ &0.136$\pm$8.88e$-3$ & 0.074$\pm$7.30e$-3$ &0.0466$\pm$4.215e$-3$ \\ 
			\hline
			Test Error  & 0.355$\pm$34.30e$-3$ & 0.193$\pm$13.1e$-3$ &   0.145$\pm$20.41e$-3$ &0.111$\pm$7.33e$-3$ \\
			\hline
		\end{tabular}}\normalsize
		\caption{\footnotesize{Comparison between the SGD kernel learning method (Algorithm \ref{Algorithm:2}), the importance Sampling of \cite{sinha2016learning}, and a Gaussian kernel with the optimized bandwidth in conjunction with the kernel SVMs for different random feature samples $N$ and a fixed data dimension $d=10$. The reported average numbers and standard deviations are computed over $10$ trials}.}
		\label{Table:4}
	\end{table}
\end{center}

\subsection{Quantitiative Comparison}

In Table \ref{Table:3} and Table \ref{Table:4}, we present the training and test errors using kernel SVMs in conjunction with the trained kernel of Algorithm \ref{Algorithm:2}, the trained kernel via auto-encoder embedding only, the trained kernel via the importance sampling optimization of Eq. \eqref{Eq:Importance}, and the standard Gaussian kernel.\footnote{The simulation studies of Sinha and Duchi \cite{sinha2016learning} leverages a regression model to classify the synthetic data, whereas we use kernel SVMs. .}\footnote{We measured the elapsed time of our algorithm using {\tt{tic-toc}} command in MATLAB.}\footnote{For simulations of this section, we used a MacBook Pro 2017 with 2.3 GHz Intel Core i5 Processor and 8 GB 2133 MHz LPDDR3 Memory for the simulations.} 

In Table \ref{Table:3}, the number of random feature samples is fixed $N=1000$, while the data dimension $d$ changes from $d=5$ to $d=20$. We observe that already at dimension $d=10$, the SGD optimization method outperforms the importance sampling method, albeit with a larger computational cost. 

In Table \ref{Table:4}, we the test and training errors of the kernel learning methods for different number of random feature samples $N$. Clearly, already at $N=1000$ random feature samples, the training and test errors for SGD is comparable to that of importance sampling using $N=10000$ samples. We observe the improvement in the accuracy of both SGD and importance sampling as the number of random features increase.

\section{Implementation on Benchmark Data-Sets}

We now apply our kernel learning method for classification and regression tasks on real-world data-sets. 

\subsection{Data-Sets Description}

We apply our kernel learning approach to classification and regression tasks of real-world data-sets. In Table \ref{Table:Benchmark_Data_Descrpition}, we provide the characteristics of each data-set. All of these datasets are publicly available at UCI repository.\footnote{\url{https://archive.ics.uci.edu/ml/index.php}}
\subsubsection{Online news popularity}

This data-set summarizes a heterogeneous set of features about articles published by Mashable in a period of two years. The goal is to predict the number of shares in social networks (popularity).

\subsubsection{Buzz in social media dataset}

This data-set contains examples of buzz events from two different social networks: Twitter, and Tom's Hardware, a forum network focusing on new technology with more conservative dynamics

\subsubsection{Adult}

Adult data-set contains the census information of individuals including education, gender, and capital gain. The assigned classification task is to predict whether a person earns over 50K annually. The train and test sets are two separated files consisting of roughly 32000 and 16000 samples respectively.

\subsubsection{Epileptic Seizure Detection}

The epileptic seizure detection dataset consists of a recording of brain activity for 23.6 seconds. The corresponding time-series is sampled into 4097 data points. Each data point is the value of the EEG recording at a different point in time. So we have total 500 individuals with each has 4097 data points for 23.5 seconds. The 4097 data points are then divided and shuffled every into 23 segments, each segment contains 178 data points for 1 second, and each data point is the value of the EEG recording at a different point in time. 

\subsection{Quantitative Comparison}

In Figure \ref{Fig:Seizure}, we present the training and test results for regression and classification tasks on benchmark data-sets, using top $d=35$ features from each data-set and for different number of random feature samples $N$. In all the experiments, the SGD method provides a better accuracy in both the training and test phases. Nevertheless, in the case of seizure detection, we observe that for a small number of random feature samples, the importance sampling and Gaussian kernel with $k$NN for bandwidth outperforms SGD. 

In Figure \ref{Fig:Seizure_1}, we illustrate the time consumed for training the kernel using SGD and importance sampling methods versus the number of random features $N$. For the linear regression on \textsc{Buzz}, and \textsc{Online news popularity} the difference between the run-times are negligible. However, for classification tasks using \textsc{Adult} and \textsc{Seizure}, the difference in run-times are more pronounced.

Those visual observations are also repeated in a tabular form in Table \ref{Table:long_table}, where the training and test errors as well algorithmic efficiencies (run-times) are presented for $N=100$, $N=1000$, $N=2000$, and $N=3000$ number of features.

\begin{center}
	\centering
	\begin{table}[t]
		\begin{tabular}{||c|| c| c| c|c||}
			\hline
			\rowcolor{Gray}
			Data-set      & \text{Task}   & $d$  &  $n_{\text{training}}$   &$n_{\text{test}}$  \\
			\hline
			\hline
			Buzz& Regression &77  &93800 &46200 \\
			\hline
			Online news popularity& Regression & 58  &26561  &13083 \\ 
			\hline
			Adult & Classification  & 122 &32561  &16281  \\
			\hline
			Seizure &Classification  & 178  & 8625  &2875 \\
			\hline 
		\end{tabular}
		\caption{\footnotesize{Description of the benchmark data-sets used in this paper}.}
		\label{Table:Benchmark_Data_Descrpition}
	\end{table}
\end{center}

\begin{figure}[!h]
	\begin{center}
		\subfigure{	\hspace{-20mm}
			\includegraphics[trim={.2cm .2cm .2cm  .6cm},width=.3\linewidth]{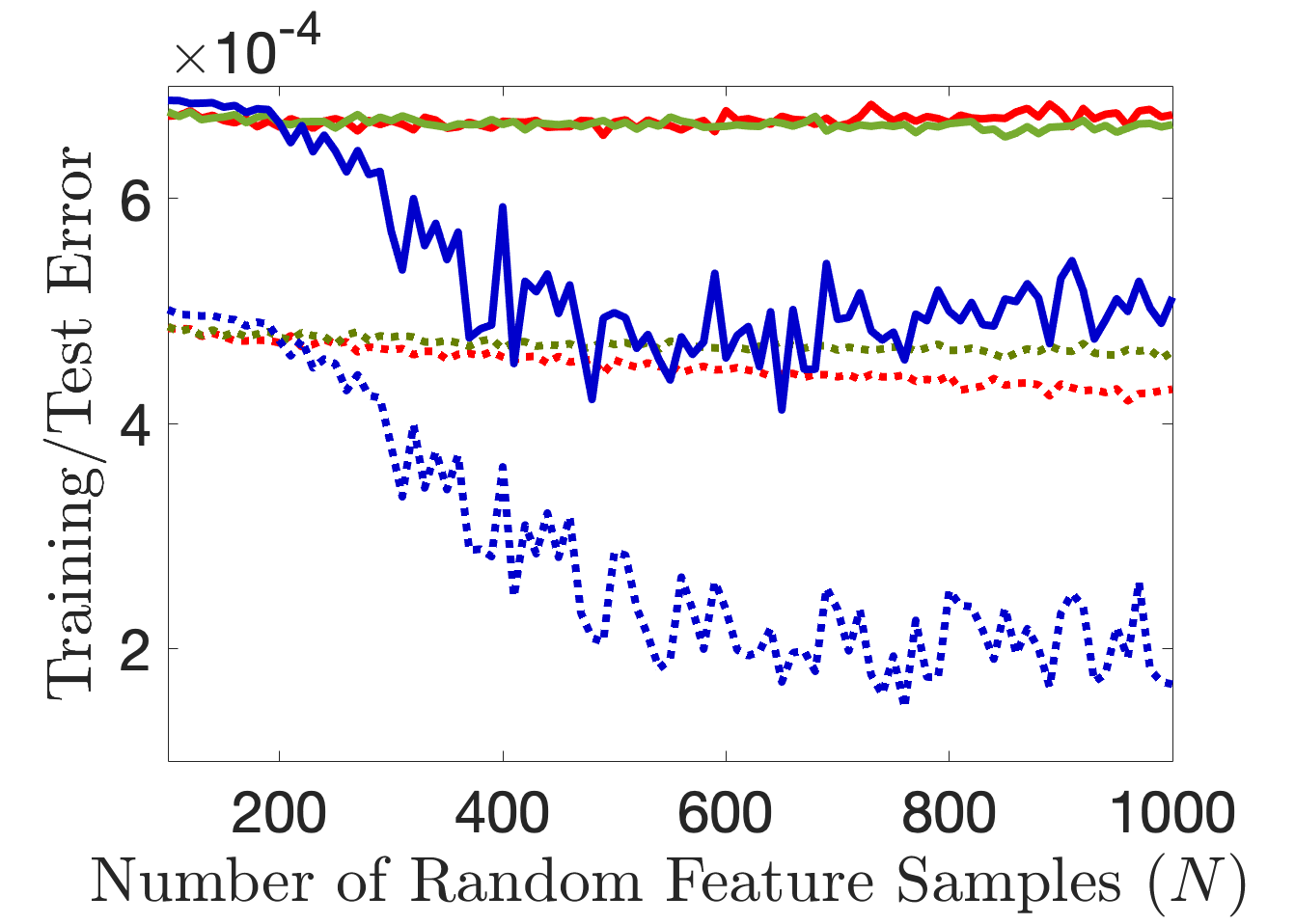} \hspace{-2mm}
			\includegraphics[trim={.2cm .2cm .2cm  .6cm},width=.3\linewidth]{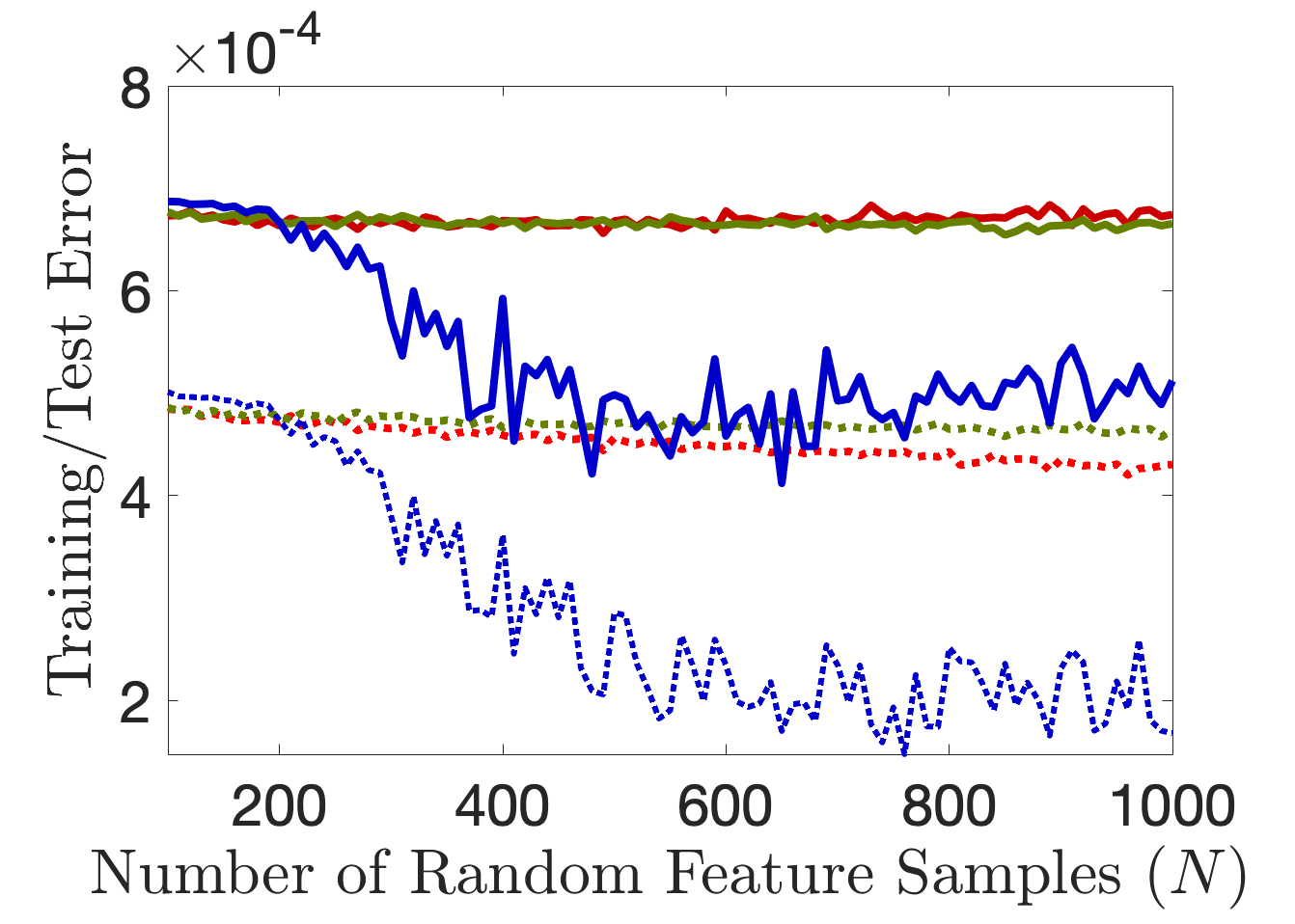} \hspace{-2mm}
			\includegraphics[trim={.2cm .2cm .2cm  .6cm},width=.3\linewidth]{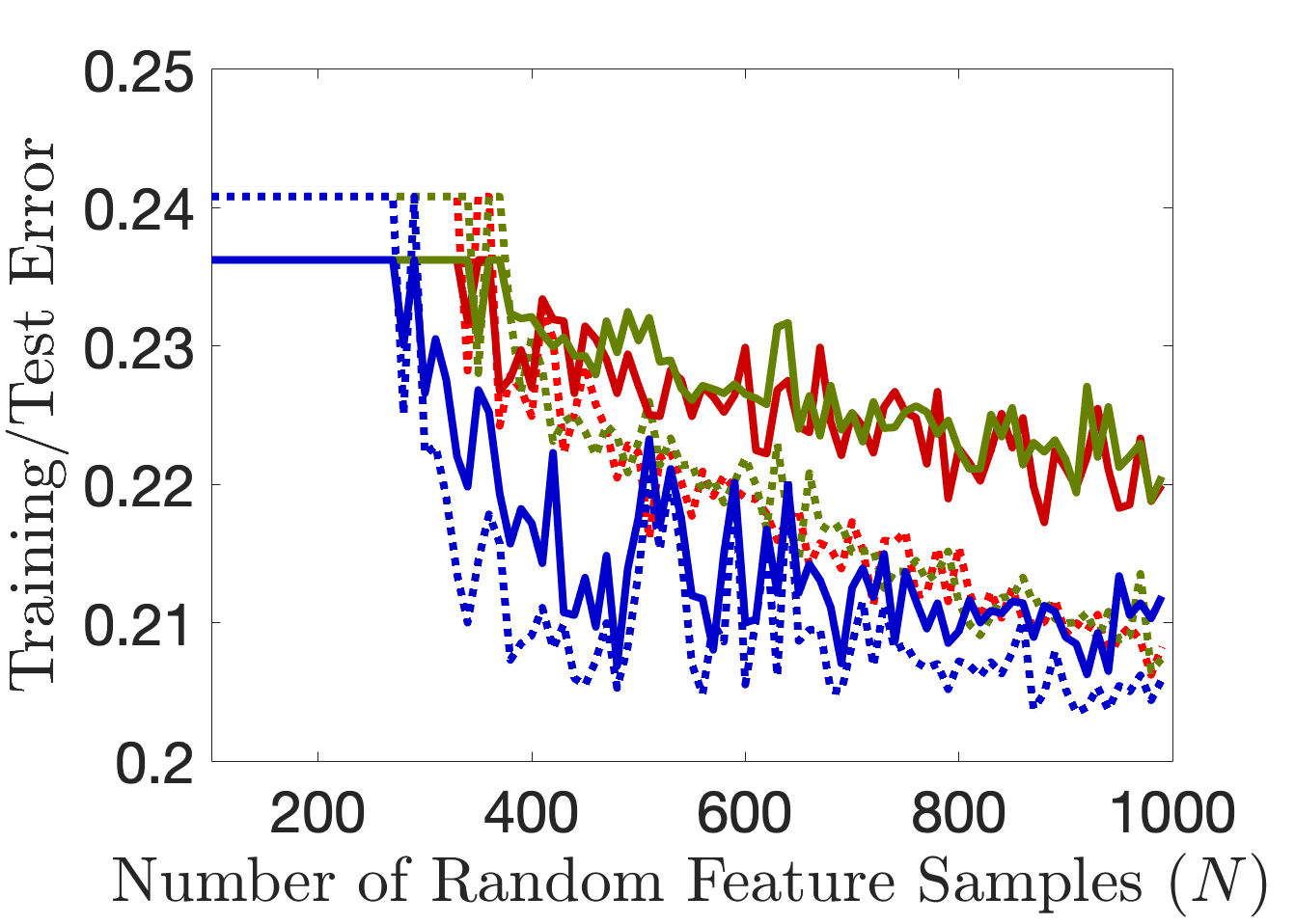}  \hspace{-2mm}
			\includegraphics[width=.3\linewidth]{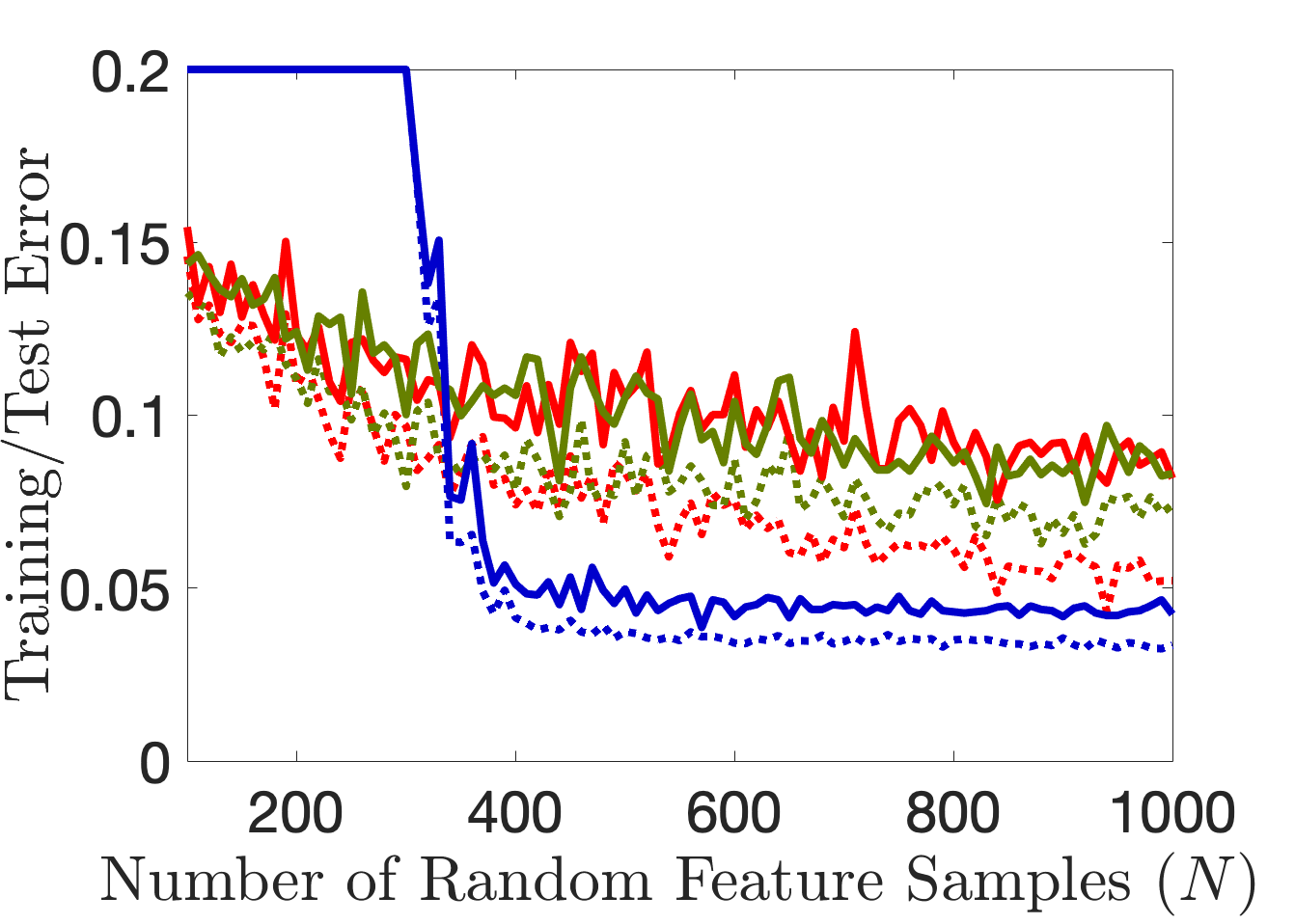}}
		\\ 
		\subfigure{\footnotesize{(a)\hspace{40mm} (b)  \hspace{40mm} (c) \hspace{40mm} (d)}} 
		\vspace{-3mm}
		\caption{\footnotesize{Performance comparison of SGD optimization algorithm (blue lines), importance sampling (green lines), and a Gaussian kernel with the \textit{optimized} bandwidth (red lines) for classification and linear regression tasks. The training and test errors are depicted with dashed and solid lines, respectively.  Panel (a): \textsc{Buzz}, Panel (b): \textsc{Online news popularity}, Panel (c): \textsc{Adult}, Panel (d): \textsc{Seizure}.}}
		\label{Fig:Seizure} 
	\end{center}
	\begin{center}
		\subfigure{
			\hspace{-20mm}	\includegraphics[trim={.2cm .2cm .2cm  .6cm},width=.3\linewidth]{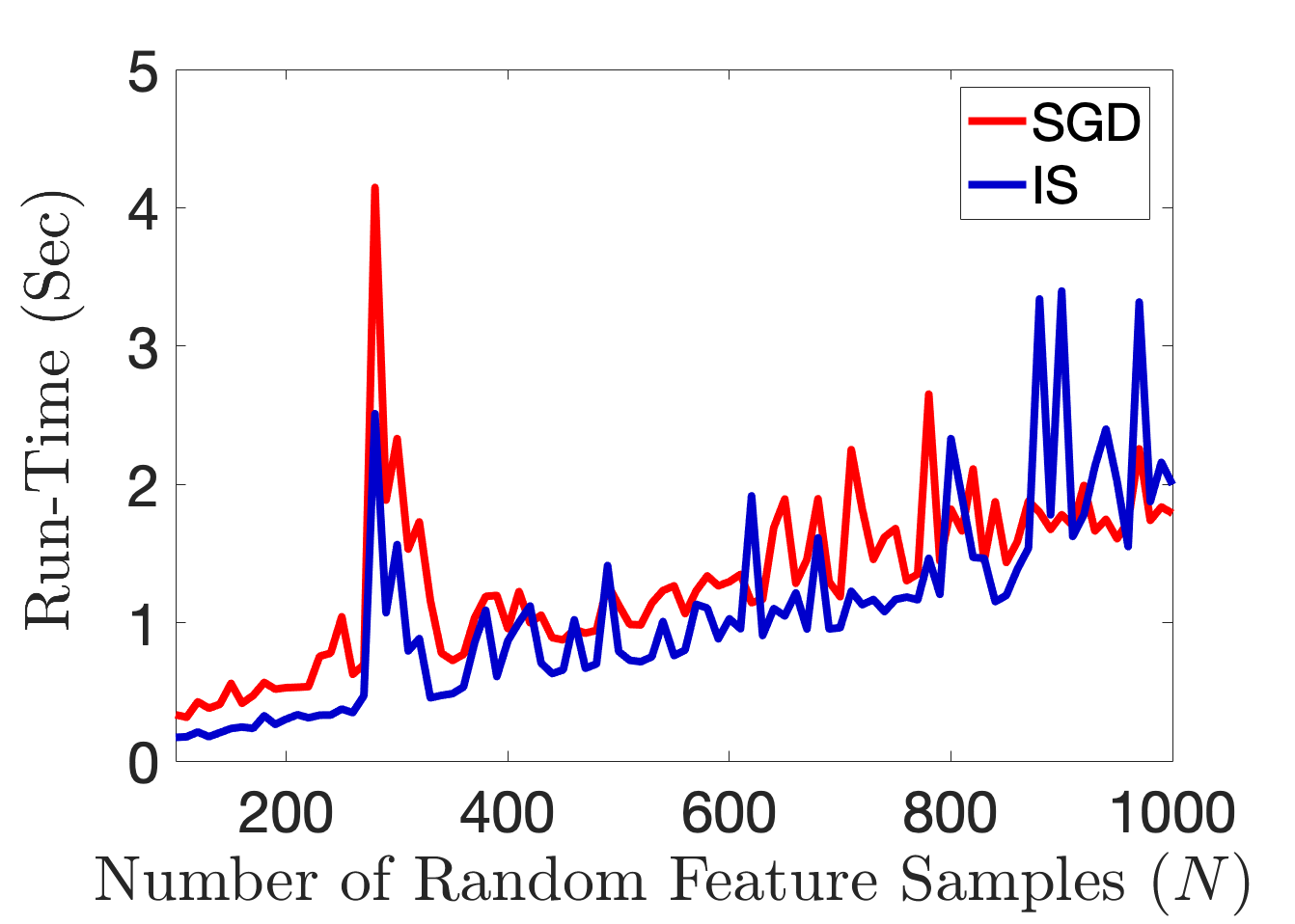} \hspace{-2mm}
			\includegraphics[trim={.2cm .2cm .2cm  .6cm},width=.3\linewidth]{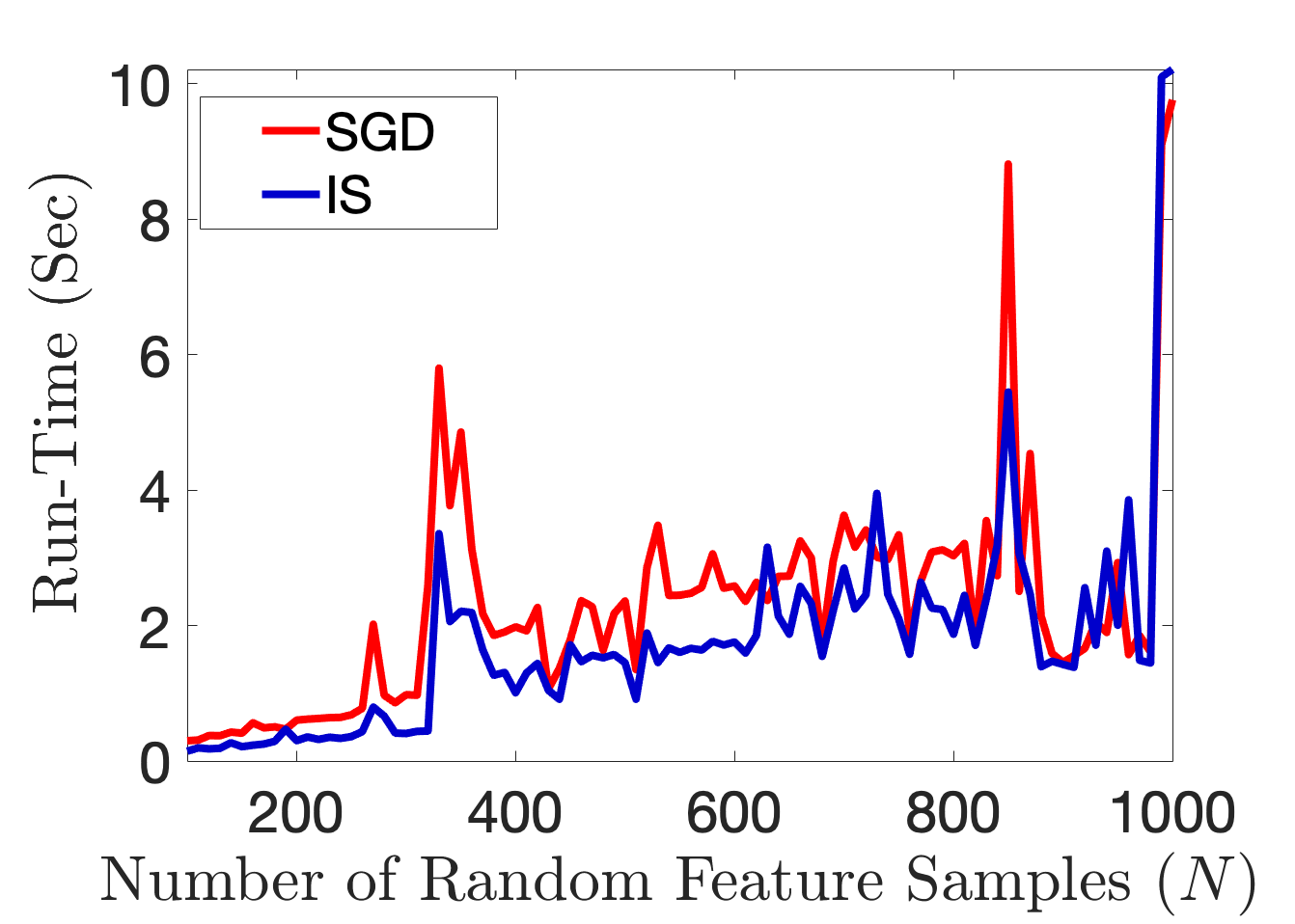} 
			\includegraphics[trim={.2cm .2cm .2cm  .6cm},width=.3\linewidth]{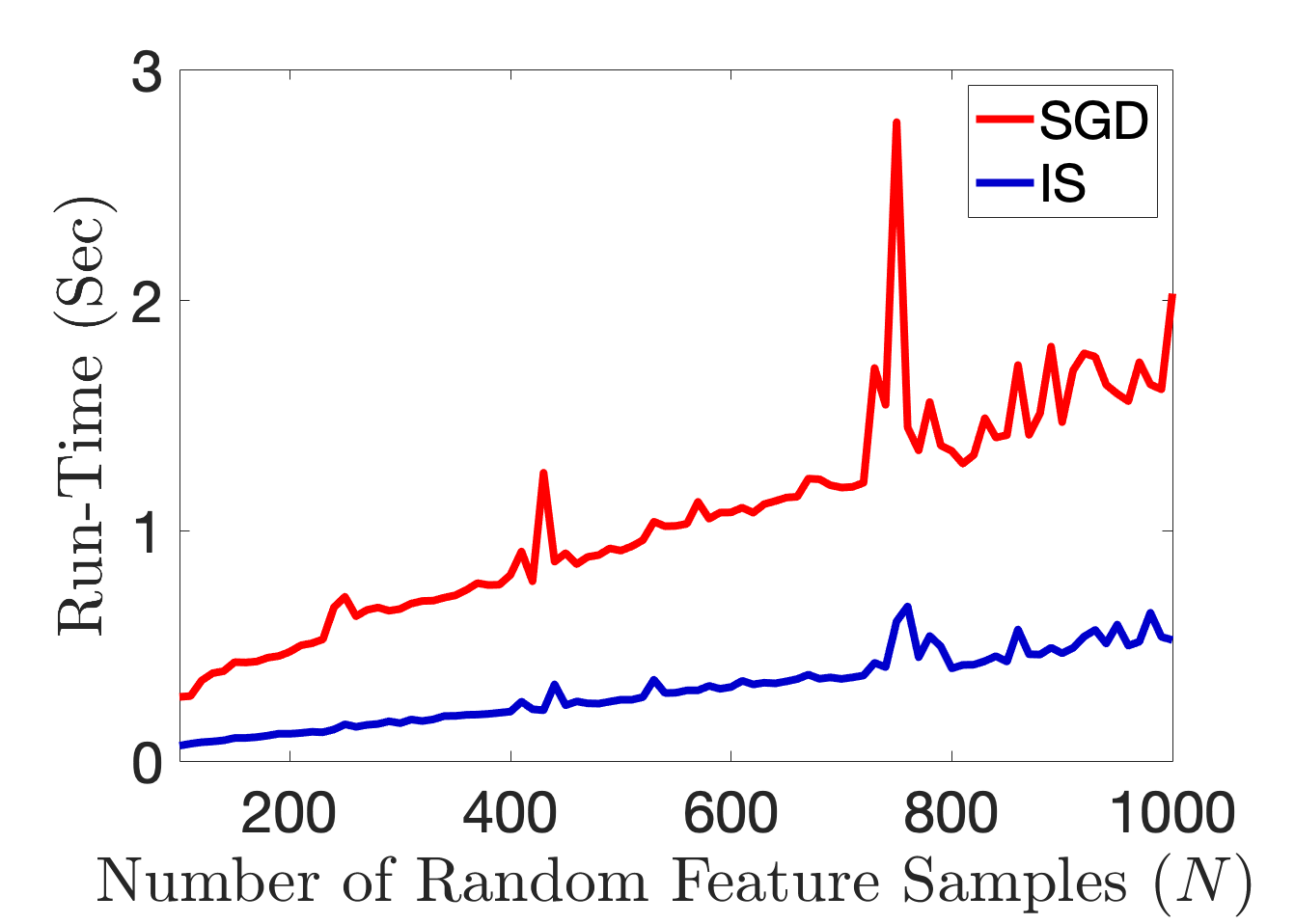}  \hspace{-2mm}
			\includegraphics[trim={.2cm .2cm .2cm  .6cm},width=.3\linewidth]{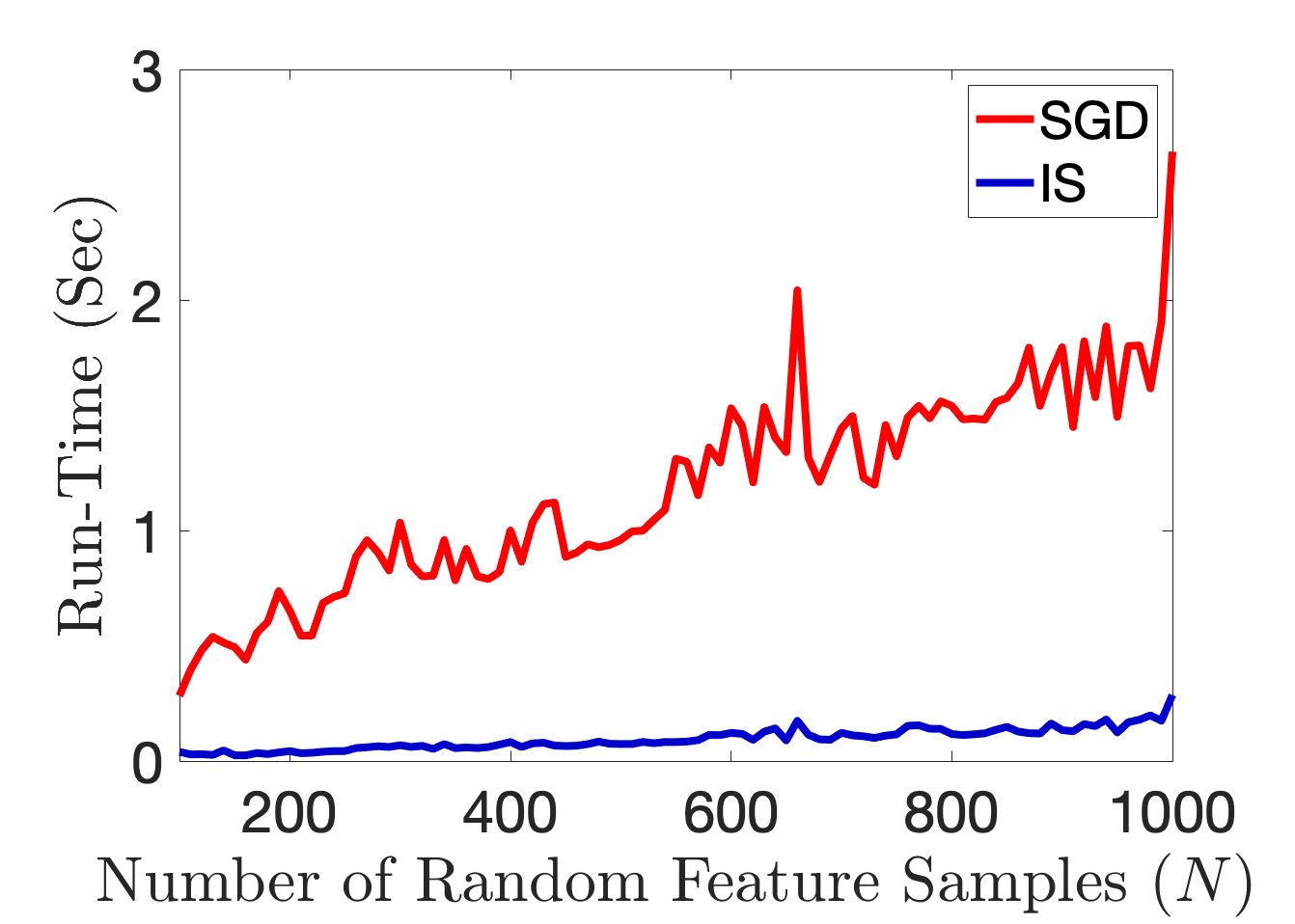}  }
		\\ 
		\subfigure{\footnotesize{(a)\hspace{40mm} (b)  \hspace{40mm} (c) \hspace{40mm} (d)}} 
		\vspace{-3mm}
		\caption{\footnotesize{The run-times of kernel learning algorithms using SGD and Importance Sampling (IS).  Panel (a): \textsc{Buzz}, Panel (b): \textsc{Online news popularity}, Panel (c): \textsc{Adult}, Panel (d): \textsc{Seizure}.}}
		\label{Fig:Seizure_1} 
	\end{center}
\end{figure}

\vspace{-14mm}
\section{Conclusions}
\label{Section:Conclusion}
We have presented a novel supervised method to optimize the kernel in generative and discriminative machine learning models. Our kernel learning approach is based on a distributionally robust optimization for the distribution of random features over the Wasserstein distributional ball. To obtain a tractable finite dimensional optimization problem, we used a sample average approximation (SAA) for random features, and subsequently applied a particle stochastic gradient descent (SGD). We provided theoretical guarantees for the consistency the finite sample (Monte-Carlo) approximations. In addition, using methods of mean-field theory, we proved that SGD is a consistent algorithm in the sense that the empirical distribution of the particles is an admissible solution for the distributional optimization problem. Our empirical evaluations on synthetic and benchmark data-sets shows the efficacy of our kernel learning method for generative and discriminative machine learning tasks.

\begin{center}
	\centering
	\begin{table}[!htbp]
	\footnotesize{ \begin{tabular}{||c|| c| c| c|c||}
				\hline	
				\multicolumn{5}{c}{{\textbf{Buzz}}}\\ \hline
				\rowcolor{Gray}
				\hline
				SGD & $N$=100 & $N$=1000 & $N$=2000 &$N$=3000 \\ [0.5ex] 
				\hline\hline
				Training Error   &0.501e$-3$   &0.190e$-3$   & 0.180e$-3$    &0.151e$-3$  \\ 
				\hline
				Test Error   & 0.501e$-3$    &0.190e$-3$    &0.180e$-3$    &0.151e$-3$ \\
				\hline
				Run Time (sec)   &  0.316&  1.971 &4.617 &14.621\\
				\hline
				\hline
				\rowcolor{Gray}
				Importance Sampling & $N$=100 & $N$=1000 & $N$=2000 &$N$=3000\\ [0.5ex] 
				\hline\hline
				Training Error  & 0.486e$-3$    &0.466e$-3$    &0.460e$-3$    &0.455e$-3$ \\ 
				\hline
				Test Error &  0.677$-3$   & 0.661$-3$    &0.662$-3$    &0.661e$-3$ \\
				\hline
				Run Time (sec)     &  0.188  &  1.527 & 3.592 &4.955\\
				\hline
				\hline
				\rowcolor{Gray}
				Gaussian Kernel & $N$=100 & $N$=1000 & $N$=2000 &$N$=3000\\ [0.5ex] 
				\hline\hline
				Training Error   &0.484e$-3$    &0.429e$-3$    &0.379e$-3$   & 0.327e$-3$ \\ 
				\hline
				Test Error  &  0.673e$-3$    &0.673$-3$    &0.709e$-3$    &0.744e$-3$
				\\				\hline
				\multicolumn{5}{c}{{\textbf{Online news popularity}}}\\ 
				\hline
				\rowcolor{Gray}	
				SGD & $N$=100 & $N$=1000 & $N$=2000 &$N$=3000 \\ [0.5ex] 
				\hline\hline
				Training Error   &0.501e$-3$    &0.190e$-3$   & 0.180e$-3$  &  0.151e$-3$  \\ 
				\hline
				Test Error   & 0.687e$-3$    &0.524e$-3$    &0.567e$-3$   & 0.588e$-3$ \\
				\hline
				Run Time (sec)   &  0.309    &2.071    &3.152    &5.054\\
				\hline
				\hline
				\rowcolor{Gray}
				Importance Sampling & $N$=100 & $N$=1000 & $N$=2000 &$N$=3000\\ [0.5ex] 
				\hline\hline
				Training Error  & 0.486e$-3$    &0.466e$-3$    &0.460e$-3$    &0.455e$-3$ \\ 
				\hline
				Test Error &  0.677e$-3$     &0.662e$-3$     &0.662e$-3$     &0.661e$-3$  \\
				\hline
				Run Time (sec)     & 0.222    &1.640    &4.254   &10.782\\
				\hline
				\hline
				\rowcolor{Gray}
				Gaussian Kernel & $N$=100 & $N$=1000 & $N$=2000 &$N$=3000\\ [0.5ex] 
				\hline\hline
				Training Error   &0.484e$-3$    &0.429e$-3$    &0.379e$-3$    &0.327e$-3$ \\ 
				\hline
				Test Error  & 0.673e$-3$    &0.673e$-3$   &0.709e$-3$    &0.744e$-3$	
				\\				\hline
				\multicolumn{5}{c}{{\textbf{Adult}}}\\ \hline
				\rowcolor{Gray}	
				SGD & $N$=100 & $N$=1000 & $N$=2000 &$N$=3000 \\ [0.5ex] 
				\hline\hline
				Training Error   &
				0.240    &0.204    &0.199    &0.197 \\ 
				\hline
				Test Error   & 0.236    &0.209   &0.208   &0.210
				\\
				\hline
				Run Time (sec)   &  0.274    &1.784    &9.272   &11.295\\
				\hline
				\hline
				\rowcolor{Gray}
				Importance Sampling & $N$=100 & $N$=1000 & $N$=2000 &$N$=3000\\ [0.5ex] 
				\hline\hline
				Training Error  &0.240   & 0.207    &0.214    &0.220
				\\ 
				\hline
				Test Error &  0.236    &0.219    &0.222   &0.223 \\
				\hline
				Run Time (sec)     &  0.058    &0.511    &2.661    &8.068\\
				\hline
				\hline
				\rowcolor{Gray}
				Gaussian Kernel & $N$=100 & $N$=1000 & $N$=2000 &$N$=3000\\ [0.5ex] 
				\hline\hline
				Training Error   & 0.240    &0.206    &0.201    &0.197 \\ 
				\hline
				Test Error  &  0.236    &0.222    &0.216    &0.214
				\\				\hline
				\multicolumn{5}{c}{{\textbf{Seizure}}}\\ \hline
				\rowcolor{Gray}	
				SGD & $N$=100 & $N$=1000 & $N$=2000 &$N$=3000 \\ [0.5ex] 
				\hline\hline
				Training Error   &0.200    &0.034   & 0.031    &0.032  \\ 
				\hline
				Test Error   & 0.200    &0.043    &0.043    &0.043 \\
				\hline
				Run Time (sec)   &  0.350    &1.747    &3.833    &5.222\\
				\hline
				\hline
				\rowcolor{Gray}
				Importance Sampling & $N$=100 & $N$=1000 & $N$=2000 &$N$=3000\\ [0.5ex] 
				\hline\hline
				Training Error  & 0.135    &0.078    &0.055    &0.051
				\\ 
				\hline
				Test Error &  0.144    &0.099    &0.063    &0.056 \\
				\hline
				Run Time (sec)     &  0.038    &0.138    &0.352    &0.404\\
				\hline
				\hline
				\rowcolor{Gray}
				Gaussian Kernel & $N$=100 & $N$=1000 & $N$=2000 &$N$=3000\\ [0.5ex] 
				\hline\hline
				Training Error   & 0.146    &0.056    &0.033    &0.030 \\ 
				\hline
				Test Error  & 0.154    &0.093   & 0.076    &0.069 \\
				\hline
		\end{tabular}}\normalsize
		\caption{\footnotesize{Comparison between the SGD kernel learning method (Algorithm \ref{Algorithm:2}), the importance Sampling of \cite{sinha2016learning}, and a regular Gaussian kernel in conjunction with the kernel SVMs for different random feature samples $N$ on benchmark data-sets.}}
		\label{Table:long_table}
	\end{table}
	\vspace{-2mm}
\end{center}

\section{Acknowledgement}

\textsc{M.B.K.} would like to thank his colleague \textsc{Dr. Chuang Wang} from Harvard University for his intriguing theoretical work on the scaling limits of the online algorithms  \cite{wang2017scaling} which motivated this work. \textsc{M.B.K.} would also like to thank \textsc{Prof. Ravi Mazumdar} from The University of Waterloo for his theoretical paper \cite{vasantam2018occupancy} which inspired the proof of tightness of Skhorhod spaces in this paper. We also thank \textsc{Dr. Maxime Bassenne} for useful discussion regarding the MacKean-Vlasov PDE.

The research of \textsc{MBK}, \textsc{LS}, and \textsc{LX} is supported by NIH under the grant \textsc{1R01 CA176553} and \textsc{R01E0116777}. The content of this article are solely the responsibility of the authors and do not necessarily reflect the official NIH views.

\appendix

\section*{Appendix}

\section{Proofs of  Main Theoretical Results}
\label{Section:Proofs_of_Main_Theoretical_Results}

\textbf{Notation}: We denote vectors by lower case bold letters, \textit{e.g.} $\bm{x}=(x_{1},\cdots,x_{n})\in \real^{n}$, and matrices by the upper case bold letters, \textit{e.g.}, $\bm{M}=[M_{ij}]\in \real^{n\times m}$. The Frobenius norm of a matrix is denoted by $\|\bm{M}\|_{F}=\sum_{i=1}^{n}\sum_{j=1}^{m}|M_{ij}|^{2}$. Let $\ball_{r}(\bm{x})\df \{\bm{y}\in \real^{d}: \|\bm{y}-\bm{x}\|_{2}\leq r \}$ denote the Euclidean ball of radius $r$ centered at $\bm{x}$. For a given metric space $\mathcal{X}$, Let $C_{b}(\real^{d})$ denote the space of bounded and continuous functions on $\mathcal{X}$ equipped with the usual supremum norm 
\begin{align}
\|f\|_{\infty}\df \sup_{\bm{x}\in \mathcal{X}}|f(\bm{x})|.
\end{align}
Further, $C_{b}^{k}(\mathcal{X})$ the space of all functions in $C_{b}(\mathcal{X})$ whose partial derivatives up to order $k$ are bounded and continuous, and $C_{c}^{k}(\mathcal{X})$ the space of functions whose partial derivatives up to order $k$ are continuous with compact support. 

We denote the class of the integrable functions $f$ with $f(t) \geq 0$ \textit{a.e.},  on $0\leq t\leq T$ by $L_{+}^{1}[0, T]$. Similarly, $L_{+}^{\infty}[0, T]$ will denote the essentially bounded functions with $f(t)\geq 0$ almost everywhere. For a given metric space $\mathcal{X}$, we denote the Borel $\sigma$-algebra by $\mathcal{B}(\mathcal{X})$. For a Borel set $B\in \mathcal{B}(\mathcal{X})$, the measure value of the set $B$ with respect to the measure is given by $\mu(B)$.
The space of finite non-negative measures defined on $\mathcal{X}$ is denoted by $\mathcal{M}(\mathcal{X})$. The Dirac measure with the unit mass at $x\in \mathcal{X}$ is denoted by $\delta(x)$. For any measure $\mu\in \mathcal{M}(\mathcal{X})$ and any bounded function $f\in C_{b}(\mathcal{X})$, we define 
\begin{align}
\langle \mu,f \rangle\df \int_{\mathcal{X}}f(x)\mu(\mathrm{d}x).
\end{align}
The space $\mathcal{M}(\mathcal{X})$ is equipped with the weak topology, \textit{i.e.}, a (random) sequence $\{\mu^{N}\}_{N\in \integer}$ converges weakly to a deterministic measure $\mu\in \mathcal{M}(\mathcal{X})$ if and only if $\langle \mu^{N},f \rangle\rightarrow \langle \mu,f \rangle$ for all $f\in C^{b}(\mathcal{X})$. We denote the weak convergence by $\mu^{N}_{t}\stackrel{\text{weakly}}{\rightarrow} \mu$.
Notice that when $\mathcal{X}$ is Polish, then $\mathcal{M}(\mathcal{X})$ equipped with the weak topology is also Polish.\footnote{A topological space is Polish if it is homeomorphic to a complete, separable metric space.} For a Polish space $\mathcal{X}$, let $\mathcal{D}_{\mathcal{X}}([0,T])$ denotes the Skorokhod space of the c\'{a}dl\'{a}g functions that take values in $\mathcal{X}$ defined on $[0,T]$. We assume that $\mathcal{D}_{\mathcal{X}}([0,T])$ is equipped with the Skorokhod's $J_{1}$-topology \cite{billingsley2013convergence}, which in that case $\mathcal{D}_{\mathcal{X}}([0,T])$ is also a Polish space.

To establish the concentration results in this paper, we require the following definitions:
\begin{definition}\textsc{(Orlicz Norm)}
	\label{Def:Orlicz}
	The Young-Orlicz modulus is a convex non-decreasing function $\psi:\real_{+}\rightarrow \real_{+}$ such that $\psi(0)=0$ and $\psi(x)\rightarrow \infty$ when $x\rightarrow \infty$. Accordingly, the Orlicz norm of an integrable random variable $X$ with respect to the modulus $\psi$ is defined as
	\begin{align}
	\|X\|_{\psi}\df \inf \{\beta>0:\expect[\psi(||X|-\expect[|X|]|/\beta)]\leq 1\}.
	\end{align}
\end{definition}

In the sequel, we consider the Orlicz modulus $\psi_{\nu}(x)\df \exp(x^{\nu})-1$ . Accordingly, the cases of $\|\cdot\|_{\psi_{2}}$ and $\|\cdot\|_{\psi_{1}}$ norms are called the sub-Gaussian and the sub-exponential norms and have the following alternative definitions:
\begin{definition}\textsc{(Sub-Gaussian Norm)}
	\label{Definition:Sub-Gaussian Norm}
	The sub-Gaussian norm of a random variable $Z$, denoted by $\|Z\|_{\psi_{2}}$, is defined as
	\begin{align}
	\|Z\|_{\psi_{2}}= \sup_{q\geq 1} q^{-1/2}(\expect|Z|^{q})^{1/q}.
	\end{align}
	For a random vector $\bm{Z}\in \real^{n}$, its sub-Gaussian norm is defined as  follows
	\begin{align}
	\label{Eq:Sub_Gaussian_random_vector}
	\|\bm{Z}\|_{\psi_{2}}=\sup_{\bm{x}\in \mathrm{S}^{n-1}}\|\langle  \bm{x},\bm{Z}\rangle \|_{\psi_{2}}.
	\end{align}
\end{definition}

\begin{definition}\ \textsc{(Sub-exponential Norm)}
	The sub-exponential norm of a random variable $Z$, denoted by $\|Z\|_{\psi_{1}}$, is defined as follows
	\begin{align}
	\|Z\|_{\psi_{1}}=\sup_{q\geq 1} q^{-1}(\expect[|Z|^{q}])^{1/q}.
	\end{align}
	For a random vector $\bm{Z}\in \real^{n}$, its sub-exponential norm is defined below
	\begin{align}
	\|\bm{Z}\|_{\psi_{1}}= \sup_{\bm{x}\in \mathrm{S}^{n-1}} \|\langle \bm{Z},\bm{x}\rangle\|_{\psi_{1}}.
	\end{align}
\end{definition}

%The Stieltjes transform of the density $\mu$ is defined as follows
%\begin{align}
%s_{\mu}(z)\df \int \dfrac{1}{z-x}\mu(\mathrm{d}x),
%\end{align}
%for $\{z\in \mathbb{C}:\mathfrak{Im}(z)>0\}$. It can be shown that the Stieltjes transform of 
%Mar\v{c}henko-Pastur law, denoted by $s_{\mu_{\mathrm{MP},\lambda}}(z)$ satisfies the following quadratic equation
%\begin{align}
%\lambda  z s^{2}_{\mu_{\mathrm{MP},\lambda}}(z)-(1-y-z)s^{2}_{\mu_{\mathrm{MP},\lambda}}(z)+1=0.
%\end{align}
%
%We define the cumulative distribution function (CDF) of the standard normal distribution by $\Phi(x)\df {1\over \sqrt{2\pi}}\int_{0}^{x}e^{-{t^{2}\over 2}}\mathrm{d}t$. The Gaussian error functions are defined as below
%\begin{subequations}
%	\begin{align}
%	\mathrm{erf}(x)&=2\Phi(x\sqrt{2})-1\\
%	\mathrm{erfc}(x)&=2\Phi(-x\sqrt{2}).
%	\end{align}
%\end{subequations}

We use asymptotic notations throughout the paper. We use the standard asymptotic notation for sequences. If $a_{n}$ and $b_{n}$ are positive
sequences, then $a_{n}=\mathcal{O}(b_{n})$ means that $\lim \sup_{n\rightarrow \infty} a_{n}/b_{n}< \infty$, whereas  $a_{n} = \Omega(b_{n})$ means that
$\lim \inf_{n\rightarrow \infty} a_{n}/b_{n} > 0$. Furthermore, $a_{n}=\widetilde{\mathcal{O}}(b_{n})$ implies $a_{n}=\mathcal{O}(b_{n}\text{poly}\log(b_{n}))$. Moreover $a_{n}=o(b_{n})$ means that $\lim_{n\rightarrow \infty}a_{n}/b_{n}=0$ and $a_{n}=\omega(b_{n})$ means that $\lim_{n\rightarrow \infty} a_{n}/b_{n}=\infty$. Lastly, we have $a_{n}=\Theta(b_{n})$ if $a_{n}=\mathcal{O}(b_{n})$ and $a_{n}=\Omega(b_{n})$.

\subsection{Proof of Theorem \ref{Thm:Consistency of Monte Carlo Estimation}}
\label{App:Proof of Theorem_1}

By the triangle inequality, we have that
\begin{align}
\label{Eq:A1-A4}
&\Big|\mathrm{MMD}_{\mu_{\ast}}[P_{\bm{V}},P_{\bm{W}_{\ast}}]-\mathrm{MMD}_{\widehat{\mu}_{\ast}^{N}}\Big[P_{\bm{V}},P_{\widehat{\bm{W}}^{N}_{\ast}}\Big]\Big|\leq \mathsf{A}_{1}+\mathsf{A}_{2}+\mathsf{A}_{3}+\mathsf{A}_{4},
\end{align}
where the terms $\mathsf{A}_{i},i=1,2,3,4$ are defined as follows
\begin{align*}
\mathsf{A}_{1}&\df \Big|\mathrm{MMD}_{\mu_{\ast}}[P_{\bm{V}},P_{{\bm{W}}_{\ast}}]-\min_{\bm{W}\in \mathcal{W}}\sup_{\mu\in \mathcal{P}} \widehat{\mathrm{MMD}}_{\mu}[P_{\bm{V}},P_{\bm{W}}] \Big|  \\
\mathsf{A}_{2}&\df \Big|\min_{\bm{W}\in \mathcal{W}}\sup_{\mu\in \mathcal{P}}\widehat{\mathrm{MMD}}_{\mu}[P_{\bm{V}},P_{\bm{W}}]-\min_{\bm{W}\in \mathcal{W}}\sup_{\widehat{\mu}^{N}\in \mathcal{P}_{N}}\widehat{\mathrm{MMD}}_{\widehat{\mu}^{N}}\Big[P_{\bm{V}},P_{\bm{W}}\Big] \Big|  \\
\mathsf{A}_{3}&\df \Big|\min_{\bm{W}\in \mathcal{W}}\sup_{\widehat{\mu}^{N}\in \mathcal{P}_{N}}\widehat{\mathrm{MMD}}_{\widehat{\mu}^{N}}\Big[P_{\bm{V}},P_{\bm{W}}\Big]-\widehat{\mathrm{MMD}}_{\widehat{\mu}_{\ast}^{N}}\Big[P_{\bm{V}},P_{{\bm{W}}}\Big]\Big|  \\
\mathsf{A}_{4}&\df \Big|\widehat{\mathrm{MMD}}_{\widehat{\mu}_{\ast}^{N}}\Big[P_{\bm{V}},P_{\widehat{\bm{W}}_{\ast}^{N}}\Big]-\mathrm{MMD}_{\widehat{\mu}^{N}_{\ast}}\Big[P_{\bm{V}},P_{\widehat{\bm{W}}^{N}_{\ast}}\Big]\Big| .
\end{align*}

In the sequel, we compute an upper bound for each term on the right hand side of Eq. \eqref{Eq:A1-A4}:
\vspace*{5mm}

\noindent\textbf{Upper bound on $\mathsf{A}_{1}$:}

First, notice that the squared kernel MMD loss in Eq. \eqref{Eq:kernel_MMD} can be characterized in terms of class labels and features defined in Section \ref{subection:Kernel-Target Alignment as an Unbiased Estimator of MMD Loss} as follows
\begin{align}
\label{Eq:To_See_Equivalence}
\mathrm{MMD}_{\mu}[P_{\bm{V}},P_{\bm{W}}]&=4\expect_{P_{\bm{x},y}^{\otimes 2}}\left[y\widehat{y}K_{\mu}(\bm{x},\widehat{\bm{x}})\right].
\end{align}
To see this equivalence, we first rewrite the right hand side of Eq. \eqref{Eq:To_See_Equivalence} as follows
\begin{align}
\nonumber
\expect_{P_{y,\bm{x}}^{\otimes 2}}\left[y\widehat{y}K_{\mu}(\bm{x},\widehat{\bm{x}})\right]&=\prob\{y=+1\}\prob\{\widehat{y}=+1\}\expect_{\bm{x},\widehat{\bm{x}}\sim P^{\otimes 2}_{\bm{x}|y=+1}}[K_{\mu}(\bm{x},\widehat{\bm{x}})]
\\ \nonumber
&\hspace{4mm}+\prob\{y=-1\}\prob\{\widehat{y}=-1\}\expect_{\bm{x},\widehat{\bm{x}}\sim P^{\otimes 2}_{\bm{x}|y=-1}}[K_{\mu}(\bm{x},\widehat{\bm{x}})]\\ \nonumber
&\hspace{4mm}-\prob\{y=-1\}\prob\{\widehat{y}=+1\}\expect_{\bm{x}\sim P_{\bm{x}|y=-1},\widehat{\bm{x}}\sim P_{\bm{x}|y=+1}}[K_{\mu}(\bm{x},\widehat{\bm{x}})]  \\ \label{Eq:Vaght_Tange}
&\hspace{4mm}-\prob\{y=+1\}\prob\{\widehat{y}=-1\}\expect_{\bm{x}\sim P_{\bm{x}|y=+1},\widehat{\bm{x}}\sim P_{\bm{x}|y=-1}}[K_{\mu}(\bm{x},\widehat{\bm{x}})].
\end{align}
Now, recall from Section \ref{subection:Kernel-Target Alignment as an Unbiased Estimator of MMD Loss} that $P_{\bm{x}|y=+1}=P_{\bm{V}}$, and $P_{\bm{x}|y=-1}=P_{\bm{W}}$ by construction of the labels and random features. Moreover, $y,\widehat{y}\sim_{\mathrm{i.i.d.}}\mathrm{Uniform}\{-1,+1\}$, and thus $\prob\{y=-1\}=\prob\{y=+1\}={1\over 2}$. Therefore, from Eq. \eqref{Eq:Vaght_Tange}, we derive
\begin{align}
\nonumber
\expect_{P_{y,\bm{x}}^{\otimes 2}}[y\widehat{y}K_{\mu}(\bm{x},\widehat{\bm{x}})]&={1\over 4}\expect_{P^{\otimes 2}_{\bm{V}}}[K_{\mu}(\bm{x};\widehat{\bm{x}})]+{1\over 4}\expect_{ P^{\otimes 2}_{\bm{W}}}[K_{\mu}(\bm{x};\widehat{\bm{x}})]
-{1\over 2}\expect_{P_{\bm{V}},P_{\bm{W}}}[K_{\mu}(\bm{x};\widehat{\bm{x}})]\\ \nonumber
&={1\over 4}\mathrm{MMD}_{\mu}[P_{\bm{V}},P_{\bm{W}}].
\end{align}
For any given $\bm{W}\in \mathcal{W}$, we have that
\begin{align}
\nonumber
\Big|\sup_{\mu\in \mathcal{P}}\widehat{\mathrm{MMD}}_{\mu}[P_{\bm{V}},P_{\bm{W}}]&-\sup_{\mu\in \mathcal{P}}\mathrm{MMD}_{\mu}[P_{\bm{V}},P_{\bm{W}}]\Big|\\ \nonumber
&\leq \sup_{\mu\in \mathcal{P}}\big|\widehat{\mathrm{MMD}}_{\mu}[P_{\bm{V}},P_{\bm{W}}]-\mathrm{MMD}_{\mu}[P_{\bm{V}},P_{\bm{W}}]\big|\\ \nonumber
&=4\sup_{\mu\in \mathcal{P}}\Bigg|{1\over n(n-1)}\sum_{i\not= j}y_{i}y_{j}K_{\mu}(\bm{x}_{i},\bm{x}_{j})-\expect_{P_{y,\bm{x}}^{\otimes 2}}\left[y\widehat{y}K_{\mu}(\bm{x},\widehat{\bm{x}})\right]\Bigg|\\ \nonumber
&=4\sup_{\mu\in \mathcal{P}} \big|\expect_{\mu}[E_{n}(\bm{\xi})]\big|\\ \nonumber
&\leq 4\left|\sup_{\mu\in \mathcal{P}} \expect_{\mu}[E_{n}(\bm{\xi})]\right|,
\end{align}
where the error term is defined using the random features
\begin{align}
\label{Eq:Error_Term}
E_{n}(\bm{\xi})\df  \dfrac{1}{n(n-1)}\sum_{i\not= j}y_{i}y_{j}\varphi(\bm{x}_{i};\bm{\xi})\varphi(\bm{x}_{j};\bm{\xi})-\expect_{P^{\otimes 2}_{\bm{x},y}}[y\widehat{y}\varphi(\bm{x};\bm{\xi})\varphi(\widehat{\bm{x}},\bm{\xi})].
\end{align}

Now, we invoke the following strong duality theorem \cite{gao2016distributionally}:

\begin{theorem}\textsc{(Strong Duality for Robust Optimization, \cite[Theorem 1]{gao2016distributionally})}
	\label{Thm:Strong_Duality}
	Consider the general metric space $(\Xi,d)$, and any normal distribution $\nu\in \mathcal{M}(\Xi)$, where $\mathcal{M}(\Xi)$ is the set of Borel probability measures on $\Xi$. Then, 
	\begin{align}
	\sup_{\mu\in \mathcal{M}(\Xi)} \Big\{\expect_{\mu}[\Psi(\bm{\xi})]: W_{p}(\mu,\nu)\leq R\Big\}=\min_{\lambda\geq 0}\left\{\lambda R^{p}-\int_{\Xi}\inf_{\bm{\xi}\in \Xi}[\lambda d^{p}(\bm{\xi},\bm{\zeta})-\Psi(\bm{\xi})] \nu(\mathrm{d}\bm{\zeta}) \right\},
	\end{align}
	provided that $\Psi$ is upper semi-continuous in $\bm{\xi}$.
\end{theorem}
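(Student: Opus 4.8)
The plan is to carry out the standard Lagrangian-duality argument for distributionally robust optimization over a Wasserstein ball, exploiting the coupling representation of $W_p$. \emph{Step 1 (lift to couplings).} Since $W_p^p(\mu,\nu)=\inf_{\pi\in\Pi(\mu,\nu)}\int d^p(\bm{\xi},\bm{\zeta})\,\pi(\mathrm{d}\bm{\xi},\mathrm{d}\bm{\zeta})$ and the first marginal $\mu$ is itself a free decision variable, I would first rewrite the primal as $\sup_{\pi\in\mathcal{C}}\int\Psi(\bm{\xi})\,\pi(\mathrm{d}\bm{\xi},\mathrm{d}\bm{\zeta})$, where $\mathcal{C}$ denotes the set of couplings $\pi\in\mathcal{M}(\Xi\times\Xi)$ whose second marginal equals $\nu$ and that satisfy the transport budget $\int d^p(\bm{\xi},\bm{\zeta})\,\pi(\mathrm{d}\bm{\xi},\mathrm{d}\bm{\zeta})\le R^p$; the inequality ``$\le$'' follows by attaching a coupling attaining (or nearly attaining) the $W_p$-distance to a feasible $\mu$, and ``$\ge$'' because the first marginal of any $\pi\in\mathcal{C}$ is feasible for the original problem.

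\emph{Step 2 (weak duality via disintegration).} For each $\lambda\ge0$, dualizing the transport-budget constraint with multiplier $\lambda$ gives, for every $\pi\in\mathcal{C}$, the bound $\int\Psi\,\mathrm{d}\pi\le\lambda R^p+\int\big(\Psi(\bm{\xi})-\lambda d^p(\bm{\xi},\bm{\zeta})\big)\,\mathrm{d}\pi$. Writing $\mathcal{C}_{\nu}$ for the couplings whose second marginal is $\nu$ (the budget now dropped), I would disintegrate $\pi(\mathrm{d}\bm{\xi},\mathrm{d}\bm{\zeta})=\kappa_{\bm{\zeta}}(\mathrm{d}\bm{\xi})\,\nu(\mathrm{d}\bm{\zeta})$ and optimize each conditional kernel $\kappa_{\bm{\zeta}}$ separately, which decouples the inner problem pointwise in $\bm{\zeta}$:
\begin{align*}
\sup_{\pi\in\mathcal{C}_{\nu}}\int\big(\Psi-\lambda d^p\big)\,\mathrm{d}\pi
&=\int\sup_{\bm{\xi}\in\Xi}\big(\Psi(\bm{\xi})-\lambda d^p(\bm{\xi},\bm{\zeta})\big)\,\nu(\mathrm{d}\bm{\zeta})\\
&=-\int\inf_{\bm{\xi}\in\Xi}\big(\lambda d^p(\bm{\xi},\bm{\zeta})-\Psi(\bm{\xi})\big)\,\nu(\mathrm{d}\bm{\zeta}).
\end{align*}
Taking the infimum over $\lambda\ge0$ then yields the inequality ``$\le$'' in the claimed identity.

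\emph{Step 3 (absence of a duality gap).} The reverse inequality is the crux, and I see two routes. Route (a): apply Sion's minimax theorem to the saddle function $(\pi,\lambda)\mapsto\lambda R^p+\int(\Psi-\lambda d^p)\,\mathrm{d}\pi$ over $\mathcal{C}_{\nu}\times\real_{\ge0}$, after checking that $\mathcal{C}_{\nu}$ is convex and, being tight, weak-$\ast$ compact (Prokhorov), that $\pi\mapsto\int\Psi\,\mathrm{d}\pi$ is affine and ---using that $\Psi$ is upper semicontinuous and bounded above--- upper semicontinuous, and that the map is affine in $\lambda$; this permits the interchange $\sup_{\pi}\inf_{\lambda}=\inf_{\lambda}\sup_{\pi}$. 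Route (b): argue directly that the dual function $g(\lambda)\df\lambda R^p-\int\inf_{\bm{\xi}}\big(\lambda d^p(\bm{\xi},\bm{\zeta})-\Psi(\bm{\xi})\big)\,\nu(\mathrm{d}\bm{\zeta})$ is convex, lower semicontinuous and coercive in $\lambda$, so it attains its minimum at some $\lambda_{\ast}\ge0$; then, for each $\varepsilon>0$, select a $\nu$-measurable $\varepsilon$-maximizer $\bm{\zeta}\mapsto\bm{\xi}^{\varepsilon}(\bm{\zeta})$ of $\bm{\xi}\mapsto\Psi(\bm{\xi})-\lambda_{\ast}d^p(\bm{\xi},\bm{\zeta})$, push $\nu$ forward through $\bm{\zeta}\mapsto(\bm{\xi}^{\varepsilon}(\bm{\zeta}),\bm{\zeta})$ to form $\pi^{\varepsilon}$, and verify through complementary slackness that $\pi^{\varepsilon}$ is $O(\varepsilon)$-feasible (repairable by a slight contraction toward $\nu$) while $\int\Psi\,\mathrm{d}\pi^{\varepsilon}\ge g(\lambda_{\ast})-O(\varepsilon)$, so that the primal value matches $\min_{\lambda\ge0}g(\lambda)$.

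\emph{Where the difficulty is.} The hard part is precisely the measurable-selection and attainment bookkeeping in route (b): because $\Psi$ is only upper semicontinuous and $\Xi$ need not be compact, $\sup_{\bm{\xi}}\big(\Psi(\bm{\xi})-\lambda d^p(\bm{\xi},\bm{\zeta})\big)$ may be unattained or even infinite, so one must (i) use that $\bm{\xi}\mapsto\Psi(\bm{\xi})-\lambda d^p(\bm{\xi},\bm{\zeta})$ is upper semicontinuous ---hence its $\varepsilon$-superlevel correspondence is measurable in $\bm{\zeta}$--- and invoke a measurable maximum theorem (Kuratowski--Ryll-Nardzewski / Jankov--von Neumann) to obtain the $\nu$-measurable $\varepsilon$-maximizer, and (ii) impose the growth and integrability hypothesis under which $g(\lambda)$ is finite, which is what makes the outer $\min$ over $\lambda\ge0$ attained and the dual integral well defined. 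These measurability and integrability subtleties are exactly the content of \cite[Theorem~1]{gao2016distributionally}; in the application to follow, $\Psi$ will be the error term $E_{n}$, which by assumption $\mathbf{(A.2)}$ is bounded and Lipschitz (hence upper semicontinuous) and by $\mathbf{(A.3)}$ lives on the compact support $\Xi$ of $\mu_{0}$, so all hypotheses hold with room to spare and the theorem applies verbatim.
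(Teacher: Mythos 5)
The paper does not prove this theorem: it is stated as a quotation from Gao and Kleywegt \cite{gao2016distributionally} and applied as a black box, so there is no internal proof to compare your sketch against. Your outline follows the standard coupling-lift / Lagrangian / strong-duality template, which is indeed the architecture of the cited proof, so the plan is sound. One caution about Route~(a): Sion's minimax theorem needs one of the two domains to be compact, and the set $\mathcal{C}_{\nu}$ of couplings on $\Xi\times\Xi$ with fixed second marginal $\nu$ but an unrestricted first marginal is not tight --- hence not weak-$\ast$ compact --- whenever $\Xi$ is unbounded; the compactness you invoke therefore fails in general (the budget-constrained set $\mathcal{C}$ is tight under a finite $p$-th-moment condition on $\nu$, but that set carries the very constraint you are trying to dualize, which is why Sion does not apply directly). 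Route~(b), via measurable $\varepsilon$-selectors and a complementary-slackness argument, is the robust path and is closer to what Gao and Kleywegt actually do. Your closing observation --- that in the present paper $\Psi$ plays the role of the error term, is bounded and Lipschitz by $\mathbf{(A.2)}$, and the base measure lives on the compact $\Xi=\mathrm{support}(\mu_0)$ by $\mathbf{(A.3)}$, so every hypothesis of the cited theorem is comfortably satisfied --- is correct and is really the relevant thing to verify for the paper's use of this result.
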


Under the strong duality of Theorem \ref{Thm:Strong_Duality}, we obtain that
\begin{align}
\nonumber
\Big|\sup_{\mu\in \mathcal{P}}\widehat{\mathrm{MMD}}_{\mu}[P_{\bm{V}},P_{\bm{W}}]&-\sup_{\mu\in \mathcal{P}}\mathrm{MMD}_{\mu}[P_{\bm{V}},P_{\bm{W}}]\Big|\\ \label{Eq:Returning_To_Strong_Duality}
&\leq 4\left|\min_{\lambda\geq 0}\left\{\lambda R^{p}-\int_{\real^{D}}\inf_{\bm{\zeta}\in \real^{D}}\big[\lambda \|\bm{\xi}-\bm{\zeta}\|_{2}^{p}-E_{n}(\bm{\zeta})\big] \mu_{0}(\mathrm{d}\bm{\xi}) \right\}\right|.
\end{align}
In the sequel, let $p=2$. The \textit{Moreau's envelope}  \cite{parikh2014proximal} of a function $f:\mathcal{X}\rightarrow \real$ is defined as follows
\begin{align}
\label{Eq:Moreau's envelope}
M_{f}^{\beta}(\bm{y})\df\inf_{\bm{x}\in \mathcal{X}} \left\{\dfrac{1}{2\beta}\|\bm{x}-\bm{y}\|_{2}^{2}+f(\bm{x})\right\},\quad \forall \bm{y}\in \mathcal{X},
\end{align}
where $\beta>0$ is the regularization parameter. When the function $f$ is differentiable, the following lemma can be established:

\begin{lemma}\textsc{(Moreau's envelope of Differentiable Functions)}
	\label{Lemma:M_envelope}
	Suppose the function $f:\mathcal{X}\rightarrow \real$ is differentiable. Then, 	the Moreau's envelope defined in Eq. \eqref{Eq:Moreau's envelope} has the following upper bound and lower bounds
	\begin{align}
	\label{Eq:THe_Moreau_lower}
	f(\bm{y})-\dfrac{\beta}{2} \int_{0}^{1}\sup_{\bm{x}\in \mathcal{X}} \|\nabla f(\bm{y}+s(\bm{x}-\bm{y}))\|_{2}^{2}\mathrm{d}s \leq M_{f}^{\beta}(\bm{y})\leq f(\bm{y}).
	\end{align}
	In particular, when $f$ is $L_{f}$-Lipschitz, we have
	\begin{align}
	f(\bm{y})-\dfrac{\beta L^{2}_{f}}{2}\leq M_{f}^{\beta}(\bm{y})\leq f(\bm{y}).
	\end{align}
\end{lemma}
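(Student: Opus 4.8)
The plan is to read off both inequalities directly from the definition $M_f^\beta(\bm{y}) = \inf_{\bm{x}\in\mathcal{X}}\{\tfrac{1}{2\beta}\|\bm{x}-\bm{y}\|_2^2 + f(\bm{x})\}$, so essentially no machinery beyond the fundamental theorem of calculus and Cauchy--Schwarz is needed.

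For the upper bound I would evaluate the objective inside the infimum at the feasible point $\bm{x}=\bm{y}$, which gives $\tfrac{1}{2\beta}\|\bm{y}-\bm{y}\|_2^2 + f(\bm{y}) = f(\bm{y})$; since $M_f^\beta(\bm{y})$ is the infimum over all $\bm{x}\in\mathcal{X}$, the bound $M_f^\beta(\bm{y})\le f(\bm{y})$ is immediate.

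For the lower bound I would fix an arbitrary $\bm{x}\in\mathcal{X}$ and integrate $\nabla f$ along the segment from $\bm{y}$ to $\bm{x}$: $f(\bm{x})-f(\bm{y}) = \int_0^1\langle\nabla f(\bm{y}+s(\bm{x}-\bm{y})),\bm{x}-\bm{y}\rangle\,\mathrm{d}s$. Applying Cauchy--Schwarz inside the integrand and then once more on the $s$-integral against the constant $1$ yields $f(\bm{x})\ge f(\bm{y}) - \|\bm{x}-\bm{y}\|_2\sqrt{S(\bm{y})}$ with $S(\bm{y})\df\int_0^1\sup_{\bm{x}'\in\mathcal{X}}\|\nabla f(\bm{y}+s(\bm{x}'-\bm{y}))\|_2^2\,\mathrm{d}s$. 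Substituting this into the Moreau objective and setting $t=\|\bm{x}-\bm{y}\|_2\ge 0$ gives the scalar lower bound $\tfrac{t^2}{2\beta}+f(\bm{x})\ge f(\bm{y}) + \tfrac{t^2}{2\beta} - t\sqrt{S(\bm{y})}$; minimizing the right-hand side over $t\ge 0$ (minimizer $t=\beta\sqrt{S(\bm{y})}$) produces $f(\bm{y}) - \tfrac{\beta}{2}S(\bm{y})$. Since this holds for every $\bm{x}$, it passes to the infimum, which is exactly the claimed lower bound; the Lipschitz specialization then follows by replacing $\sup_{\bm{x}'}\|\nabla f(\cdot)\|_2\le L_f$ so that $S(\bm{y})\le L_f^2$.

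I do not expect a genuine obstacle here — the proof is routine. The only points requiring mild care are getting the direction of the two Cauchy--Schwarz applications right, and noting that the segment $\{\bm{y}+s(\bm{x}-\bm{y}):s\in[0,1]\}$ lies where $\nabla f$ is defined; in the intended application the relevant domain is $\real^D$ (convex), so this is automatic, and the gradient bound used downstream is furnished by Assumption~$\mathbf{(A.2)}$.
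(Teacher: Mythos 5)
Your proposal is correct and mirrors the paper's proof: both obtain the upper bound by taking $\bm{x}=\bm{y}$, and the lower bound by integrating $\nabla f$ along the segment, applying Cauchy--Schwarz, and recognizing the quadratic-in-$\|\bm{x}-\bm{y}\|_2$ structure. The only cosmetic difference is ordering: you apply the $L^2$ Cauchy--Schwarz on the $s$-integral and pass to the sup before minimizing the quadratic in $t$, whereas the paper completes the square, drops it, and then invokes Jensen and the sup--integral swap; these are the same two steps in a different dress and yield the identical bound.
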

The proof is presented in Appendix \ref{Proof_of_Lemma:M_envelop}. 

Now, we return to Equation \eqref{Eq:Returning_To_Strong_Duality}. We leverage the lower bound on Moreau's envelope in Eq. \eqref{Eq:THe_Moreau_lower} of Lemma \ref{Lemma:M_envelope} as follows
\begin{align}
\nonumber
&\Big|\sup_{\mu\in \mathcal{P}}\widehat{\mathrm{MMD}}_{\mu}[P_{\bm{V}},P_{\bm{W}}]-\sup_{\mu\in \mathcal{P}}\mathrm{MMD}_{\mu}[P_{\bm{V}},P_{\bm{W}}]\Big|\\ \nonumber
&\leq 4\left|\min_{\lambda\geq 0}\left\{\lambda R^{2}-\int_{\real^{D}}M_{-E_{n}}^{{1\over 2\lambda}}(\bm{\xi}) \mu_{0}(\mathrm{d}\bm{\xi}) \right\}\right|\\ \nonumber
&\leq  4\left|\min_{\lambda\geq 0}\left\{\lambda R^{2}+\expect_{\mu_{0}}[E_{n}(\bm{\xi})]+{1\over 4\lambda}  \expect_{\mu_{0}}\left[\int_{0}^{1}\sup_{\bm{\zeta}\in \real^{D}}\|\nabla E_{n}((1-s)\bm{\xi}+s\bm{\zeta})\|_{2}^{2}\mathrm{d}s\right]  \right\}\right|\\  \label{Eq:To_proceed_From_1}
&\leq 4|\expect_{\mu_{0}}[E_{n}(\bm{\xi})]|+4R\expect_{\mu_{0}}\left[\int_{0}^{1}\sup_{\bm{\zeta}\in \real^{D}}\|\nabla E_{n}((1-s)\bm{\xi}+s\bm{\zeta})\|_{2}^{2}\mathrm{d}s \right].
\end{align}
Let $\bm{\zeta}_{\ast}=\bm{\zeta}_{\ast}(\bm{\xi},s)=\arg\sup_{\bm{\zeta}\in \real^{D}}\|\nabla E_{n}(1-s)\bm{\xi}+s\bm{\zeta}\|_{2}$. Then, applying the union bound in conjunction with Inequality \eqref{Eq:To_proceed_From_1} yields
\begin{align}
\nonumber
&\prob\Bigg(\Big|\sup_{\mu\in \mathcal{P}}\widehat{\mathrm{MMD}}_{\mu}[P_{\bm{V}},P_{\bm{W}}]-\sup_{\mu\in \mathcal{P}}\mathrm{MMD}_{\mu}[P_{\bm{V}},P_{\bm{W}}]\Big|\geq \delta \Bigg) \\ \nonumber
&\leq \prob\Bigg(\Bigg|\int_{\real^{D}}E_{n}(\bm{\xi})\mu_{0}(\mathrm{d}\bm{\xi}) \Bigg|\geq {\delta\over 8}  \Bigg)+\prob\Bigg(\int_{\real^{D}}\hspace{-1mm}\int_{0}^{1} \|\nabla E_{n}((1-s)\bm{\xi}+s\bm{\zeta}_{\ast})\|^{2}_{2}\mathrm{d}s\mu_{0}(\mathrm{d}\bm{\xi})\geq {\delta\over 8R} \Bigg).\\ \label{Eq:union_bound}
\end{align}

Now, we state the following lemma:
\begin{lemma}\textsc{(Tail Bounds for the Finite Sample Estimation Error)}
	\label{Lemma:Tail Bounds for the Finite Sample Estimation Error}
	Consider the estimation error $E_{n}$ defined in Eq. \eqref{Eq:Error_Term}. Then, the following statements hold:
	\begin{itemize}
		\item $Z=\|\nabla E_{n}(\bm{\xi})\|_{2}^{2}$ is a sub-exponential random variable with the Orlicz norm of $\|Z\|_{\psi_{1}}\leq {9\times 2^{9}\times L^{4}\over n^{2}}$ for every $\bm{\xi}\in \real^{D}$. Moreover, 
		\begin{align}
		\label{Eq:probability_bound_1}
		\hspace{10mm}\prob\Bigg( \int_{\real^{D}}\int_{0}^{1}\| \nabla E_{n}((1-s)\bm{\xi}+s\bm{\zeta}_{\ast}) \|_{2}^{2}\mathrm{d}s\mu_{0}(\mathrm{d}\bm{\xi})\geq \delta \Bigg) \leq 2 e^{-{n^{2}\delta\over 9\times 2^{9}\times L^{4}}+{L^{4}\over 9}},
		\end{align}
		\item $E_{n}(\bm{\xi})$ is zero-mean sub-Gaussian random variable with the Orlicz norm of $\|E_{n}(\bm{\xi})\|_{\psi_{2}}\leq {16\sqrt{3}L^{4}\over n}$ for every $\bm{\xi}\in \real^{D}$. Moreover,
		\begin{align}
		\label{Eq:probability_bound_2}
		\prob\left(\left|\int_{\real^{D}} E_{n}(\bm{\xi})\mu_{0}(\mathrm{d}\bm{\xi})\right|\geq \delta \right)\geq 2e^{-{n^{2}\delta^{2} \over 16\sqrt{3}L^{4}} }.
		\end{align}
	\end{itemize}
\end{lemma}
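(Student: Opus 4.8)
\medskip
\noindent\textbf{Proof strategy.} The plan is to recognise both $E_n(\bm\xi)$ and $\nabla_{\bm\xi}E_n(\bm\xi)$ as \emph{centered} U-statistics of order two with bounded, Lipschitz kernels, and to use Hoeffding's averaging trick to reduce them to sums of independent bounded summands. Fix $\bm\xi\in\real^D$, write $u_i\df(y_i,\bm x_i)$ and $g(u_i,u_j;\bm\xi)\df y_iy_j\varphi(\bm x_i;\bm\xi)\varphi(\bm x_j;\bm\xi)$, so that $E_n(\bm\xi)=\frac{1}{n(n-1)}\sum_{i\ne j}g(u_i,u_j;\bm\xi)-\expect[g(u_1,u_2;\bm\xi)]$. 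By $\mathbf{(A.2)}$ the kernel is symmetric, bounded by $L_0^2\le L^2$, and $\nabla_{\bm\xi}g(u_i,u_j;\bm\xi)=\varphi(\bm x_j;\bm\xi)\nabla_{\bm\xi}\varphi(\bm x_i;\bm\xi)+\varphi(\bm x_i;\bm\xi)\nabla_{\bm\xi}\varphi(\bm x_j;\bm\xi)$ has Euclidean norm at most $2L_0L_1\le 2L^2$. Since expectation commutes with the gradient, $\expect[\nabla_{\bm\xi}E_n(\bm\xi)]=\nabla_{\bm\xi}\expect[E_n(\bm\xi)]=0$ and $\expect[E_n(\bm\xi)]=0$, which already yields the zero-mean assertions. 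The key tool is Hoeffding's identity: for a symmetric kernel $h$, $\frac{1}{n(n-1)}\sum_{i\ne j}h(u_i,u_j)$ equals the average over all permutations $\pi$ of $\{1,\dots,n\}$ of $V_\pi\df\frac{1}{\lfloor n/2\rfloor}\sum_{k=1}^{\lfloor n/2\rfloor}h(u_{\pi(2k-1)},u_{\pi(2k)})$, and each $V_\pi$ is an average of $\lfloor n/2\rfloor$ i.i.d.\ bounded terms.

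For the second bullet I would take $h=g-\expect[g(u_1,u_2;\bm\xi)]$, whose summands lie in an interval of length at most $2L^2$; Hoeffding's lemma gives $\expect[e^{\lambda V_\pi}]\le\exp(\lambda^2L^4/\lfloor n/2\rfloor)$, and convexity of $x\mapsto e^{\lambda x}$ in the permutation average transfers this MGF bound to $E_n(\bm\xi)$. Hence $E_n(\bm\xi)$ is mean-zero sub-Gaussian with a variance proxy of order $L^4/n$, uniformly in $\bm\xi$, and after tracking constants this reads $\|E_n(\bm\xi)\|_{\psi_2}\le 16\sqrt3\,L^4/n$ in the sense of Definition~\ref{Definition:Sub-Gaussian Norm}. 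To pass to $\int_{\real^D}E_n(\bm\xi)\mu_0(\mathrm{d}\bm\xi)$ I would apply Jensen's inequality to the probability measure $\mu_0$, namely $\exp\big(\lambda\int E_n\,\mathrm{d}\mu_0\big)\le\int\exp(\lambda E_n(\bm\xi))\,\mu_0(\mathrm{d}\bm\xi)$, so $\expect\big[\exp(\lambda\int E_n\,\mathrm{d}\mu_0)\big]\le\sup_{\bm\xi}\expect[\exp(\lambda E_n(\bm\xi))]$; the averaged quantity inherits the same sub-Gaussian constant and a Chernoff bound yields the tail in \eqref{Eq:probability_bound_2}.

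For the first bullet I would run the same reduction on the vector-valued U-statistic $\nabla_{\bm\xi}E_n(\bm\xi)$: Hoeffding's representation writes it as a permutation-average of vectors $V_\pi\in\real^D$, each an average of $\lfloor n/2\rfloor$ i.i.d.\ vectors of norm at most $2L^2$, so that $\langle\bm a,\nabla_{\bm\xi}E_n(\bm\xi)\rangle$ is sub-Gaussian for every unit $\bm a$, while $\|\nabla_{\bm\xi}E_n(\bm\xi)\|_2\le 2L_0L_1$ holds deterministically for \emph{every} realisation and every $\bm\xi$. Squaring, $Z=\|\nabla_{\bm\xi}E_n(\bm\xi)\|_2^2$ is a bounded random variable whose fluctuations about its (small) mean are governed by the U-statistic (Hoeffding) scale, hence sub-exponential; combining the deterministic bound with the U-statistic second-moment bound and tracking constants gives $\|Z\|_{\psi_1}\le 9\cdot2^9\,L^4/n^2$. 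Finally, $\int_0^1\sup_{\bm\zeta}\|\nabla_{\bm\xi}E_n((1-s)\bm\xi+s\bm\zeta)\|_2^2\,\mathrm{d}s$ and its $\mu_0$-average are again averages against probability measures, so the same Jensen-then-Chernoff device carries the sub-exponential bound through, the additive $L^4/9$ in the exponent of \eqref{Eq:probability_bound_1} accounting for the ratio $\expect Z/\|Z\|_{\psi_1}$ appearing in the Chernoff optimisation.

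The step I expect to be the main obstacle is the sharp squared-norm estimate in the first bullet: converting a sub-Gaussian random \emph{vector} in $\real^D$ into a sub-exponential bound for its squared Euclidean norm without paying a spurious factor of $D$, and with the explicit constants claimed. The way around this is to exploit that $\mathbf{(A.2)}$ furnishes the \emph{deterministic} bound $\|\nabla_{\bm\xi}E_n(\bm\xi)\|_2\le 2L_0L_1$ valid for all samples and all $\bm\xi$, so $Z$ is bounded — hence automatically sub-exponential — and its Orlicz norm is then pinned down by the Hoeffding variance proxy rather than by the ambient dimension. A secondary point is that every pointwise estimate must be uniform in $\bm\xi$, in $s\in[0,1]$, and in the maximiser $\bm\zeta_\ast$, which is exactly what the uniform boundedness and Lipschitz bounds in $\mathbf{(A.2)}$ guarantee; this uniformity is what legitimises interchanging the expectation with the $\mu_0$- and $\mathrm{d}s$-integrals.
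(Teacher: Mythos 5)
Your route to the sub-Gaussian bounds differs from the paper's but is legitimate. The paper works with $Q_{\bm z}\df\langle\bm z,\nabla E_n(\bm\xi)\rangle$ for a fixed unit vector $\bm z$, establishes the bounded-differences estimate $|Q_{\bm z}(\ldots,u_m,\ldots)-Q_{\bm z}(\ldots,\widehat u_m,\ldots)|\le 4L^2/n$, and invokes McDiarmid's inequality to obtain a Gaussian-type tail for $Q_{\bm z}$; the same argument applied to $E_n$ itself gives the second bullet. Your Hoeffding averaging trick for the U-statistic is an equally standard way to reach the same Gaussian tail for $E_n(\bm\xi)$ and for each fixed directional projection $\langle\bm a,\nabla E_n(\bm\xi)\rangle$ — it is a genuinely different computation with essentially the same constants, so up to that point the two approaches are interchangeable. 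Your treatment of the $\mu_0$- and $\mathrm{d}s$-integrals via Jensen plus Chernoff also matches the paper's.

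The gap is exactly where you flag the obstacle, and your proposed workaround does not close it. You claim that the \emph{deterministic} bound $\|\nabla_{\bm\xi}E_n(\bm\xi)\|_2\le 2L^2$ together with the Hoeffding variance proxy ``pins down'' $\|Z\|_{\psi_1}$ at scale $L^4/n^2$. That inference is false: with the centered Orlicz norm of Definition~\ref{Def:Orlicz}, a random variable $Z$ bounded by a constant $M\sim L^4$ and having small mean $\expect Z\sim L^4/n^2$ can still have $\|Z\|_{\psi_1}$ of order $M/\log(M/\expect Z)$, i.e.\ essentially $L^4$ up to logarithms, \emph{not} $L^4/n^2$ — consider a two-point distribution taking the value $M$ with probability $\expect Z/M$. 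Boundedness gives sub-exponentiality for free, but it fixes the $\psi_1$ norm at the scale of the bound, not at the scale of the mean or the variance proxy. What the paper actually uses to escape the dimension dependence is a separate lemma, stated as Lemma~\ref{Lemma:The_Orlicz_Norm_of_the_Squared_Vector_Norms} and cited from \cite{khuzani2017stochastic}: for a zero-mean vector $\bm Z$ with $\|\bm Z\|_{\psi_2}\le\beta$ (directional $\psi_2$ norm), one has $\|\,\|\bm Z\|_2^2\,\|_{\psi_1}\le 6\beta^2$. This is the substantive step that converts the directional sub-Gaussian estimate on $\nabla E_n(\bm\xi)$ into the claimed $\psi_1$ bound on $Z=\|\nabla E_n(\bm\xi)\|_2^2$ with the correct $n$-scaling, and your argument needs either this lemma or a replacement for it; the deterministic bound from $\mathbf{(A.2)}$ alone is not a substitute. (Your reading of the additive $L^4/9$ term as the mean correction in the Chernoff optimisation is correct; that part matches the paper exactly.)
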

The proof of Lemma \ref{Lemma:Tail Bounds for the Finite Sample Estimation Error} is presented in Appendix \ref{Appendix:Proof_of_Thm_McKean}. 

Now, we leverage the probability bounds \eqref{Eq:probability_bound_1} and \eqref{Eq:probability_bound_2} of Lemma \ref{Lemma:Tail Bounds for the Finite Sample Estimation Error} to upper bound the terms on the right hand side of Eq. \eqref{Eq:union_bound} as below
\small{\begin{align}
	\nonumber
	\prob\Bigg(\Big|\sup_{\mu\in \mathcal{P}}\widehat{\mathrm{MMD}}_{\mu}[P_{\bm{V}},P_{\bm{W}}]-\sup_{\mu\in \mathcal{P}}\mathrm{MMD}_{\mu}[P_{\bm{V}},P_{\bm{W}}]\Big|\geq \delta \Bigg)&\leq 2e^{-{n^{2}\delta^{2} \over \sqrt{3}\times 2^{11}\times L^{4}} }+2 e^{-{n^{2}\delta\over 9\times 2^{12}\times RL^{4}}+{L^{4}\over 9}}\\ 
	&\leq 4\max\left\{ e^{-{n^{2}\delta^{2} \over \sqrt{3}\times 2^{11}\times L^{4}} },e^{-{n^{2}\delta\over 9\times 2^{12}\times RL^{4}}+{L^{4}\over 9}}\right\},
	\end{align}}\normalsize 
where the last inequality comes from the basic inequality $a+b\leq 2\max\{a,b\}$.
Therefore, with the probability of at least $1-\varrho$, we have that
\small{\begin{align}
	\nonumber
	\Big|\sup_{\mu\in \mathcal{P}}\widehat{\mathrm{MMD}}_{\mu}[P_{\bm{V}},P_{\bm{W}}]&-\sup_{\mu\in \mathcal{P}}\mathrm{MMD}_{\mu}[P_{\bm{V}},P_{\bm{W}}]\Big|\\ \label{Eq:varrho}
	&\leq  \min\left\{\dfrac{3^{1\over 4}\times 2^{11\over 2}\times L^{2}}{n} \ln^{1\over 2}\left(\dfrac{4}{\varrho}\right),\dfrac{9\times 2^{12}\times RL^{4} }{n^{2}}\ln\left(\dfrac{4e^{L^{4}\over 9}}{\varrho}\right) \right\},
	\end{align}}\normalsize
for all $\bm{W}\in \mathcal{W}$.

\begin{lemma}\textsc{(Distance between minima of Adjacent Functions)}
	\label{Lemma:A_Basic_Inequality}
	Let $\Psi(\bm{W}):\mathcal{W}\rightarrow \real$ and $\Phi(\bm{W}):\mathcal{W}\rightarrow \real$. Further, suppose $\|\Psi(\bm{W})-\Phi(\bm{W})\|_{\infty}\leq \delta$ for some $\delta>0$. Then,
	\begin{align}
	\label{Eq:Basic_Inequality}
	\left|\min_{\bm{W}\in \mathcal{W}}\Psi(\bm{W})-\min_{\bm{W}\in \mathcal{W}}\Phi(\bm{W})\right|\leq \delta.
	\end{align}
\end{lemma}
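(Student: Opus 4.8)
The plan is to prove the two-sided bound by showing $\min_{\bm{W}}\Psi(\bm{W}) \leq \min_{\bm{W}}\Phi(\bm{W}) + \delta$ and, symmetrically, $\min_{\bm{W}}\Phi(\bm{W}) \leq \min_{\bm{W}}\Psi(\bm{W}) + \delta$; together these give $|\min \Psi - \min \Phi| \leq \delta$. The hypothesis $\|\Psi - \Phi\|_\infty \leq \delta$ means $|\Psi(\bm{W}) - \Phi(\bm{W})| \leq \delta$ for every $\bm{W} \in \mathcal{W}$, equivalently $\Phi(\bm{W}) - \delta \leq \Psi(\bm{W}) \leq \Phi(\bm{W}) + \delta$ pointwise.

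First I would fix an arbitrary $\bm{W} \in \mathcal{W}$ and write $\min_{\bm{W}' \in \mathcal{W}} \Psi(\bm{W}') \leq \Psi(\bm{W}) \leq \Phi(\bm{W}) + \delta$, where the first inequality is the definition of the minimum and the second is the pointwise bound. Since this holds for every $\bm{W}$, I can take the infimum over $\bm{W}$ on the right-hand side to obtain $\min_{\bm{W}' \in \mathcal{W}} \Psi(\bm{W}') \leq \min_{\bm{W} \in \mathcal{W}} \Phi(\bm{W}) + \delta$. Exchanging the roles of $\Psi$ and $\Phi$ — which is legitimate because the hypothesis $\|\Psi - \Phi\|_\infty \leq \delta$ is symmetric — gives $\min_{\bm{W} \in \mathcal{W}} \Phi(\bm{W}) \leq \min_{\bm{W} \in \mathcal{W}} \Psi(\bm{W}) + \delta$. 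Combining the two inequalities yields $-\delta \leq \min_{\bm{W}} \Psi(\bm{W}) - \min_{\bm{W}} \Phi(\bm{W}) \leq \delta$, which is exactly \eqref{Eq:Basic_Inequality}.

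There is essentially no obstacle here: the only point requiring a word of care is that the minima are assumed to exist (the statement writes $\min$ rather than $\inf$), so the argument is clean; if one only had infima, the identical argument goes through verbatim with $\inf$ in place of $\min$, using that the infimum is the greatest lower bound. A second minor point is that one must take the infimum over $\bm{W}$ on the side where $\bm{W}$ still appears free after bounding — I would state this explicitly to avoid the common slip of taking $\min$ on both sides simultaneously in a way that changes the quantifier. No continuity, compactness, or convexity of $\Psi$, $\Phi$ or $\mathcal{W}$ is needed; the result is purely order-theoretic.
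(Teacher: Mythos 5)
Your proof is correct and follows essentially the same route as the paper: establish the one-sided inequality $\min\Psi \le \min\Phi + \delta$, invoke symmetry of the hypothesis to get the reverse, and combine. The only cosmetic difference is that the paper evaluates $\Phi$ at the explicit minimizer $\bm{W}_\ast = \arg\min\Psi$, whereas you bound pointwise and then take the infimum over the free variable, which as you note works equally well with $\inf$ in place of $\min$.
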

See Appendix \ref{Appendix:Proof_of_a_Basic_Inequality} for the proof.

Let $\Psi(\bm{W})\df \sup_{\mu\in \mathcal{P}} \widehat{\mathrm{MMD}}_{\mu}[P_{\bm{V}},P_{\bm{W}}]$, and $\Phi(\bm{W})\df \sup_{\mu\in \mathcal{P}}\mathrm{MMD}_{\mu}[P_{\bm{V}},P_{\bm{W}}]$.  Then, from Inequality \eqref{Eq:Basic_Inequality}, we have the following upper bound on $\mathsf{A}_{1}$
\begin{align}
\nonumber
\mathsf{A}_{1}&=\Big|\mathrm{MMD}_{\mu_{\ast}}[P_{\bm{V}},P_{{\bm{W}}_{\ast}}]-\min_{\bm{W}\in \mathcal{W}}\sup_{\mu\in \mathcal{P}} \widehat{\mathrm{MMD}}_{\mu}[P_{\bm{V}},P_{\bm{W}}] \Big|\\  \nonumber
&=\Big|\min_{\bm{W}\in \mathcal{W}}\sup_{\mu\in \mathcal{P}} \mathrm{MMD}_{\mu}[P_{\bm{V}},P_{\bm{W}}]-\min_{\bm{W}\in \mathcal{W}}\sup_{\mu\in \mathcal{P}} \widehat{\mathrm{MMD}}_{\mu}[P_{\bm{V}},P_{\bm{W}}]  \Big|\\
&\leq  \min\left\{\dfrac{3^{1\over 4}\times 2^{4}\times L^{2}}{n} \ln^{1\over 2}\left(\dfrac{4}{\varrho}\right),\dfrac{9\times 2^{11}\times RL^{4} }{n^{2}}\ln\left(\dfrac{4e^{L^{4}\over 9}}{\varrho}\right) \right\}. 
\end{align}
with the probability of (at least) $1-\varrho$.

\vspace*{5mm}
\noindent\textbf{Upper bound on $\mathsf{A}_{2}$}:

To establish the upper bound on $\mathsf{A}_{2}$, we recall that
\begin{subequations}
	\begin{align}
	\widehat{\mathrm{MMD}}_{\widehat{\mu}^{N}}[P_{\bm{V}},P_{\bm{W}}]&= \dfrac{1}{n(n-1)}\dfrac{1}{N}\sum_{i\not=j}\sum_{k=1}^{N}y_{i}y_{j}\varphi(\bm{x}_{i};\bm{\xi}^{k})\varphi(\bm{x}_{j};\bm{\xi}^{k}) \\
	\widehat{\mathrm{MMD}}_{\mu}[P_{\bm{V}},P_{\bm{W}}]&={1\over n(n-1)}\sum_{i\not=j}y_{i}y_{j}\expect_{\mu}[\varphi(\bm{x}_{i};\bm{\xi})\varphi(\bm{x}_{j};\bm{\xi})] .
	\end{align}
\end{subequations}
Therefore,
\begin{align}
\nonumber
\Big|\sup_{\mu\in \mathcal{P}}\widehat{\mathrm{MMD}}_{\mu}[P_{\bm{V}},P_{\bm{W}}]&-\sup_{\widehat{\mu}^{N}\in \mathcal{P}_{N}} \widehat{\mathrm{MMD}}_{\widehat{\mu}^{N}}[P_{\bm{V}},P_{\bm{W}}]\Big|\\ \label{Eq:Trick}
&\leq\Big | \sup_{\mu\in \mathcal{P}}\expect_{\mu}[\varphi(\bm{x}_{i};\bm{\xi})\varphi(\bm{x}_{j};\bm{\xi})]- \sup_{\widehat{\mu}^{N}\in \mathcal{P}_{N}}\expect_{\widehat{\mu}^{N}}[\varphi(\bm{x}_{i};\bm{\xi})\varphi(\bm{x}_{j};\bm{\xi})]\Big|.
\end{align}
Here, the last inequality is due to Theorem \ref{Thm:Strong_Duality} and the following duality results hold
\small{\begin{align}
	\nonumber
	\sup_{\mu\in \mathcal{P}}\expect_{\mu}[\varphi(\bm{x};\bm{\xi})\varphi(\widehat{\bm{x}};\bm{\xi})]&= \inf_{\lambda\geq 0}\left\{ \lambda R^{2}-\int_{\real^{D}}\inf_{\bm{\zeta}\in \real^{D}}\{ \lambda\|\bm{\xi}-\bm{\zeta}\|^{2}_{2}-\varphi(\bm{x}_{i};\bm{\zeta})\varphi(\bm{x}_{j};\bm{\zeta})\}\mu_{0}(\mathrm{d}\bm{\xi}) \right\}\\ \nonumber
	\sup_{\widehat{\mu}^{N}\in \mathcal{P}_{N}}\expect_{\widehat{\mu}^{N}}[\varphi(\bm{x}_{i};\bm{\xi})\varphi(\bm{x}_{j};\bm{\xi})]&= \inf_{\lambda\geq 0}\left\{ \lambda R^{2}-\dfrac{1}{N}\sum_{k=1}^{N}\inf_{\bm{\zeta}\in \real^{D}}\{ \lambda\|\bm{\xi}_{0}^{k}-\bm{\zeta}\|^{2}_{2}-\varphi(\bm{x}_{i};\bm{\zeta})\varphi(\bm{x}_{j};\bm{\zeta})\} \right\}.
	\end{align}}\normalsize
Now, in the sequel, we establish a uniform concentration result for the following function
\small{\begin{align}
	\nonumber
	T_{(\bm{x},\widehat{\bm{x}})}^{\lambda}:\mathcal{\real}^{N\times D}&\mapsto \real\\ \nonumber
	\hspace{-4mm}(\bm{\xi}_{0}^{1},\cdots,\bm{\xi}_{0}^{N})&\mapsto T_{(\bm{x},\widehat{\bm{x}})}^{\lambda}(\bm{\xi}_{0}^{1},\cdots,\bm{\xi}_{0}^{N})=\dfrac{1}{N}\sum_{k=1}^{N}M_{-\varphi(\bm{x},\cdot)\varphi(\widehat{\bm{x}},\cdot)}^{{1\over 2\lambda}}(\bm{\xi}^{k}_{0})-\int_{\real^{D}}M_{-\varphi(\bm{x},\cdot)\varphi(\widehat{\bm{x}},\cdot)}^{1\over 2\lambda}(\bm{\xi})\mu_{0}(\mathrm{d}\bm{\xi}).
	\end{align}}\normalsize
Then, from Eq. \eqref{Eq:Trick} we have
\begin{align}
\label{Eq:Trick_2}
\Big|\sup_{\mu\in \mathcal{P}}\widehat{\mathrm{MMD}}_{\mu}[P_{\bm{V}},P_{\bm{W}}]&-\sup_{\widehat{\mu}^{N}\in \mathcal{P}_{N}} \widehat{\mathrm{MMD}}_{\widehat{\mu}^{N}}[P_{\bm{V}},P_{\bm{W}}]\Big|\leq \sup_{\lambda\geq 0}\sup_{\bm{x},\widehat{\bm{x}}\in \mathcal{X}}|T_{(\bm{x},\widehat{\bm{x}})}^{\lambda}(\bm{\xi}_{0}^{1},\cdots,\bm{\xi}_{0}^{N})|.
\end{align}

We now closely follow the argument of \cite{rahimi2008random} to establish a  uniform concentration result with respect to the data points $\bm{x},\widehat{\bm{x}}\in \mathcal{X}$. In particular, consider an $\epsilon$-net cover of $\mathcal{X}\subset \real^{d}$. Then, we require $N_{\epsilon}=\left({4\mathrm{diam}(\mathcal{X})\over \epsilon} \right)^{d}$ balls of the radius $\epsilon>0$,  \textit{e.g.}, see \cite[Lemma 4.1, Section 4]{pollard1990empirical}. Let $\mathcal{Z}=\{\bm{z}_{1},\cdots,\bm{z}_{N_{\epsilon}}\}\subset \mathcal{X}$ denotes the center of the covering net. Now, let $(\bm{\xi}_{0}^{1},\cdots,\bm{\xi}_{0}^{k},\cdots,\bm{\xi}_{0}^{N})\in \real^{N\times D}$ and $(\bm{\xi}_{0}^{1},\cdots,\widetilde{\bm{\xi}}_{0}^{k},\cdots,\bm{\xi}_{0}^{N})\in \real^{N\times D}$ be two sequences that differs in the $k$-th coordinate for $1\leq k\leq N$. Then,
\begin{align}
\nonumber
\big|T_{(\bm{z}_{i},\bm{z}_{j})}(\bm{\xi}_{0}^{1},\cdots,\bm{\xi}_{0}^{k},\cdots,\bm{\xi}_{0}^{N})&-T_{(\bm{z}_{i},{\bm{z}_{j}})}(\bm{\xi}_{0}^{1},\cdots,\widetilde{\bm{\xi}}_{0}^{k},\cdots,\bm{\xi}_{0}^{N})\big|\\ \label{Eq:customer}
&=\dfrac{1}{N}\Big|M_{-\varphi(\bm{z}_{i};\cdot)\varphi(\bm{z}_{j};\cdot)}^{{1\over 2\lambda}}(\bm{\xi}^{k}_{0})-M_{-\varphi(\bm{z}_{i},\cdot)\varphi({\bm{z}_{j}},\cdot)}^{{1\over 2\lambda}}(\widetilde{\bm{\xi}}^{k}_{0})\Big|.
\end{align}
Without loss of generality suppose $M_{-\varphi(\bm{z}_{i};\cdot)\varphi(\bm{z}_{j};\cdot)}^{{1\over 2\lambda}}(\bm{\xi}^{k}_{0})\geq M_{-\varphi(\bm{z}_{i};\cdot)\varphi(\bm{z}_{j};\cdot)}^{{1\over 2\lambda}}(\widetilde{\bm{\xi}}^{k}_{0})$. Then, 
\begin{align}
\nonumber
&M_{-\varphi(\bm{z}_{i};\cdot)\varphi(\bm{z}_{j};\cdot)}^{{1\over 2\lambda}}(\bm{\xi}^{k}_{0})-M_{-\varphi(\bm{z}_{i},\cdot)\varphi({\bm{z}_{j}},\cdot)}^{{1\over 2\lambda}}(\widetilde{\bm{\xi}}^{k}_{0})\\ \nonumber
&=\inf_{\bm{\zeta}\in \real^{D}}\left\{\lambda\|\bm{\zeta}-\bm{\xi}_{0}^{k}\|_{2}^{2}-\varphi(\bm{z}_{i};\bm{\zeta})\varphi(\bm{z}_{j};\bm{\zeta})  \right\} -\inf_{\bm{\zeta}\in \real^{D}}\left\{\lambda\|\bm{\zeta}-\widetilde{\bm{\xi}}_{0}^{k}\|_{2}^{2}-\varphi(\bm{z}_{i};\bm{\zeta})\varphi(\bm{z}_{j};\bm{\zeta})  \right\} \\ \nonumber
&\stackrel{\rm{(a)}}{\leq} -\varphi(\bm{z}_{i};\bm{\xi}_{0}^{k})\varphi(\bm{z}_{j};\bm{\xi}_{0}^{k})-\inf_{\bm{\zeta}\in \real^{D}}\left\{\lambda\|\bm{\zeta}-\widetilde{\bm{\xi}}_{0}^{k}\|_{2}^{2}-\varphi(\bm{z}_{i};\bm{\zeta})\varphi(\bm{z}_{j};\bm{\zeta})  \right\}\\  \nonumber
&\stackrel{\rm{(b)}}{\leq} -\varphi(\bm{z}_{i};\bm{\xi}_{0}^{k})\varphi(\bm{z}_{j};\bm{\xi}_{0}^{k})+\sup_{\bm{\zeta}\in \real^{D}}\left\{\varphi(\bm{z}_{i};\bm{\zeta})\varphi(\bm{z}_{j};\bm{\zeta})  \right\}\\ \label{Eq:Mystery_1}
&\stackrel{\rm{(c)}}{\leq} 2L^{2},
\end{align}
where $\mathrm{(a)}$ follows by letting $\bm{\zeta}=\bm{\xi}_{0}^{k}$ in the first optimization problem, $\mathrm{(b)}$ follows by using the fact that $-\lambda\|\bm{\zeta}-\widetilde{\bm{\xi}}_{0}^{k}\|_{2}$ is non-positive for any $\bm{\zeta}\in \real^{D}$ and can be dropped, and $\mathrm{(c)}$ follows from Assumption $\mathbf{(A.2)}$.

Now, plugging the upper bound in Eq. \eqref{Eq:Mystery_1}  into Eq.  \eqref{Eq:customer} yields
\begin{align}
\nonumber
\big|T^{\lambda}_{(\bm{z}_{i},\bm{z}_{j})}(\bm{\xi}_{0}^{1},\cdots,\bm{\xi}_{0}^{k},\cdots,\bm{\xi}_{0}^{N})-T^{\lambda}_{(\bm{z}_{i},\bm{z}_{j})}(\bm{\xi}_{0}^{1},\cdots,\widetilde{\bm{\xi}}_{0}^{k},\cdots,\bm{\xi}_{0}^{N})\big|\leq \dfrac{2L^{2}}{N}.
\end{align}
From McDiarmid's Martingale inequality \cite{mcdiarmid1989method} and the union bound, we obtain that
\begin{align}
\prob\left(\cup_{\bm{z}_{i},\bm{z}_{j}\in \mathcal{Z}} |T^{\lambda}_{(\bm{z}_{i},\bm{z}_{j})}(\bm{\xi}_{0}^{1},\cdots,\bm{\xi}_{0}^{N})|\geq \delta \right)\leq \left(\dfrac{4\mathrm{diam}(\mathcal{X})}{\epsilon} \right)^{d}\cdot\exp\left(-{N\delta^{2}\over L^{2}}\right),
\end{align}
for all $\lambda\geq 0$. Now, consider arbitrary points $(\bm{x},\widehat{\bm{x}})\in \mathcal{X}\times \mathcal{X}$. Let the center of the balls containing those points be $\bm{z}_{i},\bm{z}_{j}\in \mathcal{Z}$, \textit{i.e.}, $\bm{x}\in \ball_{\varepsilon}(\bm{z}_{i})$ and $\widehat{\bm{x}}\in \ball_{\varepsilon}(\bm{z}_{j})$ for some $\bm{z}_{i},\bm{z}_{j}\in \mathcal{Z}$. Then, by the triangle inequality, we have that
\small{\begin{align}
	\nonumber
	|T_{(\bm{x},\widehat{\bm{x}})}^{\lambda}(\bm{\xi}_{0}^{1},\cdots,\bm{\xi}_{0}^{N})&- T_{(\bm{z}_{i},\bm{z}_{j})}(\bm{\xi}_{0}^{1},\cdots,\bm{\xi}_{0}^{N})|\\ \nonumber
	&\leq 
	|T_{(\bm{x},\widehat{\bm{x}})}^{\lambda}(\bm{\xi}_{0}^{1},\cdots,\bm{\xi}_{0}^{N})-T^{\lambda}_{(\bm{z}_{i},\widehat{\bm{x}})}(\bm{\xi}_{0}^{1},\cdots,\bm{\xi}_{0}^{N})|\\ \nonumber
	&+|T^{\lambda}_{(\bm{z}_{i},\widehat{\bm{x}})}(\bm{\xi}_{0}^{1},\cdots,\bm{\xi}_{0}^{N})-T^{\lambda}_{(\bm{z}_{i},{\bm{z}_{j}})}(\bm{\xi}_{0}^{1},\cdots,\bm{\xi}_{0}^{N})|\\ \nonumber
	&\leq \|\nabla_{\bm{x}} T_{(\bm{x},\widehat{\bm{x}})}^{\lambda}(\bm{\xi}_{0}^{1},\cdots,\bm{\xi}_{0}^{N}) \|_{2}\|\bm{x}-\bm{z}_{i}\|_{2}+ \|\nabla_{\widehat{\bm{x}}} T_{(\bm{x},\widehat{\bm{x}})}^{\lambda}(\bm{\xi}_{0}^{1},\cdots,\bm{\xi}_{0}^{N}) \|_{2}\|\widehat{\bm{x}}-\bm{z}_{j}\|_{2}\\
	&\leq 2L_{T}\epsilon,
	\end{align}}\normalsize
where $L_{T}=L_{T}(\bm{\xi}_{0}^{1},\cdots,\bm{\xi}_{0}^{N})\df \sup_{\bm{x},\widehat{\bm{x}}\in \mathcal{X}}\|\nabla_{\bm{x}}T_{(\bm{x},\widehat{\bm{x}})}^{\lambda}(\bm{\xi}_{0}^{1},\cdots,\bm{\xi}_{0}^{N})\|_{2}$ is the Lipschitz constant of the mapping $T$. Note that the Lipschitz constant $L_{T}$ is a random variable with respect to the random feature samples $\bm{\xi}_{0},\cdots,\bm{\xi}_{N}$. Let $(\bm{x}_{\ast},\widehat{\bm{x}}_{\ast})\df \arg\sup_{\bm{x},\widehat{\bm{x}}\in \mathcal{X}}\|\nabla_{\bm{x}}T_{(\bm{x},\widehat{\bm{x}})}^{\lambda}(\bm{\xi}_{0}^{1},\cdots,\bm{\xi}_{0}^{N})\|_{2}$. We compute an upper bound on the second moment of the random variable $L_{T}$ as follows
\small{\begin{align}
	\nonumber
	\expect_{\mu_{0}}\big[L_{T}^{2}\big]&=\expect_{\mu_{0}}\left[\|\nabla_{\bm{x}}T^{\lambda}_{(\bm{x}_{\ast},\widehat{\bm{x}}_{\ast})} (\bm{\xi}_{0}^{1},\cdots,\bm{\xi}_{0}^{N})\|_{2}^{2}\right]\\ \nonumber
	&=\expect_{\mu_{0}}\left[\left\|\dfrac{1}{N}\sum_{k=1}^{N}\nabla_{\bm{x}}M_{-\varphi(\bm{x}_{\ast};\cdot)\varphi(\widehat{\bm{x}}_{\ast};\cdot)}^{1\over 2\lambda}(\bm{\xi}_{0}^{k})-\int_{\real^{D}}\nabla_{\bm{x}}M_{-\varphi(\bm{x}_{\ast};\cdot)\varphi(\widehat{\bm{x}}_{\ast};\cdot)}^{1\over 2\lambda}(\bm{\xi})\mu_{0}(\mathrm{d}\bm{\xi})\right\|_{2}^{2} \right]\\ \nonumber
	&= \expect_{\mu_{0}}\left[\left\|\dfrac{1}{N}\sum_{k=1}^{N}\nabla_{\bm{x}} M^{1\over 2\lambda}_{-\varphi(\bm{x}_{\ast};\cdot)\varphi(\widehat{\bm{x}}_{\ast};\cdot)}(\bm{\xi}_{0}^{k})  \right\|_{2}^{2}\right]-\expect_{\mu_{0}}\left[\left\|\int_{\real^{D}}\nabla_{\bm{x}}M_{-\varphi(\bm{x}_{\ast};\cdot)\varphi(\widehat{\bm{x}}_{\ast};\cdot)}^{1\over 2\lambda}(\bm{\xi})\mu_{0}(\mathrm{d}\bm{\xi})\right\|_{2}^{2}\right]\\ \nonumber
	&\leq \dfrac{1}{N^{2}} \expect_{\mu_{0}}\left[\left\|\sum_{k=1}^{N}\nabla_{\bm{x}} M^{1\over 2\lambda}_{-\varphi(\bm{x}_{\ast};\cdot)\varphi(\widehat{\bm{x}}_{\ast};\cdot)}(\bm{\xi}_{0}^{k}) \right\|_{2}^{2}\right].
	\end{align}}\normalsize
We further proceed using the triangle inequality as well as the basic inequality $(a_{1}+a_{2}+\cdots+a_{N})^{2}\leq N(a_{1}^{2}+a_{2}^{2}+\cdots+a_{N}^{2})$,
\begin{align}
\nonumber
\expect_{\mu_{0}}\big[L_{T}^{2}\big]&=\dfrac{1}{N^{2}} \expect_{\mu_{0}}\left[\left\|\sum_{k=1}^{N}\nabla_{\bm{x}} M^{1\over 2\lambda}_{-\varphi(\bm{x}_{\ast};\cdot)\varphi(\widehat{\bm{x}}_{\ast};\cdot)}(\bm{\xi}_{0}^{k}) \right\|_{2}^{2}\right]\\ \nonumber
&\leq \dfrac{1}{N^{2}}\expect_{\mu_{0}}\left[\left(\sum_{k=1}^{N}\left\|\nabla_{\bm{x}} M^{1\over 2\lambda}_{-\varphi(\bm{x}_{\ast};\cdot)\varphi(\widehat{\bm{x}}_{\ast};\cdot)}(\bm{\xi}_{0}^{k}) \right\|_{2}\right)^{2}\right]\\ \label{Eq:to_proceed}
&\leq \dfrac{1}{N}\sum_{k=1}^{N}\expect_{\mu_{0}}\left[\left\|\nabla_{\bm{x}} M^{1\over 2\lambda}_{-\varphi(\bm{x}_{\ast};\cdot)\varphi(\widehat{\bm{x}}_{\ast};\cdot)}(\bm{\xi}_{0}^{k}) \right\|_{2}^{2}\right].
\end{align}
To proceed from \eqref{Eq:to_proceed}, we leverage the following lemma:
\begin{lemma}\textsc{(Moreau's Envelop of Parametric Functions)}
	\label{Lemma:Gradient_of_M}
	Consider the parametric function $f:\mathcal{X}\times\Theta\rightarrow \real$  and the associated Moreau's envelope for a given $\bm{\theta}\in \Theta\subset \real^{d}$:
	\begin{align}
	M_{f(\cdot;\bm{\theta})}^{\beta}(\bm{x})=\inf_{\bm{y}\in \mathcal{X}}\left\{\dfrac{1}{2\beta}\|\bm{x}-\bm{y}\|_{2}^{2}+f(\bm{y};\bm{\theta})\right\}.
	\end{align}	
	Furthermore, define the proximal operator as follows
	\begin{align}
	\mathrm{Prox}_{f(\cdot;\bm{\theta})}^{\beta}(\bm{x})=\arg\inf_{\bm{y}\in \mathcal{X}}\left\{\dfrac{1}{2\beta}\|\bm{x}-\bm{y}\|_{2}^{2}+f(\bm{y};\bm{\theta})\right\}.
	\end{align}
	Then, Moreau's envelope has the following upper bound
	\begin{align}
	\label{Eq:cruelty}
	\left\|\nabla_{\bm{\theta}} M_{f(\cdot;\bm{\theta})}^{\beta}(\bm{x})\right\|_{2}\leq \left\|\nabla_{\bm{\theta}}f\left(\mathrm{Prox}^{\beta}_{f(\cdot;\bm{\theta})}(\bm{x});\bm{\theta}\right)\right\|_{2}.
	\end{align}
\end{lemma}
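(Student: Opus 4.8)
The plan is to use a Danskin-type (envelope-theorem) argument, packaged so that only feasibility of the proximal point — not uniqueness of the minimizer or differentiability of the envelope — is really needed. Fix $\bm{x}$ and, for each $\bm{\theta}$, write $\bm{y}_{\bm{\theta}} \df \mathrm{Prox}_{f(\cdot;\bm{\theta})}^{\beta}(\bm{x})$ for a minimizer of $\bm{y} \mapsto \frac{1}{2\beta}\|\bm{x} - \bm{y}\|_2^2 + f(\bm{y};\bm{\theta})$; such a minimizer exists since the quadratic term is coercive and $f(\cdot;\bm{\theta})$ is bounded below (in the use-case of \eqref{Eq:to_proceed}, $f = -\varphi(\bm{x};\cdot)\varphi(\widehat{\bm{x}};\cdot)$ is bounded by Assumption $\mathbf{(A.2)}$). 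The key elementary observation is a one-sided suboptimality bound: for any $\bm{\theta},\bm{\theta}'$, the point $\bm{y}_{\bm{\theta}}$ is feasible (though generally not optimal) for the problem defining $M_{f(\cdot;\bm{\theta}')}^{\beta}(\bm{x})$, hence
\begin{align*}
M_{f(\cdot;\bm{\theta}')}^{\beta}(\bm{x}) - M_{f(\cdot;\bm{\theta})}^{\beta}(\bm{x}) &\leq \left(\dfrac{1}{2\beta}\|\bm{x}-\bm{y}_{\bm{\theta}}\|_2^2 + f(\bm{y}_{\bm{\theta}};\bm{\theta}')\right) - \left(\dfrac{1}{2\beta}\|\bm{x}-\bm{y}_{\bm{\theta}}\|_2^2 + f(\bm{y}_{\bm{\theta}};\bm{\theta})\right)\\ &= f(\bm{y}_{\bm{\theta}};\bm{\theta}') - f(\bm{y}_{\bm{\theta}};\bm{\theta}).
\end{align*}

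Next I would differentiate along a ray. Fix a unit vector $\bm{u}$, set $\bm{\theta}' = \bm{\theta} + t\bm{u}$ with $t>0$, divide the displayed inequality by $t$, and let $t\downarrow 0$; because $f(\bm{y}_{\bm{\theta}};\cdot)$ is differentiable, the right-hand side converges to $\langle \nabla_{\bm{\theta}} f(\bm{y}_{\bm{\theta}};\bm{\theta}),\bm{u}\rangle$, which yields the upper Dini bound $D_{\bm{u}}^{+} M_{f(\cdot;\bm{\theta})}^{\beta}(\bm{x}) \leq \langle \nabla_{\bm{\theta}} f(\bm{y}_{\bm{\theta}};\bm{\theta}),\bm{u}\rangle$ for every $\bm{u}$. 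Applying the same bound to $-\bm{u}$ and adding shows that whenever $\bm{\theta}\mapsto M_{f(\cdot;\bm{\theta})}^{\beta}(\bm{x})$ is differentiable one in fact has $\nabla_{\bm{\theta}} M_{f(\cdot;\bm{\theta})}^{\beta}(\bm{x}) = \nabla_{\bm{\theta}} f(\bm{y}_{\bm{\theta}};\bm{\theta})$ when the prox is single-valued; in general the directional inequality $\langle \nabla_{\bm{\theta}} M_{f(\cdot;\bm{\theta})}^{\beta}(\bm{x}),\bm{u}\rangle \leq \langle \nabla_{\bm{\theta}} f(\bm{y}_{\bm{\theta}};\bm{\theta}),\bm{u}\rangle$ for all unit $\bm{u}$ is all that is required.

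Finally I would specialize $\bm{u}$ to the normalized envelope gradient: if $\nabla_{\bm{\theta}} M_{f(\cdot;\bm{\theta})}^{\beta}(\bm{x}) = \bm{0}$ then \eqref{Eq:cruelty} is trivial, and otherwise taking $\bm{u} = \nabla_{\bm{\theta}} M_{f(\cdot;\bm{\theta})}^{\beta}(\bm{x})/\|\nabla_{\bm{\theta}} M_{f(\cdot;\bm{\theta})}^{\beta}(\bm{x})\|_2$ makes the left-hand side equal to $\|\nabla_{\bm{\theta}} M_{f(\cdot;\bm{\theta})}^{\beta}(\bm{x})\|_2$, while Cauchy--Schwarz bounds the right-hand side by $\|\nabla_{\bm{\theta}} f(\mathrm{Prox}_{f(\cdot;\bm{\theta})}^{\beta}(\bm{x});\bm{\theta})\|_2$, which is precisely the claimed inequality \eqref{Eq:cruelty}. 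The step I expect to be the main obstacle is the passage to the directional derivative: one must legitimately interchange the limit $t\downarrow 0$ with the infimum defining $M$, which requires a mild regularity hypothesis — joint continuity of $f$ together with the boundedness and Lipschitz properties of $\nabla_{\bm{\theta}} f$ already assumed in $\mathbf{(A.2)}$ for the feature maps — so that the difference quotients of $M$ are uniformly controlled and the upper Dini derivative computation above is valid.
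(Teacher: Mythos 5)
Your proposal is correct and follows essentially the same route as the paper's proof: both establish a one-sided upper bound on the directional derivative of $M_{f(\cdot;\bm{\theta})}^{\beta}(\bm{x})$ by substituting the prox point $\mathrm{Prox}^{\beta}_{f(\cdot;\bm{\theta})}(\bm{x})$ as a feasible (though generally suboptimal) point in the perturbed problem defining $M_{f(\cdot;\bm{\theta}+t\bm{u})}^{\beta}(\bm{x})$, then apply Cauchy--Schwarz and specialize $\bm{u}$ to the normalized envelope gradient. Your packaging — stating the suboptimality inequality globally first and then differentiating along a ray — is a somewhat cleaner way to organize the same argument than the paper's informal Taylor-plus-$\mathcal{O}(\delta^2)$ step, and it also makes clearer that the limit/infimum interchange worry you raise at the end is in fact a non-issue here: feasibility alone produces a difference quotient bounded above by $\left(f(\bm{y}_{\bm{\theta}};\bm{\theta}+t\bm{u})-f(\bm{y}_{\bm{\theta}};\bm{\theta})\right)/t$, whose limit requires only differentiability of $f(\bm{y}_{\bm{\theta}};\cdot)$, not any uniform control of the infimum.
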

The proof is presented in Appendix \ref{Proof_of_Lemma:}.

Equipped with Inequality \eqref{Eq:cruelty} of Lemma \ref{Lemma:Gradient_of_M}, we now compute an upper bound on the right hand side of Eq. \eqref{Eq:to_proceed} as follows
\begin{align}
\expect_{\mu_{0}}\big[L_{T}^{2}\big]\leq \dfrac{1}{N}\sum_{k=1}^{N}\expect_{\mu_{0}}[|\varphi(\widehat{\bm{x}}_{\ast};\bm{\xi}) |^{2}\cdot\|\nabla_{\bm{x}}\varphi(\bm{x}_{\ast};\bm{\xi}_{0}^{k}) \|_{2}^{2}]\leq L^{4},
\end{align}
where the last inequality is due to $(\mathbf{A.2})$.

Invoking Markov's inequality now yields
\begin{align}
\nonumber
\prob\left(|T_{(\bm{x},\widehat{\bm{x}})}^{\lambda}(\bm{\xi}_{0}^{1},\cdots,\bm{\xi}_{0}^{N})- T^{\lambda}_{(\bm{z}_{i},\bm{z}_{j})}(\bm{\xi}_{0}^{1},\cdots,\bm{\xi}_{0}^{N})|\geq \delta \right)&=\prob\left( L_{T}\geq {\delta\over 2\epsilon} \right)\\ \nonumber
&\leq \left(\dfrac{2\epsilon}{\delta}\right)^{2}\expect_{\mu_{0}}[L_{T}^{2}]\\ 
&\leq \left(\dfrac{2\epsilon}{\delta}\right)^{2}L^{4}.
\end{align}
Now, using the union bound, for every arbitrary pair of data points $(\bm{x},\widehat{\bm{x}})\in \mathcal{X}\times \mathcal{X}$ the following inequality holds
\begin{align}
\nonumber
\prob\left(|T_{(\bm{x},\widehat{\bm{x}})}^{\lambda}(\bm{\xi}_{0}^{1},\cdots,\bm{\xi}_{0}^{N})|\geq \delta \right) \nonumber&\leq \prob\left(|T^{\lambda}_{(\bm{z}_{i},\bm{z}_{j})}(\bm{\xi}_{0}^{1},\cdots,\bm{\xi}_{0}^{N})|\geq \delta/2 \right)\\
&\hspace{4mm}+\prob\left(|T_{(\bm{x},\widehat{\bm{x}})}^{\lambda}(\bm{\xi}_{0}^{1},\cdots,\bm{\xi}_{0}^{N})- T^{\lambda}_{(\bm{z}_{i},\bm{z}_{j})}(\bm{\xi}_{0}^{1},\cdots,\bm{\xi}_{0}^{N})|\geq \delta/2 \right)\\ \nonumber
&\leq \left(\dfrac{2\epsilon}{\delta}\right)^{2}L^{4}+\left(\dfrac{4\mathrm{diam}(\mathcal{X})}{\epsilon} \right)^{d}\cdot\exp\left(-{N\delta^{2}\over L^{2}}\right).
\end{align}
Following the proposal of \cite{rahimi2008random}, we choose $\epsilon=(\kappa_{1}/\kappa_{2})^{1\over {d+2}}$, where $\kappa_{1}\df (4\mathrm{diam}(\mathcal{X}))^{d}\cdot e^{-{2N\lambda\delta^{2}\over 2L^{2}\lambda+L^{4}}}$ and $\kappa_{2}\df (2/\delta)^{2}L^{4}$. Then,
\begin{align}
\nonumber
\prob\left(\sup_{\bm{x},\widehat{\bm{x}}\in \mathcal{X}}|T_{(\bm{x},\widehat{\bm{x}})}^{\lambda}(\bm{\xi}_{0}^{1},\cdots,\bm{\xi}_{0}^{N})|\geq \delta \right) \nonumber&\leq  2^{8}\left(\dfrac{L^{2}\mathrm{diam}(\mathcal{X})}{\delta} \right)^{2}\cdot\exp\left(-{N\delta^{2}\over L^{2}(d+2)}\right).
\end{align}
Thus, with the probability of at least $1-\varrho$, the following inequality holds
\begin{align}
\sup_{\lambda\geq 0}\sup_{\bm{x},\widehat{\bm{x}}\in \mathcal{X}}|T_{(\bm{x},\widehat{\bm{x}})}^{\lambda}(\bm{\xi}_{0}^{1},\cdots,\bm{\xi}_{0}^{N})|\leq \left(\dfrac{L^{2}(d+2)}{N}\mathrm{W}\left(\dfrac{2^{8}N\mathrm{diam}^{2}(\mathcal{X})}{\varrho}\right)\right)^{1\over 2},
\end{align}
where $\mathrm{W}(\cdot)$ is the Lambert $W$-function.\footnote{Recall that the lambert $W$-function is the inverse of the function $f(W)=We^{W}$.} Since $\mathrm{W}(x)\leq \ln(x)$ for $x>e$, we can rewrite the upper bound in terms of elementary functions
\begin{align}
\label{Eq:ddoops}
\sup_{\lambda\geq 0}\sup_{\bm{x},\widehat{\bm{x}}\in \mathcal{X}}|T_{(\bm{x},\widehat{\bm{x}})}^{\lambda}(\bm{\xi}_{0}^{1},\cdots,\bm{\xi}_{0}^{N})|\leq \sqrt{\dfrac{L^{2}(d+2)}{N}}\ln^{1\over 2}\left(\dfrac{2^{8}N\mathrm{diam}^{2}(\mathcal{X})}{\varrho}\right), 
\end{align}
provided that $N$ is sufficiently large and/or $\varrho$ is sufficiently small so that ${2^{8}N\mathrm{diam}^{2}(\mathcal{X})\over \varrho}\geq e$. Plugging Inequality  \eqref{Eq:ddoops} in Eq. \eqref{Eq:Trick_2} now results in the following inequality
\small{\begin{align}
	\Big|\sup_{\mu\in \mathcal{P}}\widehat{\mathrm{MMD}}_{\mu}[P_{\bm{V}},P_{\bm{W}}]&-\sup_{\widehat{\mu}^{N}\in \mathcal{P}_{N}} \widehat{\mathrm{MMD}}_{\widehat{\mu}^{N}}[P_{\bm{V}},P_{\bm{W}}]\Big|\leq \sqrt{\dfrac{L^{2}(d+2)}{N}}\ln^{1\over 2}\left(\dfrac{2^{8}N\mathrm{diam}^{2}(\mathcal{X})}{\varrho}\right),
	\end{align}}\normalsize
for all $\bm{W}\in \mathcal{W}$. Employing\eqref{Eq:Basic_Inequality} from Lemma \ref{Lemma:A_Basic_Inequality} now yields the following upper bound
\begin{align}
\nonumber
\mathsf{A}_{2}&= \Big|\min_{\bm{W}\in \mathcal{W}}\sup_{\mu\in \mathcal{P}}\widehat{\mathrm{MMD}}_{\mu}[P_{\bm{V}},P_{\bm{W}}]-\min_{\bm{W}\in \mathcal{W}}\sup_{\widehat{\mu}^{N}\in \mathcal{P}_{N}}\widehat{\mathrm{MMD}}_{\widehat{\mu}^{N}}\Big[P_{\bm{V}},P_{\bm{W}}\Big] \Big|\\
&\leq \sqrt{\dfrac{L^{2}(d+2)}{N}}\ln^{1\over 2}\left(\dfrac{2^{8}N\mathrm{diam}^{2}(\mathcal{X})}{\varrho}\right).
\end{align}

\vspace*{5mm}
\noindent\textbf{Upper bound on $\mathsf{A}_{3}$}:

Recall that the solution of the empirical risk function of Eq.  \eqref{Eq:Empirical_Objective_Function} is denoted by
\begin{align}
\nonumber
(\widehat{\bm{W}}_{\ast}^{N},\widehat{\mu}^{N}_{\ast})&\df \arg\min_{\bm{W}\in \mathcal{W}} \arg\inf_{\widehat{\mu}^{N}\in \mathcal{P}_{N}} \widehat{\mathrm{MMD}}_{\widehat{\mu}^{N}}^{\alpha}[P_{\bm{V}},P_{\bm{W}}]\\ \nonumber
&=\arg\min_{\bm{W}\in \mathcal{W}} \arg\sup_{\widehat{\mu}^{N}\in \mathcal{P}_{N}}\dfrac{8}{n(n-1)} \sum_{1\leq i<j\leq n}y_{i}y_{j}\expect_{\widehat{\mu}^{N}\in \mathcal{P}_{N}}[\varphi(\bm{x}_{i};\bm{\xi})\varphi(\bm{x}_{j};\bm{\xi})]\\ \label{Eq:optimization111}
&\hspace{30mm}-\dfrac{8}{n(n-1)\alpha} \sum_{1\leq i<j\leq n} (\expect_{\widehat{\mu}^{N}}[\varphi(\bm{x}_{i};\bm{\xi})\varphi(\bm{x}_{j};\bm{\xi})])^{2}.
\end{align}
We also define the solution of the empirical kernel alignment as follows
\begin{align}
\nonumber
(\widehat{\bm{W}}^{N}_{\diamond},\widehat{\mu}^{N}_{\diamond})&\df  \arg\min_{\bm{W}\in \mathcal{W}} \arg\sup_{\widehat{\mu}^{N}\in \mathcal{P}_{N}} \widehat{\mathrm{MMD}}_{\widehat{\mu}^{N}}[P_{\bm{V}},P_{\bm{W}}]\\ \label{Eq:Optimization222}
&=\dfrac{8}{n(n-1)}\sum_{1\leq i<j\leq n}y_{i}y_{j}\expect_{\widehat{\mu}^{N}}[\varphi(\bm{x}_{i};\bm{\xi})\varphi(\bm{x}_{j};\bm{\xi})].
\end{align}
Due to the optimality of the empirical measure $\widehat{\mu}_{\ast}^{N}$ for the inner optimization in Eq. \eqref{Eq:optimization111}, the following inequality holds
\begin{align}
\nonumber
\widehat{\mathrm{MMD}}_{\widehat{\mu}_{\diamond}^{N}}^{\alpha}\big[P_{\bm{V}},P_{\widehat{\bm{W}}^{N}_{\ast}}\big] &\leq \widehat{\mathrm{MMD}}_{\widehat{\mu}_{\ast}^{N}}^{\alpha}[P_{\bm{V}},P_{\widehat{\bm{W}}^{N}_{\ast}}]
\\  \label{Eq:rearrange}
&\leq  \dfrac{8}{n(n-1)} \sum_{1\leq i<j\leq n}y_{i}y_{j}\expect_{\widehat{\mu}_{\ast}^{N}}[\varphi(\bm{x}_{i};\bm{\xi})\varphi(\bm{x}_{j};\bm{\xi})].
\end{align}
Upon expansion of $\widehat{\mathrm{MMD}}_{\widehat{\mu}_{\diamond}^{N}}^{\alpha}\big[P_{\bm{V}},P_{\widehat{\bm{W}}^{N}_{\ast}}\big]$, and after rearranging the terms in Eq. \eqref{Eq:rearrange}, we arrive at
\begin{align}
\nonumber
\widehat{\mathrm{MMD}}_{\widehat{\mu}_{\diamond}^{N}}&\big[P_{\bm{V}},P_{\widehat{\bm{W}}^{N}_{\ast}}\big]-\widehat{\mathrm{MMD}}_{\widehat{\mu}_{\ast}^{N}}\big[P_{\bm{V}},P_{\widehat{\bm{W}}^{N}_{\ast}}\big]\\  \nonumber
&=\dfrac{8}{n(n-1)} \sum_{1\leq i<j\leq n}y_{i}y_{j}(\expect_{\widehat{\mu}_{\diamond}^{N}}[\varphi(\bm{x}_{i};\bm{\xi})\varphi(\bm{x}_{j};\bm{\xi})]-\expect_{\widehat{\mu}_{\ast}^{N}}[\varphi(\bm{x}_{i};\bm{\xi})\varphi(\bm{x}_{j};\bm{\xi})])\\ \nonumber
&\leq  \dfrac{8}{n(n-1)\alpha} \sum_{1\leq i<j\leq n} \left(\expect_{\widehat{\mu}_{\diamond}^{N}}[\varphi(\bm{x}_{i};\bm{\xi})\varphi(\bm{x}_{j};\bm{\xi})]\right)^{2}\\  \label{Eq:Combine_1}
&\leq \dfrac{8L^{4}}{\alpha}, 
\end{align}
where the last inequality is due to the fact that $\|\varphi\|_{\infty}<L$ by $\textbf{(A.1)}$. Now, due to optimality of $\widehat{\bm{W}}^{N}_{\diamond}$ for the outer optimization problem in Eq. \eqref{Eq:Optimization222}, we have 
\begin{align}
\label{Eq:Combine_11}
\widehat{\mathrm{MMD}}_{\widehat{\mu}_{\diamond}^{N}}&\big[P_{\bm{V}},P_{\widehat{\bm{W}}^{N}_{\diamond}}\big]\leq \widehat{\mathrm{MMD}}_{\widehat{\mu}_{\diamond}^{N}}\big[P_{\bm{V}},P_{\widehat{\bm{W}}^{N}_{\ast}}\big].
\end{align}
Putting together Inequalities \eqref{Eq:Combine_1} and \eqref{Eq:Combine_11} yields
\begin{align}
\label{Eq:combine_MMD_1}
\widehat{\mathrm{MMD}}_{\widehat{\mu}_{\diamond}^{N}}&\big[P_{\bm{V}},P_{\widehat{\bm{W}}^{N}_{\diamond}}\big]-\widehat{\mathrm{MMD}}_{\widehat{\mu}_{\ast}^{N}}\big[P_{\bm{V}},P_{\widehat{\bm{W}}^{N}_{\ast}}\big]\leq \dfrac{8L^{4}}{\alpha}.
\end{align}
Similarly, due to the optimality of the empirical measure $\widehat{\mu}^{N}_{\diamond}$ for the optimization in Eq. \eqref{Eq:Optimization222} we have that
\begin{align}
\nonumber
\widehat{\mathrm{MMD}}_{\widehat{\mu}_{\ast}^{N}}\Big[P_{\bm{V}},P_{\widehat{\bm{W}}^{N}_{\ast}}\Big]&\leq \widehat{\mathrm{MMD}}_{\widehat{\mu}_{\ast}^{N}}\Big[P_{\bm{V}},P_{\widehat{\bm{W}}^{N}_{\diamond}}\Big]\\
\label{Eq:combine_MMD_2}
&\leq \widehat{\mathrm{MMD}}_{\widehat{\mu}_{\diamond}^{N}}\Big[P_{\bm{V}},P_{\widehat{\bm{W}}^{N}_{\diamond}}\Big].
\end{align}
Combining Eqs. \eqref{Eq:combine_MMD_1} and \eqref{Eq:combine_MMD_2} then yields
\begin{align}
\label{Eq:S1}
\mathsf{A}_{3}=\Big|\widehat{\mathrm{MMD}}_{\widehat{\mu}_{\ast}^{N}}\big[P_{\bm{V}},P_{\widehat{\bm{W}}^{N}_{\ast}}\big]-\widehat{\mathrm{MMD}}_{\widehat{\mu}_{\diamond}^{N}}\big[P_{\bm{V}},P_{\widehat{\bm{W}}^{N}_{\diamond}}\big]\Big|\leq \dfrac{8L^{4}}{\alpha}.
\end{align}

\vspace*{5mm}
\noindent\textbf{Upper bound on $\mathsf{A}_{4}$}: 

The upper bound on $\mathsf{A}_{4}$ can be obtained exactly the same way as $\mathsf{A}_{1}$. Ideed, from Eq. \eqref{Eq:varrho} it follows directly that
\begin{align}
\nonumber
\mathsf{A}_{4}&=\Big|\widehat{\mathrm{MMD}}_{\widehat{\mu}_{\ast}^{N}}\Big[P_{\bm{V}},P_{\widehat{\bm{W}}_{\ast}^{N}}\Big]-\mathrm{MMD}_{\widehat{\mu}^{N}_{\ast}}\Big[P_{\bm{V}},P_{\widehat{\bm{W}}^{N}_{\ast}}\Big]\Big|\\ \label{Eq:Bound_on_A4}
&\leq \sup_{\widehat{\mu}^{N}\in \mathcal{P}_{N}} \Big|\widehat{\mathrm{MMD}}_{\widehat{\mu}^{N}}\Big[P_{\bm{V}},P_{\widehat{\bm{W}}_{\ast}^{N}}\Big]-\mathrm{MMD}_{\widehat{\mu}^{N}}\Big[P_{\bm{V}},P_{\widehat{\bm{W}}^{N}_{\ast}}\Big]\Big|\\
&\leq   \min\left\{\dfrac{3^{1\over 4}\times 2^{11\over 2}\times L^{2}}{n} \ln^{1\over 2}\left(\dfrac{4}{\varrho}\right),\dfrac{9\times 2^{12}\times RL^{4} }{n^{2}}\ln\left(\dfrac{4e^{L^{4}\over 9}}{\varrho}\right) \right\}.
\end{align}

Now, plugging the derived upper bounds in $\mathsf{A}_{1}$-$\mathsf{A}_{4}$ in Eq. \eqref{Eq:A1-A4} and employing the union bound completes the proof.

$\hfill \square$
\subsection{Proof of Theorem \ref{Theorem:Density Evolution}}
\label{Appendix:Proof_of_Thm_McKean}

The proof has three main ingredients and follows the standard procedure in the literature, see, \textit{e.g.}, \cite{wang2017scaling,luo2017scaling}. In the first step, we identify the mean-field limit of the particle SGD in Eq. \eqref{Eq:SGD}.  In the second step, we prove the convergence of the measured-valued process $\{(\mu^{N}_{t})_{0\leq t\leq T}\}$ to the mean-field solution by establishing the pre-compactness of Sokhorhod space. Lastly, we prove the uniqueness of the mean-field solution of the particle SGD.

\textbf{Step 1-Identification of the scaling limit}: First, we identify the weak limit of converging sub-sequences via the action of the empirical measure $\widehat{\mu}_{m}^{N}(\bm{\xi})={1\over N}\sum_{k=1}^{N}\delta(\bm{\xi}-\bm{\xi}^{k}_{m})$ on a test function $f\in C_{b}^{3}(\real^{d}_{0})$. In particular, we use the standard techniques of computing the scaling limits from \cite{luo2017scaling}.

Recall that the action of an empirical measure on a bounded function is defined as follows
\begin{align}
\langle f,\widehat{\mu}_{m}^{N} \rangle\df \dfrac{1}{N}\sum_{k=1}^{N}f(\bm{\xi}^{k}_{m}).
\end{align}
We analyze the evolution of the empirical measure $\widehat{\mu}_{m}^{N}$ via its action on a test function $f \in C_{b}^{3}(\real^{p})$. Using Taylor's expansion, we obtain
\begin{align}
\nonumber
\langle f,\widehat{\mu}_{m+1}^{N} \rangle- \langle f,\widehat{\mu}_{m}^{N} \rangle
&=\langle f,\widehat{\mu}_{m+1}^{N} \rangle- \langle f,\widehat{\nu}_{m+1}^{N} \rangle \\    \nonumber
&=\dfrac{1}{N}\sum_{k=1}^{N}f(\bm{\xi}^{k}_{m+1})-f(\bm{\xi}^{k}_{m})\\ \nonumber
&=\dfrac{1}{N}\sum_{k=1}^{N}\nabla f(\bm{\xi}^{k}_{m})(\bm{\xi}^{k}_{m+1}-\bm{\xi}^{k}_{m})^{T}+R^{N}_{m}.
\end{align}
where $R_{m}^{N}$ is a remainder term defined as follows
\begin{align}
\label{Eq:Remainder_1}
R^{N}_{m}\df\dfrac{1}{N}\sum_{k=1}^{N}(\bm{\xi}_{m+1}^{k}-\bm{\xi}_{m}^{k})^{T}\nabla^{2}f(\widetilde{\bm{\xi}}^{k})(\bm{\xi}_{m+1}^{k}-\bm{\xi}_{m}^{k}),
\end{align}
where $\widetilde{\bm{\xi}}^{k}\df (\widetilde{\xi}^{k}(1),\cdots,\widetilde{\xi}^{k}(p))$, and $\widetilde{\xi}^{k}(i)\in [\xi_{m}^{k}(i),\xi_{m+1}^{k}(i)]$, for $i=1,2,\cdots,p$.

Plugging the difference term $(\bm{\xi}_{m+1}^{k}-\bm{\xi}_{m}^{k})$ from the SGD equation in Eq. \eqref{Eq:SGD} results in
\small{\begin{align}
	\label{Eq:Recast_Equation}
	&\langle f,\widehat{\mu}_{m+1}^{N} \rangle- \langle f,\widehat{\mu}_{m}^{N} \rangle\\ \nonumber
	&=\dfrac{\eta}{N^{2}\alpha}\sum_{k=1}^{N}\nabla f(\bm{\xi}_{m}^{k})\cdot \left( \left({1\over N}\sum_{\ell=1}^{N}\varphi(\bm{x}_{m};\bm{\xi}_{m}^{\ell})\varphi(\widetilde{\bm{x}}_{m};\bm{\xi}_{m}^{\ell})-\alpha y_{m}\widetilde{y}_{m}\right)\nabla_{\bm{\xi}}\Big(\varphi(\bm{x}_{m};\bm{\xi}_{m}^{k})\varphi(\widetilde{\bm{x}}_{m};\bm{\xi}_{m}^{k})\Big)\right)+R^{N}_{m}.
	\end{align}}\normalsize
Now, we define the drift and Martingale terms as follows

\begin{subequations}
	{\small\begin{align}
		\label{Eq:D}
		D_{m}^{N}&\df \dfrac{\eta}{N\alpha}\iint_{\mathcal{X}\times \mathcal{Y}} \left( \langle\varphi(\bm{x},\bm{\xi})\varphi(\widetilde{\bm{x}},\bm{\xi}), \widehat{\mu}^{N}_{m} \rangle-\alpha y \widetilde{y} \right)\\ \nonumber
		&\hspace{10mm}\times \langle \nabla f(\bm{\xi})(\varphi(\widetilde{\bm{x}};\bm{\xi})\nabla_{\bm{\xi}}\varphi(\bm{x};\bm{\xi})+\varphi(\bm{x};\bm{\xi})\nabla_{\bm{\xi}}\varphi(\widetilde{\bm{x}};\bm{\xi})),\widehat{\mu}^{N}_{m}\rangle \mathrm{d} P_{\bm{x},y}^{\otimes 2}((\bm{x},y),(\widetilde{\bm{x}},\widetilde{y}))\\
		\label{Eq:M}
		M_{m}^{N}&\df {\eta \over N\alpha}\left( \langle\varphi(\bm{x}_{m},\bm{\xi})\varphi(\widetilde{\bm{x}}_{m},\bm{\xi}), \widehat{\mu}^{N}_{m} \rangle-\alpha y_{m} \widetilde{y}_{m} \right)\\  \nonumber
		&\hspace{10mm}\times \langle \nabla f(\bm{\xi})(\varphi(\widetilde{\bm{x}}_{m};\bm{\xi})\nabla_{\bm{\xi}}\varphi(\bm{x}_{m};\bm{\xi})+\varphi(\widetilde{\bm{x}}_{m};\bm{\xi})\nabla_{\bm{\xi}}\varphi(\bm{x}_{m};\bm{\xi})),\widehat{\mu}^{N}_{m}\rangle-\mathcal{D}_{m}^{N}.
		\end{align}}\normalsize
\end{subequations}
respectively. Using the definitions of $D_{m}^{N}$ and $M_{m}^{N}$ in Eqs. \eqref{Eq:D}-\eqref{Eq:M}, we recast Equation \eqref{Eq:Recast_Equation} as follows
\begin{align}
\label{Eq:difference_term}
\langle f,\widehat{\mu}^{N}_{m+1} \rangle-\langle f,\widehat{\mu}^{N}_{m} \rangle=D_{m}^{N}+M_{m}^{N}+R_{m}^{N}.
\end{align}
Summation over $\ell=0,1,2\cdots,m-1$ and using the telescopic sum yields
\begin{align}
\label{Eq:telescopic_sum}
\langle f,\widehat{\mu}_{m}^{N}\rangle-\langle f,\widehat{\mu}_{0}^{N}\rangle=\sum_{\ell=0}^{m-1}D_{\ell}^{N}+\sum_{\ell=0}^{m-1} M_{\ell}^{N}+\sum_{\ell=0}^{m-1}R_{\ell}^{N}.
\end{align}
We also define the following continuous embedding of the drift, martingale, and the remainder terms as follows
\begin{subequations}
	\begin{align}
	\hspace{15mm} \mathcal{D}^{N}_{t}&\df \sum_{\ell=0}^{\lfloor Nt \rfloor}D_{\ell}^{N} \\ \label{Eq:M1}
	\mathcal{M}^{N}_{t}&\df \sum_{\ell=0}^{\lfloor Nt \rfloor}M_{\ell}^{N}\\ \label{Eq:Remainder_2}
	\mathcal{R}^{N}_{t}&\df \sum_{\ell=0}^{\lfloor Nt\rfloor}R_{\ell}^{N}, \quad t\in (0,T].
	\end{align}
\end{subequations}
The scaled empirical measure $\mu_{t}^{N}\df \widehat{\mu}^{N}_{\lfloor Nt \rfloor}$ then can be written as follows
\begin{align}
\label{Eq:Expression}
\langle f,\mu^{N}_{t} \rangle-\langle f,\mu^{N}_{0} \rangle&=\mathcal{D}^{N}_{t}+\mathcal{M}^{N}_{t}+\mathcal{R}^{N}_{t}.
\end{align}
Since the drift process $(D^{N}_{t})_{0\leq t\leq T}$ is a piecewise c\'{a}dl\'{a}g process, we have
\begin{align}
D_{\ell}^{N}=\int_{{\ell \over N}}^{{\ell+1\over N}}R[\mu_{s}]\mathrm{d}s,
\end{align}
where the functional $R[\mu_{s}]$ is defined as follows 
\begin{align}
\label{Eq:Functional_Rs}
R[\mu_{s}]&\df {\eta \over \alpha}\iint_{\mathcal{X}\times \mathcal{Y}} \left( \langle\varphi(\bm{x},\bm{\xi})\varphi(\widetilde{\bm{x}},\bm{\xi}),\mu_{s} \rangle-\alpha y \widetilde{y} \right)\\ \nonumber
&\hspace{10mm}\times \langle \nabla f(\bm{\xi})(\varphi(\widetilde{\bm{x}};\bm{\xi})\nabla_{\bm{\xi}}\varphi(\bm{x};\bm{\xi})+\varphi(\bm{x};\bm{\xi})\nabla_{\bm{\xi}}\varphi(\widetilde{\bm{x}};\bm{\xi}))^{T},\mu_{s}\rangle P_{\bm{x},y}^{\otimes 2}((\mathrm{d}\bm{x},\mathrm{d}\widetilde{\bm{x}}),(\mathrm{d}y,\mathrm{d}\widetilde{y})).
\end{align}
Therefore, the expression in Eq. \eqref{Eq:Expression} can be rewritten as follows
\begin{align}
\label{Eq:Yield}
\langle f,\mu^{N}_{t} \rangle-\langle f,\mu^{N}_{0} \rangle&=\int_{0}^{t}R[\mu_{s}]\mathrm{d}s +\mathcal{M}^{N}_{t}+\mathcal{R}^{N}_{t}.
\end{align}

In the following lemma, we prove that the remainder term $\sup_{0\leq t\leq T}|\mathcal{R}^{N}_{t}|$ vanishes in probabilistic sense as the number of particles tends to infinity $N\rightarrow \infty$:
\begin{lemma}\textsc{(Large $N$-Limit of the Remainder Process)}
	\label{Lemma:Remainder}
	Consider the remainder process $(\mathcal{R}^{N}_{t})_{0\leq t\leq T}$ defined via scaling in Eqs. \eqref{Eq:Remainder_1}-\eqref{Eq:Remainder_2}. Then, there exists a constant $C_{0}>0$ such that
	\begin{align}
	\label{Eq:Cheater_1}
	\sup_{0\leq t\leq T}|\mathcal{R}_{t}^{N}|\leq \dfrac{C_{0}T}{N}\left(\eta L^{2}+{2\eta L^{4}\over \alpha}\right).
	\end{align}
	and thus $\lim \sup_{N\rightarrow \infty}\sup_{0\leq t\leq T}|\mathcal{R}_{t}^{N}|=0$ almost surely.
\end{lemma}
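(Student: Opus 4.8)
The plan is a purely deterministic estimate: control the one-step particle displacement uniformly, feed it into the quadratic (Hessian) remainder $R^N_m$, and sum the $\mathcal{O}(NT)$ contributions — each of size $\mathcal{O}(1/N^2)$ — to land at an $\mathcal{O}(1/N)$ bound.

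\emph{Step 1 (uniform increment bound).} Reading the increment off the SGD recursion \eqref{Eq:SGD}, one has $\|\bm{\xi}^k_{m+1}-\bm{\xi}^k_m\|_2 = \tfrac{\eta}{N}\,\bigl|y_m\widetilde{y}_m - \tfrac{1}{\alpha N}\sum_{\ell=1}^N \varphi(\bm{x}_m;\bm{\xi}^\ell_m)\varphi(\widetilde{\bm{x}}_m;\bm{\xi}^\ell_m)\bigr|\cdot \bigl\|\nabla_{\bm{\xi}}\bigl(\varphi(\bm{x}_m;\bm{\xi}^k_m)\varphi(\widetilde{\bm{x}}_m;\bm{\xi}^k_m)\bigr)\bigr\|_2$. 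By Assumption $\mathbf{(A.2)}$, $\|\varphi\|_\infty\le L$ and $\|\nabla_{\bm{\xi}}\varphi\|_\infty\le L$, so the product rule gives $\|\nabla_{\bm{\xi}}(\varphi\varphi)\|_2\le 2L^2$; moreover $|y_m\widetilde{y}_m|=1$ and $\bigl|\tfrac{1}{\alpha N}\sum_\ell\varphi\varphi\bigr|\le L^2/\alpha$. Hence $\|\bm{\xi}^k_{m+1}-\bm{\xi}^k_m\|_2\le \tfrac{\eta}{N}\bigl(2L^2+\tfrac{2L^4}{\alpha}\bigr)$. The point to stress is that this bound is deterministic and uniform over the particle index $k$, over the step $m$, and over the random draws $(\bm{x}_m,y_m),(\widetilde{\bm{x}}_m,\widetilde{y}_m)$ — which is precisely what the boundedness part of $\mathbf{(A.2)}$ provides.

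\emph{Step 2 (remainder bound and summation).} Since $f\in C_b^3$, its Hessian is globally bounded, $C_f\df\sup_{\bm{\xi}}\|\nabla^2 f(\bm{\xi})\|_{\mathrm{op}}<\infty$; as $\widetilde{\bm{\xi}}^k$ in \eqref{Eq:Remainder_1} lies on the segment joining $\bm{\xi}^k_m$ and $\bm{\xi}^k_{m+1}$, each summand of $R^N_m$ is at most $C_f\|\bm{\xi}^k_{m+1}-\bm{\xi}^k_m\|_2^2$, whence $|R^N_m|\le C_f\,\tfrac{\eta^2}{N^2}\bigl(2L^2+\tfrac{2L^4}{\alpha}\bigr)^2$, again uniformly in $m$. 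Summing the telescoped series \eqref{Eq:Remainder_2} over the $\lfloor Nt\rfloor+1\le NT+1$ indices and taking $\sup_{0\le t\le T}$ gives $\sup_{0\le t\le T}|\mathcal{R}^N_t|\le (NT+1)\,C_f\,\tfrac{\eta^2}{N^2}\bigl(2L^2+\tfrac{2L^4}{\alpha}\bigr)^2\le \tfrac{2TC_f\eta^2}{N}\bigl(2L^2+\tfrac{2L^4}{\alpha}\bigr)^2$ for $N\ge 1/T$. Collecting $C_f$, one factor of $\eta$, and the numerical constants into $C_0$ then yields \eqref{Eq:Cheater_1}.

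\emph{Step 3 (almost sure limit) and main difficulty.} The right-hand side of \eqref{Eq:Cheater_1} is a nonrandom quantity that tends to $0$ as $N\to\infty$, so $\limsup_{N\to\infty}\sup_{0\le t\le T}|\mathcal{R}^N_t|=0$ holds surely, a fortiori almost surely. There is no genuine obstacle here; the only care needed is to verify that the increment and Hessian estimates are uniform in $k$, $m$ and in the stochastic sampling, so that naively summing $\mathcal{O}(NT)$ terms of order $\mathcal{O}(1/N^2)$ is legitimate — which is exactly why Assumptions $\mathbf{(A.1)}$–$\mathbf{(A.2)}$ (compact feature space, bounded Lipschitz feature maps) together with $f\in C_b^3$ are invoked.
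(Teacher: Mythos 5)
Your proof is correct and follows essentially the same plan as the paper's: bound the one-step increment $\|\bm{\xi}^k_{m+1}-\bm{\xi}^k_m\|_2 = \mathcal{O}(1/N)$ deterministically from \eqref{Eq:SGD} and $\mathbf{(A.2)}$, feed it into the Hessian remainder which is quadratic in the increment, and sum the $\mathcal{O}(NT)$ terms of size $\mathcal{O}(1/N^2)$. One mild simplification on your end: you invoke the global bound $\sup_{\bm{\xi}}\|\nabla^2 f(\bm{\xi})\|<\infty$ directly from $f\in C_b^3(\real^D)$, whereas the paper first shows (via \eqref{Eq:From_Bound_On_x}) that all particles remain inside a compact set $\mathcal{C}$ and then bounds $\sup_{\widetilde{\bm{\xi}}\in\mathcal{C}}\|\nabla^2 f(\widetilde{\bm{\xi}})\|_F\leq C_0(T)$. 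Your route avoids that detour here (the paper needs the compact-containment estimate elsewhere anyway, e.g.\ for $\mathbf{(J.1)}$). Both give the same $\mathcal{O}(T/N)$ conclusion.

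One caveat worth flagging: your Step 2 arrives at a bound that is \emph{quadratic} in $\eta\bigl(L^2+L^4/\alpha\bigr)$, which is what squaring the increment genuinely produces. The paper's Eq.\ \eqref{Inequality:R_m^N} and the statement \eqref{Eq:Cheater_1} display a \emph{linear} dependence; tracing the paper's algebra from \eqref{Eq:Last_Inequality} to \eqref{Inequality:R_m^N} shows that a square has been dropped. Your phrase ``collecting $C_f$, one factor of $\eta$, and the numerical constants into $C_0$'' is therefore slightly imprecise: to match the linear form of \eqref{Eq:Cheater_1}, $C_0$ would also have to absorb the $L$- and $\alpha$-dependent factor $\eta\bigl(L^2+L^4/\alpha\bigr)$, not merely numerical constants. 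This does not affect the asymptotic conclusion $\sup_{t\le T}|\mathcal{R}^N_t|=\mathcal{O}(1/N)$ nor the almost-sure (indeed, sure) limit, but it is worth stating explicitly so the constant bookkeeping is honest.
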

\begin{proof}
	The proof is relegated to Appendix \ref{Proof_of_Lemma:Remainder}.
\end{proof}  
We can also prove a similar result for the process defined by the remainder term: 
\begin{lemma}\textsc{(Large $N$-Limit of the Martingale Process)}
	\label{Lemma:Martingale}
	Consider the Martingale process $(\mathcal{M}^{N}_{t})_{0\leq t\leq T}$ defined via scaling in Eqs. \eqref{Eq:M}-\eqref{Eq:M1}. Then, for some constant $C_{1}>0$, the following inequality holds
	\begin{align}
	\label{Eq:Cheater_2}
	\prob\left(\sup_{0\leq t\leq T}|\mathcal{M}_{t}^{N}|\geq \varepsilon \right)\leq  \dfrac{1}{N\alpha \varepsilon}4\sqrt{2}L^{2}\sqrt{\lfloor NT \rfloor}\eta C_{1}(L^{2}+\alpha)^{2}.
	\end{align}
	In particular, with the probability of at least $1-\rho$, we have
	\begin{align}
	\sup_{0\leq t\leq T}|\mathcal{M}_{t}^{N}|\leq  \dfrac{1}{N\alpha \rho}4\sqrt{2}L^{2}\sqrt{\lfloor NT \rfloor}\eta C_{1}(L^{2}+\alpha)^{2}.
	\end{align}
	and thus $\lim \sup_{N\rightarrow \infty}\sup_{0\leq t\leq T}|\mathcal{M}_{t}^{N}|=0$ almost surely.
\end{lemma}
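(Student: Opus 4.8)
The plan is to recognize $(\mathcal{M}^{N}_{t})_{0\leq t\leq T}$ as a square-integrable martingale with uniformly small increments, and then to combine orthogonality of martingale differences with Doob's and Markov's inequalities. First I would introduce the natural filtration of the recursion \eqref{Eq:SGD}, $\mathcal{F}_{m}\df\sigma\big(\bm{\xi}_{0}^{1},\dots,\bm{\xi}_{0}^{N},\,(\bm{x}_{\ell},y_{\ell},\widetilde{\bm{x}}_{\ell},\widetilde{y}_{\ell}):\ell<m\big)$. The particles $\bm{\xi}^{k}_{m}$, hence $\widehat{\mu}^{N}_{m}$, are $\mathcal{F}_{m}$-measurable, while the fresh pair $(\bm{x}_{m},y_{m}),(\widetilde{\bm{x}}_{m},\widetilde{y}_{m})\sim_{\text{i.i.d.}}P_{\bm{x},y}$ is independent of $\mathcal{F}_{m}$; therefore the drift term $D^{N}_{m}$ of \eqref{Eq:D} is exactly the conditional expectation given $\mathcal{F}_{m}$ of the one-step increment appearing in \eqref{Eq:Recast_Equation}, so $\expect[M^{N}_{m}\mid\mathcal{F}_{m}]=0$ and $\mathcal{M}^{N}_{t}=\sum_{\ell=0}^{\lfloor Nt\rfloor}M^{N}_{\ell}$ is an $(\mathcal{F}_{m})$-martingale.

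Next I would establish a deterministic bound on the increments. By Assumption $\mathbf{(A.2)}$, $|\varphi(\bm{x};\bm{\xi})|\leq L$ and $\|\nabla_{\bm{\xi}}\varphi(\bm{x};\bm{\xi})\|_{2}\leq L$, and since $f\in C_{b}^{3}(\real^{p})$ one may take $C_{1}$ to bound $\|\nabla f\|_{\infty}$; reading off the two factors in \eqref{Eq:M} gives $\big|\langle\varphi(\bm{x},\bm{\xi})\varphi(\widetilde{\bm{x}},\bm{\xi}),\widehat{\mu}^{N}_{m}\rangle-\alpha y\widetilde{y}\big|\leq L^{2}+\alpha$ and $\big|\langle\nabla f(\bm{\xi})(\varphi(\widetilde{\bm{x}};\bm{\xi})\nabla_{\bm{\xi}}\varphi(\bm{x};\bm{\xi})+\varphi(\bm{x};\bm{\xi})\nabla_{\bm{\xi}}\varphi(\widetilde{\bm{x}};\bm{\xi})),\widehat{\mu}^{N}_{m}\rangle\big|\leq 2C_{1}L^{2}$, and the same estimate survives integration against the probability measure $P^{\otimes 2}_{\bm{x},y}$, so $D^{N}_{m}$ obeys the same bound and hence $|M^{N}_{m}|\leq \tfrac{c\,\eta}{N\alpha}L^{2}(L^{2}+\alpha)$ for an absolute constant $c$. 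Then orthogonality yields $\expect\big[(\mathcal{M}^{N}_{T})^{2}\big]=\sum_{\ell=0}^{\lfloor NT\rfloor}\expect\big[(M^{N}_{\ell})^{2}\big]\leq(\lfloor NT\rfloor+1)\big(\tfrac{c\,\eta}{N\alpha}L^{2}(L^{2}+\alpha)\big)^{2}$, Doob's $L^{2}$ maximal inequality gives $\expect\big[\sup_{0\leq t\leq T}(\mathcal{M}^{N}_{t})^{2}\big]\leq 4\,\expect\big[(\mathcal{M}^{N}_{T})^{2}\big]$, and after taking square roots and absorbing numerical constants into $C_{1}$ one obtains $\expect\big[\sup_{0\leq t\leq T}|\mathcal{M}^{N}_{t}|\big]\leq \tfrac{1}{N\alpha}4\sqrt{2}\,L^{2}\sqrt{\lfloor NT\rfloor}\,\eta C_{1}(L^{2}+\alpha)^{2}$; Markov's inequality applied to the non-negative variable $\sup_{0\leq t\leq T}|\mathcal{M}^{N}_{t}|$ then gives \eqref{Eq:Cheater_2}, and choosing $\varepsilon$ so that the right-hand side equals $\rho$ gives the stated ``with probability at least $1-\rho$'' form.

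The hard part will be upgrading the convergence to the almost-sure statement: the bound above is only $O(N^{-1/2})$, so its tail probabilities are not summable and the $L^{2}$ estimate yields only convergence in probability. To close this gap I would replace Doob's $L^{2}$ inequality by the Burkholder--Davis--Gundy inequality at an even moment $2p\geq 4$; since the predictable quadratic variation satisfies $\langle\mathcal{M}^{N}\rangle_{T}=\sum_{\ell=0}^{\lfloor NT\rfloor}(M^{N}_{\ell})^{2}\leq(\lfloor NT\rfloor+1)\big(\tfrac{c\,\eta}{N\alpha}L^{2}(L^{2}+\alpha)\big)^{2}=O(N^{-1})$, BDG gives $\expect\big[\sup_{0\leq t\leq T}|\mathcal{M}^{N}_{t}|^{2p}\big]=O(N^{-p})$, so $\sum_{N\geq 1}\prob\big(\sup_{0\leq t\leq T}|\mathcal{M}^{N}_{t}|\geq\varepsilon\big)<\infty$ for every $\varepsilon>0$ once $p\geq 2$, and the Borel--Cantelli lemma yields $\limsup_{N\rightarrow\infty}\sup_{0\leq t\leq T}|\mathcal{M}^{N}_{t}|=0$ almost surely. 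Everything else --- the martingale property and the uniform increment bound --- is a direct consequence of the construction \eqref{Eq:SGD}, the independence of the fresh samples, and Assumption $\mathbf{(A.2)}$.
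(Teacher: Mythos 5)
Your identification of the martingale structure and the uniform increment bound $|M^{N}_{m}|\leq\frac{c\eta}{N\alpha}L^{2}(L^{2}+\alpha)$ matches the paper's, which derives the same bound via $|a^{N}_{m}|\leq L^{2}+\alpha$ and $|b^{N}_{m}|\leq 2C_{1}L^{2}/N$. For the probability estimate \eqref{Eq:Cheater_2}, you and the paper diverge: you use orthogonality of martingale increments to control $\expect[(\mathcal{M}^{N}_{T})^{2}]$, then Doob's $L^{2}$ maximal inequality, Cauchy--Schwarz, and Markov; the paper applies Azuma--Hoeffding to $\mathcal{M}^{N}_{T}$, integrates the sub-Gaussian tail to get $\expect[|\mathcal{M}^{N}_{T}|]$, and invokes Doob's $L^{1}$ maximal inequality. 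Both routes yield the $\mathcal{O}(N^{-1/2}/\varepsilon)$ bound after absorbing constants, and the difference is largely cosmetic at first order, though the paper's Azuma step in principle retains a sub-Gaussian tail that the subsequent $L^{1}$ conversion discards.

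Your remark about the almost-sure statement is substantive and not merely pedantic: the stated bound $\prob(\sup_{t}|\mathcal{M}^{N}_{t}|\geq\varepsilon)=\mathcal{O}(N^{-1/2}/\varepsilon)$ is not summable in $N$, so Borel--Cantelli does not apply and the ``and thus'' in the lemma does not follow from what precedes it. The paper's proof in fact stops at the Doob $L^{1}$ bound and asserts the a.s.\ claim without further argument, so you have spotted a genuine gap. Your proposed repair --- bound the quadratic variation deterministically by $\mathcal{O}(N^{-1})$, invoke Burkholder--Davis--Gundy at an even moment $2p\geq4$ to get $\expect[\sup_{t}|\mathcal{M}^{N}_{t}|^{2p}]=\mathcal{O}(N^{-p})$, then Markov and Borel--Cantelli --- is correct and closes the gap. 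An equivalent repair, closer in spirit to the paper's route, would be to apply the maximal form of Azuma--Hoeffding (which gives $\prob(\sup_{m\leq\lfloor NT\rfloor}|M^{N}_{m}|\geq\varepsilon)\leq 2\exp(-cN\alpha^{2}\varepsilon^{2}/(T\eta^{2}L^{4}C_{1}^{2}(L^{2}+\alpha)^{2}))$ directly rather than the pointwise version followed by Doob's $L^{1}$ bound); this tail is summable over $N$ for each fixed $\varepsilon$, so Borel--Cantelli again applies. Either fix is needed; as written, the paper's argument only establishes convergence in probability.
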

\begin{proof}
	The proof is deferred to Appendix \ref{Proof_of_Lemma:Martingale}.
\end{proof}

Now, using the results of Lemmata \ref{Lemma:Martingale} and \ref{Lemma:Remainder} in conjunction with Eq. \eqref{Eq:Yield} yields the following mean-field equation as $N\rightarrow \infty$,
\begin{align}
\label{Eq:Mean_Field_Equation}
\langle \mu_{t},f\rangle&=\langle \mu_{0},f \rangle+{\eta\over \alpha} \int_{0}^{t}\Bigg(\iint_{\mathcal{X}\times \mathcal{Y}} \left( \langle\varphi(\bm{x},\bm{\xi})\varphi(\widetilde{\bm{x}},\bm{\xi}),\mu_{s} \rangle-\alpha y \widetilde{y} \right)\\ \nonumber
&\hspace{4mm}\times \langle \nabla f(\bm{\xi})(\varphi(\widetilde{\bm{x}};\bm{\xi})\nabla_{\bm{\xi}}\varphi(\bm{x};\bm{\xi})+\varphi(\bm{x};\bm{\xi})\nabla_{\bm{\xi}}\varphi(\widetilde{\bm{x}};\bm{\xi})),\mu_{s}\rangle P_{\bm{x},y}^{\otimes 2}((\mathrm{d}\bm{x},\mathrm{d}\widetilde{\bm{x}}),(\mathrm{d}y,\mathrm{d}\widetilde{y}))\Bigg)\mathrm{d}s.
\end{align}
Notice that he mean-field equation in Eq. \eqref{Eq:Mean_Field_Equation} is in the weak form. When the Lebesgue density $p_{t}(\bm{\xi})=\mathrm{d}\mu_{t}/\mathrm{d}\bm{\xi}$ exists, the McKean-Vlasov PDE in Eq. \eqref{Eq:McKean-Vlasov} can be readily obtained from Eq. \eqref{Eq:Mean_Field_Equation}.

\textbf{Step 2: Pre-compactness of the Skorkhod space}:  To establish our results in this part of the proof, we need a definition and a theorem:

\begin{definition}\textsc{(Tightness)}
	A set $\mathcal{A}$ of probability measures on a metric space $\mathcal{S}$ is tight if there exists a compact subset $\mathcal{S}_{0}\subset \mathcal{S}$ such that
	\begin{align}
	\nu(\mathcal{S}_{0})\geq 1-\varepsilon, \quad \text{for all}\ \nu\in \mathcal{A},
	\end{align}
	for all $\varepsilon >0$. A sequence $\{X^{N}\}_{N\in \integer}$ of random elements of the metric space $\mathcal{S}$ is tight if there exists a compact subset $\mathcal{S}_{0}\subset \mathcal{S}$ such that
	\begin{align}
	\nu(X^{N}\in \mathcal{S}_{0})>1-\varepsilon,
	\end{align}
	for all $\varepsilon>0$, and all $N\in \integer$.
\end{definition}

Now, to show the tightness of the measured valued process $(\mu^{N}_{t})_{0\leq t\leq T}$, we must verify Jakubowski's criterion \cite[Thm. 1]{jakubowski1986skorokhod}:

\begin{theorem} \textsc{(Jakubowski's criterion \cite[Thm. 1]{jakubowski1986skorokhod})}
	\label{Thm:Jack}
	A sequence of measured-valued process $\{(\zeta_{t}^{N})_{0\leq t\leq T}\}_{N\in \integer}$ is tight in $\mathcal{D}_{\mathcal{M}(\real^{D})}([0,T])$ if and only if the following two conditions are satisfied:	
	\begin{itemize}
		\item[$\mathbf{(J.1)}$] For each $T>0$ and $\gamma>0$, there exists a compact set $\mathcal{U}_{T,\gamma}$ such that
		\begin{align}
		\lim_{N\rightarrow \infty} \inf \prob\left(\zeta_{t}^{N}\in \mathcal{U}_{T,\gamma},\forall t\in (0,T] \right)>1-\gamma.
		\end{align}
		This condition is referred to as the compact-containment condition. 
		
		\item[$\mathbf{(J.2)}$] There exists a family $\mathcal{H}$ of real-valued functions $H:\mathcal{M}(\real^{D})\mapsto \real$ that separates points in $\mathcal{M}(\real^{D})$ and is closed under addition such that for every $H\in \mathcal{H}$, the sequence $\{(H(\xi_{t}^{N}))_{0\leq t\leq T}\}_{N\in \integer}$ is tight in $\mathcal{D}_{\real}([0,T])$.
	\end{itemize}
\end{theorem}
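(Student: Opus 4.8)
The statement is the classical tightness criterion of Jakubowski, and the plan is to reproduce its proof while specialising the underlying topological hypotheses to $\mathcal{M}(\real^{D})$ with the weak topology; the argument splits into a short necessity direction and a substantial sufficiency direction. First I would record the two structural facts that make the criterion applicable here: $\mathcal{M}(\real^{D})$ with the weak topology is Polish (hence completely regular), and it admits a countable family $\{H_{k}\}_{k\geq 1}$, $H_{k}(\mu)\df \langle \mu,\phi_{k}\rangle$ with $\{\phi_{k}\}\subset C_{b}(\real^{D})$ a suitable countable subalgebra, that separates points. Replacing $\{H_{k}\}$ by the additive monoid it generates (whose elements are again weakly continuous and still separate points) produces the family $\mathcal{H}$ demanded in condition $\mathbf{(J.2)}$.

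\emph{Necessity.} Suppose $\{(\zeta^{N}_{t})_{0\leq t\leq T}\}_{N}$ is tight in $\mathcal{D}_{\mathcal{M}(\real^{D})}([0,T])$. Given $\gamma>0$, Prokhorov's theorem furnishes a compact $K\subset \mathcal{D}_{\mathcal{M}(\real^{D})}([0,T])$ with $\inf_{N}\prob(\zeta^{N}\in K)\geq 1-\gamma$. For $\mathbf{(J.1)}$ I would invoke the elementary fact that the closure of $\{x(t):x\in K,\ t\in[0,T]\}$ is compact in $\mathcal{M}(\real^{D})$: every c\`{a}dl\`{a}g path into a Polish space has relatively compact range, and compactness of $K$ upgrades this to a uniform statement via the Skorokhod modulus $w'$; set $\mathcal{U}_{T,\gamma}$ equal to that closure. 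For $\mathbf{(J.2)}$, for any continuous $H$ the post-composition map $\Psi_{H}:x\mapsto H\circ x$ is continuous from $\mathcal{D}_{\mathcal{M}(\real^{D})}([0,T])$ to $\mathcal{D}_{\real}([0,T])$ (a Skorokhod time change is unaffected by composing on the left), so $\{\Psi_{H}(\zeta^{N})\}_{N}$ is a continuous image of a tight sequence and hence tight in $\mathcal{D}_{\real}([0,T])$.

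\emph{Sufficiency} (the heart of the matter). Fix $\gamma>0$ and let $\mathcal{U}=\mathcal{U}_{T,\gamma}$ be the compact set of $\mathbf{(J.1)}$. On $\mathcal{U}$ the weak topology is compact metrizable, and since $\{H_{k}\}$ separates its points, the algebra generated by $\{H_{k}\}$ and constants is dense in $C(\mathcal{U})$ by Stone--Weierstrass; hence the map $\iota:\mu\mapsto (H_{k}(\mu))_{k\geq 1}$ is a homeomorphism of $\mathcal{U}$ onto a compact subset of $\prod_{k}\overline{H_{k}(\mathcal{U})}\subset\real^{\mathbb{N}}$. Pushing paths through $\iota$, on the event $\{\zeta^{N}_{t}\in\mathcal{U}\ \forall t\}$ the process $(\zeta^{N}_{t})$ is identified with a c\`{a}dl\`{a}g path in a fixed compact product $\prod_{k}I_{k}$. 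The crux is then Jakubowski's lemma: for paths confined to such a fixed compact product, relative compactness in the Skorokhod $J_{1}$ topology is equivalent to relative compactness of each coordinate projection in $\mathcal{D}_{\real}([0,T])$. Granting this, $\mathbf{(J.2)}$ gives coordinatewise tightness of $\{\iota\circ\zeta^{N}\}_{N}$, which combined with the compact containment $\mathbf{(J.1)}$ and Prokhorov yields tightness of $\{\zeta^{N}\}_{N}$; letting $\gamma\downarrow 0$ along a subsequence finishes.

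The genuine obstacle is proving that lemma, because the $J_{1}$ topology on a product of Skorokhod spaces is strictly finer than the product topology: a single time change must almost-synchronise the jumps of all coordinates simultaneously, whereas coordinatewise relative compactness only supplies separate time changes. I would resolve this exactly as Jakubowski does: given $\epsilon>0$, Stone--Weierstrass on the compact set $\iota(\mathcal{U})$ provides \emph{finitely many} coordinates whose values pin down the metric on $\iota(\mathcal{U})$ up to $\epsilon$; a near-optimal common time change for those finitely many coordinates then controls the full Skorokhod modulus $w'(\iota\circ x,\delta)$ up to $\mathcal{O}(\epsilon)$, and a diagonalisation over $\epsilon\downarrow 0$ promotes this finite-coordinate control to control of the genuine $J_{1}$ modulus, giving relative compactness. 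The remaining ingredients --- the classical Arzel\`{a}--Ascoli/Skorokhod characterisation of relative compactness in $\mathcal{D}_{\real}([0,T])$ via pointwise boundedness and the modulus $w'$, and Prokhorov's theorem for the Polish space $\mathcal{D}_{\mathcal{M}(\real^{D})}([0,T])$ --- are standard and would simply be quoted.
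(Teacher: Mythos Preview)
The paper does not prove this theorem at all: it is quoted verbatim from Jakubowski's 1986 article as an external tool, and the paper only \emph{applies} it by verifying conditions $\mathbf{(J.1)}$ and $\mathbf{(J.2)}$ for the particular measure-valued process $(\mu_t^N)_{0\le t\le T}$ arising from particle SGD. So there is no ``paper's own proof'' to compare your proposal against.

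That said, your sketch is a faithful reconstruction of Jakubowski's original argument --- the necessity direction via continuity of post-composition and compactness of the range of a Skorokhod-compact set of paths, and the sufficiency direction via embedding the compact-containment set into a countable product through a separating family, then invoking the key lemma that coordinatewise $J_1$-tightness plus confinement to a fixed compact product yields joint $J_1$-tightness. The diagonalisation over finite coordinate approximations (justified by Stone--Weierstrass on the compact set) is indeed where the work lies, and you have identified it correctly. If the intent of the exercise was to supply a proof that the paper omits, your outline is sound; if the intent was to match what the paper does, the answer is simply that the paper cites the result and moves on.
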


To establish $\mathbf{(J1)}$, we closely follow the proof of \cite[Lemma 6.1.]{giesecke2013default}. In particular, for each $L>0$, we define $\mathcal{S}_{L}=[0,B]^{p}$. Then, $\mathcal{S}_{B}\subset \real^{p}$ is compact, and for each $t\geq 0$, and $N\in \integer$, we have
\begin{align}
\expect[\mu_{t}^{N}(\real^{p}\slash \mathcal{S}_{B})]&={1\over N}\sum_{k=1}^{N}\prob\left(\|\bm{\xi}^{k}_{\lfloor Nt\rfloor}\|_{2}\geq B \right)\\
&\stackrel{\rm{(a)}}{\leq} {1\over N}\sum_{k=1}^{N}\dfrac{\expect[\|\bm{\xi}^{k}_{\lfloor Nt\rfloor} \|_{2}]}{B}\\
&\stackrel{\rm{(b)}}{\leq} \dfrac{c_{0}+\eta\alpha L^{2}T+2\eta L^{4}T}{B},
\end{align}
where $\rm{(a)}$ follows from Markov's inequality, and $\rm{(b)}$ follows from the upper bound on the norm of the particles in Eq. \eqref{Eq:From_Bound_On_x} of Appendix \ref{Section:Proofs of Auxiliary Results}. We now define the following set
\begin{align}
\mathcal{U}_{B}=\left\{\mu\in \mathcal{M}(\real^p):\mu(\real^{p}\slash \mathcal{S}_{(B+j)^{2}})<\dfrac{1}{\sqrt{B+j}} \ \text{for all}\ j\in \integer \right\}.
\end{align}
We let $\mathcal{U}_{T,\gamma}=\overline{\mathcal{U}}_{B}$, where $\overline{\mathcal{U}}_{B}$ is the completion of the set $\mathcal{U}_{B}$. By definition, $\mathcal{U}_{T,\gamma}$ is a compact subset of $\mathcal{M}(\real^{D})$. Now, we have
\begin{align}
\nonumber
\prob\left(\mu_{t}^{N}\not\in \mathcal{U}_{T,\gamma}\right)&\leq \sum_{j=1}^{\infty}\prob\Bigg(\mu_{t}^{N}(\real^{p}\slash\mathcal{S}_{(B+j)^{2}}) >\dfrac{1}{\sqrt{B+j}} \Bigg)\\ 
\nonumber
&\leq  \sum_{j=1}^{\infty}\dfrac{\expect[\mu_{t}^{N}(\real^{p}\slash\mathcal{S}_{(B+j)^{2}})]}{1/\sqrt{B+j}}\\
\nonumber
&\leq \sum_{j=1}^{\infty}\dfrac{c_{0}+\eta L^{2}T+2(\eta/\alpha) L^{4}T}{(B+j)^{2}/\sqrt{B+j}}\\
&=\sum_{j=1}^{\infty}\dfrac{c_{0}+\eta L^{2}T+2(\eta/\alpha) L^{4}T}{(B+j)^{3/2}}.
\end{align}
Now, since
\begin{align}
\lim_{B\rightarrow \infty}\sum_{j=1}^{\infty}\dfrac{c_{0}+\eta L^{2}T+2(\eta/\alpha) L^{4}T}{(B+j)^{3/2}}=0,
\end{align}
this implies that for any $\gamma>0$, there exists a $B>0$, such that
\begin{align}
\lim_{N\rightarrow \infty} \inf \prob\left(\mu_{t}^{N}\in \overline{\mathcal{U}}_{B},\forall t\in (0,T] \right)>1-\gamma.
\end{align}
This completes the proof of $(\textbf{J.1})$. To verify $\textbf{(J.2)}$, we consider the following class of functions
\begin{align}
\label{Eq:define_the_class}
\mathcal{H}\df \{H: \exists f\in C_{b}^{3}(\real^{p})\  \text{such that}\ H(\mu)=\langle \mu,f\rangle, \forall \mu\in \mathcal{M}(\real^{D})\}.
\end{align}
By definition, every function $H\in \mathcal{H}$ is continuous with respect to the weak topology of $\mathcal{M}(\real^{D})$ and further the class of functions $\mathcal{H}$ separate points in $\mathcal{M}(\real^{D})$ and is closed under addition. Now, we state the following sufficient conditions to establish $\textbf{(J.2)}$. The statement of the theorem is due to \cite[Thm. C.9]{robert2013stochastic}:

\begin{theorem}\textsc{(Tightness in $\mathcal{D}_{\real}([0,T])$, \cite[Thm. C.9]{robert2013stochastic})} 
	A sequence $\{(Z^{N}_{t})_{0\leq t\leq T}\}_{N\in \integer}$ is tight in $\mathcal{D}_{\real}([0,T])$ iff for any $\delta>0$, we have
	\begin{itemize}
		\item[$\mathbf{(T.1)}$] There exists $\epsilon>0$, such that 
		\begin{align}
		\prob(|Z_{0}^{N}|>\epsilon)\leq \delta, 
		\end{align}	
		for all $N\in \integer$.
		
		\item[$\mathbf{(T.2)}$] For any $\rho>0$, there exists $\sigma>0$ such that
		\begin{align}
		\prob\left(\sup_{t_{1},t_{2}\leq T,|t_{1}-t_{2}|\leq \rho} |Z_{t_{1}}^{N}-Z_{t_{2}}^{N}|>\sigma\right)\leq \delta, 
		\end{align}
	\end{itemize}
\end{theorem}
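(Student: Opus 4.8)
The plan is to reduce the stated equivalence to the Prokhorov compactness theorem combined with an Arzel\`a--Ascoli-type characterization of (pre)compact subsets of the Skorokhod space. Recall from the excerpt that $\mathcal{D}_{\real}([0,T])$ with the $J_{1}$-topology is Polish; hence, by Prokhorov, tightness of $\{(Z^{N}_{t})_{0\leq t\leq T}\}_{N\in\integer}$ is equivalent to: for every $\delta>0$ there is a compact set $\mathcal{K}_{\delta}\subset\mathcal{D}_{\real}([0,T])$ with $\inf_{N}\prob(Z^{N}\in\mathcal{K}_{\delta})\geq 1-\delta$. So the whole task is to manufacture such compact sets out of $\mathbf{(T.1)}$--$\mathbf{(T.2)}$, and conversely to read $\mathbf{(T.1)}$--$\mathbf{(T.2)}$ off from a compact set. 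I will carry the argument in the continuous regime (ordinary oscillation modulus $w_{f}(\rho)\df\sup_{|t_{1}-t_{2}|\leq\rho}|f(t_{1})-f(t_{2)}|$ as in $\mathbf{(T.2)}$), which is what is actually characterized here and is exactly the regime relevant to this paper, since the scaling processes $\langle f,\mu^{N}_{t}\rangle$ have jumps of size $\mathcal{O}(1/N)$; the general $\mathcal{D}_{\real}$ version is word-for-word identical after replacing $w_{f}$ by Billingsley's c\`adl\`ag modulus $w'_{f}$.

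\emph{Direction ``$\Leftarrow$''.} Fixing $\delta>0$, I would split the error budget dyadically. Apply $\mathbf{(T.1)}$ with tolerance $\delta/2$ to get $\epsilon_{0}>0$ with $\sup_{N}\prob(|Z^{N}_{0}|>\epsilon_{0})\leq\delta/2$; for each integer $k\geq 1$ apply $\mathbf{(T.2)}$ (with oscillation level $1/k$ and error tolerance $\delta 2^{-k-1}$) to obtain a mesh $\rho_{k}>0$ with $\sup_{N}\prob\big(w_{Z^{N}}(\rho_{k})>\tfrac1k\big)\leq\delta 2^{-k-1}$. Then set
\begin{align*}
\mathcal{A}\df\Big\{f\in\mathcal{D}_{\real}([0,T]):\ |f(0)|\leq\epsilon_{0},\ \ w_{f}(\rho_{k})\leq\tfrac1k\ \ \text{for all}\ k\geq 1\Big\}.
\end{align*}
Bounding the increment of $f\in\mathcal{A}$ over $[0,T]$ by finitely many increments of length $\leq\rho_{1}$ shows $\mathcal{A}$ is uniformly bounded, and the conditions $w_{f}(\rho_{k})\leq 1/k$ make it uniformly equicontinuous; by Arzel\`a--Ascoli $\overline{\mathcal{A}}$ is compact in $(C([0,T]),\|\cdot\|_{\infty})$, hence compact in $\mathcal{D}_{\real}([0,T])$ because the $J_{1}$-topology and the uniform topology coincide on $C([0,T])$. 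A union bound then gives, uniformly in $N$,
\begin{align*}
\prob(Z^{N}\notin\mathcal{A})\leq\prob(|Z^{N}_{0}|>\epsilon_{0})+\sum_{k\geq 1}\prob\big(w_{Z^{N}}(\rho_{k})>\tfrac1k\big)\leq\tfrac{\delta}{2}+\sum_{k\geq 1}\delta 2^{-k-1}=\delta,
\end{align*}
so $\mathcal{K}_{\delta}=\overline{\mathcal{A}}$ is the required compact set.

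\emph{Direction ``$\Rightarrow$''.} Suppose the sequence is tight (with continuous subsequential limits, as is automatic here). For fixed $\delta>0$, Prokhorov supplies a compact $\mathcal{K}\subset C([0,T])\subset\mathcal{D}_{\real}([0,T])$ with $\inf_{N}\prob(Z^{N}\in\mathcal{K})\geq 1-\delta$. By Arzel\`a--Ascoli, $\mathcal{K}$ is uniformly bounded, so $\epsilon_{0}\df\sup_{f\in\mathcal{K}}|f(0)|<\infty$ yields $\mathbf{(T.1)}$; and $\mathcal{K}$ is uniformly equicontinuous, so for any prescribed $\sigma>0$ there is $\rho>0$ with $\sup_{f\in\mathcal{K}}w_{f}(\rho)\leq\sigma$, whence on $\{Z^{N}\in\mathcal{K}\}$ one has $w_{Z^{N}}(\rho)\leq\sigma$ and therefore $\prob(w_{Z^{N}}(\rho)>\sigma)\leq\prob(Z^{N}\notin\mathcal{K})\leq\delta$, i.e. $\mathbf{(T.2)}$.

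The only non-routine ingredient — and the place to be careful — is the compactness dictionary for the Skorokhod space: Arzel\`a--Ascoli on $C([0,T])$ in the continuous case, or Billingsley's Theorem 12.3 (control of the c\`adl\`ag modulus $w'_{f}(\rho)$ plus a bound at $t=0$) in the full $\mathcal{D}_{\real}$ case, together with the fact that $f\mapsto f(0)$ is $J_{1}$-continuous and $f\mapsto w_{f}(\rho)$ (resp.\ $w'_{f}(\rho)$) is lower semicontinuous enough for the closure $\overline{\mathcal{A}}$ to be handled cleanly. A deterministic single path with a genuine jump (e.g. $Z^{N}_{t}\equiv\mathbf{1}[t\geq T/2]$) shows $\mathbf{(T.2)}$ with the ordinary modulus cannot hold for every $\mathcal{D}$-tight sequence, which is why the equivalence as stated should be read either as a characterization of $C$-tightness (the case used in this paper) or with $w'$ in place of $w$; modulo that reading, both implications are the Prokhorov-plus-dyadic-budget routine above.
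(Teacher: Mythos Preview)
Your proof is correct and follows the standard route (Prokhorov plus Arzel\`a--Ascoli with a dyadic error budget), but there is nothing in the paper to compare it against: the paper does not prove this theorem. It is quoted as a known result from Robert's book and used as a black box inside the proof of the McKean--Vlasov theorem to verify Jakubowski's condition $\mathbf{(J.2)}$; no argument for it appears anywhere in the text.

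Two side remarks on your write-up. First, you have silently (and correctly) repaired the quantifier order in $\mathbf{(T.2)}$: the paper writes ``for any $\rho>0$, there exists $\sigma>0$'', which taken literally is nearly vacuous for bounded processes (just pick $\sigma$ large); the working criterion is ``for any $\sigma>0$, there exists $\rho>0$'', and that is what you actually invoke when you choose $\rho_{k}$ for oscillation level $1/k$. Second, your closing caveat about $C$-tightness versus full $\mathcal{D}$-tightness is exactly right and worth keeping: the paper's processes $t\mapsto\langle f,\mu^{N}_{t}\rangle$ have jumps of size $\mathcal{O}(1/N)$, so the continuous-modulus version is indeed the one being applied, and your indicator-function counterexample shows why the literal ``iff'' with the ordinary modulus cannot hold in general.
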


This completes the tightness proof of the of the laws of the measured-valued process $\{(\mu^{N}_{t})_{0\leq t\leq T}\}_{N\in \integer}$. Now, we verify the condition $\textbf{(J.2)}$ by showing that the sufficient conditions $\textbf{(T.1)}$ and $\textbf{(T.2)}$ hold for function values $\{(H(\mu_{t}^{N}))_{0\leq t\leq T}\}_{N\in \integer}$, where $H\in \mathcal{H}$ and $\mathcal{H}$ is defined in Eq. \eqref{Eq:define_the_class}.  Now, condition $\mathbf{(T.1)}$ is readily verified since
\begin{align}
H(\mu^{N}_{0})&=\langle \mu_{0}^{N},f \rangle=\int_{\real^{p}}f(\bm{\xi})\mu_{0}^{N}(\mathrm{d}\bm{\xi})\\
&\leq \|f\|_{\infty} \int_{\real^{p}}\mu_{0}^{N}(\mathrm{d}\bm{\xi})\\
&\leq b,
\end{align}
where in the last step, we used the fact that $f\in C_{b}^{3}(\real^{p})$, and hence, $\|f\|_{\infty}\leq b$. Thus, $\prob(H(\mu^{N}_{0})\geq b)=0$ for all $N\in \integer$, and the condition $\mathbf{(T.1)}$ is satisfied. Now, consider the condition $\textbf{(T.2)}$. From Equation \eqref{Eq:Yield}, and with $0\leq t_{1}<t_{2}\leq T$ we have
\begin{align}
\nonumber
|H(\mu^{N}_{t_{1}})-H(\mu^{N}_{t_{1}})|&=|\langle f,\mu^{N}_{t_{1}} \rangle-\langle f,\mu^{N}_{t_{2}} \rangle |\\ \label{Eq:last_term_of}
&\leq \int_{t_{1}}^{t_{2}}\left|R[\mu_{s}]\right|\mathrm{d}s  +|\mathcal{M}_{t_{1}}^{N}-\mathcal{M}_{t_{2}}^{N}|+|\mathcal{R}_{t_{1}}^{N}-\mathcal{R}_{t_{2}}^{N}|.
\end{align}

To bound the first term, recall the definition of $R[\mu_{s}]$ from Eq. \eqref{Eq:Functional_Rs}. The following chain of inequalities holds,
\small{\begin{align}
	\nonumber
	|R[\mu_{s}]|
	&\leq \dfrac{\eta}{\alpha}\expect_{P_{\bm{x},y}^{\otimes 2}}[ |\langle \varphi(\bm{x},\bm{\xi}) \varphi(\widetilde{\bm{x}},\bm{\xi}),\mu_{s}\rangle-\alpha y \widetilde{y}| |\langle \nabla f(\bm{\xi})(\varphi(\widetilde{\bm{x}};\bm{\xi})\nabla_{\bm{\xi}}\varphi(\bm{x};\bm{\xi})+\varphi(\bm{x};\bm{\xi})\nabla_{\bm{\xi}}\varphi(\widetilde{\bm{x}};\bm{\xi}))^{T},\mu_{s}\rangle|  ]\\ \label{Eq:From_Proceed}
	&\leq {\eta\over \alpha} \expect_{P_{\bm{x}}^{\otimes 2}}[ (|\langle \varphi(\bm{x},\bm{\xi}) \varphi(\widetilde{\bm{x}},\bm{\xi}),\mu_{s}\rangle|+\alpha ) |\langle \nabla f(\bm{\xi})(\varphi(\widetilde{\bm{x}};\bm{\xi})\nabla_{\bm{\xi}}\varphi(\bm{x};\bm{\xi})+\varphi(\bm{x};\bm{\xi})\nabla_{\bm{\xi}}\varphi(\widetilde{\bm{x}};\bm{\xi}))^{T},\mu_{s}\rangle| ].
	\end{align}}\normalsize
Let $I:\real^{p}\rightarrow \real, I(\bm{\xi})=1$ denotes the identity function. Notice that $\langle I,\mu_{s} \rangle=\int_{\real^{p}}\mu_{s}(\mathrm{d}s)=1$. From \eqref{Eq:From_Proceed}, we proceed as follows
\small{\begin{align}
	\nonumber
	|R[\mu_{s}]|
	&\leq {\eta\over \alpha} \expect_{P_{\bm{X}}^{\otimes 2}}[ (\|\varphi\|^{2}_{\infty}\cdot |\langle I ,\mu_{s}\rangle|+\alpha ) \cdot \|\nabla f(\bm{\xi})(\varphi(\widetilde{\bm{x}};\bm{\xi})\nabla_{\bm{\xi}}\varphi(\bm{x};\bm{\xi})+\varphi(\bm{x};\bm{\xi})\nabla_{\bm{\xi}}\varphi(\widetilde{\bm{x}};\bm{\xi}))^{T}\|_{\infty}\cdot|\langle I,\mu_{s}\rangle| ]\\ \nonumber
	&\leq {\eta\over \alpha} \expect_{P_{\bm{X}}^{\otimes 2}}[ (\|\varphi\|^{2}_{\infty}+\alpha ) \cdot \|\nabla f(\bm{\xi})(\varphi(\widetilde{\bm{x}};\bm{\xi})\nabla_{\bm{\xi}}\varphi(\bm{x};\bm{\xi})+\varphi(\bm{x};\bm{\xi})\nabla_{\bm{\xi}}\varphi(\widetilde{\bm{x}};\bm{\xi}))^{T}\|_{\infty}]\\  \label{Eq:leading}
	&\leq \dfrac{2\eta}{\alpha}(L^{2}+\alpha)L^{2}C_{1},
	\end{align}}\normalsize
where the last inequality is due to $\textbf{(A.1)}$. Therefore,
\begin{align}
\label{Eq:Put_Togeter_1}
\int_{t_{1}}^{t_{2}}|R[\mu_{s}]|\mathrm{d}s\leq \mathfrak{s}_{0} |t_{2}-t_{1}|,
\end{align}
where $\mathfrak{s}_{0}\df \dfrac{2\eta}{\alpha}(L^{2}+\alpha)L^{2}C_{1}$.

Consider the middle term of \eqref{Eq:last_term_of}. Using the definition of the martingale term in Eq. \eqref{Eq:M1}, we obtain that
\begin{align}
\nonumber
|\mathcal{M}_{t_{1}}^{N}-\mathcal{M}_{t_{2}}^{N}|&=\left| \sum_{\ell=0}^{\lfloor Nt_{1} \rfloor} M_{\ell}^{N} -\sum_{\ell=0}^{\lfloor Nt_{2} \rfloor}M_{\ell}^{N} \right|\\ \label{Eq:Inequality_123}
&\leq \left| \sum_{\ell=\lfloor Nt_{1} \rfloor}^{\lfloor Nt_{2} \rfloor}M_{\ell}^{N}\right|.
\end{align}
In Equation of Section \ref{Section:Proofs of Auxiliary Results}, we have proved the following concentration bound
\begin{align}
\label{Eq:Clear_1}
\prob(|M^{N}_{m}|\geq \varepsilon)\leq 2\exp\left(-\dfrac{N^{2}\alpha^{2}\varepsilon^{2}}{8mL^{4}\eta^{2}C_{1}^{2}(L^{2}+\alpha)^{2} } \right), \quad \forall m\in [0,NT]\cap \integer.
\end{align}
Now, recall the alternative definition of the sub-Gaussian random variables:

\begin{definition}\textsc{(Sub-Gaussian Random Variables \cite{boucheron2013concentration})}
	A random variable X is $\sigma^{2}$-sub-Gaussian if
	\begin{align}
	\expect[\exp(\lambda (X-\expect[X]))]\leq \exp\Big({\lambda^{2}\sigma^{2}\over 2}\Big). 
	\end{align}
\end{definition}

We enumerate a few standard consequences of sub-Gaussianity \cite{boucheron2013concentration}. If $X_{i}$ are independent and $\sigma_{i}^{2}$-sub-Gaussian, then $\sum_{i=1}^{n}X_{i}$ is $\sum_{i=1}^{n}\sigma_{i}^{2}$ -sub-Gaussian. Moreover, $X$ is $\sigma^{2}$-sub-Gaussian if and only if 
\begin{align}
\label{Eq:Sub_Gaussian}
\prob(|X-\expect[X]|\geq \varepsilon)\leq \exp\left(-\dfrac{\varepsilon^{2}}{2\sigma^{2}}\right).
\end{align}
Now, it is clear from \eqref{Eq:Clear_1} andthat $M^{N}_{m}$ is sub-Gaussian random variable with a zero mean, and with the parameter $\sigma_{m}^{2}=\dfrac{4mL^{4}\eta^{2}C_{1}^{2}(L^{2}+\alpha)^{2}}{N^{2}\alpha^{2}}$. Therefore, $\sum_{\ell=\lfloor Nt_{1}\rfloor}^{\lfloor Nt_{2} \rfloor}M_{\ell}^{N}$ is sub-Gaussian with the parameter $\sigma^{2}(t_{1},t_{2})\df \dfrac{2L^{4}\eta^{2}C_{1}^{2}(L^{2}+\alpha)^{2}}{N^{2}\alpha^{2}} (\lfloor Nt_{1}\rfloor-\lfloor Nt_{2}\rfloor+1)(\lfloor Nt_{1}\rfloor+\lfloor Nt_{2} \rfloor)$. Consequently, from Inequality \eqref{Eq:Inequality_123} and the concentration inequality in Eq. \eqref{Eq:Sub_Gaussian}, we have
\begin{align}
\label{Eq:Put_Together_2}
\prob\left(\sup_{t_{1},t_{2}\leq T,|t_{1}-t_{2}|\leq \rho} |\mathcal{M}_{t_{1}}^{N}-\mathcal{M}_{t_{2}}^{N}|\geq \varepsilon \right)&\leq \prob\left(\sup_{t_{1},t_{2}\leq T,|t_{1}-t_{2}|\leq \rho}\left| \sum_{\ell=\lfloor Nt_{1} \rfloor}^{\lfloor Nt_{2} \rfloor}M_{\ell}^{N}\right|\geq \varepsilon \right)\\
&=\prob\left(\left| \sum_{\ell=\lfloor Nt^{\ast}_{1} \rfloor}^{\lfloor Nt^{\ast}_{2} \rfloor}M_{\ell}^{N}\right|\geq \varepsilon \right)\\
&\leq 2\exp\left(-\dfrac{\varepsilon^{2}}{\sigma^{2}(t^{\ast}_{1},t^{\ast}_{2})} \right)\\
&\leq 2\exp\left(-\dfrac{\alpha^{2}\varepsilon^{2}}{4L^{4}\eta^{2}C_{1}^{2}(L^{2}+\alpha)^{2}(\rho+1)T}\right),
\end{align}
where $(t_{1}^{\ast},t_{2}^{\ast})\df \arg\sup_{t_{1},t_{2}\leq T,|t_{1}-t_{2}|\leq \rho}\left| \sum_{\ell=\lfloor Nt_{1} \rfloor}^{\lfloor Nt_{2} \rfloor}M_{\ell}^{N}\right|$.

We first compute a bound for the last term of \eqref{Eq:last_term_of} using the definition of the scaled term $\mathcal{R}_{t}^{N}$ from \eqref{Eq:Remainder_2}. We have
\begin{align}
\nonumber
|\mathcal{R}_{t_{1}}^{N}-\mathcal{R}_{t_{2}}^{N}|&=\left|\sum_{\ell=0}^{\lfloor Nt_{1} \rfloor}R_{\ell}^{N} -\sum_{\ell=0}^{\lfloor Nt_{2} \rfloor}R_{\ell}^{N} \right|\\ \nonumber
&=\left| \sum_{\ell=\lfloor Nt_{1} \rfloor}^{\lfloor Nt_{2} \rfloor}R_{\ell} \right|\\ \nonumber
&\leq \sum_{\ell=\lfloor Nt_{1} \rfloor}^{\lfloor Nt_{2} \rfloor}|R_{\ell}|\\ \nonumber
&\stackrel{\rm{(a)}}{\leq} |\lfloor Nt_{2} \rfloor- \lfloor Nt_{1} \rfloor| \dfrac{C_{0}}{N^{2}}(\eta L^{2}+(L^{4}/\alpha)) \\  \label{Eq:Put_Together_3}
&\stackrel{\rm{(b)}}{\leq} \mathfrak{s}_{1}|t_{2}-t_{1}|,
\end{align}
where $\rm{(a)}$ follows from the upper bound in Eq. \eqref{Eq:Inequality_1} of Section \ref{Section:Proofs of Auxiliary Results}, and in $\rm{(b)}$ we define $\mathfrak{s}_{1}\df \dfrac{C_{0}}{N}(\eta L^{2}+(L^{4}/\alpha))$.

Putting together \eqref{Eq:Put_Togeter_1}, \eqref{Eq:Put_Together_2}, and \eqref{Eq:Put_Together_3}, we conclude from Inequality \eqref{Eq:last_term_of} that
\small{\begin{align}
	\nonumber
	\prob\Big(\hspace{-.4mm}\sup_{t_{1},t_{2}\leq T,|t_{1}-t_{2}|\leq \rho}|H(\mu^{N}_{t_{1}})-H(\mu^{N}_{t_{1}})|\geq \sigma\Big)&\leq  \prob\Big(\hspace{-.4mm}\sup_{t_{1},t_{2}\leq T,|t_{1}-t_{2}|\leq \rho}|\mathcal{M}_{t_{1}}^{N}-\mathcal{M}_{t_{2}}^{N}|+(\mathfrak{s}_{0}+\mathfrak{s}_{1})\rho \geq \sigma\Big)\\ \nonumber
	&\leq 2\exp\left(-\dfrac{\alpha^{2}(\sigma-(\mathfrak{s}_{0}+\mathfrak{s}_{1})\rho)^{2}}{4L^{4}\eta^{2}C_{1}^{2}(L^{2}+\alpha)^{2}(\rho+1)T}\right).
	\end{align}}\normalsize
Therefore, condition $\textbf{(T.2)}$ is also satisfied. Since the sufficient conditions $\textbf{(T.1)}$ and $\textbf{(T.2)}$ are satisfied, the condition $\textbf{(J.2)}$ is satisfied. This completes the  tightness proof of the measured-valued sequence $\{\mu_{t}^{N}\}_{N\in \integer}$.

Now, we prove its convergence to a mean-field solution $(\mu^{\ast}_{t})_{0\leq t\leq T}$. 

\begin{theorem}\textsc{(Prokhorov's theorem \cite{prokhorov1956convergence})}
	\label{Thm:Prokhorov}
	A subset of probability measures on a complete separable metric space is tight if and only if it is pre-compact.
\end{theorem}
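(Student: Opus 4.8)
The statement to establish is the classical theorem of Prokhorov, and the plan is to prove the two implications separately. Throughout, write $S$ for the ambient Polish (complete separable metric) space, $\mathcal{M}_1(S)$ for the set of Borel probability measures on it equipped with the weak topology, and $\Pi \subseteq \mathcal{M}_1(S)$ for the family under consideration; recall that since $S$ is Polish, $\mathcal{M}_1(S)$ is itself metrizable, so pre-compactness, relative sequential compactness, and the property that every sequence admits a weakly convergent subsequence all coincide.

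\textbf{Tightness $\Rightarrow$ pre-compactness.} First I would reduce the problem to the compact case. By tightness, pick compacts $K_1 \subseteq K_2 \subseteq \cdots$ with $\mu(S \setminus K_m) < 1/m$ for every $\mu \in \Pi$, and set $S_0 = \bigcup_m K_m$. The key classical fact I would invoke is that for a \emph{compact} metric space $K$ the set of sub-probability measures on $K$ is compact and metrizable in the weak-$\ast$ topology, since $C(K)$ is separable and that set is a weak-$\ast$ closed bounded subset of $C(K)^\ast$ (Banach--Alaoglu). Given a sequence $(\mu_n) \subseteq \Pi$, for each fixed $m$ the restrictions $\mu_n(\,\cdot \cap K_m)$ lie in that compact set, so a diagonal extraction over $m$ yields a subsequence $(\mu_{n_j})$ and a consistent family of sub-probability measures $\nu^{(m)}$ on $K_m$ with $\mu_{n_j}(\,\cdot \cap K_m) \to \nu^{(m)}$ weakly for every $m$. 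I would then define $\mu$ on $S$ by $\mu(A) = \lim_m \nu^{(m)}(A \cap K_m)$ (an increasing limit), check that tightness forces $\mu(S) = 1$, and verify $\mu_{n_j} \to \mu$ weakly by testing against a bounded continuous $f$, splitting $\int f\, d\mu_{n_j}$ over $K_m$ and $S \setminus K_m$ and using that the latter contributes at most $\|f\|_\infty / m$ uniformly in $j$. This shows every sequence in $\Pi$ has a weakly convergent subsequence, i.e.\ $\Pi$ is pre-compact.

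\textbf{Pre-compactness $\Rightarrow$ tightness.} Here the plan is a contradiction argument exploiting separability and completeness. Fix $\delta > 0$. For each $k \in \integer$ take a countable cover of $S$ by open balls $B(x_i^k, 1/k)$, and set $G_N^k = \bigcup_{i \le N} B(x_i^k, 1/k)$, so $G_N^k \uparrow S$ as $N \to \infty$. I claim there is $N_k$ with $\mu(G_{N_k}^k) > 1 - \delta 2^{-k}$ for all $\mu \in \Pi$: otherwise there are $\mu_N \in \Pi$ with $\mu_N(G_N^k) \le 1 - \delta 2^{-k}$, and a weakly convergent subsequence $\mu_{N_j} \to \nu$; since each $G_M^k$ is open and $G_M^k \subseteq G_{N_j}^k$ once $N_j \ge M$, the portmanteau inequality gives $\nu(G_M^k) \le \liminf_j \mu_{N_j}(G_M^k) \le 1 - \delta 2^{-k}$ for every $M$, whence $\nu(S) \le 1 - \delta 2^{-k} < 1$, a contradiction. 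Setting $K = \bigcap_k \overline{G_{N_k}^k}$, the set $K$ is closed and totally bounded, hence compact because $S$ is complete, and $\mu(S \setminus K) \le \sum_k \mu(S \setminus G_{N_k}^k) \le \delta$ for all $\mu \in \Pi$; since $\delta$ was arbitrary, $\Pi$ is tight.

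\textbf{Expected main obstacle.} The routine half is the second one; the subtle half is the first. The delicate points there are (i) carrying out the diagonal extraction so that the \emph{same} subsequence works for all $m$ and all test functions at once, and (ii) proving that the assembled limit $\mu$ is a genuine probability measure rather than a sub-probability one --- this is precisely where tightness is indispensable, as it prevents mass from leaking out of every compact set. Once $\mu(S) = 1$ is secured, weak convergence $\mu_{n_j} \to \mu$ follows from the uniform tail estimate above, and the metrizability of $\mathcal{M}_1(S)$ upgrades ``every sequence has a convergent subsequence'' to genuine pre-compactness.
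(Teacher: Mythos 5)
The paper does not prove this statement: Theorem \ref{Thm:Prokhorov} is quoted as a classical result and invoked as a black box in Step 2 of the proof of Theorem \ref{Theorem:Density Evolution}, where tightness of the laws of the empirical-measure process is upgraded to pre-compactness in the Skorokhod space. There is therefore no in-paper proof to compare against, so I will assess your sketch on its own terms.

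Your ``pre-compactness $\Rightarrow$ tightness'' half is complete and correct. The contradiction via the countable cover by $1/k$-balls, the portmanteau estimate $\nu(G) \le \liminf_j \mu_{N_j}(G)$ for open $G$, total boundedness plus completeness of $S$ to conclude compactness of $K$, and the $\sum_k \delta 2^{-k}$ union bound all go through exactly as written.

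Your ``tightness $\Rightarrow$ pre-compactness'' half has the right architecture and correctly flags the delicate points, but one step is glossed over and genuinely needs an extra idea: the assertion that $m \mapsto \nu^{(m)}(A \cap K_m)$ is an increasing limit, equivalently that $\nu^{(m)}(B) \le \nu^{(m+1)}(B)$ for Borel $B \subseteq K_m$, is not automatic from the diagonal extraction. Weak-$\ast$ convergence of the restrictions $\mu_{n_j}(\,\cdot\cap K_m)$ does not commute with further restriction to $K_{m-1}$, since $K_{m-1}$ is closed and its indicator is not continuous; the portmanteau inequalities control the limits only up to mass on $\partial K_{m-1}$, so the family $\{\nu^{(m)}\}_m$ you extract is not obviously consistent across $m$. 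The standard repairs are (i) arranging $K_m \subseteq \intr(K_{m+1})$ before extracting, (ii) restricting to slightly larger open sets and using outer regularity of the $\nu^{(m)}$, or (iii) sidestepping the diagonalization entirely by embedding $S$ homeomorphically into $[0,1]^{\mathbb{N}}$, applying Banach--Alaoglu once on that compact metrizable space to obtain a weak-$\ast$ limit $\nu$ of the pushed-forward measures, and then using tightness together with the closed-set portmanteau inequality applied to the (closed, since compact) images of the $K_m$ to show $\nu$ concentrates on the image of $S$ and hence pulls back to a probability measure on $S$. Route (iii) eliminates the consistency issue altogether and is the cleanest way to complete your argument.
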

According to Theorem \ref{Thm:Prokhorov}, the tightness of the Skorkhod Space $\mathcal{D}_{\mathcal{M}(\real^{D})}([0,T])$ implies its pre-compactness which in turn implies the existence of a converging sub-sequence $\{(\mu^{N}_{t})_{0\leq t\leq T}\}_{N_{k}}$ of $\{\mu_{t}^{N}\}_{N\in \integer}$ . Notice that $\{(\mu^{N}_{t})_{0\leq t\leq T}\}_{N_{k}}$ is a stochastic process defined on the Skorkhod space. Therefore, let $\pi^{N_{k}}$ denotes the law of the converging sub-sequence $\{(\mu^{N}_{t})_{0\leq t\leq T}\}_{N_{k}}$. By definition, $\pi^{N_{k}}$ is an element of the measure space $\mathcal{M}(\mathcal{D}_{[0,T]}(\mathcal{M}(\real^{D})))$. In the sequel, we closely follow the argument of \cite[Proposition 4]{wang2017scaling} to show that the limiting measure $\pi^{\infty}$ is a Dirac's delta function concentrated at \textit{a} mean-field solution $\mu_{t}^{\ast}\in \mathcal{D}_{[0,T]}(\mathcal{M}(\real^{D}))$. We define the following functional
\begin{align}
\nonumber
F_{t}:\mathcal{D}_{[0,T]}(\mathcal{M}(\real^{D}))&\rightarrow \real,\\
\mu_{t}&\mapsto F_{t}[\mu_{t}]=\left|\langle \mu_{t},f \rangle-\langle \mu_{0},f\rangle-\int_{0}^{t}R[\mu_{s}]\mathrm{d}s\right|.
\end{align}
We compute the expectation of the functional $F_{t}$ with respect to $\pi^{N_{k}}$. We then have
\begin{align}
\nonumber
\expect_{\pi^{N_{k}}}[F_{t}(\mu)]&=\expect[F_{t}[\mu_{t}^{N}]]\\ \label{Eq:Dam1}
&=\expect\left[\left|\langle \mu^{N_{k}}_{t},f \rangle-\langle \mu^{N}_{0},f\rangle-\int_{0}^{t}R[\mu^{N_{k}}_{s}]\mathrm{d}s\right|. \right].
\end{align}
Now, from Equation \eqref{Eq:Yield}, we have that
\begin{align}
\label{Eq:Dam}
\langle \mu^{N_{k}}_{t},f \rangle-\langle \mu^{N_{k}}_{0},f\rangle-\int_{0}^{t}R[\mu^{N_{k}}_{s}]\mathrm{d}s=\mathcal{M}_{t}^{N_{k}}+\mathcal{R}_{t}^{N_{k}}.
\end{align}
Plugging \eqref{Eq:Dam} in Eq. \eqref{Eq:Dam1} gives 
\begin{align}
\nonumber
\expect_{\pi^{N_{k}}}[F_{t}(\mu)]&=\expect[F_{t}[\mu_{t}^{N_{k}}]]\\ \nonumber
&=\expect\left[\left|\mathcal{M}_{t}^{N_{k}}+\mathcal{R}_{t}^{N_{k}}\right| \right]\\ \nonumber
&\leq \expect\Big[\sup_{0\leq t\leq T} |\mathcal{M}_{t}^{N_{k}}|\Big]+\expect\Big[\sup_{0\leq t\leq T}|\mathcal{R}_{t}^{N_{k}}|\Big]\\  \label{Eq:Dam3}
&=\dfrac{1}{N\alpha \rho}4\sqrt{2}L^{2}\sqrt{\lfloor NT \rfloor}\eta C_{1}(L^{2}+\alpha)^{2}+ \dfrac{C_{0}T}{N}(\eta\alpha L^{2}T+2\eta L^{4}T),
\end{align}
where the last equality is due to the bounds in Eqs. \eqref{Eq:Cheater_1} and \eqref{Eq:Cheater_2} of Lemmata \ref{Lemma:Remainder} and \ref{Lemma:Martingale}, respectively. Taking the limit of $N\rightarrow \infty$ from Eq. \eqref{Eq:Dam3} yields
\begin{align}
\label{Eq:From_Imply}
\lim_{N_{k}\rightarrow \infty}\expect_{\pi^{N_{k}}}[|F_{t}[\mu]|]=0.
\end{align}
It can be shown that the functional $F_{t}[\cdot]$ is continuous and bounded. Therefore, due the weak convergence of the sequence $\{\pi^{N_{k}}\}_{N_{k}\in \integer}$ to $\pi^{\infty}$, Eq. \eqref{Eq:From_Imply} implies that
\begin{align}
\label{Eq:Label_11}
\expect_{\pi^{\infty}}[|F_{t}(\mu)|]=0.
\end{align}
Since the identity \eqref{Eq:Label_11} holds for all bounded test functions $f\in C^{3}_{b}(\real^{D})$ and for all $t\in (0,T]$, it follows that $\pi^{\infty}$ is a Dirac's delta function concentrated at \textit{a} solution $(\mu_{t}^{\ast})_{0\leq t\leq T}$ of the mean-field equation.

\textbf{Step 3: Uniqueness of a mean-field solution}: Before we establish the uniqueness result we make two remarks:

First, we make it clear that from the compact-containment condition $\textbf{(J.1)}$ of Jakubowski's criterion in Theorem \ref{Thm:Jack}, the support of the measured-valued process $(\mu_{t}^{N})_{0\leq t\leq T}=(\widehat{\mu}^{N}_{\lfloor Nt\rfloor})_{0\leq t\leq T}$ is compact for all $0\leq t\leq T$. Moreover, in Step 2 of the proof, we established that the measure valued process $(\mu_{t}^{N})_{0\leq t\leq T}$ converges weakly to \textit{a} mean-field solution as the number of particles tends to infinity (\textit{i.e.}, $N\rightarrow \infty$). Thus, all the possible solutions of the mean-field equation also have compact supports. Let $\widehat{\Xi}\subset \real^{D}$ denotes a compact set containing the supports of all such solutions at $0\leq t\leq T$. In the sequel, it suffices to establish the uniqueness of the mean-field solution for the test functions with a compact domain, \textit{i.e.}, let $f\in C^{3}_{b}(\widehat{\Xi})$.

Second, for all bounded continuous test functions $f\in C^{3}_{b}(\widehat{\Xi})$, the operator $f \rightarrow \langle \mu_{t},f\rangle$ is a linear
operator with $\mu_{t}(\real^{D}) = 1$. Hence, from Riesz-Markov-Kakutani representation theorem \cite{rudin1987real,varadarajan1958theorem} by assuming $\mu_{t}\in \mathcal{M}(\real^{D})$
, existence of unique operator implies $f\mapsto \langle f,\mu_{t}\rangle$ implies the existence of the unique probability measure $\mu_{t}$. Now, we equip the measure space $\mathcal{M}(\real^{D})$ with the following norm
\begin{align}
\label{Eq:Measure_Space_norm}
\|\mu\|\df \sup_{\substack{f\in C^{3}_{b}(\widehat{\Xi})\\ \|f\|_{\infty}\not= 0}} \dfrac{|\langle f,\mu \rangle|}{\|f\|_{\infty}}.
\end{align}

Given an initial measure $\mu_{0}$, we next prove that there exists at most one mean-field model solution by showing that there exists at most one real valued process $\langle \mu_{t},f\rangle$ corresponding to the mean-field model. Suppose $(\mu_{t}^{\ast,1})_{0\leq t\leq T}$, $(\mu_{t}^{\ast,2})_{0\leq t\leq T}$ are two solutions satisfying the mean-field equations \eqref{Eq:Mean_Field_Equation} with the initial distributions $\mu_{0}^{1}$, $\mu_{0}^{2}\in \mathcal{M}(\real^{D})$, respectively. For any test function $f\in C^{3}_{b}(\widehat{\Xi})$ we have that
\begin{align}
\label{Eq:Equation_B}
\langle \mu_{t}^{\ast,1}-\mu_{t}^{\ast.2},f\rangle&=\langle \mu_{0}^{1}-\mu_{0}^{2},f \rangle+{\eta \over \alpha}\int_{0}^{t}\Bigg(\iint_{\mathcal{X}\times \mathcal{Y}} \left( \langle\varphi(\bm{x},\bm{\xi})\varphi(\widetilde{\bm{x}},\bm{\xi}),\mu_{s}^{\ast,1}-\mu_{s}^{\ast,2} \rangle-\alpha y \widetilde{y} \right)\\  \nonumber
&\hspace{4mm}\times \langle \nabla f(\bm{\xi})(\nabla_{\bm{\xi}}(\varphi(\widetilde{\bm{x}};\bm{\xi})\varphi(\bm{x};\bm{\xi})))^{T},\mu_{s}^{\ast,1}-\mu_{s}^{\ast,2}\rangle P_{\bm{x},y}^{\otimes 2}((\mathrm{d}\bm{z},\mathrm{d}\widetilde{\bm{z}})\Bigg)\mathrm{d}s
.
\end{align}

We bound the first term on the right side of Equation \eqref{Eq:Equation_B} as follows
\begin{align}
\langle \mu_{0}^{1}-\mu_{0}^{2},f \rangle&\leq \|\mu_{0}^{1}-\mu_{0}^{2}\|\cdot\|f\|_{\infty}\\ \label{Eq:Bound_1}
&\leq b\|\mu_{0}^{1}-\mu_{0}^{2}\|,
\end{align}
where used the definition of the norm $\|\cdot\|$ on the measure space $\mathcal{M}(\real^{D})$ from Eq. \eqref{Eq:Measure_Space_norm}.

Furthermore, let 
\begin{align}
\nonumber
&\iint_{\mathcal{X}\times \mathcal{Y}}\alpha y\widetilde{y} \langle \nabla f(\bm{\xi})(\nabla_{\bm{\xi}}(\varphi(\widetilde{\bm{x}};\bm{\xi})\varphi(\bm{x};\bm{\xi})))^{T},\mu_{s}^{\ast,1}-\mu_{s}^{\ast,2}\rangle P_{\bm{x},y}^{\otimes 2}(\mathrm{d}(\bm{x},y),\mathrm{d}(\widetilde{\bm{x}},\widetilde{y}))\\ \nonumber
&\leq \iint_{\mathcal{X}\times \mathcal{Y}} \alpha |y\widetilde{y}|\cdot  |\langle \nabla f(\bm{\xi})(\nabla_{\bm{\xi}}(\varphi(\widetilde{\bm{x}};\bm{\xi})\varphi(\bm{x};\bm{\xi})))^{T},\mu_{s}^{\ast,1}-\mu_{s}^{\ast,2}\rangle| P_{\bm{x}}^{\otimes 2}(\mathrm{d}(\bm{x},y),\mathrm{d}(\widetilde{\bm{x}},\widetilde{y}))\\ \nonumber
&\leq \alpha \|\mu_{s}^{\ast,1}-\mu_{s}^{\ast,2}\| \int_{\mathcal{X}} \|\nabla f(\bm{\xi})(\nabla_{\bm{\xi}}(\varphi(\widetilde{\bm{x}};\bm{\xi})\varphi(\bm{x};\bm{\xi})))^{T}\| P_{\bm{x}}^{\otimes 2}(\mathrm{d}\bm{x},\mathrm{d}\widetilde{\bm{x}})\\ \nonumber
&\leq \alpha \|\mu_{s}^{\ast,1}-\mu_{s}^{\ast,2}\| \int_{\mathcal{X}}\|\nabla f(\bm{\xi})\|_{\infty} \cdot \|\nabla_{\bm{\xi}}\varphi(\widetilde{\bm{x}};\bm{\xi})\varphi(\bm{x};\bm{\xi})\|_{\infty}P_{\bm{x}}^{\otimes 2}(\mathrm{d}\bm{x},\mathrm{d}\widetilde{\bm{x}})\\ \label{Eq:Bound_2}
&\leq \alpha L^{2}C_{1} \|\mu_{s}^{\ast,1}-\mu_{s}^{\ast,2}\|,
\end{align}
where in the last inequality, we used the fact that $\|\nabla f(\bm{\xi})\|\leq C_{1}$ since the test function is three-times continuously differentiable $f\in C^{3}_{b}(\widehat{\Xi})$ on a compact support.

Similarly, we have
\begin{align}
\nonumber
&\int_{\mathcal{X}} \langle \varphi(\bm{x},\bm{\xi})\varphi(\widetilde{\bm{x}},\bm{\xi}),\mu_{s}^{\ast,1}-\mu_{s}^{\ast,2} \rangle \langle \nabla f(\bm{\xi})(\nabla_{\bm{\xi}}\varphi(\bm{x},\bm{\xi})\varphi(\bm{x},\bm{\xi}) ),\mu_{s}^{\ast,1}-\mu_{s}^{\ast,2}\rangle P^{\otimes 2}_{\bm{x}}(\mathrm{d}\bm{x},\mathrm{d}\widetilde{\bm{x}})\\ \nonumber
&\leq \|\mu_{s}^{\ast,1}-\mu_{s}^{\ast,2}\|^{2} \int_{\mathcal{X}} \| \varphi(\bm{x},\bm{\xi})\varphi(\widetilde{\bm{x}},\bm{\xi})\|_{\infty}\|\nabla f(\bm{\xi})(\nabla_{\bm{\xi}}\varphi(\bm{x},\bm{\xi})\varphi(\bm{x},\bm{\xi}))^{T}\|_{\infty} P_{\bm{x}}^{\otimes 2}(\mathrm{d}\bm{x},\mathrm{d}\widetilde{\bm{x}})\\ \label{Eq:Bound_3}
&\leq L^{4}C_{1}\|\mu_{s}^{\ast,1}-\mu_{s}^{\ast,2}\|^{2}.
\end{align}
Putting together the inequalities in Eqs. \eqref{Eq:Bound_1},\eqref{Eq:Bound_2}, and \eqref{Eq:Bound_3}  yield
\small{\begin{align}
	\langle \mu_{t}^{\ast,1}-\mu_{t}^{\ast,2},f\rangle \leq b\|\mu_{0}^{1}-\mu_{0}^{2}\|+ {L^{2}C_{1}\eta}\int_{0}^{t}\|\mu_{s}^{\ast,1}-\mu_{s}^{\ast,2}\|\mathrm{d}s+{\eta L^{4}C_{1}\over \alpha}\int_{0}^{t}\|\mu_{s}^{\ast,1}-\mu_{s}^{\ast,2}\|^{2}\mathrm{d}s.
	\end{align}}\normalsize
The above inequality holds for all bounded functions $f\in C^{3}_{b}(\widehat{\Xi})$. Thus, by taking the supremum with respect to $f$ we obtain
\begin{align}
\label{Eq:From_1}
\|\mu_{t}^{\ast,1}-\mu_{t}^{\ast,2}\|&=\sup_{f\in C^{3}_{b}(\widehat{\Xi})} \langle \mu_{t}^{\ast,1}-\mu_{t}^{\ast,2},f \rangle \\
&\leq b\|\mu_{0}^{1}-\mu_{0}^{2}\|+{ L^{2}C_{1}\eta}\int_{0}^{t}\|\mu_{s}^{\ast,1}-\mu_{s}^{\ast,2}\|\mathrm{d}s+{L^{4}C_{1}\eta\over \alpha}\int_{0}^{t}\|\mu_{s}^{\ast,1}-\mu_{s}^{\ast,2}\|^{2}\mathrm{d}s.
\end{align}
Now, we employ the following result which generalizes Gronewall's inequality when higher order terms are involved:
\begin{lemma}\textsc{(Extended Gronewall's inequality, \cite[Thm 2.1.]{webb2018extensions})}
	Let $p\in \integer$ and suppose that for a.e. $t\in [0, T]$, $u\in L_{+}^{\infty}[0,T]$ satisfies
	\begin{align}
	\label{Eq:B7}
	u_{t}\leq c_{0}(t)+\int_{0}^{t}(c_{1}(s)u_{s}+c_{2}(s)u^{2}_{s}+\cdots+c_{{p+1}}(s)u^{p+1}_{s})\mathrm{d}s,
	\end{align}
	where $c_{0}\in L_{\infty}[0,T]$ is non-decreasing, and $c_{j}\in L_{+}^{1}[0,T]$ for $j\in \{1,\cdots,p+1\}$. Then, if
	\begin{align}
	\int_{0}^{T}c_{j+1}(s)u^{j}_{s}\mathrm{d}s\leq M_{j}, \quad j\in \{1,2,\cdots,p\}.
	\end{align}
	It follows that for a.e. $t\in [0, T]$
	\begin{align}
	\label{Eq:from_2}
	u_{t}\leq c_{0}(t)\exp\left (\int_{0}^{t}c_{1}(s)\mathrm{d}s\right)\exp(M_{1}+\cdots+M_{p}).
	\end{align}
\end{lemma}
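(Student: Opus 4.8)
The plan is to reduce the nonlinear integral inequality \eqref{Eq:B7} to the classical linear Gronwall--Bellman inequality by folding the super-linear terms into an effective linear coefficient, and then to control that coefficient uniformly in $t$ using the a priori bounds $\int_{0}^{T}c_{j+1}(s)u^{j}_{s}\,\mathrm{d}s\le M_{j}$. First I would note that, since $u\in L_{+}^{\infty}[0,T]$ and $c_{j+1}\in L_{+}^{1}[0,T]$, the function
\[
\widetilde{c}_{1}(s)\df c_{1}(s)+\sum_{j=1}^{p}c_{j+1}(s)u^{j}_{s}
\]
belongs to $L_{+}^{1}[0,T]$: each summand is nonnegative and is dominated by $\|u\|_{\infty}^{j}c_{j+1}(s)\in L^{1}[0,T]$. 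Writing $c_{j+1}(s)u^{j+1}_{s}=\bigl(c_{j+1}(s)u^{j}_{s}\bigr)u_{s}$, inequality \eqref{Eq:B7} becomes $u_{t}\le c_{0}(t)+\int_{0}^{t}\widetilde{c}_{1}(s)u_{s}\,\mathrm{d}s$ for a.e.\ $t\in[0,T]$.

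Next I would apply the standard Gronwall lemma in the form valid for a non-decreasing forcing term. Fixing $t\in[0,T]$, for every $s\le t$ we have $u_{s}\le c_{0}(s)+\int_{0}^{s}\widetilde{c}_{1}u\le c_{0}(t)+\int_{0}^{s}\widetilde{c}_{1}u$ because $c_{0}$ is non-decreasing; setting $v_{s}\df c_{0}(t)+\int_{0}^{s}\widetilde{c}_{1}(r)u_{r}\,\mathrm{d}r$ we get $v_{0}=c_{0}(t)$ and $v_{s}'=\widetilde{c}_{1}(s)u_{s}\le\widetilde{c}_{1}(s)v_{s}$ a.e., so that $\bigl(v_{s}e^{-\int_{0}^{s}\widetilde{c}_{1}}\bigr)'\le 0$. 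Hence $u_{s}\le v_{s}\le c_{0}(t)\exp\bigl(\int_{0}^{s}\widetilde{c}_{1}\bigr)$, and in particular $u_{t}\le c_{0}(t)\exp\bigl(\int_{0}^{t}\widetilde{c}_{1}(s)\,\mathrm{d}s\bigr)$. Splitting the exponent, $\int_{0}^{t}\widetilde{c}_{1}(s)\,\mathrm{d}s=\int_{0}^{t}c_{1}(s)\,\mathrm{d}s+\sum_{j=1}^{p}\int_{0}^{t}c_{j+1}(s)u^{j}_{s}\,\mathrm{d}s$, and since the integrands $c_{j+1}(s)u^{j}_{s}$ are nonnegative, $\int_{0}^{t}c_{j+1}(s)u^{j}_{s}\,\mathrm{d}s\le\int_{0}^{T}c_{j+1}(s)u^{j}_{s}\,\mathrm{d}s\le M_{j}$. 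Combining these estimates yields exactly \eqref{Eq:from_2}.

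The only delicate point is ensuring $\widetilde{c}_{1}\in L^{1}[0,T]$ so that the linear Gronwall lemma is legitimately applicable; this is precisely the role played by the hypothesis $u\in L^{\infty}$ together with the quantities $M_{j}$, and once it is in hand the rest is the classical argument. I would close by recording how the lemma is used here: in the uniqueness step we are in the case $p=1$, with $c_{0}\equiv b\|\mu_{0}^{1}-\mu_{0}^{2}\|$ (constant, hence non-decreasing), $c_{1}\equiv L^{2}C_{1}\eta$, $c_{2}\equiv \eta L^{4}C_{1}/\alpha$, and $u_{s}=\|\mu_{s}^{\ast,1}-\mu_{s}^{\ast,2}\|$, which is bounded (a total-variation bound gives $u_{s}\le 2$); hence $\int_{0}^{T}c_{2}(s)u_{s}\,\mathrm{d}s\le M_{1}\df 2\eta L^{4}C_{1}T/\alpha$, and \eqref{Eq:from_2} gives $u_{t}\le b\|\mu_{0}^{1}-\mu_{0}^{2}\|\,e^{L^{2}C_{1}\eta t}e^{M_{1}}$. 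Taking $\mu_{0}^{1}=\mu_{0}^{2}$ forces $u_{t}\equiv 0$ on $[0,T]$, which is the desired uniqueness of the mean-field solution of \eqref{Eq:Mean_Field_Equation}.
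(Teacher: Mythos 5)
The paper does not prove this lemma; it cites Webb~\cite[Thm.~2.1]{webb2018extensions} and applies it directly, so there is no in-paper argument against which to compare. Your self-contained proof is correct and is the standard route: absorbing the super-linear terms into the state-dependent linear coefficient $\widetilde{c}_{1}(s)=c_{1}(s)+\sum_{j=1}^{p}c_{j+1}(s)u_{s}^{j}$, which lies in $L_{+}^{1}[0,T]$ (either from $u\in L_{+}^{\infty}$ or, more directly, from the hypotheses $\int_{0}^{T}c_{j+1}u^{j}\le M_{j}<\infty$), turns \eqref{Eq:B7} into the linear inequality $u_{t}\le c_{0}(t)+\int_{0}^{t}\widetilde{c}_{1}(s)u_{s}\,\mathrm{d}s$; the classical Gronwall argument with a non-decreasing forcing term then gives $u_{t}\le c_{0}(t)\exp\big(\int_{0}^{t}\widetilde{c}_{1}\big)$, and splitting the exponent and bounding $\int_{0}^{t}c_{j+1}u^{j}\le M_{j}$ yields \eqref{Eq:from_2}. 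This is essentially the argument Webb uses as well. One small discrepancy in your closing remark on the paper's application: the paper records $M_{1}=2b\eta TL^{4}C_{1}/\alpha$, carrying an extra factor of $b$ relative to your $2\eta L^{4}C_{1}T/\alpha$; since $c_{2}\equiv\eta L^{4}C_{1}/\alpha$ and $u_{s}\le 2$ (the measure norm used there is normalized by $\|f\|_{\infty}$ and hence is bounded by the total variation of a difference of probability measures), your constant is the correct one and the spurious $b$ in the paper has no bearing on the uniqueness conclusion.
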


We now apply the extended Gronewall's  Inequality \eqref{Eq:B7} with $p=1$, $c_{0}(t)=b\|\mu_{0}^{1}-\mu_{0}^{2}\|, 0\leq t\leq T$, $c_{1}(t)=\eta L^{2}C_{1},0\leq t\leq T$,  $c_{2}(t)={\eta L^{4}C_{1}\over \alpha},0\leq t\leq T$, and $u_{s}=\|\mu_{s}^{\ast,1}-\mu_{s}^{\ast,2}\|$. In this case, it is easy to see that $M_{1}={2b\eta T L^{4}C_{1}\over \alpha}$. Hence, from Eqs. \eqref{Eq:From_1} and \eqref{Eq:from_2}, we obtain that
\begin{align}
\|\mu_{t}^{\ast,1}-\mu_{t}^{\ast,2}\|\leq b\|\mu_{0}^{1}-\mu_{0}^{2}\|\cdot \exp\Big(\eta L^{2}C_{1}t+{2bT\eta L^{4}C_{1}\over \alpha}\Big), \quad 0\leq t\leq T.
\end{align}
Thus, starting from an initial measure $\mu_{0}^{1}=\mu_{0}^{2}=\mu_{0}$, there exists at most one solution for the mean-field model equations \eqref{Eq:Mean_Field_Equation}. 

$\hfill$ $\square$

\subsection{Proof of Corollary \ref{Corollary:1}}

To establish the proof, we recall from Eq. \eqref{Eq:telescopic_sum} that
\begin{align}
\label{Eq:Take_Supremum}
\langle f,\widehat{\mu}_{m}^{N}\rangle-\langle f,\widehat{\mu}_{0}^{N}\rangle=\sum_{\ell=0}^{m-1}D_{\ell}^{N}+\sum_{\ell=0}^{m-1} M_{\ell}^{N}+\sum_{\ell=0}^{m-1}R_{\ell}^{N},
\end{align}
for all $f\in C_{b}(\real^{p})$.The total variation (TV) distance between two measures $\nu_{1},\nu_{2}\in \mathcal{M}(\real^{D})$ is defined
\begin{align}
\|\nu_{1}-\nu_{2}\|_{\mathrm{TV}}\df \sup_{\|f\|_{\infty}\leq 1}{1\over 2}|\langle f,\nu_{1} \rangle-\langle f,\nu_{2} \rangle| .
\end{align}
Therefore, by taking the supremum of Eq. \eqref{Eq:Take_Supremum} over all the bounded functions $f\in C_{b}(\real^{p}),b=1$, we obtain that
\begin{align}
\label{Eq:Upper_TV}
\|\widehat{\mu}_{m}^{N}-\widehat{\mu}_{0}^{N} \|_{\mathrm{TV}}&\leq \dfrac{1}{2}\sum_{\ell=0}^{m-1}|D_{\ell}^{N}|+{1\over 2}\sum_{\ell=0}^{m-1}|M_{\ell}^{N}|+{1\over 2}\sum_{\ell=0}^{m-1}|R_{\ell}^{N}|.
\end{align}
Based on the upper bound \eqref{Eq:Inequality_1} on the remainder term, we have
\begin{align}
\label{Eq:Ara1}
|R_{\ell}^{N}|&\leq \dfrac{C_{0}}{N^{2}}\left(\eta L^{2}+2{\eta\over \alpha} L^{4}\right), \quad \ell\in [0,m-1],
\end{align}
for some constant $C_{0}>0$. Moreover, from the concentration inequality \eqref{Eq:martingale_concentration_inequality}, we also have that with the probability of at least $1-\delta$, the following inequality holds
\begin{align}
\label{Eq:Ara2}
|M_{\ell}^{N}|\leq {8\sqrt{\ell} \eta L^{2}C_{1}(L^{2}+\alpha)\over N\alpha }\log\left({2\over \delta}\right).
\end{align}
Lastly, recall the definition of the drift term in Eq. \eqref{Eq:D}. By carrying out a similar bounding method leading to Eq. \eqref{Eq:leading}, it can be shown that 
\begin{align}
\label{Eq:Ara3}
|\mathcal{D}_{\ell}^{N}|\leq \dfrac{2\eta}{N\alpha}(L^{2}+\alpha)L^{2}C_{1}.
\end{align}
By plugging Eq. \eqref{Eq:Ara1}, \eqref{Eq:Ara2}, and \eqref{Eq:Ara3} into Eq. \eqref{Eq:Upper_TV}, we derive that
\small{\begin{align}
	\label{Eq:Inequality_P}
	\|\widehat{\mu}_{m}^{N}-\widehat{\mu}_{0}^{N} \|_{\mathrm{TV}}\leq \dfrac{m\eta C_{0}}{2N^{2}}\left(L^{2}+2 {L^{4}\over \alpha}\right)+{{8m\sqrt{m} \eta L^{2}C_{1}(L^{2}+\alpha)\over N\alpha }\log\left({2\over \delta}\right)}\log\left({2\over \delta}\right)+\dfrac{m\eta}{N\alpha}(L^{2}+\alpha)L^{2}C_{1},
	\end{align}}\normalsize
with the probability of $1-\delta$. We now leverage the following lemma:

\begin{lemma}\textsc{(Bounded Equivalence of the Wasserstein and Total Variation Distances, \cite{singh2018minimax})}
	\label{Lemma:Asymptotic_Bound}
	Suppose $(\mathcal{X},d)$ is a metric space, and suppose $\mu$ and $\nu$ are Borel probability measures on $\mathcal{X}$ with countable support; \textit{i.e.}, there exists a countable set $\mathcal{X}'\subseteq \mathcal{X}$ such that $\mu(\mathcal{X}')=\nu(\mathcal{X}')=1$. Then, for any $p\geq 1$, we have 
	\begin{align}
	\label{Eq:Asymptotic_Bound}
	\mathrm{Sep}(\mathcal{X}')(2\mathrm{TV}(\mu,\nu))^{1\over p} \leq W_{p}(\mu,\nu)\leq \mathrm{Diam}(\mathcal{X}') (2 \mathrm{TV}(\mu,\nu))^{1\over p},
	\end{align}	
	where $\mathrm{Diam}(\mathcal{X}')\df \sup_{x,y\in \mathcal{X}'}d(x,y)$, and $\mathrm{Sep}(\mathcal{X}')\df \inf_{x\not=y\in \mathcal{X}'}d(x,y)$.
\end{lemma}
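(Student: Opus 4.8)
The plan is to route both distances through their common coupling (transport-plan) representation and to compare the two integrands pointwise. I would start from the Kantorovich form $W_p^p(\mu,\nu)=\inf_{\pi\in\Pi(\mu,\nu)}\int_{\mathcal X\times\mathcal X} d(x,y)^p\,\mathrm{d}\pi(x,y)$ already recorded in \eqref{Eq:Wasserstein}, together with the coupling characterization of total variation, namely that $\mathrm{TV}(\mu,\nu)$ equals, up to the normalization convention of \cite{singh2018minimax} (which is what produces the numerical factor $2$ in \eqref{Eq:Asymptotic_Bound}), the least disagreement probability of a coupling, $\mathrm{TV}(\mu,\nu)=\min_{\pi\in\Pi(\mu,\nu)}\pi(\{(x,y):x\neq y\})$, the minimum being attained at a maximal coupling. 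The structural consequence of the hypothesis $\mu(\mathcal X')=\nu(\mathcal X')=1$ is that every $\pi\in\Pi(\mu,\nu)$ is concentrated on $\mathcal X'\times\mathcal X'$; hence on the support of any plan one has the pointwise bounds $\mathrm{Sep}(\mathcal X')^p\,\mathbf{1}\{x\neq y\}\le d(x,y)^p\le\mathrm{Diam}(\mathcal X')^p\,\mathbf{1}\{x\neq y\}$ (the diagonal $\{x=y\}$ is Borel since the space is Polish).

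\textbf{Upper bound.} First I would make the maximal coupling explicit, which is elementary for countably supported measures: writing $\mathcal X'=\{x_1,x_2,\dots\}$, $a_i=\mu(\{x_i\})$, $b_i=\nu(\{x_i\})$, $m_i=\min(a_i,b_i)$ and $m=\sum_i m_i$, let $\pi^\star$ assign mass $m_i$ to $(x_i,x_i)$ and mass $(a_i-m_i)(b_j-m_j)/(1-m)$ to $(x_i,x_j)$ for $i\neq j$ (interpreted as $0$ when $m=1$). One checks directly that the marginals of $\pi^\star$ are $\mu$ and $\nu$ and that $\pi^\star(\{x\neq y\})\le 1-m=\mathrm{TV}(\mu,\nu)$. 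Then, using the right-hand pointwise bound,
\[
W_p^p(\mu,\nu)\le\int d(x,y)^p\,\mathrm{d}\pi^\star=\int_{\{x\neq y\}}d(x,y)^p\,\mathrm{d}\pi^\star\le\mathrm{Diam}(\mathcal X')^p\,\pi^\star(\{x\neq y\})\le\mathrm{Diam}(\mathcal X')^p\,\bigl(2\,\mathrm{TV}(\mu,\nu)\bigr),
\]
and taking $p$-th roots delivers the right inequality of \eqref{Eq:Asymptotic_Bound}.

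\textbf{Lower bound.} For the left inequality I would take an arbitrary $\pi\in\Pi(\mu,\nu)$; since it is supported on $\mathcal X'\times\mathcal X'$, the left-hand pointwise bound gives $\int d(x,y)^p\,\mathrm{d}\pi\ge\mathrm{Sep}(\mathcal X')^p\,\pi(\{x\neq y\})$. Then I would invoke the easy half of the coupling inequality: for every Borel $A$, $\mu(A)-\nu(A)=\pi(X\in A,Y\notin A)-\pi(X\notin A,Y\in A)\le\pi(\{x\neq y\})$, so taking the supremum over $A$ yields $\mathrm{TV}(\mu,\nu)\le\pi(\{x\neq y\})$ (again up to the normalization factor). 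Combining, $\int d(x,y)^p\,\mathrm{d}\pi\ge\mathrm{Sep}(\mathcal X')^p\,\bigl(2\,\mathrm{TV}(\mu,\nu)\bigr)$; taking the infimum over $\pi$ and then $p$-th roots gives $W_p(\mu,\nu)\ge\mathrm{Sep}(\mathcal X')\,\bigl(2\,\mathrm{TV}(\mu,\nu)\bigr)^{1/p}$.

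\textbf{Main obstacle.} Everything above is bookkeeping except for the coupling representation of $\mathrm{TV}$ — in particular the existence of the maximal coupling $\pi^\star$ attaining the disagreement probability $\mathrm{TV}(\mu,\nu)$. Because the measures are countably supported this is the concrete ``glue the common mass $\mu\wedge\nu$ on the diagonal, couple the normalized residuals independently'' construction written above, so the work is in verifying the marginals and the inequality $\pi^\star(\{x\neq y\})\le 1-m$, not in anything analytically deep; one should also note that the countable-support hypothesis is exactly what licenses replacing $d(x,y)$ by the uniform quantities $\mathrm{Sep}(\mathcal X')$ and $\mathrm{Diam}(\mathcal X')$ in the two estimates, and that the left inequality carries content only when $\mathcal X'$ is uniformly discrete so that $\mathrm{Sep}(\mathcal X')>0$.
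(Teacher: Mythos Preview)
The paper does not supply its own proof of this lemma: it is quoted verbatim from the cited reference and used as a black box in the proof of Corollary~\ref{Corollary:1}. So there is no ``paper's proof'' to compare against; your proposal stands on its own, and it is the standard, correct argument.

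Two small remarks. First, in your maximal-coupling construction the off-diagonal mass is in fact \emph{exactly} $1-m$, not merely $\le 1-m$: since $m_i=\min(a_i,b_i)$, one of $a_i-m_i$ and $b_i-m_i$ vanishes for every $i$, so the diagonal residual term $\sum_i(a_i-m_i)(b_i-m_i)/(1-m)$ is zero. Second, your hedge about the factor $2$ is well placed. With the paper's own convention $\|\nu_1-\nu_2\|_{\mathrm{TV}}=\sup_{\|f\|_\infty\le 1}\tfrac12|\langle f,\nu_1-\nu_2\rangle|$ one has $1-m=\mathrm{TV}(\mu,\nu)$, and your argument yields $W_p^p$ sandwiched between $\mathrm{Sep}^p\cdot\mathrm{TV}$ and $\mathrm{Diam}^p\cdot\mathrm{TV}$, without the $2$. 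The $2$ in the stated inequality \eqref{Eq:Asymptotic_Bound} reflects the convention of the cited source (total variation as the full $L^1$ norm of the signed measure), not the one adopted elsewhere in this paper; your proof is unaffected, and only the cosmetic constant changes.
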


Consider the metric space $(\real^{D},\|\cdot\|_{2})$. Note that the empirical measures $\widehat{\mu}_{m}^{N},\widehat{\mu}_{0}^{N}$ have a countable support $\mathcal{X}'=\{\bm{\xi}_{m}^{k} \}_{k=1}^{N}\cup \{\bm{\xi}_{0}^{k}\}_{k=1}^{N}\subset \real^{D}$. Therefore, using the upper bounds in  \eqref{Eq:Asymptotic_Bound} of Lemma \ref{Lemma:Asymptotic_Bound} and \ref{Eq:Inequality_P}, we conclude that when the step-size is of the order
\begin{align}
\eta=\mathcal{O}\left(R^{p}\ \over T\sqrt{NT}\log(2/\delta)\right),
\end{align} 
then $W_{p}(\widehat{\mu}_{m}^{N},\widehat{\mu}^{N}_{0})\leq R$ for all $m\in [0,NT]\cap \integer$. $\hfill$ $\square$

\subsection{Proof of Theorem \ref{Thm:Chaoticity in Particle SGD}}
\label{Eq}

The proof is built upon \cite{vasantam2018occupancy}. It suffices to show that for every integer $\ell\in \integer$, and for all the test functions $f_{1},\cdots,f_{k}\in C_{b}^{3}(\real^{D})$, we have
\begin{align}
\lim\sup_{N\rightarrow \infty}\left|\expect\left[\prod_{k=1}^{\ell}f_{k}(\bm{\xi}^{k}_{\lfloor Nt\rfloor})\right] -\prod_{k=1}^{\ell}\langle \mu_{t}^{\ast},f_{k}\rangle\right|=0.
\end{align}
Using the triangle inequality, we now have that
\begin{align}
\nonumber
&\left|\expect\left[\prod_{k=1}^{\ell}f_{k}(\bm{\xi}^{k}_{\lfloor Nt\rfloor})\right] -\prod_{k=1}^{\ell}\langle \mu_{t}^{\ast},f_{k}\rangle\right|\\ \label{Eq:First_Second_Term}
&\leq \left|\expect\left[\prod_{k=1}^{\ell}\langle \widehat{\mu}^{N}_{t},f_{k}\rangle\right] -\prod_{k=1}^{\ell}\langle \mu_{t}^{\ast},f_{k}\rangle\right|+\left|\expect\left[\prod_{k=1}^{\ell}\langle \widehat{\mu}^{N}_{t},f_{k}\rangle\right]-\expect\left[\prod_{k=1}^{\ell}f_{k}(\bm{\xi}^{k}_{\lfloor Nt\rfloor})\right] \right|.
\end{align}
For the first term on the right side of Eq. \eqref{Eq:First_Second_Term} we have
\begin{align}
\nonumber
\lim\sup_{N\rightarrow \infty}\left|\expect\left[\prod_{k=1}^{\ell}\langle \widehat{\mu}^{N}_{t},f_{k}\rangle\right] -\prod_{k=1}^{\ell}\langle \mu_{t}^{\ast},f_{k}\rangle\right|&\stackrel{\rm{(a)}}{\leq} \lim\sup_{N\rightarrow \infty} \expect\left[\left|\prod_{k=1}^{\ell}\langle \widehat{\mu}_{t}^{N},f_{k} \rangle-\prod_{k=1}^{\ell}\langle \mu_{t}^{\ast},f_{k} \rangle   \right|\right]\\ \nonumber
&\stackrel{\rm{(b)}}{\leq} \expect\left[\lim\sup_{N\rightarrow \infty}\left|\prod_{k=1}^{\ell}\langle \widehat{\mu}_{t}^{N},f_{k} \rangle-\prod_{k=1}^{\ell}\langle \mu_{t}^{\ast},f_{k} \rangle   \right|\right]\\ \nonumber
&\stackrel{\rm{(c)}}{\leq}b^{\ell-1} \expect\left[\sum_{k=1}^{\ell}\lim\sup_{N\rightarrow \infty}\left|\langle \widehat{\mu}_{t}^{N},f_{k} \rangle-\langle \mu_{t}^{\ast},f_{k} \rangle   \right|\right]\\ \label{Eq:yield_1}
&\stackrel{\rm{(d)}}{=}0,
\end{align}
where $\rm{(a)}$ is by Jensen's inequality, $\rm{(b)}$ is by Fatou's lemma,  $\rm{(c)}$ follows from the basic inequality $\Big|\prod_{i=1}^{N}a_{i}-\prod_{i=1}^{N}b_{i}\Big|\leq \sum_{i=1}^{N}|a_{i}-b_{i}|$ for $|a_{i}|,|b_{i}|\leq 1, i=1,2,\cdots,N$, as well as the fact that $ \langle\mu_{t}^{\ast},f_{k}\rangle\leq b$ and $\langle \widehat{\mu}_{t}^{N},f_{k}\rangle\leq b$ for all $k=1,2,\cdots,N$ due to the boundedness of the test functions $f_{1},\cdots,f_{\ell}\in C_{b}^{3}(\real^{D})$, and $\rm{(d)}$ follows from the weak convergence $\widehat{\mu}_{t}^{N}\stackrel{\text{weakly}}{\rightarrow}  \mu_{t}^{\ast}$ to the mean-field solution \eqref{Eq:Mean_Field_Equation}.

Now, consider the second term on the right hand side of Eq. \eqref{Eq:First_Second_Term}. Due to the exchangability of the initial states $(\bm{\xi}^{k}_{0})_{1\leq k\leq N}$, the law of the random variables $(\bm{\xi}^{k}_{0})_{1\leq k\leq N}$ is also exchangable. Therefore, we obtain that
\begin{align}
\label{Eq:Subtract_1}
\expect\left[\prod_{k=1}^{\ell}f_{k}(\bm{\xi}^{k}_{\lfloor Nt\rfloor})\right] =\dfrac{\ell !}{N!}\expect\left[\sum_{\pi\in \Pi(\ell,N)}\prod_{k=1}^{\ell}f_{k}(\bm{\xi}^{\pi(k)}_{\lfloor Nt\rfloor}) \right],
\end{align}
where $\Pi(\ell,N)$ is the set of all permutations of $\ell$ numbers selected from $\{1,2,\cdots,N\}$. Notice that the right hand side of Eq. \eqref{Eq:Subtract_1} is the symmetrized version of the left hand side \eqref{Eq:Subtract_2}. 

Further, by the definition of the empirical measure $\widehat{\mu}_{t}^{N}$ we obtain that
\begin{align}
\label{Eq:Subtract_2}
\expect\left[\prod_{k=1}^{\ell}\langle \widehat{\mu}^{N}_{t},f_{k}\rangle\right]&=\dfrac{1}{N^{\ell}}\expect\left[\prod_{k=1}^{\ell}\left(\sum_{m=1}^{N} f_{k}(\bm{\xi}^m_{\lfloor Nt\rfloor})\right)\right]\\
&=\dfrac{1}{N^{\ell}}\expect\left[\sum_{\pi\in \widetilde{\Pi}(\ell,N)}\left(\prod_{k=1}^{\ell}f_{k}(\bm{\xi}^{\pi(k)}_{\lfloor Nt\rfloor})\right)\right].
\end{align}
Therefore, subtracting \eqref{Eq:Subtract_1} and \eqref{Eq:Subtract_2} yields
\begin{align}
\left|\expect\left[\prod_{k=1}^{\ell}\langle \widehat{\mu}^{N}_{t},f_{k}\rangle\right]-\expect\left[\prod_{k=1}^{\ell}f_{k}(\bm{\xi}^{k}_{\lfloor Nt\rfloor})\right] \right|\leq b^{\ell}\left(1-{N! \over \ell ! N^{\ell}}\right).
\end{align}
Hence,
\begin{align}
\label{Eq:yield_2}
\lim\sup_{N\rightarrow \infty}\left|\expect\left[\prod_{k=1}^{\ell}\langle \widehat{\mu}^{N}_{t},f_{k}\rangle\right]-\expect\left[\prod_{k=1}^{\ell}f_{k}(\bm{\xi}^{k}_{\lfloor Nt\rfloor})\right] \right|=0.
\end{align}
Combining Eqs. \eqref{Eq:yield_1}-\eqref{Eq:yield_2} yields the desired result. $\hfill$ $\square$

\section{Proofs of Auxiliary Results}
\label{Section:Proofs of Auxiliary Results}

\subsection{Proof of Lemma \ref{Lemma:M_envelope}}
\label{Proof_of_Lemma:M_envelop}

The upper bound follows trivially by letting $\bm{x}=\bm{y}$ in the optimization problem \eqref{Eq:Moreau's envelope}.

Now, consider the lower bound. Define the function $g:[0,1]\rightarrow \real, t\mapsto g(t)=f(\bm{y}+t(\bm{x}-\bm{y}))$. Then, when $f$ is differentiable, we have $g'(t)=\langle \bm{x}-\bm{y},\nabla f(\bm{y}+t(\bm{x}-\bm{y}))  \rangle$. In addition, $g(0)=f(\bm{y})$, and $g(1)=f(\bm{x})$. Based on the basic identity $g(1)=g(0)+\int_{0}^{1}g'(s)\mathrm{d}s$,  we derive
\begin{align}
\nonumber
f(\bm{x})&=f(\bm{y})+\int_{0}^{1}\langle \bm{x}-\bm{y},\nabla f(\bm{y}+s(\bm{x}-\bm{y}))  \rangle\mathrm{d}s\\
\label{Eq:Using_Inequality}
&\geq f(\bm{y})-\|\bm{x}-\bm{y}\|_{2}\int_{0}^{1}\|\nabla f(\bm{y}+s(\bm{x}-\bm{y}))\|_{2}\mathrm{d}s,
\end{align}
where the last step is due to the Cauchy-Schwarz inequality. Using Inequality \eqref{Eq:Using_Inequality} yields the following lower bound on Moreau's envelope
\begin{align}
\nonumber
M_{f}^{\beta}(\bm{y})&\geq f(\bm{y})+ \inf_{\bm{x}\in \mathcal{X}} \left\{\dfrac{1}{2\beta}\|\bm{x}-\bm{y}\|_{2}^{2}-\|\bm{x}-\bm{y}\|_{2} \int_{0}^{1}\|\nabla f(\bm{y}+s(\bm{x}-\bm{y}))\|_{2}\mathrm{d}s \right\}\\ \nonumber
&= f(\bm{y})+\inf_{\bm{x}\in \mathcal{X}}\Bigg\{ \left(\dfrac{1}{\sqrt{2\beta}}\|\bm{x}-\bm{y}\|_{2}-{\sqrt{\beta\over 2}}\int_{0}^{1}\|\nabla f(\bm{y}+s(\bm{x}-\bm{y}))\|_{2}\mathrm{d}s \right)^{2}\\ \nonumber
&\hspace{50mm}-{\beta\over 2}\left(\int_{0}^{1}\|\nabla f(\bm{y}+s(\bm{x}-\bm{y}))\|_{2}\mathrm{d}s\right)^{2}\Bigg\}\\ \nonumber
&\geq  f(\bm{y})-\dfrac{\beta}{2}\sup_{\bm{x}\in \mathcal{X}}\Bigg(\int_{0}^{1}\|\nabla f(\bm{y}+s(\bm{x}-\bm{y}))\|_{2}\mathrm{d}s \Bigg)^{2}\\ \nonumber
&\stackrel{\rm{(a)}}{\geq} f(\bm{y})-\dfrac{\beta}{2}\sup_{\bm{x}\in \mathcal{X}} \int_{0}^{1}\|\nabla f(\bm{y}+s(\bm{x}-\bm{y}))\|^{2}_{2}\mathrm{d}s\\ \label{Eq:5G}
&\geq f(\bm{y})-\dfrac{\beta}{2} \int_{0}^{1}\sup_{\bm{x}\in \mathcal{X}} \|\nabla f(\bm{y}+s(\bm{x}-\bm{y}))\|_{2}^{2}\mathrm{d}s,
\end{align}
where $\rm{(a)}$ is due to Jensen's inequality.
\hfill $\square$

\subsection{Proof of Lemma \ref{Lemma:Gradient_of_M}}
\label{Proof_of_Lemma:}

Let $\bm{u}\in \real^{d}$ denotes an arbitrary unit vector $\|\bm{u}\|_{2}=1$. From the definition of the gradient of a function, we have that
\begin{align}
\label{Eq:By_Cauchy}
\left\langle \nabla_{\bm{\theta}}M_{f(\cdot;\bm{\theta})}^{\beta}(\bm{x}),\bm{u}\right\rangle= D_{\bm{u}}[M_{f(\cdot;\bm{\theta})}(\bm{x})],
\end{align} 
where $D_{\bm{u}}[M_{f(\cdot;\bm{\theta})}(\bm{x})]$ is the directional derivative 
\begin{align}
D_{\bm{u}}[M_{f(\cdot;\bm{\theta})}(\bm{x})]\df \lim_{\delta\rightarrow  0}\dfrac{M_{f(\cdot;\bm{\theta}+\delta\bm{u})}(\bm{x})-M_{f(\cdot;\bm{\theta})}(\bm{x})}{\delta}.
\end{align}
We now have
\begin{align}
\nonumber
M_{f(\cdot;\bm{\theta}+\delta\bm{u})}(\bm{x})&=\inf_{\bm{x}\in \mathcal{X}}\left\{\dfrac{1}{2\beta}\|\bm{x}-\bm{y}\|_{2}^{2}+f(\bm{y};\bm{\theta}+\delta\bm{u}) \right\}\\ \nonumber
&=\inf_{\bm{x}\in \mathcal{X}}\left\{\dfrac{1}{2\beta}\|\bm{x}-\bm{y}\|_{2}^{2}+f(\bm{y};\bm{\theta})+\delta \langle \nabla_{\bm{\theta}} f(\bm{y};\bm{\theta}),\bm{u} \rangle\right\}+\mathcal{O}(\delta^{2})\\ \nonumber
&\stackrel{\rm{(a)}}{\leq} \dfrac{1}{2\beta}\|\bm{x}-\mathrm{Prox}_{f(\cdot;\bm{\theta})}(\bm{x})\|_{2}^{2}+f(\mathrm{Prox}_{f(\cdot;\bm{\theta})}(\bm{x});\bm{\theta})\\ \label{Eq:Vali_Delam_1}
&\hspace{4mm}+\delta\langle \nabla_{\bm{\theta}}f(\mathrm{Prox}_{f(\cdot;\bm{\theta})}(\bm{x});\bm{\theta}),\bm{u} \rangle+\mathcal{O}(\delta^{2}),
\end{align}
where the inequality in $\rm{(a)}$ follows by letting $\bm{y}=\mathrm{Prox}_{f(\cdot;\bm{\theta})}(\bm{x})$ in the optimization problem. Now, recall that
\begin{align*}
M_{f(\cdot;\bm{\theta})}(\bm{x})&=\inf_{\bm{y}\in \mathcal{X}}\left\{\dfrac{1}{2\beta}\|\bm{x}-\bm{y} \|_{2}^{2}+f(\bm{y};\bm{\theta})\right\},\\
\mathrm{Prox}_{f(\cdot;\bm{\theta})}(\bm{x})&=\arg\min_{\bm{y}\in \mathcal{X}}\left\{\dfrac{1}{2\beta}\|\bm{x}-\bm{y} \|_{2}^{2}+f(\bm{y};\bm{\theta})\right\}.
\end{align*}
Therefore, 
\begin{align}
\label{Eq:Vali_Delam_2}
M_{f(\cdot;\bm{\theta})}(\bm{x})=\dfrac{1}{2\beta}\|\bm{x}-\mathrm{Prox}_{f(\cdot;\bm{\theta})}(\bm{x}) \|_{2}^{2}+f(\mathrm{Prox}_{f(\cdot;\bm{\theta})}(\bm{x});\bm{\theta}).
\end{align}
Substitution of Eq. \eqref{Eq:Vali_Delam_2} in \eqref{Eq:Vali_Delam_1} yields
\begin{align}
M_{f(\cdot;\bm{\theta}+\delta\bm{u})}(\bm{x})&\leq M_{f(\cdot;\bm{\theta})}(\bm{x})+\delta\langle \nabla_{\bm{\theta}}f(\mathrm{Prox}_{f(\cdot;\bm{\theta})}(\bm{x});\bm{\theta}),\bm{u} \rangle+\mathcal{O}(\delta^{2}).
\end{align} 
Hence, $D_{\bm{u}}[M_{f(\cdot;\bm{\theta})}(\bm{x})]\leq \langle \nabla_{\bm{\theta}}f(\mathrm{Prox}_{f(\cdot;\bm{\theta})}(\bm{x});\bm{\theta}),\bm{u} \rangle$. From Eq. \eqref{Eq:By_Cauchy} and by using Cauchy-Schwarz inequality, we compute the following bound on the inner product of the gradient with the unit vectors $\bm{u}\in \real^{d},\|\bm{u}\|_{2}=1$,
\begin{align}
\left\langle \nabla_{\bm{\theta}}M_{f(\cdot;\bm{\theta})}^{\beta}(\bm{x}),\bm{u}\right\rangle\leq 
\|\nabla_{\bm{\theta}}f(\mathrm{Prox}_{f(\cdot;\bm{\theta})}(\bm{x});\bm{\theta})\|_{2}\cdot\|\bm{u}\|_{2}
&=\|\nabla_{\bm{\theta}}f(\mathrm{Prox}_{f(\cdot;\bm{\theta})}(\bm{x});\bm{\theta})\|_{2}.
\end{align}
Since the preceding upper bound holds for all the unit vectors $\bm{u}\in \real^{d}$, we let $\bm{u}={\nabla_{\bm{\theta}}M_{f(\cdot;\bm{\theta})}^{\beta}(\bm{x})\over \|\nabla_{\bm{\theta}}M_{f(\cdot;\bm{\theta})}^{\beta}(\bm{x})\|_{2}}$ to get Inequality \eqref{Eq:cruelty}. $\hfill$ $\square$

\subsection{Proof of Lemma \ref{Lemma:Tail Bounds for the Finite Sample Estimation Error}}
\label{Proof_of_Lemma:Tail Bounds for the Finite Sample Estimation Error}

Let $\bm{z}\in \mathrm{S}^{d-1}$ denotes an arbitrary vector on the sphere. Define the random variable
\small{\begin{align}
	\nonumber
	Q_{\bm{z}}\Big((y_{1},\bm{x}_{1}),\cdots,(y_{n},\bm{x}_{n})\Big)&\df \langle \bm{z},\nabla E_{n}(\bm{\xi})\rangle\\ \nonumber
	&=\dfrac{1}{n(n-1)}\sum_{i\not=j} y_{i}y_{j}\Big(\varphi(\bm{x}_{i};\bm{\xi}) \langle \bm{z},\nabla \varphi(\bm{x}_{j};\bm{\xi})\rangle+\varphi(\bm{x}_{j};\bm{\xi})\langle\bm{z},\nabla \varphi(\bm{x}_{i};\bm{\xi})\rangle\Big)\\
	&\hspace{4mm}-\expect_{P_{\bm{x},y}^{\otimes 2}}\Big[y\widehat{y}\Big(\varphi(\bm{x}_{i};\bm{\xi}) \langle \bm{z},\nabla \varphi(\bm{x}_{j};\bm{\xi})\rangle+\varphi(\bm{x}_{j};\bm{\xi})\langle\bm{z},\nabla \varphi(\bm{x}_{i};\bm{\xi})\rangle\Big)\Big].
	\end{align}}\normalsize
Clearly, $\expect_{P_{\bm{x},y}}[Q_{\bm{z}}]=0$. Now, let $(\widehat{y}_{m},\widehat{\bm{x}}_{m})\in \mathcal{Y}\times \mathcal{X},1\leq m\leq n$. By repeated application of the triangle inequality, we obtain that 
\begin{align}
\nonumber
\Big|Q_{\bm{z}}((y_{1},\bm{x}_{1}),&\cdots, (y_{m},\bm{x}_{m}),\cdots,(y_{n},\bm{x}_{n}))-Q_{\bm{z}}((y_{1},\bm{x}_{1}),\cdots, (\widehat{y}_{m},\widehat{\bm{x}}_{m}),\cdots,(y_{n},\bm{x}_{n}))\Big|\\ \nonumber
&\leq {1\over n(n-1)}\Big|\sum_{i\not=m}y_{i}\varphi(\bm{x}_{i};\bm{\xi})\langle \bm{z},y_{m}\nabla\varphi(\bm{x}_{m};\bm{\xi})-\widehat{y}_{m}\nabla\varphi(\widehat{\bm{x}}_{m};\bm{\xi}) \rangle   \Big|\\ \nonumber
&\hspace{4mm}+{1\over n(n-1)}\Big| \sum_{i\not=m}y_{i}\langle \bm{z},\nabla\varphi(\bm{x}_{i};\bm{\xi}) \rangle(y_{m}\varphi(\bm{x}_{m};\bm{\xi})-\widehat{y}_{m}\varphi(\widehat{\bm{x}}_{m};\bm{\xi}) ) \Big|\\ \nonumber
&\leq \dfrac{1}{n(n-1)}\sum_{i\not= m}|\varphi(\bm{x}_{i};\bm{\xi})|\cdot  \|\bm{z}\|_{2}\cdot\|y_{m}\nabla \varphi(\bm{x}_{m};\bm{\xi})-\widehat{y}_{m}\nabla \varphi(\widehat{\bm{x}}_{m};\bm{\xi}) \|_{2}\\  \nonumber
&\hspace{4mm}+\dfrac{1}{n(n-1)}\sum_{i\not= m} \|\bm{z}\|_{2}\cdot \|\nabla \varphi(\bm{x}_{i};\bm{\xi}) \|_{2}\cdot |y_{m}\varphi(\bm{x}_{m};\bm{\xi})-\widehat{y}_{m}\varphi(\widehat{\bm{x}}_{m};\bm{\xi})| \\ \label{Eq:Inequaltiy_Sub}
&\leq {4L^{2}\over n},
\end{align}
where the last inequality is due to assumption $(\textbf{A.2})$ and the fact that $\|\bm{z}\|_{2}=1$ for $\bm{z}\in \mathrm{S}^{d-1}$. In particular, to derive Inequality \eqref{Eq:Inequaltiy_Sub}, we employed the following upper bounds
\begin{align*}
|\varphi(\bm{x}_{i};\bm{\xi})|&\leq L,\\
|y_{m}\varphi(\bm{x}_{m};\bm{\xi})-\widehat{y}_{m}\varphi(\widehat{\bm{x}}_{m};\bm{\xi})| &\leq |\varphi(\bm{x}_{m};\bm{\xi})|+|\varphi(\widehat{\bm{x}}_{m};\bm{\xi})|\leq 2L,\\
\|\nabla \varphi(\bm{x}_{i};\bm{\xi})\|_{2}&\leq  L,  \\
\|y_{m}\nabla \varphi(\bm{x}_{m};\bm{\xi})-\widehat{y}_{m}\nabla \varphi(\widehat{\bm{x}}_{m};\bm{\xi}) \|_{2} &\leq \|\nabla \varphi(\bm{x}_{m};\bm{\xi})\|_{2}+\|\nabla \varphi(\widehat{\bm{x}}_{m};\bm{\xi})\|_{2}\leq 2L.
\end{align*}
Using McDiarmid Martingale's inequality \cite{mcdiarmid1989method} then gives us
\begin{align}
\label{Eq:Concentration_Bound_1}
\prob\left(\Big|Q_{\bm{z}}((y_{1},\bm{x}_{1}),\cdots,(y_{n},\bm{x}_{n}))\Big|\geq u \right)\leq 2\exp\left(-\dfrac{nu^{2}}{16L^{4}}\right),
\end{align}
for $x\geq 0$. Now, for every $p\in \integer$, the $2p$-th moment of the random variable $Q_{\bm{z}}$ is given by
\begin{align}
\nonumber
\expect\Big[Q^{2p}_{\bm{z}}((y_{1},\bm{x}_{1}),\cdots,(y_{n},\bm{x}_{n}))\Big]&=\int_{\real_{+}}2pu^{2p-1}\prob(Q_{\bm{z}}((y_{1},\bm{x}_{1}),\cdots,(y_{n},\bm{x}_{n}))\geq u)\mathrm{d}u
\\ \nonumber &\stackrel{\rm{(a)}}{\leq}\int_{\real_{+}} 4pu^{2p-1}\exp\left(-\dfrac{u^{2}}{16nL^{4}}\right)\mathrm{d}u
\\ \label{Eq:Readily_Follows} &= {2(16L^{4}/n)^{2p}p!},
\end{align}
where $\rm{(a)}$ is due to the concentration bound in Eq. \eqref{Eq:Concentration_Bound_1}. Now 
Therefore, 
\begin{align}
\nonumber
\expect\big[\exp\big(Q^{2}_{\bm{z}}((y_{1},\bm{x}_{1}),\cdots,(y_{n},\bm{x}_{n}))/\gamma^{2}\big)\big]&=\sum_{p=0}^{\infty}\dfrac{1}{p!\gamma^{2p}}\expect\Big[\phi^{2p}_{\bm{z}}((y_{1},\bm{x}_{1}),\cdots,(y_{n},\bm{x}_{n}))\Big]\\
\nonumber
&=1+2\sum_{p\in \integer}\left(\dfrac{16L^{4}}{n\gamma}\right)^{2p}\\ \nonumber
&=\dfrac{2}{1-(16L^{4}/n\gamma)^{2}}-1.
\end{align}
For $\gamma=16\sqrt{3}L^{4}/n$, we obtain $\expect\big[\exp\big(Q^{2}_{\bm{z}}((y_{1},\bm{x}_{1}),\cdots,(y_{n},\bm{x}_{n}))/\gamma^{2}\big)\big]\leq 2$. Therefore, $\|Q_{\bm{z}}\|_{\psi_{2}}=\|\langle \bm{z},\nabla E_{n}(\bm{\xi}) \rangle \|_{\psi_{2}}\leq 16\sqrt{3}L^{4}/n$ for all $\bm{z}\in \mathrm{S}^{n-1}$ and $\bm{\xi}\in \real^{D}$. Consequently, by the definition of the sub-Gaussian random vector in Eq. \eqref{Eq:Sub_Gaussian_random_vector} of Definition \ref{Definition:Sub-Gaussian Norm}, we have $\| \nabla E_{n}(\bm{\xi})\|_{\psi_{2}}\leq 16\sqrt{3}L^{4}/n$ for every $\bm{\xi}\in \real^{D}$. We invoke the following lemma proved by the first author in \cite[Lemma 16]{khuzani2017stochastic}:

\begin{lemma}\textsc{(The Orlicz Norm of  the Squared Vector Norms, \cite[Lemma 16]{khuzani2017stochastic})}
	\label{Lemma:The_Orlicz_Norm_of_the_Squared_Vector_Norms}
	Consider the zero-mean random vector $\bm{Z}$ satisfying $\|\bm{Z}\|_{\psi_{\nu}}\leq \beta$ for every $\nu\geq 0$. Then, $\|\|\bm{Z}\|_{2}^{2}\|_{\psi_{{\nu\over 2}}}\leq 2\cdot3^{2\over \nu}\cdot \beta^{2}$.
\end{lemma}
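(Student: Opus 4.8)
The plan is to strip away the vector structure and reduce the claim to a one--dimensional squaring estimate, and then finish with a short power--series computation. Write $W\df\|\bm{Z}\|_{2}$, so the quantity to be controlled is $W^{2}$ and, since by hypothesis $\|\bm{Z}\|_{\psi_{\nu}}\le\beta$, the target is $\|W^{2}\|_{\psi_{\nu/2}}\le 2\cdot 3^{2/\nu}\beta^{2}$. First I would record the two standard facts about the Young--Orlicz moduli $\psi_{\nu}(x)=e^{x^{\nu}}-1$ on which the argument rests: (i) the moment dictionary, namely that $\|X\|_{\psi_{\nu}}\le\beta$ implies the tail bound $\prob(\,||X|-\expect|X||\ge t\,)\le 2e^{-(t/\beta)^{\nu}}$ and hence $(\expect|X|^{q})^{1/q}\le c_{\nu}\,\beta\,q^{1/\nu}$ for every $q\ge 1$ with an explicit $c_{\nu}$ (and conversely comparable moment growth gives back a bound on $\|X\|_{\psi_{\nu}}$); and (ii) the comparison between the centered norm of Definition~\ref{Def:Orlicz} and its non--centered variant $\|X\|_{\psi_{\nu}}^{\mathrm{nc}}\df\inf\{\gamma>0:\expect[e^{(|X|/\gamma)^{\nu}}]\le 2\}$, of the form $\|X-\expect X\|_{\psi_{\nu}}\le C_{\nu}\|X\|_{\psi_{\nu}}^{\mathrm{nc}}$, together with $\expect|X|\le\|X\|_{\psi_{\nu}}^{\mathrm{nc}}$.

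The second step is to transfer the directional control on $\bm{Z}$ to a moment bound on $W$. Since $W=\sup_{\bm{x}\in\mathrm{S}^{n-1}}\langle\bm{x},\bm{Z}\rangle$ and the optimal direction is $\bm{Z}/\|\bm{Z}\|_{2}$, I would argue that for every integer $q$ one has $\expect[W^{q}]\le (c_{\nu}'\,\beta\,q^{1/\nu})^{q}$; rigorously this goes either through a covering--number bound over $\mathrm{S}^{n-1}$ applied to the family $\{\langle\bm{x},\bm{Z}\rangle\}_{\bm{x}}$ (each member being sub--$\psi_{\nu}$ with constant $\beta$), or through the moment comparison applied to the single random functional $\langle\bm{Z}/\|\bm{Z}\|_{2},\bm{Z}\rangle$. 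I expect this to be the main obstacle: the maximizing direction is data--dependent, so the naive inequality $\|W\|_{\psi_{\nu}}\le\|\bm{Z}\|_{\psi_{\nu}}$ must be justified with care and with the right constant.

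With the moment bound $\expect[W^{p\nu}]\le (c_{\nu}'\beta)^{p\nu}(p\nu)^{p}$ in hand, the squaring is almost an identity at the level of the non--centered norm: expanding the exponential gives
\begin{align*}
\expect\big[\psi_{\nu/2}(W^{2}/\gamma)\big]=\expect\big[e^{(W^{2}/\gamma)^{\nu/2}}\big]-1=\sum_{p\ge 1}\frac{\expect[W^{p\nu}]}{p!\,\gamma^{p\nu/2}}\le\sum_{p\ge 1}\frac{(c_{\nu}'\beta)^{p\nu}(p\nu)^{p}}{p!\,\gamma^{p\nu/2}},
\end{align*}
and using $p!\ge (p/e)^{p}$ the right-hand side becomes a geometric series with ratio $e\nu(c_{\nu}'\beta)^{\nu}/\gamma^{\nu/2}$, which is at most $1$ once $\gamma\ge(2e\nu)^{2/\nu}(c_{\nu}'\beta)^{2}$; in fact one may observe the exact identity $\expect[\psi_{\nu/2}(W^{2}/\gamma)]=\expect[\psi_{\nu}(W/\gamma^{1/2})]$, so that $\|W^{2}\|_{\psi_{\nu/2}}^{\mathrm{nc}}=(\|W\|_{\psi_{\nu}}^{\mathrm{nc}})^{2}$. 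Finally I would reinstate the centering of Definition~\ref{Def:Orlicz} using fact (ii): $\|W^{2}-\expect W^{2}\|_{\psi_{\nu/2}}\le C_{\nu/2}\|W^{2}\|_{\psi_{\nu/2}}^{\mathrm{nc}}$, and then tracking the constants $c_{\nu}'$ and $C_{\nu/2}$ (which contribute the factor $3^{2/\nu}$, while the triangle inequality $|W^{2}-\expect W^{2}|\le W^{2}+\expect W^{2}$ contributes the factor $2$) would deliver the stated bound $\|\,\|\bm{Z}\|_{2}^{2}\,\|_{\psi_{\nu/2}}\le 2\cdot 3^{2/\nu}\beta^{2}$.
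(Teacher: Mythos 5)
The paper does not actually give a proof of this lemma here --- it is imported wholesale from \cite[Lemma 16]{khuzani2017stochastic} --- so your attempt can only be judged on its own merits, not against an in-paper argument. Your overall architecture (set $W=\|\bm{Z}\|_{2}$, get a moment growth bound $(\expect W^{q})^{1/q}\lesssim\beta q^{1/\nu}$, then use the identity $\expect[\psi_{\nu/2}(W^{2}/\gamma)]=\expect[\psi_{\nu}(W/\gamma^{1/2})]$ to square, and reinstate centering) is the natural plan, and the "exact identity" observation about squaring under the non-centered Orlicz norm is correct and clean.

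However, the step you yourself flag as the "main obstacle" --- passing from the directional control $\sup_{\bm{x}\in\mathrm{S}^{n-1}}\|\langle\bm{x},\bm{Z}\rangle\|_{\psi_{\nu}}\le\beta$ of Definition~\ref{Definition:Sub-Gaussian Norm} to a dimension-free moment bound on $W=\|\bm{Z}\|_{2}$ --- is not merely delicate; it is unclosable, and both routes you propose fail. A covering argument over $\mathrm{S}^{n-1}$ costs a factor of order $\sqrt{n}$ (or worse) from the exponential net size, so it cannot give the stated dimension-free constant. Evaluating the one-dimensional sub-$\psi_{\nu}$ bound at the self-normalized direction $\bm{x}=\bm{Z}/\|\bm{Z}\|_{2}$ is illegitimate, because that direction is measurable with respect to $\bm{Z}$ itself: the hypothesis bounds $\|\langle\bm{x},\bm{Z}\rangle\|_{\psi_{\nu}}$ only for fixed $\bm{x}$, and the sup over fixed directions does not control the randomly-maximized projection. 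Concretely, take $\bm{Z}\sim N(\bm{0},\bm{I}_{n})$: then $\|\bm{Z}\|_{\psi_{2}}=\sup_{\bm{x}}\|\langle\bm{x},\bm{Z}\rangle\|_{\psi_{2}}$ is a dimension-free constant, yet $\expect\|\bm{Z}\|_{2}^{2}=n$, so every moment of $\|\bm{Z}\|_{2}^{2}$ --- and hence any Orlicz-type norm of it, centered or not --- grows with $n$. This directly contradicts the claimed bound $\|\,\|\bm{Z}\|_{2}^{2}\,\|_{\psi_{1}}\le 6\beta^{2}$ at $\nu=2$. So your worry was the right one, and it is fatal: under the paper's directional definition of the vector Orlicz norm, the lemma cannot hold with a dimension-independent constant, and no amount of care with $c_{\nu}'$ or $C_{\nu/2}$ can rescue the argument. (If instead $\|\bm{Z}\|_{\psi_{\nu}}$ were defined as $\|\,\|\bm{Z}\|_{2}\,\|_{\psi_{\nu}}$, your squaring identity would essentially finish the non-centered claim at once and only the re-centering step would remain, but that is not the definition this paper supplies and relies on at Eq.~\eqref{Eq:Sub_Gaussian_random_vector}.)
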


Using Lemma \ref{Lemma:The_Orlicz_Norm_of_the_Squared_Vector_Norms}, we now have that $\| \|\nabla E_{n}(\bm{\xi}) \|_{2}^{2} \|_{\psi_{1}}\leq 4608L^{4}/n^{2}$ for every $\bm{\xi}\in \real^{D}$. Applying the exponential Chebyshev's inequality with $\beta=4608L^{4}/n^{2}$ yields
\begin{align}
\nonumber
&\prob\Bigg(\int_{\real^{D}}\int_{0}^{1} \Big|\| \nabla E_{n}((1-s)\bm{\xi}+s\bm{\zeta}_{\ast}) \|_{2}^{2}-\expect_{\bm{x},y}[\|\nabla E_{n}((1-s)\bm{\xi}+s\bm{\zeta}_{\ast}) \|_{2}^{2}]\Big|\mu_{0}(\mathrm{d}\bm{\xi})\geq \delta \Bigg)  \\ \nonumber
&\leq e^{-{n^{2}\delta\over 4608L^{4}}}\expect_{\bm{x},y}\left[e^{\left({n^{2}\over 4608 L^{4}} \int_{\real^{D}}\int_{0}^{1}\big|\| \nabla E_{n}( (1-s)\bm{\xi}+s\bm{\zeta}_{\ast}) \|_{2}^{2}-\expect_{\bm{x},y}[\|\nabla E_{n}((1-s)\bm{\xi}+s\bm{\zeta}_{\ast}) \|_{2}^{2}]\big|\mathrm{d}s\mu_{0}(\mathrm{d}\bm{\xi}) \right)}\right]\\ \nonumber
&\stackrel{\rm{(a)}}{\leq} e^{-{n^{2}\delta\over 4608L^{4}}} \int_{\real^{D}}\int_{0}^{1}\expect_{\bm{x},y}\Big[e^{{n^{2}\over 4608L^{4}} (|\|\nabla E_{n}((1-s)\bm{\xi}+s\bm{\zeta}_{\ast})\|_{2}^{2}-\expect_{\bm{x},y}[\|\nabla E_{n}((1-s)\bm{\xi}+s\bm{\zeta}_{\ast})\|_{2}^{2}])|}\Big]\mathrm{d}s\mu_{0}(\mathrm{d}\bm{\xi})\\   \nonumber
&\stackrel{\rm{(b)}}{\leq} 2 e^{-{n^{2}\delta\over 4608L^{4}}},
\end{align}
where $\rm{(a)}$ follows by Jensen's inequality, and $\rm{(b)}$ follows from the fact that 
\begin{align}
\expect_{\bm{x},y}\Big[e^{{n^{2}\over 4608L^{4}} (|\|\nabla E_{n}((1-s)\bm{\xi}+s\bm{\zeta}_{\ast})\|_{2}^{2}-\expect_{\bm{x},y}[\|\nabla E_{n}((1-s)\bm{\xi}+s\bm{\zeta}_{\ast})\|_{2}^{2}])|}\Big]\leq 2,
\end{align}
due to Definition \ref{Def:Orlicz}. Therefore,
\begin{align}
\nonumber
&\prob\left(\int_{\real^{D}}\int_{0}^{1} \|\nabla E_{n}((1-s)\bm{\xi}+s\bm{\zeta}_{\ast})\|_{2}^{2}\mathrm{d}s\mu(\mathrm{d}\bm{\xi})\geq \delta \right)\\ \label{Eq:damnation_on_lina}
&\hspace{20mm}\leq 2 \exp\left({- \dfrac{n^{2}(\delta -\int_{0}^{1}\int_{\real^{D}}\expect_{\bm{x},y}[\|\nabla E_{n}((1-s)\bm{\xi}+s\bm{\zeta}_{\ast})\|_{2}^{2}]\mathrm{d}s\mu_{0}(\mathrm{d}\bm{\xi}))}{4608L^{4}}}\right).
\end{align}
It now remains to compute an upper bound on the expectation $\expect_{\bm{x},y}[\|\nabla E_{n}((1-s)\bm{\xi}+s\bm{\zeta}_{\ast})\|_{2}^{2}]$. But this readily follows from Eq. \eqref{Eq:Readily_Follows} by letting $p=1$ and $\bm{z}={{\nabla E_{n}((1-s)\bm{\xi}+s\bm{\zeta}_{\ast})}\over {\| \nabla E_{n}((1-s)\bm{\xi}+s\bm{\zeta}_{\ast})\|_{2}}}$ as follows
\begin{align}
\nonumber
\expect_{\bm{x},y}[\|\nabla E_{n}((1-s)\bm{\xi}+s\bm{\zeta}_{\ast})\|_{2}^{2}]&=\expect_{\bm{x},y}\left[\left\langle{\nabla E_{n}((1-s)\bm{\xi}+s\bm{\zeta}_{\ast})\over \| \nabla E_{n}((1-s)\bm{\xi}+s\bm{\zeta}_{\ast})\|_{2}} ,\nabla E_{n}((1-s)\bm{\xi}+s\bm{\zeta}_{\ast}) \right\rangle^{2} \right]\\ \nonumber
&=\expect_{\bm{x},y}\Big[Q^{2}_{{\nabla E_{n}((1-s)\bm{\xi}+s\bm{\zeta}_{\ast})\over \| \nabla E_{n}((1-s)\bm{\xi}+s\bm{\zeta}_{\ast})\|_{2}}}\Big]\\
\label{Eq:Expectation_Upper_Bound}
&\leq 2^{9} {L^{8}\over n^{2}}.
\end{align}
Plugging the expectation upper bound of Eq. \eqref{Eq:Expectation_Upper_Bound} into Eq. \eqref{Eq:damnation_on_lina} completes the proof of the first part of Lemma \ref{Lemma:Tail Bounds for the Finite Sample Estimation Error}.

The second part of Lemma \ref{Lemma:Tail Bounds for the Finite Sample Estimation Error} follows by a similar approach and we thus omit the proof. 

\hfill $\square$

\subsection{Proof of Lemma \ref{Lemma:A_Basic_Inequality}}
\label{Appendix:Proof_of_a_Basic_Inequality}

Let $\bm{W}_{\ast}\df \arg\min_{\bm{W}\in \mathcal{W}} \Psi(\bm{W})$ and $\bm{W}_{\diamond}\df \arg\min_{\bm{W}\in \mathcal{W}}\Phi(\bm{W})$. Then, since $|\Psi(\bm{W})-\Phi(\bm{W})|\leq \delta$ for all $\bm{W}\in \mathcal{W}$, we have that
\begin{align}
\left|\Psi(\bm{W}_{\ast})-\Phi(\bm{W}_{\ast}) \right|=\left|\min_{\bm{W}\in \mathcal{W}}\Psi(\bm{W})-\Phi(\bm{W}_{\ast})\right|\leq \delta.
\end{align}
Therefore,
\begin{align}
\label{Eq:Min+1}
\min_{\bm{W}\in \mathcal{W}} \Phi(\bm{W})\leq \Phi(\bm{W}_{\ast})\leq \min_{\bm{W}\in \mathcal{W}}\Psi(\bm{W})+\delta.
\end{align}
Similarly, it can be shown that
\begin{align}
\label{Eq:Min+2}
\min_{\bm{W}\in \mathcal{W}} \Psi(\bm{W})\leq \Psi(\bm{W}_{\diamond})\leq \min_{\bm{W}\in \mathcal{W}}\Phi(\bm{W})+\delta.
\end{align}
Combining Eqs. \eqref{Eq:Min+1} and \eqref{Eq:Min+2} yields the desired inequality. $\hfill \square$

\subsection{Proof of Lemma \ref{Lemma:Martingale}}
\label{Proof_of_Lemma:Remainder}
We recall the expression of the remainder term $\{R_{m}^{N}\}_{0\leq m\leq NT}$ from Eq. \eqref{Eq:Remainder_1}. For each $0\leq m\leq NT$, $N\in \integer$, we can bound the absolute value of the remainder term as follows
\begin{align}
\nonumber
\big|R_{m}^{N}\big|&=\dfrac{1}{N}\left| \sum_{k=1}^{N}(\bm{\xi}_{m+1}^{k}-\bm{\xi}_{m}^{k})\nabla^{2}f(\widetilde{\bm{\xi}}^{k})(\bm{\xi}_{m+1}^{k}-\bm{\xi}_{m}^{k})^{T}\right|\\ \nonumber
&\leq \dfrac{1}{N}\sum_{k=1}^{N}\left|(\bm{\xi}_{m+1}^{k}-\bm{\xi}_{m}^{k})\nabla^{2}f(\widetilde{\bm{\xi}}^{k})(\bm{\xi}_{m+1}^{k}-\bm{\xi}_{m}^{k})^{T} \right|\\
&\leq \dfrac{1}{N}\sum_{k=1}^{N}\|\bm{\xi}_{m+1}^{k}-\bm{\xi}_{m}^{k}\|_{2}^{2}\cdot \big\|\nabla^{2}f(\widetilde{\bm{\xi}}^{k})\big\|_{F}.
\end{align}
Next, we characterize a bound on the difference term $\|\bm{\xi}_{m+1}^{k}-\bm{\xi}_{m}^{k}\|_{2}$. To attain this goal, we use the iterations of the particle SGD in Equation \eqref{Eq:SGD}. We have that
\small{\begin{align}
	\nonumber
	&\|\bm{\xi}_{m+1}^{k}-\bm{\xi}_{m}^{k}\|_{2}\\ \nonumber
	&\leq {\eta\over N} \left\|\left(y_{m}\widetilde{y}_{m}- {1\over N\alpha}\sum_{k=1}^{N}\varphi(\bm{x}_{m};\bm{\xi}_{m}^{k})\varphi(\widetilde{\bm{x}}_{m};\bm{\xi}_{m}^{k})\right)\nabla_{\bm{\xi}}\Big(\varphi(\bm{x}_{m};\bm{\xi}_{m}^{k})\varphi(\widetilde{\bm{x}}_{m};\bm{\xi}_{m}^{k})\Big)\right\|_{2}\\ \nonumber
	&\leq {\eta \over N} |y_{m}\widetilde{y}_{m}|\left(|\varphi(\bm{x}_{m};\bm{\xi}_{m}^{k})|\cdot \| \nabla_{\bm{\xi}}\varphi(\widetilde{\bm{x}}_{m};\bm{\xi}_{m}^{k})\|_{2}+|\varphi(\widetilde{\bm{x}}_{m};\bm{\xi}_{m}^{k})|\cdot \| \nabla_{\bm{\xi}}\varphi(\bm{x}_{m};\bm{\xi}_{m}^{k})\|_{2}\right) \\ \nonumber
	&+{\eta\over N}\left({1 \over N\alpha}\sum_{k=1}^{N}\left|\varphi(\bm{x}_{m};\bm{\xi}_{m}^{k})\varphi(\widetilde{\bm{x}}_{m};\bm{\xi}_{m}^{k}) \right|\right) \left(|\varphi(\bm{x}_{m};\bm{\xi}_{m}^{k})| \| \nabla_{\bm{\xi}}\varphi(\widetilde{\bm{x}}_{m};\bm{\xi}_{m}^{k})\|_{2}+|\varphi(\widetilde{\bm{x}}_{m};\bm{\xi}_{m}^{k})| \| \nabla_{\bm{\xi}}\varphi(\bm{x}_{m};\bm{\xi}_{m}^{k})\|_{2}\right)\\ 
	&\stackrel{\rm{(a)}}{\leq} {\eta L^{2} \over N}+ \dfrac{2\eta L^{4}}{N\alpha}, 
	\label{Eq:Last_Inequality}
	\end{align}}\normalsize
where in $\rm{(a)}$, we used the fact that $\|\varphi\|_{\infty}<L$ and $\|\nabla_{\bm{\xi}}\varphi(\bm{x},\bm{\xi})\|_{2}<L$ due to $\textbf{(A.1)}$, and $y_{m},\widetilde{y}_{m}\in \{-1,1\}$. Plugging the last inequality in Eq. \eqref{Eq:Last_Inequality} yields
\begin{align}
\label{Inequality:R_m^N}
|R_{m}^{N}|\leq \dfrac{1}{N^{3}}\left(\eta L^{2}+{2\eta L^{4}\over \alpha}\right)\sum_{k=1}^{N}\|\nabla^{2}f(\widetilde{\bm{\xi}}^{k}) \|_{F}.
\end{align}
We next compute an upper bound on the Frobenious norm of the Hessian matrix $\nabla^{2}f(\widetilde{\bm{\xi}}^{k})$. To this end, we first show that there exists a compact set $\mathcal{C}\subset \real^{p}$ such that $\bm{\xi}_{m}^{k}\in \mathcal{C}$ for all $k=1,2,\cdots,N$ and all $m\in [0,NT]\cap \integer$. For each $k=1,2,\cdots,N$, from Inequality \eqref{Eq:Last_Inequality} we obtain that
\begin{align}
\nonumber
\|\bm{\xi}_{m}^{k}\|_{2}&\leq \|\bm{\xi}_{m-1}^{k}\|_{2}+{\eta L^{2}\over N}+\dfrac{2\eta L^{4}}{N\alpha}\\
\nonumber
&=\|\bm{\xi}_{0}^{k}\|_{2}+{m\eta L^{2}\over N}+\dfrac{2m\eta L^{4}}{N\alpha}\\ \label{Eq:From_Bound_On_x}
&\leq \|\bm{\xi}_{0}^{k}\|_{2}+\eta L^{2}T+2(\eta/\alpha) L^{4}T.
\end{align}
Now, $\|\bm{\xi}_{0}^{k}\|_{2}<c_{0}$ for some constant $c_{0}>0$ since the initial samples $\bm{\xi}_{0}^{1},\cdots,\bm{\xi}_{0}^{N}$ are drawn from the measure $\mu_{0}$ whose support $\mathrm{support}(\mu_{0})=\Xi$ is compact due to $\textbf{(A.3)}$. From upper bound in Eq. \eqref{Eq:From_Bound_On_x}, it thus follows that $\|\bm{\xi}_{m}^{k}\|_{2}<C$ for some constant $C>0$, for all $m\in [0,NT]\cap \integer$. Now, recall that $\widetilde{\bm{\xi}}^{k}=(\widetilde{\xi}^{k}(1),\cdots,\widetilde{\xi}^{k}(p))$, where $\widetilde{\xi}^{k}(i)\in [\xi_{m}^{k}(i),\xi_{m+1}^{k}(i)], i=1,2,\cdots,m+1$, for $i=1,2,\cdots,p$. Therefore, $\widetilde{\bm{\xi}}^{k}\in \mathcal{C}$.  Since all the test function $f\in C_{b}^{3}(\real^{D})$ are three-times continuously differentiable, it follows that there exists a constant $C_{0}\df C_{0}(T)>0$ such that $\sup_{\widetilde{\bm{\xi}}\in \mathcal{C}}\|\nabla^{2}f(\widetilde{\bm{\xi}})\|_{F}<C_{0}$. From Inequality \eqref{Inequality:R_m^N}, it follows that
\begin{align}
\label{Eq:Inequality_1}
|R_{m}^{N}|\leq \dfrac{C_{0}}{N^{2}}\left(\eta L^{2}+{2\eta L^{4}\over \alpha}\right), \quad m\in [0,NT]\cap \integer.
\end{align}
Now, recall the definition of the scaled term $\mathcal{R}_{t}^{N}$ from Eq. \eqref{Eq:Remainder_2}. Using the Inequality \eqref{Eq:Inequality_1} as well as the definition of $\mathcal{R}_{t}^{N}$, we obtain 
\begin{align}
\sup_{0\leq t\leq T}|\mathcal{R}_{t}^{N}| \leq \dfrac{C_{0}T}{N}\left(\eta L^{2}+{2\eta L^{4}\over \alpha}\right).
\end{align}
$\hfill \square$

\subsection{Proof of Lemma \ref{Lemma:Martingale}}
\label{Proof_of_Lemma:Martingale}

Let $\mathcal{F}_{m-1}=\sigma ((\bm{x}_{k},y_{k})_{0\leq k\leq m-1},(\widetilde{\bm{x}}_{k},\widetilde{y}_{k})_{0\leq k\leq m-1})$ denotes the $\sigma$-algebra generated by the samples up to time $m-1$. We define $\mathcal{F}_{-1}\df \emptyset$. Further, define the following random variable
\begin{align}
\Delta_{m}^{N}&\df \left( \langle\varphi(\bm{x}_{m},\bm{\xi})\varphi(\widetilde{\bm{x}}_{m},\bm{\xi}), \widehat{\mu}^{N}_{m} \rangle-\alpha y_{m} \widetilde{y}_{m} \right)\times \langle \nabla f(\bm{\xi})\nabla_{\bm{\xi}}(\varphi(\widetilde{\bm{x}}_{m};\bm{\xi})\varphi(\bm{x}_{m};\bm{\xi})),\widehat{\mu}^{N}_{m}\rangle.
\end{align}
Notice that ${1\over N}\expect[\Delta_{m}^{N}|\mathcal{F}_{m-1}]=D_{m}^{N}$. We now rewrite the martingale term in Eq. \eqref{Eq:M} in term of $\Delta_{m}^{N}$,
\begin{align}
\label{Eq:By_Construction}
M^{N}_{m}\df {\eta\over N\alpha}\sum_{\ell=0}^{m} (\Delta_{\ell}^{N}-\expect[\Delta_{\ell}^{N}|\mathcal{F}_{\ell-1}]),
\end{align}
with $M^{N}_{0}= 0$.

By construction of $M^{N}_{m}$ in Eq. \eqref{Eq:By_Construction}, it is a Martingale $\expect[M^{N}_{m}|\mathcal{F}_{m-1}]=M^{N}_{m-1}$. We now prove that $M^{N}_{m}$ has also bounded difference. To do so, we define the shorthand notations
\begin{align}
a^{N}_{m}&\df  \langle\varphi(\bm{x}_{m},\bm{\xi})\varphi(\widetilde{\bm{x}}_{m},\bm{\xi}), \widehat{\mu}^{N}_{m} \rangle-\alpha y_{m} \widetilde{y}_{m},\\
b^{N}_{m}&\df \langle \nabla f(\bm{\xi})(\nabla_{\bm{\xi}}(\varphi(\widetilde{\bm{x}}_{m};\bm{\xi})\varphi(\bm{x}_{m};\bm{\xi})))^{T},\widehat{\mu}^{N}_{m}\rangle.
\end{align}
Then, we compute
\begin{align}
\nonumber
|M^{N}_{m}-M_{m-1}^{N}|&={\eta \over N\alpha}\left|\Delta_{m}^{N}-\expect[\Delta_{m}^{N}|\mathcal{F}_{m-1}]\right|\\ \nonumber
&\leq {\eta\over N\alpha}|\Delta_{m}^{N}|+{\eta\over N\alpha}\expect[|\Delta_{m}^{N}||\mathcal{F}_{m-1}]\\ \label{Eq:Plug}
&\leq {\eta\over N\alpha}|a_{m}^{N}|\cdot |b_{m}^{N}|+{\eta\over N\alpha}\expect\left[|a_{m}^{N}|\cdot |b_{m}^{N}||\mathcal{F}_{m-1}\right].
\end{align}
For the difference terms, we derive that
\begin{align}
\nonumber
|a_{m}^{N}|&=\Big|\langle\varphi(\bm{x}_{m},\bm{\xi})\varphi(\widetilde{\bm{x}}_{m},\bm{\xi}),\widehat{\mu}_{m}^{N}\rangle-\alpha y_{m}\widetilde{y}_{m}\Big|\\ \nonumber
&\leq {1\over N}\sum_{k=1}^{N}\left|\varphi(\bm{x}_{m},\bm{\xi}_{m}^{k})\varphi(\widetilde{\bm{x}}_{m},\bm{\xi}_{m}^{k})  \right|+\alpha|y_{m}\widetilde{y}_{m}|\\ \label{Eq:Upper_Bound_on_am}
&\leq L^{2}+\alpha,
\end{align}
where the last step follows from the fact that $\|\varphi\|_{\infty}\leq L$ due to $\textbf{(A.1)}$. Similarly, we obtain that
\begin{align}
\nonumber
|b_{m}^{N}|&=|\langle \nabla f(\bm{\xi})(\nabla_{\bm{\xi}}(\varphi(\widetilde{\bm{x}}_{m};\bm{\xi})\varphi(\bm{x}_{m};\bm{\xi})))^{T},\widehat{\mu}^{N}_{m}\rangle|\\ \nonumber
&\leq {1\over N}\sum_{k=1}^{N}|\varphi(\widetilde{\bm{x}}_{m};\bm{\xi}_{m}^{k})|\cdot \left| \nabla f(\bm{\xi}_{m}^{k}) (\nabla_{\bm{\xi}}\varphi(\widetilde{\bm{x}}_{m};\bm{\xi}_{m}^{k}))^{T} \right|\\ \nonumber
&\hspace{4mm}+{1\over N}\sum_{k=1}^{N}|\varphi(\bm{x}_{m};\bm{\xi}_{m}^{k})|\cdot |\nabla f(\bm{\xi}_{m}^{k}) (\nabla_{\bm{\xi}}\varphi(\widetilde{\bm{x}}_{m};\bm{\xi}_{m}^{k}))^{T}|\\ \nonumber
&\stackrel{\rm{(a)}}{\leq} \dfrac{L}{N}\sum_{k=1}^{N} \left| \nabla f(\bm{\xi}_{m}^{k}) (\nabla_{\bm{\xi}}\varphi(\widetilde{\bm{x}}_{m};\bm{\xi}_{m}^{k}))^{T} \right|+\dfrac{L}{N}\sum_{k=1}^{N}|\nabla f(\bm{\xi}_{m}^{k}) (\nabla_{\bm{\xi}}\varphi(\widetilde{\bm{x}}_{m};\bm{\xi}_{m}^{k}))^{T}|\\ \nonumber
&\stackrel{\rm{(b)}}{\leq} \dfrac{L}{N}\sum_{k=1}^{N} \| \nabla f(\bm{\xi}_{m}^{k})\|_{2}\cdot \|\nabla_{\bm{\xi}}\varphi(\widetilde{\bm{x}}_{m};\bm{\xi}_{m}^{k})\|_{2}+\dfrac{L}{N}\sum_{k=1}^{N}\|\nabla f(\bm{\xi}_{m}^{k})\|_{2}\cdot \|\nabla_{\bm{\xi}}\varphi(\widetilde{\bm{x}}_{m};\bm{\xi}_{m}^{k})\|_{2}\\ \label{Eq:Upper_Bound_on_bm_0}
&\stackrel{\rm{(c)}}{\leq} {2L^{2}\over N}\sum_{k=1}^{N}\|\nabla f(\bm{\xi}_{m}^{k})\|_{2},
\end{align}
where $\rm{(a)}$ and $\rm{(c)}$ follows from $\mathbf{(A.1)}$, and $\rm{(b)}$ follows from the Cauchy-Schwarz inequality. From Inequality \eqref{Eq:From_Bound_On_x} and the ensuing disucssion in Appendix \ref{Proof_of_Lemma:Remainder}, we recall that $\|\bm{\xi}_{m}^{k}\|_{2}<C$ for some constant and for all $m\in [0,NT]\cap \integer$, and $k=1,2,\cdots,N$. For the two times continuously test function $f\in C^{3}_{b}(\real^{p})$, it then follows that $|\nabla f(\bm{\xi}_{m}^{k})\|_{2}\leq C_{1}$ for some constant $C_{1}>0$. The following bound can now be computed from Eq. \eqref{Eq:Upper_Bound_on_bm_0},
\begin{align}
\label{Eq:Upper_Bound_on_bm}
|b_{m}^{N}|\leq  {2C_{1}L^{2}\over N}.
\end{align}

Plugging the upper bounds on $|a^{N}_{m}|$ and $|b_{m}^{N}|$ from Eqs. \eqref{Eq:Upper_Bound_on_am}-\eqref{Eq:Upper_Bound_on_bm}  into Eq. \eqref{Eq:Plug} we obtain that
\begin{align}
|M^{N}_{m}-M_{m-1}^{N}|\leq {4\eta C_{1}\over N\alpha}L^{2}(L^{2}+\alpha).
\end{align}
Thus, $(M^{N}_{m})_{m\in [0,NT]\cap \integer}$ is a Martingale process with bounded difference. From the Azuma-Hoeffding inequality it follows that 
\begin{align}
\label{Eq:martingale_concentration_inequality}
\prob(|M^{N}_{m}|\geq \varepsilon)=\prob(|M^{N}_{m}-M_{0}^{N}|\geq \varepsilon)\leq 2\exp\left(-\dfrac{N^{2}\alpha^{2}\varepsilon^{2}}{8mL^{4}\eta^{2}C_{1}^{2}(L^{2}+\alpha)^{2} } \right), \quad \forall m\in [0,NT]\cap \integer.
\end{align}
Therefore, since $\mathcal{M}_{t}^{N}=M_{\lfloor Nt \rfloor}^{N}$, we have
\begin{align}
\label{Eq:Martingale_Inequality}
\prob(|\mathcal{M}_{T}^{N}|\geq \varepsilon)\leq 2\exp\left(-\dfrac{N^{2}\alpha^{2}\varepsilon^{2}}{8 L^{4}\lfloor NT \rfloor\eta^{2}C_{1}^{2}(L^{2}+\alpha)^{2} }\right).
\end{align}
Then, 
\begin{align}
\nonumber
\expect\Big[|\mathcal{M}_{T}^{N}|\Big]&=\int_{0}^{\infty}\prob(|\mathcal{M}_{T}^{N}|\geq \varepsilon)\mathrm{d}\varepsilon\\ \nonumber
&\leq 2\int_{0}^{\infty}\exp\left(-\dfrac{N^{2}\alpha^{2}\varepsilon^{2}}{8 L^{4}\lfloor NT \rfloor\eta^{2}C_{1}^{2}(L^{2}+\alpha)^{2} }\right)\\
&= \dfrac{1}{N\alpha}4\sqrt{2}L^{2}\sqrt{\lfloor NT \rfloor}\eta C_{1}(L^{2}+\alpha)^{2}.
\end{align}
where the inequality follows from \eqref{Eq:Martingale_Inequality}.

By Doob's Martingale inequality \cite{doob1953stochastic}, the following inequality holds
\begin{align}
\prob\left(\sup_{0\leq t\leq T} |\mathcal{M}_{t}^{N}|\geq \varepsilon\right)&\leq \dfrac{\expect[|\mathcal{M}_{T}^{N}|]}{\varepsilon}\\
&\leq \dfrac{1}{N\alpha \varepsilon}4\sqrt{2}L^{2}\sqrt{\lfloor NT \rfloor}\eta C_{1}(L^{2}+\alpha)^{2}.
\end{align}
In particular, with the probability of at least $1-\rho$, we have
\begin{align}
\sup_{0\leq t\leq T}|\mathcal{M}_{t}^{N}|\leq  \dfrac{1}{N\alpha \rho}4\sqrt{2}L^{2}\sqrt{\lfloor NT \rfloor}\eta C_{1}(L^{2}+\alpha)^{2}.
\end{align}
$\hfill$ $\square$

\bibliographystyle{plain}
\bibliography{tell}

\begin{thebibliography}{10}

\bibitem{arjovsky2017wasserstein}
Martin Arjovsky, Soumith Chintala, and L{\'e}on Bottou.
\newblock Wasserstein {GAN}.
\newblock {\em arXiv preprint arXiv:1701.07875}, 2017.

\bibitem{baram2005learning}
Yoram Baram.
\newblock Learning by kernel polarization.
\newblock {\em Neural Computation}, 17(6):1264--1275, 2005.

\bibitem{beirami2017optimal}
Ahmad Beirami, Meisam Razaviyayn, Shahin Shahrampour, and Vahid Tarokh.
\newblock On optimal generalizability in parametric learning.
\newblock In {\em Advances in Neural Information Processing Systems}, pages
  3455--3465, 2017.

\bibitem{billingsley2013convergence}
Patrick Billingsley.
\newblock {\em Convergence of probability measures}.
\newblock John Wiley \& Sons, 2013.

\bibitem{boucheron2013concentration}
St{\'e}phane Boucheron, G{\'a}bor Lugosi, and Pascal Massart.
\newblock {\em Concentration inequalities: A nonasymptotic theory of
  independence}.
\newblock Oxford university press, 2013.

\bibitem{che2016mode}
Tong Che, Yanran Li, Athul~Paul Jacob, Yoshua Bengio, and Wenjie Li.
\newblock Mode regularized generative adversarial networks.
\newblock {\em arXiv preprint arXiv:1612.02136}, 2016.

\bibitem{chen2017simple}
Guangliang Chen, Wilson Florero-Salinas, and Dan Li.
\newblock Simple, fast and accurate hyper-parameter tuning in gaussian-kernel
  svm.
\newblock In {\em 2017 International Joint Conference on Neural Networks
  (IJCNN)}, pages 348--355. IEEE, 2017.

\bibitem{cortes2010two}
Corinna Cortes, Mehryar Mohri, and Afshin Rostamizadeh.
\newblock Two-stage learning kernel algorithms.
\newblock In {\em ICML}, pages 239--246, 2010.

\bibitem{cortes2012algorithms}
Corinna Cortes, Mehryar Mohri, and Afshin Rostamizadeh.
\newblock Algorithms for learning kernels based on centered alignment.
\newblock {\em Journal of Machine Learning Research}, 13(Mar):795--828, 2012.

\bibitem{cristianini2002kernel}
Nello Cristianini, John Shawe-Taylor, Andre Elisseeff, and Jaz~S Kandola.
\newblock On kernel-target alignment.
\newblock In {\em Advances in neural information processing systems}, pages
  367--373, 2002.

\bibitem{deng2009imagenet}
Jia Deng, Wei Dong, Richard Socher, Li-Jia Li, Kai Li, and Li~Fei-Fei.
\newblock Imagenet: A large-scale hierarchical image database.
\newblock In {\em 2009 IEEE conference on computer vision and pattern
  recognition}, pages 248--255. Ieee, 2009.

\bibitem{doob1953stochastic}
Joseph~Leo Doob.
\newblock {\em Stochastic processes}, volume 101.
\newblock New York Wiley, 1953.

\bibitem{friedman2001elements}
Jerome Friedman, Trevor Hastie, and Robert Tibshirani.
\newblock {\em The elements of statistical learning}, volume~1.
\newblock Springer series in statistics New York, 2001.

\bibitem{gao2016distributionally}
Rui Gao and Anton~J Kleywegt.
\newblock Distributionally robust stochastic optimization with wasserstein
  distance.
\newblock {\em arXiv preprint arXiv:1604.02199}, 2016.

\bibitem{giesecke2013default}
Kay Giesecke, Konstantinos Spiliopoulos, Richard~B Sowers, et~al.
\newblock Default clustering in large portfolios: Typical events.
\newblock {\em The Annals of Applied Probability}, 23(1):348--385, 2013.

\bibitem{giordano2018return}
Ryan Giordano, Will Stephenson, Runjing Liu, Michael~I Jordan, and Tamara
  Broderick.
\newblock Return of the infinitesimal {J}ackknife.
\newblock {\em arXiv preprint arXiv:1806.00550}, 2018.

\bibitem{goodfellow2014generative}
Ian Goodfellow, Jean Pouget-Abadie, Mehdi Mirza, Bing Xu, David Warde-Farley,
  Sherjil Ozair, Aaron Courville, and Yoshua Bengio.
\newblock Generative adversarial nets.
\newblock In {\em Advances in neural information processing systems}, pages
  2672--2680, 2014.

\bibitem{gretton2007kernel}
Arthur Gretton, Karsten Borgwardt, Malte Rasch, Bernhard Sch{\"o}lkopf, and
  Alex~J Smola.
\newblock A kernel method for the two-sample-problem.
\newblock In {\em Advances in neural information processing systems}, pages
  513--520, 2007.

\bibitem{gretton2012optimal}
Arthur Gretton, Dino Sejdinovic, Heiko Strathmann, Sivaraman Balakrishnan,
  Massimiliano Pontil, Kenji Fukumizu, and Bharath~K Sriperumbudur.
\newblock Optimal kernel choice for large-scale two-sample tests.
\newblock In {\em Advances in neural information processing systems}, pages
  1205--1213, 2012.

\bibitem{heusel2017gans}
Martin Heusel, Hubert Ramsauer, Thomas Unterthiner, Bernhard Nessler, and Sepp
  Hochreiter.
\newblock Gans trained by a two time-scale update rule converge to a local nash
  equilibrium.
\newblock In {\em Advances in Neural Information Processing Systems}, pages
  6626--6637, 2017.

\bibitem{jakubowski1986skorokhod}
Adam Jakubowski.
\newblock On the skorokhod topology.
\newblock In {\em Annales de l'IHP Probabilit{\'e}s et statistiques},
  volume~22, pages 263--285, 1986.

\bibitem{javanmard2019analysis}
Adel Javanmard, Marco Mondelli, and Andrea Montanari.
\newblock Analysis of a two-layer neural network via displacement convexity.
\newblock {\em arXiv preprint arXiv:1901.01375}, 2019.

\bibitem{jordan1998variational}
Richard Jordan, David Kinderlehrer, and Felix Otto.
\newblock The variational formulation of the {F}okker--{P}lanck equation.
\newblock {\em SIAM journal on mathematical analysis}, 29(1):1--17, 1998.

\bibitem{khuzani2017stochastic}
Masoud~Badiei Khuzani and Na~Li.
\newblock Stochastic primal-dual method on riemannian manifolds of bounded
  sectional curvature.
\newblock In {\em 2017 16th IEEE International Conference on Machine Learning
  and Applications (ICMLA)}, pages 133--140. IEEE, 2017.

\bibitem{khuzani2019multiple}
Masoud~Badiei Khuzani, Hongyi Ren, Varun Vasudevan, and Lei Xing.
\newblock Multiple kernel learning from $ u $-statistics of empirical measures
  in the feature space.
\newblock {\em arXiv preprint arXiv:1902.10365}, 2019.

\bibitem{lanckriet2004learning}
Gert~RG Lanckriet, Nello Cristianini, Peter Bartlett, Laurent~El Ghaoui, and
  Michael~I Jordan.
\newblock Learning the kernel matrix with semidefinite programming.
\newblock {\em Journal of Machine learning research}, 5(Jan):27--72, 2004.

\bibitem{lecun1998gradient}
Yann LeCun, L{\'e}on Bottou, Yoshua Bengio, Patrick Haffner, et~al.
\newblock Gradient-based learning applied to document recognition.
\newblock {\em Proceedings of the IEEE}, 86(11):2278--2324, 1998.

\bibitem{li2017mmd}
Chun-Liang Li, Wei-Cheng Chang, Yu~Cheng, Yiming Yang, and Barnab{\'a}s
  P{\'o}czos.
\newblock Mmd {GAN}: Towards deeper understanding of moment matching network.
\newblock In {\em Advances in Neural Information Processing Systems}, pages
  2203--2213, 2017.

\bibitem{li2019implicit}
Chun-Liang Li, Wei-Cheng Chang, Youssef Mroueh, Yiming Yang, and Barnab{\'a}s
  P{\'o}czos.
\newblock Implicit kernel learning.
\newblock {\em arXiv preprint arXiv:1902.10214}, 2019.

\bibitem{li2015generative}
Yujia Li, Kevin Swersky, and Rich Zemel.
\newblock Generative moment matching networks.
\newblock In {\em International Conference on Machine Learning}, pages
  1718--1727, 2015.

\bibitem{luo2017scaling}
Shishi Luo and Jonathan~C Mattingly.
\newblock Scaling limits of a model for selection at two scales.
\newblock {\em Nonlinearity}, 30(4):1682, 2017.

\bibitem{mcdiarmid1989method}
Colin McDiarmid.
\newblock On the method of bounded differences.
\newblock {\em Surveys in combinatorics}, 141(1):148--188, 1989.

\bibitem{mei2019mean}
Song Mei, Theodor Misiakiewicz, and Andrea Montanari.
\newblock Mean-field theory of two-layers neural networks: dimension-free
  bounds and kernel limit.
\newblock {\em arXiv preprint arXiv:1902.06015}, 2019.

\bibitem{mei2018mean}
Song Mei, Andrea Montanari, and Phan-Minh Nguyen.
\newblock A mean field view of the landscape of two-layer neural networks.
\newblock {\em Proceedings of the National Academy of Sciences},
  115(33):E7665--E7671, 2018.

\bibitem{nowozin2016f}
Sebastian Nowozin, Botond Cseke, and Ryota Tomioka.
\newblock $f$-{GAN}: Training generative neural samplers using variational
  divergence minimization.
\newblock In {\em Advances in neural information processing systems}, pages
  271--279, 2016.

\bibitem{otto2001geometry}
Felix Otto.
\newblock The geometry of dissipative evolution equations [poiu]: the porous
  medium equation.
\newblock 2001.

\bibitem{parikh2014proximal}
Neal Parikh, Stephen Boyd, et~al.
\newblock Proximal algorithms.
\newblock {\em Foundations and Trends{\textregistered} in Optimization},
  1(3):127--239, 2014.

\bibitem{pollard1990empirical}
David Pollard.
\newblock Empirical processes: theory and applications.
\newblock In {\em NSF-CBMS regional conference series in probability and
  statistics}, pages i--86. JSTOR, 1990.

\bibitem{prokhorov1956convergence}
Yu~V Prokhorov.
\newblock Convergence of random processes and limit theorems in probability
  theory.
\newblock {\em Theory of Probability \& Its Applications}, 1(2):157--214, 1956.

\bibitem{rahimi2008random}
Ali Rahimi and Benjamin Recht.
\newblock Random features for large-scale kernel machines.
\newblock In {\em Advances in neural information processing systems}, pages
  1177--1184, 2008.

\bibitem{rahimi2009weighted}
Ali Rahimi and Benjamin Recht.
\newblock Weighted sums of random kitchen sinks: Replacing minimization with
  randomization in learning.
\newblock In {\em Advances in neural information processing systems}, pages
  1313--1320, 2009.

\bibitem{robert2013stochastic}
Philippe Robert.
\newblock {\em Stochastic networks and queues}, volume~52.
\newblock Springer Science \& Business Media, 2013.

\bibitem{rotskoff2018neural}
Grant~M Rotskoff and Eric Vanden-Eijnden.
\newblock Neural networks as interacting particle systems: Asymptotic convexity
  of the loss landscape and universal scaling of the approximation error.
\newblock {\em arXiv preprint arXiv:1805.00915}, 2018.

\bibitem{rudin1987real}
W~Rudin.
\newblock Real and complex analysis mcgraw-hill book co.
\newblock {\em New York 3rd ed., xiv}, 1987.

\bibitem{salimans2016improved}
Tim Salimans, Ian Goodfellow, Wojciech Zaremba, Vicki Cheung, Alec Radford, and
  Xi~Chen.
\newblock Improved techniques for training {GAN}s.
\newblock In {\em Advances in neural information processing systems}, pages
  2234--2242, 2016.

\bibitem{shahrampour2017data}
Shahin Shahrampour, Ahmad Beirami, and Vahid Tarokh.
\newblock On data-dependent random features for improved generalization in
  supervised learning.
\newblock {\em arXiv preprint arXiv:1712.07102}, 2017.

\bibitem{singh2018minimax}
Shashank Singh and Barnab{\'a}s P{\'o}czos.
\newblock Minimax distribution estimation in wasserstein distance.
\newblock {\em arXiv preprint arXiv:1802.08855}, 2018.

\bibitem{sinha2016learning}
Aman Sinha and John~C Duchi.
\newblock Learning kernels with random features.
\newblock In {\em Advances in Neural Information Processing Systems}, pages
  1298--1306, 2016.

\bibitem{sirignano2018mean}
Justin Sirignano and Konstantinos Spiliopoulos.
\newblock Mean field analysis of neural networks.
\newblock {\em arXiv preprint arXiv:1805.01053}, 2018.

\bibitem{szegedy2016rethinking}
Christian Szegedy, Vincent Vanhoucke, Sergey Ioffe, Jon Shlens, and Zbigniew
  Wojna.
\newblock Rethinking the inception architecture for computer vision.
\newblock In {\em Proceedings of the IEEE conference on computer vision and
  pattern recognition}, pages 2818--2826, 2016.

\bibitem{tieleman2012lecture}
Tijmen Tieleman and Geoffrey Hinton.
\newblock Lecture 6.5-rmsprop: Divide the gradient by a running average of its
  recent magnitude.
\newblock {\em COURSERA: Neural networks for machine learning}, 4(2):26--31,
  2012.

\bibitem{varadarajan1958theorem}
VS~Varadarajan.
\newblock On the theorem of f riesz concerning the form of linear functional.
\newblock 1958.

\bibitem{vasantam2018occupancy}
Thirupathaiah Vasantam, Arpan Mukhopadhyay, and Ravi~R Mazumdar.
\newblock The mean-field behavior of processor sharing systems with general job
  lengths under the sq (d) policy.
\newblock {\em Performance Evaluation}, 127:120--153, 2018.

\bibitem{wangsolvable}
Chuang Wang, Hong Hu, and Yue~M Lu.
\newblock A solvable high-dimensional model of {GAN}.

\bibitem{wang2017scaling}
Chuang Wang, Jonathan Mattingly, and Yue Lu.
\newblock Scaling limit: Exact and tractable analysis of online learning
  algorithms with applications to regularized regression and {PCA}.
\newblock {\em arXiv preprint arXiv:1712.04332}, 2017.

\bibitem{wang2018approximate}
Shuaiwen Wang, Wenda Zhou, Haihao Lu, Arian Maleki, and Vahab Mirrokni.
\newblock Approximate leave-one-out for fast parameter tuning in high
  dimensions.
\newblock {\em arXiv preprint arXiv:1807.02694}, 2018.

\bibitem{wang2018improving}
Wei Wang, Yuan Sun, and Saman Halgamuge.
\newblock Improving mmd-gan training with repulsive loss function.
\newblock {\em arXiv preprint arXiv:1812.09916}, 2018.

\bibitem{webb2018extensions}
Jeff Webb.
\newblock Extensions of {G}ronwall's inequality with quadratic growth terms and
  applications.
\newblock {\em Electronic Journal of Qualitative Theory of Differential
  Equations}, 2018(61):1--12, 2018.

\end{thebibliography}

% [inline block 0: 1 envs, 67433 chars -> data_tex | \begin{filecontents}{tell.bib} 	@inproceedings{gretton2007kernel,...]


\end{document}